\newcommand{\arxiv}[1]{\iftoggle{iclr}{}{#1}}
\newcommand{\iclr}[1]{\iftoggle{iclr}{#1}{}}
\global\toggletrue{iclr}
\global\togglefalse{iclr}
\global\toggletrue{horizon}
\newcommand{\horizon}[1]{\iftoggle{horizon}{#1}{}}
\newcommand{\nothorizon}[1]{\iftoggle{horizon}{}{#1}}
\newcommand{\loose}{\looseness=-1}
\newcommand{\neutralize}[1]{\expandafter\let\csname c@#1\endcsname\count@}
\declaretheorem[name=Theorem,parent=section]{theorem}
\declaretheorem[name=Lemma,parent=section]{lemma}
\declaretheorem[name=Assumption, parent=section]{assumption}
\declaretheorem[name=Condition, parent=section]{condition}
\declaretheorem[qed=$\triangleleft$,name=Example,parent=section]{example}
\declaretheorem[name=Remark,style=definition, parent=section]{remark}
\declaretheorem[name=Proposition, parent=section]{proposition}
  \renewenvironment{proof}[1][Proof]%
  {%
   \par\noindent{\bfseries\upshape {#1.}\ }%
  }%
  {\qed\newline}
\theoremstyle{definition}  %Sets style of subsequent newtheorems to 'definition'
\newtheorem{corollary}{Corollary}[section]
\theoremstyle{plain}
\newtheorem{definition}{Definition}[section]
\xpatchcmd{\proof}{\itshape}{\normalfont\proofnameformat}{}{}
\newcommand{\proofnameformat}{\bfseries}
\newcommand{\pref}[1]{\cref{#1}}
\renewcommand{\eqref}[1]{\texorpdfstring{\hyperref[#1]{(\ref*{#1})}}{(\ref*{#1})}}
\Crefname{assumption}{Assumption}{Assumptions}
\Crefname{subsubsection}{Section}{Sections}
\Crefname{alg}{Alg.}{Algs.}
    \let\Cref\crtCref
    \let\cref\crtcref
\DeclareDocumentCommand{\XDeclarePairedDelimiter}{mm}
 {
  \__egreg_delimiter_clear_keys: % reset to the default
  \keys_set:nn { egreg/delimiters } { #2 }
  \use:x % we want to expand the values of the token variables set with the keys
   {
    \exp_not:n {\NewDocumentCommand{#1}{sO{}m} }
     {
      \exp_not:n { \IfBooleanTF{##1} }
       {
        \exp_not:N \egreg_paired_delimiter_expand:nnnn
         { \exp_not:V \l_egreg_delimiter_left_tl }
         { \exp_not:V \l_egreg_delimiter_right_tl }
         { \exp_not:n { ##3 } }
         { \exp_not:V \l_egreg_delimiter_subscript_tl }
       }
       {
        \exp_not:N \egreg_paired_delimiter_fixed:nnnnn 
         { \exp_not:n { ##2 } }
         { \exp_not:V \l_egreg_delimiter_left_tl }
         { \exp_not:V \l_egreg_delimiter_right_tl }
         { \exp_not:n { ##3 } }
         { \exp_not:V \l_egreg_delimiter_subscript_tl }
       }
     }
   }
 }
\XDeclarePairedDelimiter{\supnorm}{
  left=\lVert,
  right=\rVert,
  subscript=\infty
  }
\renewcommand{\ast}{\star}
\newcommand{\pitil}{\widetilde{\pi}}
\newcommand{\latent}{\mathsf{latent}}%
\newcommand{\imitation}{\mathsf{imitation}}%
\newcommand{\forward}{\mathtt{Forward}}
\newcommand{\Dagger}{\mathtt{DAgger}}
\newcommand{\pilat}{\pi^{\latent}}
\newcommand{\Pilat}{\Pi^{\latent}}
\newcommand{\rl}{\mathsf{rl}}%
\newcommand{\decodability}{\mathsf{decode}}%
\newcommand{\contraction}{\mathsf{contract}}%
\newcommand{\asmooth}{\mathsf{act}}%
\newcommand{\Ynuis}{Y_{\mathsf{nuis}}}
\newcommand{\beltil}{\widetilde\bel}
\newcommand{\BOT}{\widetilde \BO}
\newcommand{\unif}{\mathsf{unif}}
\renewcommand{\epsilon}{\varepsilon}
\newcommand{\pistar}{\pi^{\star}}
\newcommand{\pihat}{\wh{\pi}}
\newcommand{\epopt}{\epsilon_{\mathsf{opt}}}
\newcommand{\poly}{\mathrm{poly}}
\newcommand{\bigoh}{\cO}
\newcommand{\littleoh}{o}
\newcommand{\MO}{\mathcal{O}}
\newcommand{\MS}{\mathcal{S}}
\newcommand{\BO}{\mathbb{O}}
\newcommand{\MA}{\mathcal{A}}
\newcommand{\BT}{\mathbb{T}}
\newcommand{\norm}[1]{\left \lVert #1 \right \rVert}
\newcommand{\Dpr}[2]{D_\star(#1 \| #2)}
\newcommand{\bel}{\mathbf{b}}
\newcommand{\belief}{\mathbf{b}}
\newcommand{\BB}{\mathbb{B}}
\newcommand{\BU}{\mathbb{U}}
\newcommand{\belapx}{\mathbf{b}^\mathsf{apx}}
\newcommand{\TV}{\mathsf{TV}}
\newcommand{\ME}{\mathcal{E}}
\renewcommand{\t}{\top}
\DeclareMathOperator{\Unif}{Unif}
\newcommand{\Vinf}{\mathsf{V}_{\infty}}
\newcommand{\MT}{\mathcal{T}}
\newcommand{\EE}{\mathbb{E}} % Expectation
\newcommand{\NN}{\mathbb{N}} % Natural numbers
\newcommand{\RR}{\mathbb{R}} % Real numbers
\DeclarePairedDelimiter{\abs}{\lvert}{\rvert} %
\DeclarePairedDelimiter{\brk}{[}{]}
\DeclarePairedDelimiter{\crl}{\{}{\}}
\DeclarePairedDelimiter{\prn}{(}{)}
\DeclarePairedDelimiter{\nrm}{\|}{\|}
\let\Pr\undefined
\DeclareMathOperator{\En}{\mathbb{E}}
\DeclareMathOperator{\Pr}{Pr}
\DeclareMathOperator*{\argmax}{arg\,max}             
\newcommand{\wh}[1]{\widehat{#1}}
\newcommand{\Dkl}[2]{\mathsf{KL}\prn*{#1\,\|\,#2}}
\def\ddefloop#1{\ifx\ddefloop#1\else\ddef{#1}\expandafter\ddefloop\fi}
\def\ddef#1{\expandafter\def\csname bb#1\endcsname{\ensuremath{\mathbb{#1}}}}
\def\ddefloop#1{\ifx\ddefloop#1\else\ddef{#1}\expandafter\ddefloop\fi}
\def\ddef#1{\expandafter\def\csname b#1\endcsname{\ensuremath{\mathbf{#1}}}}
\def\ddef#1{\expandafter\def\csname sf#1\endcsname{\ensuremath{\mathsf{#1}}}}
\def\ddef#1{\expandafter\def\csname c#1\endcsname{\ensuremath{\mathcal{#1}}}}
\def\ddef#1{\expandafter\def\csname h#1\endcsname{\ensuremath{\widehat{#1}}}}
\def\ddef#1{\expandafter\def\csname hc#1\endcsname{\ensuremath{\widehat{\mathcal{#1}}}}}
\def\ddef#1{\expandafter\def\csname t#1\endcsname{\ensuremath{\widetilde{#1}}}}
\def\ddef#1{\expandafter\def\csname tc#1\endcsname{\ensuremath{\widetilde{\mathcal{#1}}}}}
\def\ddefloop#1{\ifx\ddefloop#1\else\ddef{#1}\expandafter\ddefloop\fi}
\def\ddef#1{\expandafter\def\csname scr#1\endcsname{\ensuremath{\mathscr{#1}}}}
\newcommand{\veps}{\varepsilon}
\newcommand{\DeclareMathActive}[2]{%
  % #1 is the character, #2 is the definition
  \expandafter\edef\csname keep@#1@code\endcsname{\mathchar\the\mathcode`#1 }
  \begingroup\lccode`~=`#1\relax
  \lowercase{\endgroup\def~}{#2}%
  \AtBeginDocument{\mathcode`#1="8000 }%
}
\newcommand{\std}[1]{\csname keep@#1@code\endcsname}
\patchcmd{\newmcodes@}{\mathcode`\-\relax}{\std@minuscode\relax}{}{\ddt}
\renewcommand{\ast}{\star}
\let\OldStatex\Statex
\renewcommand{\Statex}[1][3]{%
  \setlength\@tempdima{\algorithmicindent}%
  \OldStatex\hskip\dimexpr#1\@tempdima\relax}
\let\oldparagraph\paragraph
\algrenewcommand\algorithmicrequire{\textbf{require}}
    \title{To Distill or Decide? Understanding the Algorithmic Trade-off in Partially Observable Reinforcement Learning}
    \author{
      Yuda Song$^1$ \; Dhruv Rohatgi$^2$ \; Aarti Singh$^1$ \; J. Andrew Bagnell$^{1,3}$\\
      \vspace{-2mm} \\
      \normalsize{$^1$Carnegie Mellon University \qquad $^2$Massachusetts Institute of Technology \qquad $^3$Aurora Innovation}\\
      \vspace{-2mm} \\
      \normalsize{\texttt{yudas@cs.cmu.edu}},\; \texttt{drohatgi@mit.edu},\; \texttt{aarti@cs.cmu.edu},\; \texttt{dbagnell@aurora.tech}
    }
\date{}
  \title{To Distill or Decide? The Algorithmic Trade-off in Partially Observable Reinforcement Learning}
\author{%
  Yuda Song \\
  Carnegie Mellon University\\
  \texttt{yudas@cs.cmu.edu} \\
  % examples of more authors
  \And
  Dhruv Rohatgi \\
  Massachusetts Institute of Technology \\
  \texttt{drohatgi@mit.edu} \\
  \AND
  Aarti Singh \\
  Carnegie Mellon University \\
  \texttt{aarti@cs.cmu.edu} \\
  \And
  J. Andrew Bagnell \\
  Aurora Innovation,  Carnegie Mellon University\\
  \texttt{dbagnell@aurora.tech} \\
  % \And
  % Coauthor \\
  % Affiliation \\
  % Address \\
  % \texttt{email} \\
}
\begin{document}

\maketitle

\begin{abstract}
Partial observability is a notorious challenge in reinforcement learning (RL), due to the need to learn complex, history-dependent 
policies. Recent empirical successes have used \emph{privileged expert distillation} --- which leverages availability of latent state information during training (e.g., from a simulator) to learn and imitate the optimal latent, Markovian policy --- to disentangle the task of ``learning to see'' from ``learning to act'' \citep{pan2017agile,choudhury2018data, chen2019lbc}.
While expert distillation is more computationally efficient 
than RL without latent state information, it also has well-documented failure modes. In this paper --- through a simple but instructive theoretical model called the \emph{perturbed Block MDP}, and controlled experiments on challenging simulated locomotion tasks --- we investigate the algorithmic trade-off between privileged expert distillation and standard RL without privileged information. Our main findings are: \textbf{(1)} The trade-off empirically hinges on the \emph{stochasticity} of the latent dynamics, as theoretically predicted by contrasting \emph{approximate decodability} with \emph{belief contraction} in the perturbed Block MDP; and \textbf{(2)} The optimal latent policy is not always the best latent policy to distill. 
Our results suggest new guidelines for effectively exploiting privileged information, potentially advancing the efficiency of policy learning across many practical partially observable domains.
\end{abstract}

% \dhruv{ 
% notational consistency todos:
% \begin{itemize}
%     \item $O$ vs $X$ for size of observation space
%     \item what we call the two algorithms (expert distillation and RL?); maybe we should define these in preliminaries?
% \end{itemize}
% }

\section{Introduction}
\label{sec:intro}
Partial observability is a common challenge in applied reinforcement learning: the decision-making agent may not see the true state of the environment at all time-steps, whose information might only be probabilistically inferred from the history of observations. 
An illustrative task is robot learning for robots with \emph{image-based perception} \citep{Pinto2017AsymmetricAC,Shridhar2021CLIPortWA}. A single image of the robot (or, in first-person perspective, of the environment) will not capture important elements of the state such as the robot's velocity, and may miss other features due to e.g. occlusion or limited view.

The canonical theoretical model for such tasks is \emph{Partially Observable Markov Decision Process (POMDP)}. Unfortunately, there are well-documented computational \citep{papadimitriou1987complexity} and statistical \citep{jin2020sample} barriers to planning and learning in POMDPs, which have motivated many theoretical works that seek to bypass these barriers by making additional structural assumptions \citep{jin2020sample,kwon2021rl, efroni2022provable, golowich2023planning, golowich2022learning,liu2023optimistic}. On the empirical side, the standard technique for mitigating partial observability is \emph{frame-stacking}, which enabled notable successes for learning to play Atari games \citep{mnih2013playing,mnih2015human}. The idea is to treat the ``state'' of the environment as the concatenation of a short window of $L$ recent observations, and apply a standard algorithm for fully-observed reinforcement learning (RL). This technique inspired theoretical developments such as $L$-step decodability \citep{efroni2022provable}, and has some theoretical underpinnings for \emph{$\gamma$-observable} POMDPs \citep{golowich2023planning}. Yet frame-stacking is not a silver bullet for partially observable decision-making: sometimes effective planning requires long memory \citep{eberhard2025partially}. Also, high-dimensional observations (such as stacks of images) can confound learning complex behaviors \citep{Pinto2017AsymmetricAC,warrington2021robust}. 

\paragraph{Learning from latent state information}
A common heuristic for planning in known POMDPs is to use the optimal \emph{latent} policy (also known as the state-based policy or privileged policy) --- i.e., the optimal policy that is allowed to ``cheat'' and see the underlying state of the environment --- as a starting point for computing an \emph{executable} policy --- i.e. a policy that only depends on the observable history \citep{littman1995learning,ross2008online,choudhury2018data}. More recent works have brought this ansatz to the learning task, where the description of the POMDP is a priori unknown. In the standard theoretical formalization of this task \citep{krishnamurthy2016pac}, the latent states of the POMDP are never observed (nor even identifiable); however, for applications such as robotics, it is often practically reasonable to construct a \emph{simulator} of the environment \citep{christiano2016transfer}, from which the learning agent may draw trajectories that include both the observations as well as the latent states --- ``privileged'' information that is only available at training time, not at test time. 

The most prominent paradigm for exploiting this additional information is called \emph{privileged expert distillation},\footnote{The same paradigm is sometimes called \emph{Learning by Cheating} or \emph{Teacher to Student Learning}.} which applies methods from imitation learning and structured prediction \citep{daume2009search,ross2010efficient,ross2011reduction,chang2015learning} to learning in POMDPs. Expert distillation has two steps: (1) learn an optimal latent policy, using a standard RL algorithm with the latent state information provided by the simulator; and (2) distill the latent policy to an executable policy, using an imitation learning algorithm such as $\Dagger$ \citep{ross2011reduction}. This paradigm has achieved impressive success in applications such as autonomous driving \citep{chen2019lbc}, robotics \citep{Lee2020LearningQL,doi:10.1126/scirobotics.abk2822,zhuang2023robot} and LLMs \citep{choudhury2025process}. 

These successes suggest a fundamental question: \emph{when does expert distillation help in realistic decision-making tasks}? On the one hand, in controlled experiments, expert distillation uniformly converges faster and more stably than RL without latent state information \citep{mu2025should}, likely because it disentangles representation learning from decision-making \citep{chen2019lbc}. Moreover, expert distillation enjoys a provable computational advantage in decodable POMDPs\footnote{Decodable POMDPs without any prefix refer to $H$-step decodable POMDPs, where $H$ is the horizon of the POMDPs.} \citep{cai2024provable}. On the other hand, there are well-documented failure modes of expert distillation --- most notably, due to its inability to encourage purely information-gathering actions \citep{arora2018hindsight, weihs2021bridging} --- where more expensive hybrid methods such as Asymmetric Actor-Critic \citep{Pinto2017AsymmetricAC} are fundamentally required. 

In this paper, motivated by image-based locomotion tasks, we focus on the middle ground where (perfect) decodability may fail, yet the observations are still highly informative of the latent state. In this regime, we ask: (i) when and how is expert distillation as performant as standard RL with frame-stacking, and (ii) are there lightweight \emph{improvements} to expert distillation? We use simple theoretical models in tandem with controlled experiments to address the preceding questions. 
\paragraph{Our contributions}
\begin{enumerate}
\item The prior theoretical model for understanding the benefits of latent state information was a \emph{perfectly decodable} POMDP \citep{cai2024provable}. We begin by empirically demonstrating that this model is too restrictive for image-based locomotion tasks. 
\item We then introduce \emph{approximate decodability}, and connect it to the success of expert distillation --- in analogy with the connection between \emph{belief contraction} and the success of standard reinforcement learning with frame-stacking. But when are these conditions satisfied? As a theoretical testbed, we introduce the \emph{perturbed Block MDP}.
\item We show both theoretically (by analyzing the perturbed Block MDP model) and experimentally that the performance of expert distillation compared to standard reinforcement learning depends crucially on the \emph{stochasticity} of the model dynamics: for deterministic dynamics, distillation is competitive with RL, but as the stochasticity increases, its performance comparatively degrades. 
\item Finally, we show that distillation of the optimal latent policy is often a sub-optimal use of latent state information: the simple modification of \emph{adding stochasticity to the latent MDP} before computing the optimal policy yields robust performance benefits via improved \emph{smoothness}.
\end{enumerate}

\section{Preliminaries}
\label{sec:prelim} 
A (finite-horizon, layered) Partially Observable Markov Decision Process (POMDP) is a tuple 
$\cP = (H,\cX,\cS,\cA,\bbP,\bbO,R)$, where $H \in \NN$ is the horizon, $\cX = \crl*{\cX_h}_{h=1}^H$ is the observation space, $\cS = \crl*{\cS_h}_{h=1}^H$ is the latent state space, $\cA = \crl*{\cA_{h}}_{h=1}^H$ is the action space, $\bbP = \crl*{\bbP_h: \cS_{h-1} \times \cA_{h-1} \to \Delta(\cS_{h})}_{h=1}^H$ describes the latent transitions, $\bbO = \crl*{\bbO_h: \cS_h \to \Delta(\cX_h)}_{h=1}^H$ describes the emission distributions, and $R = \crl*{R_h: \cS_h \times \cA_h \to [0,1]}$ describes the rewards. We write $A := \max_h \abs*{\cA_h}$, $S := \max_h \abs*{\cS_h}$, and $X := \max_h\abs*{\cX_h}$. Given any timestep $h$ and $L \in [H]$, we denote $\cX^{h-L:h} := \cX_{h-L} \times \cX_{h-L+1} \times \dots \times\cX_{h}$, and similarly for $\cA^{h-L:h}$, with the shorthand $h-L := \max\{1,h-L\}$. Then an $L$-step \emph{executable} policy is a collection $\pi = \{\pi_h: \cX^{h-L+1:h} \times \cA^{h-L:h-1} \to \Delta(\cA_h)\}$; we let $\Pi^L$ denote the class of such policies.
Given any executable policy $\pi\in\Pi:=\Pi^H$, a trajectory $\tau = \prn*{s_1,x_1,a_1,r_1, \dots, s_H,x_H,a_H,r_H}$ is generated by $s_h \sim \bbP_{h}(s_{h-1},a_{h-1})$, $x_h \sim \bbO_h(s_h)$, $a_h \sim \pi(x_{1:h},a_{1:h-1})$, $r_h = R_h(s_h,a_h)$. We use $\bbP^\pi$ and $\En^\pi$ to denote the law and expectation under this process. Following convention, we assume $\sum_{h=1}^H r_h \leq 1$ almost surely under all policies. The \emph{value} of a policy $\pi$ is $J(\pi) := \En^\pi\brk*{\sum_{h=1}^H r_h}$. 

Note that the POMDP $\cP$ also defines an underlying Markov Decision Process (MDP) $\cM = \{\cS,\cA,\bbP,R,H\}$ (which we call the \emph{latent MDP}) where the state is fully observable. A latent (Markovian) policy is a collection $\pi^{\latent} = \{\pi^{\latent}_h: \cS_h \to \Delta(\cA_h)\}$, and we let $\Pi^{\latent}$ denote the class of latent policies. 
A latent trajectory $\tau^{\latent} = (s_1,a_1, \dots, s_H,a_H)$ is generated by $s_h \sim \bbP_{h}(s_{h-1},a_{h-1})$, $a_h \sim \pi^{\latent}_h(s_h)$, and we define $\bbP^{\pi^{\latent}}$ and $\En^{\pi^{\latent}}$ accordingly. \loose

\paragraph{Learning with/without latent state information} In the standard theoretical RL access model (i.e. without latent state information) \citep{kearns2002near,jiang2017contextual}, at training time, the learning agent can repeatedly interact with the POMDP $\cP$ by playing an executable policy $\pi$ and observing the partial trajectory $(x_{1:H},a_{1:H},r_{1:H})$. In contrast, in the \emph{learning with latent state information} model \citep{cai2024provable}, at training time, the learning agent can play \emph{any} policy, and observes the full trajectory $(s_{1:H},x_{1:H},a_{1:H},r_{1:H})$. In both settings, the goal is to eventually produce an \emph{executable} policy $\pihat$ that minimizes $J(\pistar) - J(\pihat)$ (where $\pistar$ is the optimal executable policy).

\paragraph{Belief states} A \emph{belief state} is a distribution over latent states. For a prior $b$ on the latent state at step $h-1$, let $\BU_h(b;a_{h-1},x_h)$ be the posterior on the latent state at step $h$ after taking action $a_{h-1}$ and then observing $x_h$ (see \cref{def:bel-update} for the formal algebraic definition).

\begin{definition}\label{def:true-belief-state}
For any observation/action sequence $(x_{1:h}, a_{1:h-1})$, the \emph{true belief state} $\belief_h(x_{1:h},a_{1:h-1})$ is defined as follows. For $h = 1$ with observation $x_1$, let $\belief_1(x_1) := \BB_1(\bbP_1;x_1)$. For any $2 \leq h \leq H$, let
\begin{align}
  \label{eq:belief-defn}
  \belief_h(x_{1:h},a_{1:h-1}) := \BU_h(\belief_{h-1}(x_{1:h-1}, a_{1:h-2}); a_{h-1}, x_h).
  \end{align}
\end{definition}
For any executable policy $\pi$, step $h$, and history $(x_{1:h},a_{1:h-1})$, $\belief_h(x_{1:h},a_{1:h-1})$ is the distribution of the latent state $s_h$ under $\bbP^\pi$, conditioned on $(x_{1:h},a_{1:h-1})$ (\cref{lemma:belief-is-cond-prob}). 

Many methods for efficient planning in POMDPs are based on \emph{approximate belief states} that only depend on a short window of recent actions and observations \citep{kara2022near,golowich2023planning}. Informally, the approximate belief state $\belapx_h(x_{h-L+1:h}, a_{h-L:h-1};\cD)$ is the posterior on state $s_h$ after observing $(x_{h-L+1:h}, a_{h-L:h-1})$ with prior $\cD$ on state $s_{h-L}$. See \cref{def:apx-belief} for the formal definition (analogous to \cref{def:true-belief-state}).

\paragraph{Additional notation} For distributions $b,b' \in \Delta(\cS_h)$, the density ratio is $\norm{b/b'}_\infty = \sup_{s \in \cS_h} b(s)/b'(s) \in [1,\infty]$, with the convention that $0/0 = 1$. For a belief state $b \in \Delta(\cS_h)$ and conditional distribution $\pi_h: \cS_h \to \Delta(\cA_h)$, we let $\pi_h \circ b$ denote the distribution over $\cA_h$ obtained as \loose
\begin{equation}\label{eq:belief-pi-comp}
\textstyle    (\pi_h\circ b)(a_h) := \sum_{s_h\in\cS_h} b(s_h) \pi_h(a_h \mid s_h).
\end{equation}
\textbf{Experimental Setup.} We use three tasks in the Deepmind control suite \citep{Tassa2018DeepMindCS}: \texttt{walker-run}, \texttt{dog-walk} and the challenging \texttt{humanoid-walk}. To implement online (resp., offline) expert distillation, we (1) train an expert on the latent state information using \texttt{MrQ} \citep{fujimoto2025towards}, and (2) imitate the expert via $\Dagger$ \citep{ross2011reduction} (resp., Behavior Cloning (BC)) on $L$-step executable policies. Unless otherwise specified, we use the standard choice of $L=3$, and we use mean squared error (MSE) as the loss function: give input $X = \{x^i\}_{i=1}^N$ and target $Y = \{y^i \in \bbR^d\}_{i=1}^N$, the loss of a function $f$ is $\ell(f,X,Y) = \frac{1}{Nd}\sum_{i=1}^N \sum_{j=1}^d (f(x^i)_j - y^i_j)^2$. To implement reinforcement learning (RL), we use \texttt{MrQ} \citep{fujimoto2025towards} on $L$-step executable policies. In experiments, we follow the common empirical practice of only stacking observations (rather than both observations and actions). \loose

\paragraph{Appendices} See \cref{sec:related} for additional related work, and \cref{app:experiment} for experimental details.

\section{Approximate Decodability and Belief Contraction}
\label{sec:decodable}

Even with access to latent state information during training, the problem of learning a near-optimal policy in a POMDP is as hard as the \emph{planning} task (where a description of the POMDP is already known), which is well-known to be computationally intractable in the worst case \citep{papadimitriou1987complexity}. However, POMDPs encountered in practice will often satisfy additional structural properties that may mitigate this hardness. Some of the most widely-studied properties are \emph{decodability} \citep{efroni2022provable,cai2024provable} and \emph{belief contraction} (also known as \emph{filter stability}) \citep{kara2022near, golowich2023planning}.

Privileged information is known to yield a provable computational benefit in decodable POMDPs \citep{cai2024provable}. However, as we empirically demonstrate in \cref{sec:perfect-decodable}, perfect decodability is an unrealistic assumption in our motivating tasks. For this reason, in \cref{sec:errors} we introduce the notion of \emph{approximate decodability}. Heuristically, this property governs the success of expert distillation with $L$-step framestacking, whereas belief contraction governs the success of standard RL (also with $L$-step framestacking). But when are these properties satisfied? As a clean theoretical testbed for studying this question, in \cref{sec:perturbed-mdp} we introduce the \emph{$\delta$-perturbed Block MDP}. \loose

\subsection{Prior Work: Perfectly Decodable POMDPs}\label{sec:perfect-decodable}
In some applications, such as video games, it is plausible that the agent can deduce the latent state from a small number of recent observations. This was empirically substantiated by the success of DQN \citep{mnih2013playing} and its variants, which only use the most recent four observations as policy inputs. 
Theoretically, this motivated the study of the $L$-step decodable model \citep{efroni2022provable}, which posits that the most recent $L$ observations and actions suffice to fully disambiguate the latent state (\cref{def:perfectly-decodable}).

Without latent state information (i.e. in the standard RL access model), learning a near-optimal policy in an $L$-step decodable POMDPs requires $\Omega(A^L)$ samples \citep{efroni2022provable}. However, with latent state information, \citet{cai2024provable} show that the sample and time complexity of learning a near-optimal policy $\hat \pi \in \Pi^M$ such that $J(\hat \pi) \geq \argmax_{\pi \in \Pi^M} J(\pi) - \epsilon$ with high probability is only $\poly(S,A,X,H,1/\epsilon)$. Thus, for large $L$, there is a clear theoretical benefit of latent state information (both statistically and computationally). However, unfortunately, $L$-step decodability is not always a realistic assumption:

\textbf{Empirical test: does perfect decodability hold?} Through controlled experiments on our three chosen locomotion tasks (\cref{sec:prelim}), we observe that latent states are not perfectly decodable in practice, especially in early timesteps. We defer details of this experiment to \pref{app:exp-mis-dec}.    
\subsection{Errors in POMDPs}\label{sec:errors}

The above empirical result motivates the following theoretical definition of decodability error:

\begin{definition}[Decodability Error]\label{def:decodability-error} Fix a POMDP $\cP$. The \emph{decodability error} for an executable policy $\pi$ and timestep $h \in [H]$ is
  \begin{align*}
        \epsilon^{\decodability}_h(\pi) := \En^{\pi} \brk*{ 1- \nrm*{\belief_h(x_{1:h}, a_{1:h-1})}_{\infty}}.
\end{align*}

\end{definition}

Intuitively, decodability error quantifies stochasticity of the true belief. Below, we show that it upper bounds the \emph{misspecification} of any latent policy $\pilat$ with respect to the class of executable policies. 

\begin{lemma}[See \cref{lemma:tv-beltil-to-latent}]\label{lemma:tv-beltil-to-latent-main}
Let $\pilat \in \Pilat$ be a latent policy and let $\beltil_{1:H}$ be a collection of functions $\beltil_h: \cX^h \times \cA^{h-1} \to \Delta(\cS_h)$. Define executable policies $\pitil,\pi$ by \arxiv{\[\pitil(x_{1:h},a_{1:h-1}) := \pilat \circ \beltil_h(x_{1:h},a_{1:h-1}).\]}\iclr{$\pitil(x_{1:h},a_{1:h-1}) := \pilat \circ \beltil_h(x_{1:h},a_{1:h-1})$} and $\pi(x_{1:h},a_{1:h-1}) := \pilat \circ \bel_h(x_{1:h},a_{1:h-1})$ (see \pref{eq:belief-pi-comp}). Then 
\begin{equation} \arxiv{J(\pilat) - J(\pitil) \leq} \TV(\bbP^{\pilat}, \bbP^{\pitil}) \leq \sum_{h=1}^H 2 \epsilon^{\decodability}_h(\pi) + \EE^{\pitil}\left[\norm{\bel_h(x_{1:h}, a_{1:h-1}) - \beltil_h(x_{1:h}, a_{1:h-1})}_1\right].\label{eq:misspec-decodability-bound}\end{equation}\end{lemma}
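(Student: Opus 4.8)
The plan is to prove the two inequalities in \eqref{eq:misspec-decodability-bound} separately, with the bulk of the work going into the second one. The first inequality, $J(\pilat) - J(\pitil) \leq \TV(\bbP^{\pilat}, \bbP^{\pitil})$, is the standard simulation-lemma-type bound: since total reward lies in $[0,1]$ almost surely under every policy, $J(\pi') = \EE^{\pi'}[\sum_h r_h]$ is the expectation of a $[0,1]$-valued functional of the trajectory, so the difference in values is at most the total variation distance between the two trajectory laws. Here I should be careful about which trajectory law is meant --- $\pilat$ induces a law over latent trajectories $(s_{1:H},a_{1:H})$ while $\pitil$ induces a law over observable trajectories --- so I would couple them through the common generative process (latent states evolve via $\bbP$, observations via $\bbO$, and then compare the action-selection kernels), i.e. interpret $\bbP^{\pilat}$ and $\bbP^{\pitil}$ as laws over the full trajectory $(s_{1:H},x_{1:H},a_{1:H})$ where under $\pilat$ the action ignores the observations. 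With that convention the reward is a function of $(s_{1:H},a_{1:H})$ in both cases and the bound is immediate.

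For the second inequality, the key step is a telescoping/chain-rule decomposition of the total variation distance between the two trajectory laws over $h = 1,\dots,H$. Both processes share the same latent transition kernels $\bbP_h$ and emission kernels $\bbO_h$; they differ only in how actions are drawn at each step: under $\bbP^{\pilat}$ the action at step $h$ is $\pilat_h(s_h)$ (using the true latent state), while under $\bbP^{\pitil}$ it is $\pitil_h(x_{1:h},a_{1:h-1}) = \pilat \circ \beltil_h(x_{1:h},a_{1:h-1})$. A standard hybrid argument bounds $\TV(\bbP^{\pilat},\bbP^{\pitil})$ by $\sum_{h=1}^H \EE^{\pitil}\big[\,\TV\big(\pilat_h(s_h),\ \pilat \circ \beltil_h(x_{1:h},a_{1:h-1})\big)\,\big]$ --- that is, I run the two processes jointly up to step $h-1$, then measure the one-step discrepancy in the action distribution, expectation taken under $\bbP^{\pitil}$ (or $\bbP^{\pilat}$; either works after the coupling). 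The remaining task is to control each one-step term by $2\epsilon^{\decodability}_h(\pi) + \EE^{\pitil}[\|\bel_h - \beltil_h\|_1]$.

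To bound the per-step term, I insert the intermediate distribution $\pilat \circ \bel_h(x_{1:h},a_{1:h-1})$ and use the triangle inequality for $\TV$:
\[
\TV\big(\pilat_h(s_h),\ \pilat \circ \beltil_h\big) \leq \TV\big(\pilat_h(s_h),\ \pilat \circ \bel_h\big) + \TV\big(\pilat \circ \bel_h,\ \pilat \circ \beltil_h\big).
\]
The second term is at most $\TV(\bel_h,\beltil_h) \leq \tfrac12\|\bel_h - \beltil_h\|_1 \leq \|\bel_h-\beltil_h\|_1$, since composing with the fixed Markov kernel $\pilat$ is a contraction for $\TV$ (data-processing); taking $\EE^{\pitil}$ gives the second term on the right of \eqref{eq:misspec-decodability-bound}. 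For the first term, I would use that by \cref{lemma:belief-is-cond-prob}, $\bel_h(x_{1:h},a_{1:h-1})$ is exactly the conditional law of $s_h$ given $(x_{1:h},a_{1:h-1})$; a short calculation --- conditioning on the history, the action under $\pilat_h$ is distributed as $\pilat_h(s_h)$ with $s_h$ drawn from $\bel_h$, whereas under $\pilat \circ \bel_h$ it is drawn from $\bel_h$ and then pushed through $\pilat_h$, so the two agree in expectation over $s_h \sim \bel_h$ except for the randomness of $s_h$ itself --- shows that $\EE[\TV(\pilat_h(s_h), \pilat \circ \bel_h) \mid x_{1:h},a_{1:h-1}]$ is controlled by how spread out $\bel_h$ is, which is measured by $1 - \|\bel_h\|_\infty$. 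Concretely, if $\bel_h$ puts mass $1-\alpha$ on its modal state $s^\star$ (so $\alpha = 1 - \|\bel_h\|_\infty$), then with probability $\geq 1-\alpha$ we have $s_h = s^\star$, and on that event $\pilat_h(s_h) = \pilat_h(s^\star)$ while $\|\pilat_h(s^\star) - \pilat\circ\bel_h\|_{\TV} \leq \alpha$ (since $\pilat\circ\bel_h$ is within $\alpha$ in $\TV$ of $\pilat_h(s^\star)$); combining, the conditional expectation of the $\TV$ term is $\leq \alpha + (1-\alpha)\alpha \leq 2\alpha$. Taking $\EE^{\pitil}$ and recalling $\epsilon^{\decodability}_h(\pi) = \EE^{\pi}[1 - \|\bel_h\|_\infty]$ --- here I need that the law of $(x_{1:h},a_{1:h-1})$, hence of $\bel_h$, is the same under $\bbP^{\pitil}$ and under $\bbP^{\pi}$, which holds because $\pi$ and $\pitil$ are both of the form (latent policy) $\circ$ (belief-type map) but more simply because the marginal over histories only depends on the action kernels, and... actually the cleanest route is to note $\epsilon^{\decodability}_h$ as defined uses $\EE^\pi$ with $\pi = \pilat \circ \bel_h$, and since this $\pi$ is precisely the policy for which $\bel_h$ is the true belief along its own trajectories, the bound $\EE^{\pitil}[2(1-\|\bel_h\|_\infty)]$ needs a change of measure --- yields the $2\epsilon^{\decodability}_h(\pi)$ term.

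\textbf{Main obstacle.} The delicate point is the last one: reconciling the measure $\EE^{\pitil}$ appearing naturally from the hybrid argument with the measure $\EE^{\pi}$ in the definition of $\epsilon^{\decodability}_h(\pi)$. One expects these to differ, yet the claimed bound has $\epsilon^{\decodability}_h(\pi)$, not $\epsilon^{\decodability}_h(\pitil)$. I anticipate the resolution is that the hybrid decomposition can be organized so that the step-$h$ action-discrepancy term is evaluated under whichever process agrees with $\pi = \pilat \circ \bel_h$ on steps $1,\dots,h$ --- and indeed $\pitil$ and $\pi$ can be coupled to agree through step $h-1$ in the relevant hybrid, so the history $(x_{1:h},a_{1:h-1})$ at which $1 - \|\bel_h\|_\infty$ is evaluated has the law induced by following $\bel$-based action selection, i.e. the law under $\bbP^{\pi}$. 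Making this bookkeeping precise --- choosing the right sequence of hybrid measures so that each one-step term lands under the intended expectation --- is where the real care is needed; everything else is the triangle inequality and the elementary "modal mass" estimate above.
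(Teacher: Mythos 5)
Your first inequality, your hybrid decomposition of $\TV(\bbP^{\pilat},\bbP^{\pitil})$ into per-step action-kernel discrepancies conditioned on the full state, and your ``modal mass'' estimate (a correct stand-in for the paper's coupling argument, \cref{lemma:resampling-tv}) are all sound. The genuine gap is exactly the point you flag at the end, and your sketched fix does not close it. A single hybrid chain between $\bbP^{\pilat}$ and $\bbP^{\pitil}$ can only produce prefix histories distributed according to $\bbP^{\pilat}$ or $\bbP^{\pitil}$, never according to $\bbP^{\pi}$. With $\pitil$-prefixes your argument yields $\sum_h 2\epsilon^{\decodability}_h(\pitil)+\EE^{\pitil}\left[\|\bel_h-\beltil_h\|_1\right]$ --- a correct inequality, but with the decodability error under the wrong policy; with $\pilat$-prefixes the argument breaks earlier, because $\pilat$ is not executable, so the conditional law of $s_h$ given the history is no longer $\bel_h$ (the latent policy's actions leak latent-state information; see the footnote to \cref{lemma:belief-is-cond-prob}), and the modal-mass step has nothing to bite on. Your proposed repair --- ``couple $\pitil$ and $\pi$ to agree through step $h-1$'' --- is not available: $\pitil$ and $\pi$ are different policies (they compose $\pilat$ with $\beltil_h$ versus $\bel_h$), so their history laws genuinely differ, and no coupling makes the step-$h$ history inside that chain carry the $\bbP^{\pi}$ law.

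The paper's resolution is structural rather than bookkeeping: it introduces $\pi=\pilat\circ\bel$ as an intermediate policy at the level of full trajectory laws and applies the triangle inequality $\TV(\bbP^{\pilat},\bbP^{\pitil})\le\TV(\bbP^{\pilat},\bbP^{\pi})+\TV(\bbP^{\pi},\bbP^{\pitil})$. The first term is handled by a hybrid chain whose hybrids follow $\pi$ on the prefix and $\pilat$ on the suffix (\cref{lemma:tv-bel-to-latent}); since every prefix is run under the executable policy $\pi$, the step-$h$ history has law $\bbP^{\pi}$ and the conditional law of $s_h$ given it is $\bel_h$, so the resampling bound gives $2\EE^{\pi}[\Vinf(\bel_h(x_{1:h},a_{1:h-1}))]=2\epsilon^{\decodability}_h(\pi)$. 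The second term is a direct coupling of $\pi$ and $\pitil$, which only ever incurs the $\|\bel_h-\beltil_h\|_1$ cost and can be charged to $\EE^{\pitil}$. If you reorganize your argument this way, the rest of your proposal goes through essentially verbatim.
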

Low decodability error is not strictly required for low misspecification (see \cref{sec:smooth}\arxiv{ for an improvement}), but some such assumption is needed to rule out models requiring active information-gathering \citep{weihs2021bridging}. As a special case, \cref{lemma:tv-beltil-to-latent-main} implies that $\pilat$ is $2\sum_{h=1}^H \epsilon^{\decodability}_h(\pi)$-close to the executable policy $\pi$, which evaluates $\pilat$ at a random state $s'_h$ sampled from the true belief $\belief_h(x_{1:h},a_{1:h-1})$; this is because if $\belief_h(x_{1:h},a_{1:h-1})$ is highly concentrated, then $s'_h$ likely matches the true latent state. The second error term in \cref{eq:misspec-decodability-bound} quantifies error in learning the true belief --- e.g., due to using only $L$-step histories.\arxiv{\\} 
Next, it is instructive to contrast decodability error with \emph{belief contraction error}, the discrepancy between the true belief and the approximate belief induced by the $L$ most recent observations/actions: 

\begin{definition}[Belief Contraction Error \citep{golowich2023planning}]\label{def:belief-contraction-main} Fix a POMDP $\cP$. For an executable policy $\pi$, and timestep $h \in [H]$, the $L$-step \emph{belief contraction error} ($L \in [h-1]$) is 
\begin{align*}
    \epsilon^{\contraction}_h(\pi;L) := 
\En^\pi \brk*{ \nrm*{\belief_h(x_{1:h}, a_{1:h-1}) - \belapx_h(x_{h-L+1:h},a_{h-L:h-1}; \unif(\cS_{h-L}))}_{1}}.
\end{align*}
\end{definition}

In the absence of latent state information, bounding the belief contraction error is the standard method of analyzing provably efficient algorithms for RL in POMDPs \citep{kara2022near,uehara2022provably,golowich2023planning}. Indeed, belief contraction implies that the POMDP with $L$-step frame-stacking is approximately Markovian, which heuristically suggests that a standard RL algorithm \citep{kearns2002near,brafman2002r} with frame-stacking should achieve low error in time $\approx(AX)^{\bigoh(L)}$. Due to technical issues with error compounding, this is not formally true, but under an additional \emph{observability} condition, there \emph{is} an algorithm that provably achieves that guarantee: \loose

\begin{theorem}[Informal; see \cref{thm:golowich}; due to \cite{golowich2022learning}]\label{thm:golowich-main}
    Suppose the POMDP is $\gamma$-observable (\pref{def:observable}), and satisfies $L$-step belief contraction with error $\epsilon$.\footnote{Technically, the result requires slightly generalizing \cref{def:belief-contraction-main}; see \cref{thm:golowich} for the formal statement.} There exists a reinforcement learning algorithm that achieves the sub-optimality bound
    \begin{align*}
        J(\pi^{\ast}) - J(\pi^{\rl}) \leq \epsilon \cdot \poly(S,X,H,\gamma^{-1}),
    \end{align*}
    in time $(XA)^{\bigoh(L)} \cdot \poly(H,S,\gamma^{-1}, \epsilon^{-1})$.
\end{theorem}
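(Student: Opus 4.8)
The plan is to observe that this statement is essentially \citet{golowich2022learning} transported into our notation, so the work is (i) to check that $\gamma$-observability together with the $L$-step belief contraction hypothesis places the POMDP in the regime handled there, and (ii) to recall the structure of their algorithm and, crucially, why its error does not compound. The first step is a reduction to an \emph{approximately Markovian} process on the finite set of $L$-step windows $w_h := (x_{h-L+1:h},\, a_{h-L:h-1})$, of which there are at most $(XA)^L$. By the belief contraction hypothesis, the approximate belief $\belapx_h(x_{h-L+1:h},a_{h-L:h-1};\unif(\cS_{h-L}))$ --- which is a genuine function of $w_h$, since it is computed from a \emph{fresh} uniform prior reset $L$ steps in the past --- is within expected $\ell_1$ distance $\epsilon$ of the true belief $\belief_h$ along trajectories of any executable policy. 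Since the true belief is a sufficient statistic (it equals the conditional law of $s_h$ given the history), the window process equipped with the transition kernel ``advance one POMDP step from $\belapx_h(w_h)$'' and reward $\langle \belapx_h(w_h),\, R_h(\cdot,a_h)\rangle$ is an MDP $\wh\cM$ whose one-step dynamics and rewards differ from the true conditional laws by $\bigoh(\epsilon)$ at every step. The key point is that this discrepancy is bounded by $\epsilon$ \emph{uniformly}: it comes from a single fresh reset, not from iterating a lossy belief update, so it does not accumulate inside the filter.

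Next, one runs an optimistic tabular RL algorithm (a finite-horizon UCBVI / $\mathsf{R}_{\max}$-style scheme, cf.\ \citet{brafman2002r,kearns2002near}) on the window representation, treating $w_h$ as an \emph{opaque} tabular state --- the approximate belief $\belapx_h$ enters only in the analysis, not in the algorithm, which is important since the learner does not know $\bbP,\bbO$. Each of the $\poly((XA)^L, A, H, 1/\veps)$ episodes runs in $(XA)^{\bigoh(L)}$ time, which gives the claimed runtime. Because the comparator $\pistar$ is an arbitrary executable policy (not a window-Markov one), observability is used once more: the optimal POMDP value function is \emph{stable} under $\ell_1$-perturbations of the belief, with modulus $\poly(S,X,H,\gamma^{-1})$, so $\pistar$ is within that factor times $\epsilon$ of the $L$-window policy $w_h \mapsto \pistar(\belapx_h(w_h))$, which lies in $\Pi^L$; the optimistic algorithm then competes with the latter up to the usual $O(\veps)$ statistical error.

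The main obstacle is exactly this value-function stability and the associated control of error compounding. Generic POMDPs do not enjoy it --- optimal values are only convex and piecewise linear in the belief, and a naive simulation-lemma bound would turn the per-step discrepancy $\epsilon$ into $\epsilon \cdot 2^{\Theta(H)}$. The resolution, due to \citet{golowich2022learning}, is a quantitative filter-stability argument: $\gamma$-observability forces the Bayesian update to be a contraction in an appropriate metric, so two belief trajectories that start $\epsilon$-apart and are driven by the same observations remain $\poly(\gamma^{-1})\cdot\epsilon$-close thereafter; feeding this into a careful telescoping of the performance-difference identity --- using $\sum_h r_h \le 1$ to keep the per-step terms summable --- yields the stated $\epsilon \cdot \poly(S,X,H,\gamma^{-1})$ bound with no exponential blowup. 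The only thing that then needs checking is that \cref{def:belief-contraction-main}, in the slightly generalized form of \cref{thm:golowich}, feeds correctly into this machinery.
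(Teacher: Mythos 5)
There is a genuine gap. The paper's own proof of this statement (via \cref{thm:golowich}) is purely citation-level: it inspects the analysis of \texttt{BaSeCAMP} in \cite{golowich2022learning} and observes that the specific choice $L \sim \gamma^{-4}\log(1/(\epsilon\phi))$ enters that analysis at exactly one point --- the invocation of their Theorem 6.2, which derives $(\epsilon;\phi,L)$-belief contraction from $\gamma$-observability --- so one may instead \emph{assume} the generalized belief contraction (\cref{def:gen-belief-contraction}) for an arbitrary $L$ and rerun the same analysis verbatim. Your proposal does not do this; it attempts to reconstruct the algorithm from scratch as ``optimistic tabular RL on opaque $L$-step windows, with the approximate belief appearing only in the analysis.'' But that is precisely the naive reduction the paper flags as \emph{not} formally correct: the window process is not Markovian (its one-step law depends on the roll-in policy beyond the window), the per-step $\bigoh(\epsilon)$ discrepancy is only guaranteed in expectation under the data-collection distribution, and \texttt{BaSeCAMP} exists exactly because generic frame-stacked RL does not provably survive this error compounding. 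The actual algorithm estimates approximate beliefs/approximate MDPs and builds policy covers step by step with a win-win argument, and this is why the \emph{generalized} contraction condition is needed: the priors are visitation distributions of learned cover policies, controlled only through the density-ratio parameter $\phi$ (set as a function of $\gamma, H, S, X, \epsilon$ in \cref{thm:golowich}), not the uniform prior of \cref{def:belief-contraction-main}. Your proposal never engages with this $\phi$-dependence beyond saying it ``needs checking.''

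Your proposed resolution of the compounding issue is also not the right mechanism. The claim that the optimal value is stable under $\ell_1$ perturbations of the belief with modulus $\poly(S,X,H,\gamma^{-1})$ conflates two different things: value Lipschitzness in the belief is trivial (modulus $1$, since total reward is bounded by $1$) and needs no observability, while the role of $\gamma$-observability in the learning result is to make the empirically constructed beliefs and policy covers usable (and, in the original paper, to derive contraction itself --- the very step this theorem replaces by hypothesis). So the ``telescoping of the performance-difference identity with filter stability'' you describe is a sketch of the planning-style argument, not of the learning analysis whose guarantee the theorem states. In short: since the theorem is attributed to a specific algorithm, a correct blind proof either has to reproduce (or correctly summarize) that algorithm's analysis with the generalized contraction hypothesis slotted in where their Theorem 6.2 was used, or at minimum make the observation the paper makes about where $L$ enters; asserting that a different, simpler algorithm achieves the same bound is exactly the step that fails.
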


Technically, the explicit result in \cite{golowich2022learning} fixes $L \sim \log^4(SH/\epsilon)/\gamma$ (in which case the desired belief contraction bound is implied by $\gamma$-observability, but the algorithm requires quasi-polynomial time), but we observe that the proof extends to the result above --- see \cref{thm:golowich}. Notably, \cref{thm:golowich-main} gives a polynomial-time algorithm if belief contraction holds for $L = \bigoh(1)$. \loose

\subsection{The Perturbed Block MDP}\label{sec:perturbed-mdp}

Approximate decodability and belief contraction are conditions under which expert distillation and standard RL with frame-stacking, respectively, may be reasonably expected to succeed. But when are these conditions satisfied, and how do they compare? As a theoretical testbed, we introduce the \emph{perturbed Block MDP} model. Block MDPs \citep{du2019provably} are a well-studied abstraction of environments with rich observations yet simple latent dynamics. However, they assume that the latent state is fully determined by the current observation. Below, we generalize Block MDPs by allowing for $\delta$ probability that the observation is sampled from an arbitrary conditional distribution.\footnote{To be clear, our theoretical focus is on issues arising from partial observability, not on representation learning. The size of the observation  space is conceptually tangential, so we omit introducing technical complications such as function approximation\iclr{, which are central to theory for fully-observed Block MDPs \citep{zhang2022efficient,mhammedi2023representation,rohatgi2025necessary}.}\arxiv{. In contrast, representation learning and the associated computational challenges are the focus of the literature on standard Block MDPs \citep{zhang2022efficient,mhammedi2023representation,rohatgi2025necessary}.}}

\begin{definition}\label{ass:perturbed-block-main}
Fix a parameter $\delta>0$. A POMDP $\cP$ is a \emph{$\delta$-perturbed Block MDP} if, for each $h \in [H]$, there are $\BOT_h,E_h: \cS_h\to\Delta(\cX_h)$ such that $\BOT_h: \cS_h\to\Delta(\cX_h)$ satisfies the \emph{block} property \citep{du2019latent}, i.e. $\BOT_h(\cdot\mid{}s_h), \BOT_h(\cdot\mid{}s_h')$ have disjoint supports for all $s_h \neq s'_h$, and moreover the emission distribution $\BO_h$ at step $h$ can be decomposed as follows:
\arxiv{\[\BO_h(x_h\mid{}s_h) = (1-\delta) \BOT_h(x_h\mid{} s_h) + \delta E_h(x_h\mid{}s_h).\]}\iclr{$\BO_h(x_h\mid{}s_h) = (1-\delta) \BOT_h(x_h\mid{} s_h) + \delta E_h(x_h\mid{}s_h).$}
\end{definition}

A simple example is the \emph{noisy sensor} model where $\cS = \cX$ and the true state is observed with probability at least $1-\delta$. Later, we will examine the empirical validity of this model; for now we study its theoretical implications. Below, we prove that for any $\delta$-perturbed Block MDP, the belief contraction error decays exponentially as the frame-stack increases, by a factor of $O(\delta)$ per frame. 

\begin{theorem}[See \cref{thm:belief-contraction}]\label{thm:belief-contraction-main}
Suppose that the POMDP $\cP$ is a $\delta$-perturbed Block MDP. There is a universal constant $C_{\ref{thm:belief-contraction}}>1$ with the following property. Fix an executable policy $\pi$, indices $1 \leq h-L < h \leq H$, and a distribution $\cD \in \Delta(\MS_{h-L})$. Then for any partial history $(x_{1:h-L},a_{1:h-L-1})$,\arxiv{ it holds that}
\[\EE^\pi[\norm{\bel_{h}(x_{1:h}, a_{1:h-1})- \belapx_{h}(x_{h-L+1:h}, a_{h-L:h-1};\cD)}_1] \leq (C_{\ref{thm:belief-contraction}}\delta)^{L/9} \norm{\frac{\bel_h(x_{1:h-L},a_{1:h-L-1})}{\cD}}_\infty\]
where the expectation is over trajectories drawn from policy $\pi$ conditioned on the partial history $(x_{1:h-L},a_{1:h-L-1})$. Thus, in particular, $\epsilon^{\contraction}_h(\pi;L) \leq (C_{\ref{thm:belief-contraction}}\delta)^{L/9} S$.
\end{theorem}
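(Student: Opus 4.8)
The plan is to establish the displayed inequality for an arbitrary fixed prefix $(x_{1:h-L},a_{1:h-L-1})$; averaging that bound over the prefix and specializing to $\cD=\unif(\MS_{h-L})$ then gives the ``in particular'' claim, since $\norm{\bel_{h-L}(x_{1:h-L},a_{1:h-L-1})/\unif(\MS_{h-L})}_\infty = \abs{\MS_{h-L}}\cdot\max_{s}\bel_{h-L}(s)\le S$. Write $b^\star := \bel_{h-L}(x_{1:h-L},a_{1:h-L-1})$ for the true belief at step $h-L$ (the belief appearing in the density ratio on the right-hand side). The structural fact to exploit is that, conditioned on the prefix, both $\bel_h$ and $\belapx_h$ are produced by running the \emph{same} one-step Bayes filter forward over the window observations/actions $(x_{h-L+1:h},a_{h-L:h-1})$, differing only in the initial distribution over $\MS_{h-L}$ ($b^\star$ versus $\cD$), while those window observations are themselves generated by the true forward dynamics started from $s_{h-L}\sim b^\star$. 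Thus the claim is a quantitative \emph{filter-stability} statement: under the true law, the Bayes filter in a $\delta$-perturbed Block MDP forgets its initial condition geometrically fast, at a rate set by the informativeness of the emissions.

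The driving mechanism is a one-step near-collapse property. By the block (disjoint-support) property of $\BOT_{h'}$, every observation $x_{h'}$ either lies in $\operatorname{supp}(\BOT_{h'}(\cdot\mid s))$ for a \emph{unique} state $\sigma(x_{h'})$ or lies outside all block supports; conditioned on the true state $s_{h'}$, the former (``clean'') case occurs with probability $\ge 1-\delta$ and then $\sigma(x_{h'})=s_{h'}$. Using the decomposition $\BO_{h'}=(1-\delta)\BOT_{h'}+\delta E_{h'}$ together with the normalization $\sum_{x}E_{h'}(x\mid s)=1$, a direct calculation (of the same flavor as bounding the decodability error $\epsilon^{\decodability}_{h'}$ from \cref{def:decodability-error}) shows that, in expectation over a clean $x_{h'}$, the $\ell_1$ distance between either filtered posterior and the point mass on $s_{h'}$ is at most $O(\delta)$ divided by the mass the corresponding one-step predictive places on $s_{h'}$. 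Hence after a clean step at which both filters' predictives place non-negligible mass on $s_{h'}$, the two posteriors lie within $O(\delta)$ of a common point mass, and therefore within $O(\delta)$ of each other; moreover, if the two filtered distributions ever coincided they would remain equal thereafter, so the only persistent obstruction is the $O(\delta)$ slack left by the adversarial term $\delta E_{h'}$.

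To convert this into the stated rate I would partition the window into $\lfloor L/9\rfloor$ disjoint blocks of nine consecutive steps and argue that over each block the relevant disagreement shrinks by a factor $O(\delta)$: within a block, with probability $1-O(\delta)$ there is a clean step, and a short bootstrapping argument (a few steps suffice for the approximate filter's predictive to accumulate enough mass on the reachable true state for the near-collapse above to take effect) upgrades this to an $O(\delta)$-probability failure event outside of which the disagreement is driven down to $O(\delta)$ times its value entering the block. The factor $\norm{b^\star/\cD}_\infty$ enters precisely at this recovery step --- either as a change of measure relating the $\cD$-started path measure to the true ($b^\star$-started) one (the likelihood ratio of any window-observation sequence under the two priors is bounded by $\norm{b^\star/\cD}_\infty$), or equivalently as a cap on how much ``excess mass'' $\cD$ can place on states that $b^\star$ excludes, which is exactly what limits the rate at which $\belapx$ aligns with $\bel$. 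Iterating over the $\lfloor L/9\rfloor$ blocks, taking expectations, and absorbing numerical constants into $C_{\ref{thm:belief-contraction}}$ yields $\EE^\pi[\norm{\bel_h-\belapx_h}_1\mid (x_{1:h-L},a_{1:h-L-1})]\le (C_{\ref{thm:belief-contraction}}\delta)^{L/9}\norm{b^\star/\cD}_\infty$, as required.

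The main obstacle I anticipate is that the perturbation $\delta E_{h'}$ is adversarial while the block emissions $\BOT_{h'}$ may be light-tailed, so a single clean observation need not collapse the posterior: $\BOT_{h'}(x_{h'}\mid s_{h'})$ can be tiny, and $\delta E_{h'}$ can imitate the block distribution of the wrong state. Controlling the residual spread only \emph{in expectation} (rather than pointwise), and simultaneously maintaining a high-probability lower bound on each filter's predictive mass on the true state as it propagates through the latent dynamics, is the delicate bookkeeping at the heart of the argument, and is what forces both the $1/9$ loss in the exponent and the appearance of the density-ratio factor.
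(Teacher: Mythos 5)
Your high-level framing --- both beliefs are produced by the same Bayes filter run over the window, differing only in the initialization, with $\norm{b^\star/\cD}_\infty$ as the price of the mismatched prior --- matches the paper, and your one-step computation (a clean observation leaves either posterior within $O(\delta)$ divided by that filter's predictive mass on the true state of the corresponding point mass) is correct. But the two places you defer to ``bootstrapping'' and ``delicate bookkeeping'' are exactly where the proof lives, and as stated they do not go through. First, the claim that a few steps suffice for the $\cD$-initialized filter's predictive to regain non-negligible mass on the true state is not a uniform statement: when $\BOT_{h'}$ is light-tailed and $E_{h'}$ mimics the block distribution of a wrong state, a clean step improves the likelihood ratio in favor of the true state only by a factor of order $(1-\delta)/\delta$, so a filter that currently places mass $\epsilon_0$ on the true state needs on the order of $\log(1/\epsilon_0)/\log(1/\delta)$ clean steps to recover --- a quantity that depends on the current density ratio, not a constant that fits inside each nine-step block. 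A correct argument must therefore carry a quantitative record of this mass deficit across blocks, and your sketch sets up no such invariant. Second, even granting a per-block event ``contract by $O(\delta)$ with probability $1-O(\delta)$,'' this alone does not yield the expected bound $(C_{\ref{thm:belief-contraction}}\delta)^{L/9}$: on the failure event the TV distance between the two filtered beliefs is not non-expansive under a common Bayes update --- it can jump to $1-\delta$ from an arbitrarily favorable configuration (\cref{example:tv-not-contract}) --- so you cannot simply multiply the per-block factors or union-bound the failures.

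The paper resolves both issues simultaneously by tracking the product potential $\Dpr{b}{b'} = \TV(b,b')\cdot\norm{b/b'}_\infty$ rather than TV: \cref{lemma:dpr-bayes-update} and \cref{cor:dpr-update} establish a win-win --- the potential is a supermartingale up to a factor $4$ per step and contracts by $4\delta^{1/3}$ except with probability $2\delta^{1/3}$ --- and the tailored martingale lemma (\cref{lemma:martingale-bound-improved}) converts ``supermartingale plus with-high-probability multiplicative contraction'' into the expected geometric decay, with the density ratio entering only through the initial value $X_0 \le \norm{\cD'/\cD}_\infty$ (the exponent $L/9$ arises from the $\delta^{1/3}$ per-step contraction and a further factor-of-three loss in the martingale lemma, not from nine-step blocks). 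In your language, the density-ratio coordinate of this potential is precisely the bookkeeping of the predictive-mass deficit, and its supermartingale property is what controls the failure events; without an invariant of this kind your block recursion does not close, so the proposal has a genuine gap rather than constituting an alternative proof.
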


While prior belief contraction results \citep{golowich2023planning} apply to this model, they only yield contraction by $1 - (1-2\delta)/C$ per frame, for a large constant $C>1$ (\cref{remark:perturbed-block-golowich}), and so are vacuous for $L = \littleoh(\log S)$, even in the regime $\delta \ll 1$ (i.e. low observation noise). \cref{thm:belief-contraction-main} remedies this limitation; e.g. for $\delta = 1/S$ it yields $\epsilon^{\contraction}_h(\pi;L) \leq \bigoh(1/S)$ with only $L = \bigoh(1)$. To prove \cref{thm:belief-contraction-main}, one might hope that each new observation contracts the $\TV$-distance by $\poly(\delta)$ in expectation. This is false (\cref{example:tv-not-contract}), but in such cases, it turns out that the density ratio decays, yielding a win-win argument. 

Heuristically, \cref{thm:belief-contraction-main} suggests that standard RL with $L$-step frame-stacking should progressively improve as $L$ increases. Formally, \cref{thm:belief-contraction-main} and \cref{thm:golowich-main} imply the following end-to-end learning guarantee for the RL algorithm of \cite{golowich2022learning} (which does not use latent state information):

\begin{corollary}[Informal; see \cref{cor:golowich-perturbed-mdp}]\label{cor:golowich-perturbed-mdp-main}
There is a method that, for any $\delta$-perturbed Block MDP, learns a policy $\pihat$ with $J(\pistar) - J(\pihat) \leq (C_{\ref{thm:belief-contraction-main}}\delta)^{L/9}  (SXH)^{\bigoh(1)}$ in time $(XA/\delta)^{\bigoh(L)} (HS)^{\bigoh(1)}$.
\end{corollary}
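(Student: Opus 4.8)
The plan is to instantiate \cref{thm:golowich-main} (the learning guarantee of \citet{golowich2022learning}, which uses no latent state information) on a $\delta$-perturbed Block MDP, using \cref{thm:belief-contraction-main} to supply the belief-contraction hypothesis. The one additional ingredient needed is that a $\delta$-perturbed Block MDP is $\gamma$-observable (\cref{def:observable}) with $\gamma$ bounded away from $0$. For this, decompose $\BO_h(\cdot\mid s) = (1-\delta)\BOT_h(\cdot\mid s) + \delta E_h(\cdot\mid s)$ and note that, for any $b,b'\in\Delta(\cS_h)$: the block property of $\BOT_h$ (disjoint supports across latent states) makes $b\mapsto\sum_s b(s)\BOT_h(\cdot\mid s)$ an $\ell_1$-isometry, so $\norm{\BOT_h b-\BOT_h b'}_1=\norm{b-b'}_1$; while $E_h$, being column-stochastic, is an $\ell_1$-contraction, so $\norm{E_h b-E_h b'}_1\le\norm{b-b'}_1$. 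The triangle inequality then gives $\norm{\BO_h b-\BO_h b'}_1\ge(1-2\delta)\norm{b-b'}_1$, i.e.\ $\cP$ is $(1-2\delta)$-observable. We may assume $\delta$ is below a small absolute constant (otherwise $(C_{\ref{thm:belief-contraction-main}}\delta)^{L/9}\ge 1$ and, since all values lie in $[0,1]$, the claimed sub-optimality bound is vacuous), so that $\gamma:=1-2\delta\ge 1/2$.

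Next, apply \cref{thm:belief-contraction-main} with $\cD=\unif(\cS_{h-L})$: for every executable $\pi$ and every $h$ this yields $\epsilon^{\contraction}_h(\pi;L)\le(C_{\ref{thm:belief-contraction}}\delta)^{L/9} S=:\epsilon$. Crucially, \cref{thm:belief-contraction-main} in fact controls the expected gap between the true belief and the approximate belief for an \emph{arbitrary} prior $\cD$ on $\cS_{h-L}$ and conditional on an \emph{arbitrary} pre-window history $(x_{1:h-L},a_{1:h-L-1})$ (up to the factor $\norm{\bel_h(x_{1:h-L},a_{1:h-L-1})/\cD}_\infty$, which is $\le S$ for the uniform prior). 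This is precisely the strengthened form of belief contraction that the formal version of \cref{thm:golowich-main} requires (cf.\ the footnote there and \cref{thm:golowich}), and I would verify this correspondence carefully.

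Finally, feeding $\gamma\ge 1/2$ and this $\epsilon$ into \cref{thm:golowich-main} produces an executable policy $\pihat$ with $J(\pistar)-J(\pihat)\le\epsilon\cdot\poly(S,X,H,\gamma^{-1})\le(C_{\ref{thm:belief-contraction-main}}\delta)^{L/9}(SXH)^{\bigoh(1)}$, in time $(XA)^{\bigoh(L)}\cdot\poly(H,S,\gamma^{-1},\epsilon^{-1})$. For the time bound, since $S\ge 1$ and $C_{\ref{thm:belief-contraction}}\delta\le 1$ by the above, $\epsilon^{-1}=(C_{\ref{thm:belief-contraction}}\delta)^{-L/9}S^{-1}\le(C_{\ref{thm:belief-contraction}}\delta)^{-L/9}$, so $\poly(\epsilon^{-1})\le(1/\delta)^{\bigoh(L)}$; combined with the $(XA)^{\bigoh(L)}$ and $\poly(H,S,\gamma^{-1})$ factors this is $(XA/\delta)^{\bigoh(L)}(HS)^{\bigoh(1)}$, as claimed.

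I expect the crux to be the bookkeeping in the second step: confirming that the ``slightly generalized'' belief-contraction condition used by \citet{golowich2022learning} matches exactly what \cref{thm:belief-contraction-main} delivers --- in particular the role of the density-ratio prefactor and of conditioning on the pre-window history --- and confirming that \cref{def:observable} is the $\ell_1$-contraction notion used in the first step (rather than a singular-value / weakly-revealing variant, which would require a slightly different but equally short argument). Everything else is direct substitution.
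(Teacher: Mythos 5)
Your high-level route is the same as the paper's (observability via \cref{remark:perturbed-block-golowich}, belief contraction via \cref{thm:belief-contraction}, then \cref{thm:golowich}), and your observability and vacuousness arguments are fine. But the step you yourself flag as the crux contains a genuine gap. The condition that \texttt{BaSeCAMP} needs is the \emph{generalized} belief contraction of \cref{def:gen-belief-contraction}: for \emph{every} prior $\cD'$ with $\norm{\cD'/\cD}_\infty \le 1/\phi$, the expected gap between $\belapx(\cdot;\cD')$ and $\belapx(\cdot;\cD)$ (with the window initialized from $s_{h-L}\sim\cD'$) must be at most $\epsilon$, where $\phi := \frac{\gamma}{C^\star H^5 S^{7/2}X^2}\,\epsilon$ is tied to $\epsilon$ itself. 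Your instantiation only covers the special case $\cD' = \belief_{h-L}(x_{1:h-L},a_{1:h-L-1})$, $\cD = \unif(\cS_{h-L})$, where the density-ratio prefactor is at most $S$; for the generalized condition the prefactor supplied by (the full, arbitrary-prior form of) \cref{thm:belief-contraction} is $\norm{\cD'/\cD}_\infty \le 1/\phi$, which is far larger than $S$ since $\phi$ scales like $\epsilon/\poly(S,X,H)$. So one cannot simply set $\epsilon := (C_{\ref{thm:belief-contraction}}\delta)^{L/9}S$; instead one must solve the self-consistency constraint $(C_{\ref{thm:belief-contraction}}\delta)^{L/9}/\phi \le \epsilon$ with $\phi \propto \gamma\epsilon/(H^5 S^{7/2}X^2)$, which forces $\epsilon \asymp (C_{\ref{thm:belief-contraction}}\delta)^{L/18}\,\poly(S,X,H)$. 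This is exactly what the paper's proof of \cref{cor:golowich-perturbed-mdp} does (choosing $\gamma := 1/3$ and $\epsilon := (C_{\ref{thm:belief-contraction}}\delta)^{L/18}\sqrt{3C^\star H^5 S^{7/2}X^2}$), and it is why the formal corollary states the exponent $L/18$ rather than the $L/9$ appearing in the informal main-body statement.

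Two smaller points. First, you cite \cref{thm:belief-contraction-main}, which only states the true-belief-versus-approximate-belief bound; the arbitrary-prior claim you need is the first claim of the appendix version \cref{thm:belief-contraction}, so even the special case you invoke should be sourced there. Second, once the exponent is corrected to $L/18$, your time-complexity bookkeeping ($\poly(\epsilon^{-1}) \le (1/\delta)^{\bigoh(L)}$, hence total time $(XA/\delta)^{\bigoh(L)}(HS)^{\bigoh(1)}$) goes through unchanged, so the rest of your argument is fine after the fix; but as written, the claimed rate $(C_{\ref{thm:belief-contraction-main}}\delta)^{L/9}(SXH)^{\bigoh(1)}$ is not established by your argument, nor by the paper's.
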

From a theoretical view, it remains to understand the decodability error for the perturbed Block MDP. As we will show, this qualitatively depends on the stochasticity of the transition dynamics. \loose

\section{Distillation is Competitive for Deterministic Dynamics}
\label{sec:misspecified}
In some environments, it is reasonable to assume that the latent transition dynamics are \emph{deterministic} (e.g., if the dynamics are governed by simple Newtonian mechanics). Simulation benchmarks with this property include some Atari games as well as MuJoCo tasks. In this section, we theoretically and empirically study the performance of expert distillation, versus standard RL with frame-stacking, in such environments (with deterministic latent transitions, but stochastic initial state and observations).

\begin{figure*}[t]
    \centering
    \includegraphics[width=0.32\textwidth]{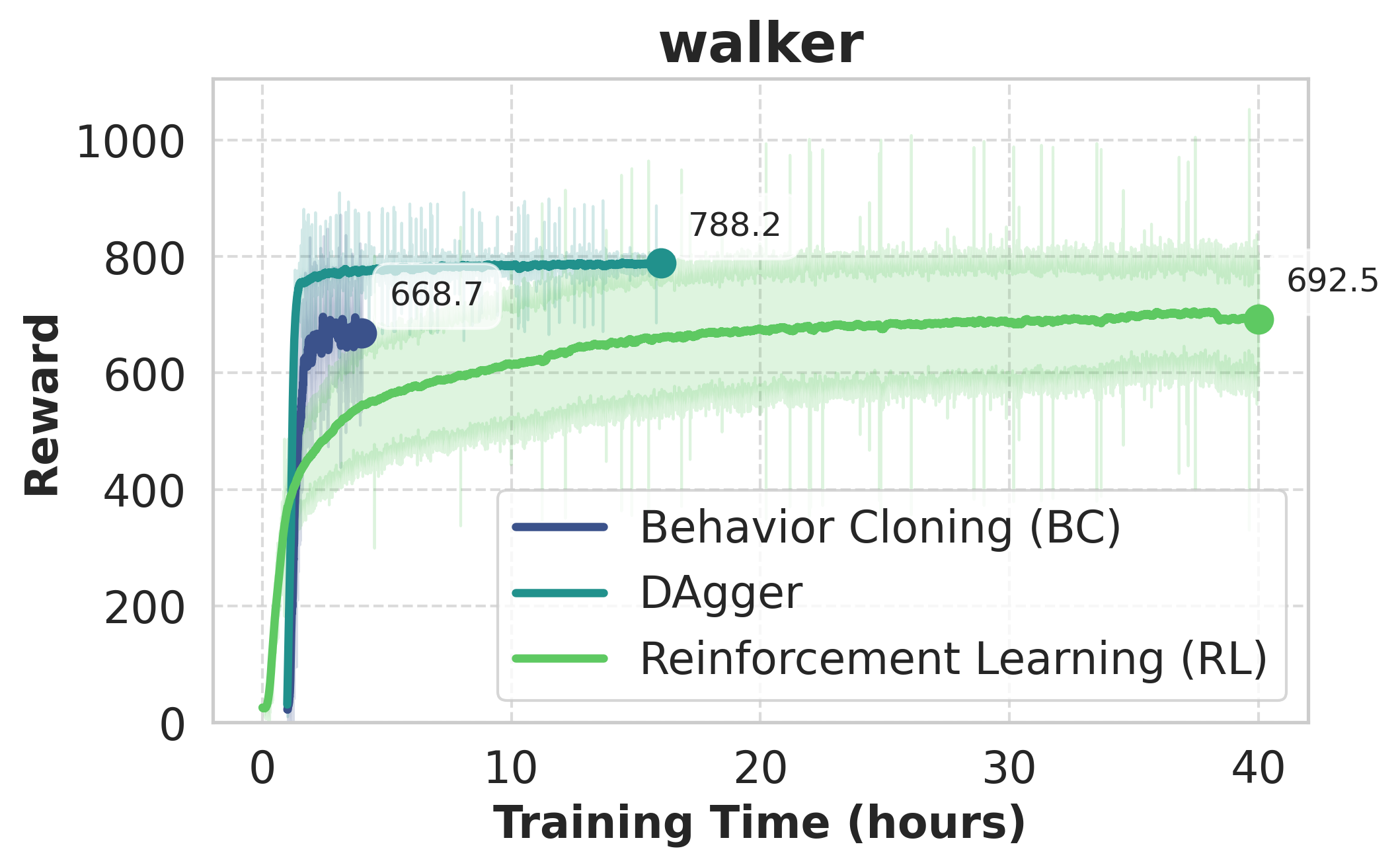}
    \includegraphics[width=0.32\textwidth]{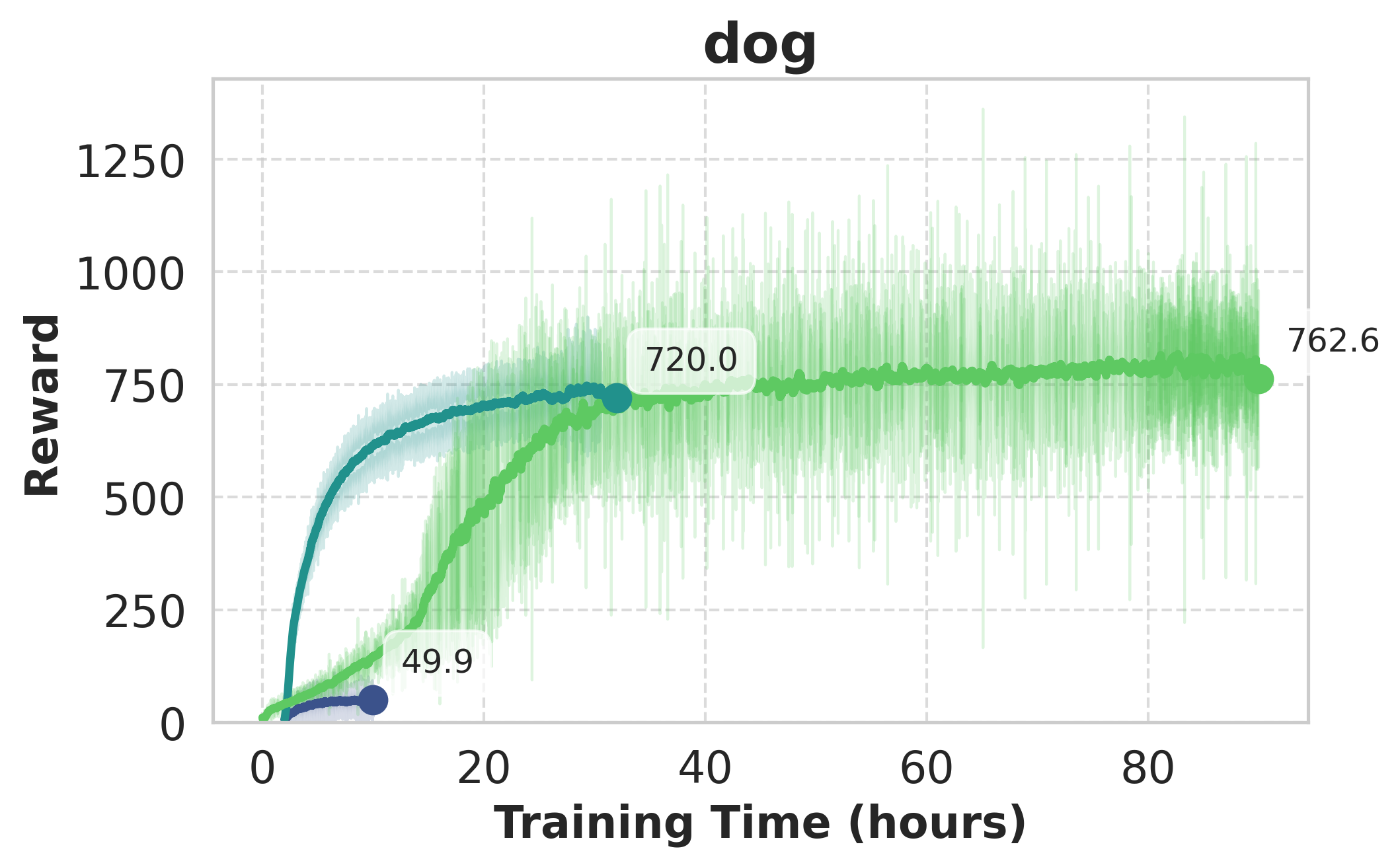}
    \includegraphics[width=0.32\textwidth]{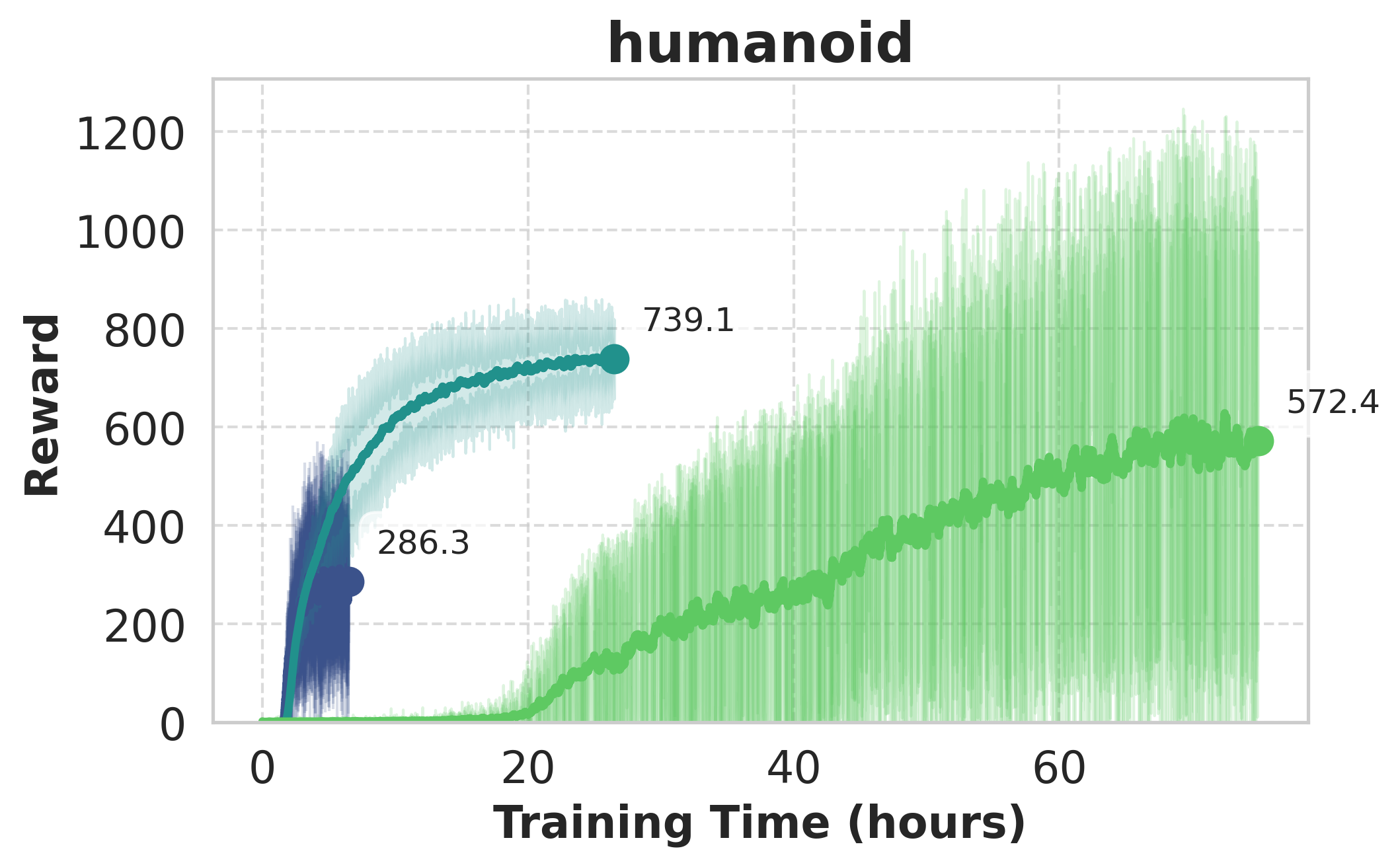}
    \caption{The performance of (offline/online) expert distillation and RL with respect to wall-clock time. We repeat each experiment 5 times and plot the mean and standard deviation. For the time complexity of BC, we include the data collection time, and amortize it over the training steps. For both BC and $\Dagger$, we include the time to train the latent expert (also amortized). \loose}
    \label{fig:det-il-rl}
\end{figure*}

\subsection{Theoretical Analysis under Deterministic Dynamics} Below, we show that for perturbed Block MDPs with deterministic dynamics, the decodability error decays exponentially as the step $h \in [H]$ increases. Intuitively, each observation concentrates the true belief state further, and the deterministic transitions cannot ``spread out'' the belief state. While this intuition is not quite rigorous, it can be proven that \emph{most} observations concentrate the belief state; the result follows from an appropriate martingale analysis (\cref{lemma:martingale-bound-improved}).

\begin{proposition}[See \cref{prop:vinf-decay}]\label{prop:det-vinf-decay-main}
    There is a universal constant $C_{\ref{prop:det-vinf-decay-main}}>1$ so that the following holds. Suppose that $\cP$ is a $\delta$-perturbed Block MDP with deterministic transitions. For any executable policy $\pi$ and index $h \in [H]$,
    it holds that \arxiv{\[\epsilon^{\decodability}_h(\pi) \leq \min(\delta, (C_{\ref{prop:det-vinf-decay-main}}\delta)^{(h-1)/9}).\]}\iclr{$\epsilon^{\decodability}_h(\pi) \leq \min(\delta, (C_{\ref{prop:det-vinf-decay-main}}\delta)^{(h-1)/9}).$}
\end{proposition}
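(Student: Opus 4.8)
Abbreviate $V_h(x_{1:h},a_{1:h-1}) := 1-\norm{\bel_h(x_{1:h},a_{1:h-1})}_\infty$, so that $\epsilon^{\decodability}_h(\pi) = \En^\pi[V_h]$, and let $\bel_h^-$ denote the one-step predicted belief at step $h$ (the pushforward of $\bel_{h-1}$ along the deterministic transition, with $\bel_1^- := \bbP_1$). I would establish the two halves of the $\min$ separately, and the second is the real content.

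\emph{The bound $\epsilon^{\decodability}_h(\pi)\le\delta$} holds for every POMDP as in \cref{ass:perturbed-block-main} (no determinism needed) and is immediate from the block property. Conditioning on $(x_{1:h-1},a_{1:h-1})$, the observation $x_h$ has prior-predictive law $x\mapsto\sum_s\bel_h^-(s)\BO_h(x\mid s)$ and $\bel_h(s)\propto\bel_h^-(s)\BO_h(x_h\mid s)$, so $\En^\pi[\norm{\bel_h}_\infty\mid x_{1:h-1},a_{1:h-1}] = \sum_x\max_s\bel_h^-(s)\BO_h(x\mid s) \ge (1-\delta)\sum_x\max_s\bel_h^-(s)\BOT_h(x\mid s)$; since the $\BOT_h(\cdot\mid s)$ have disjoint supports, at most one state contributes to each $x$, so the last sum equals $\sum_s\bel_h^-(s) = 1$. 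Taking expectations gives $\epsilon^{\decodability}_h(\pi)\le\delta$. The same ``sum-of-maxes $\ge$ max-of-sums'' inequality shows $\En^\pi[\norm{\bel_h}_\infty\mid x_{1:h-1},a_{1:h-1}]\ge\norm{\bel_h^-}_\infty$, and determinism gives $\norm{\bel_h^-}_\infty\ge\norm{\bel_{h-1}}_\infty$ (a pushforward along a function can only concentrate the top coordinate), hence $h\mapsto\epsilon^{\decodability}_h(\pi)$ is non-increasing --- deterministic transitions never ``spread'' the belief in expectation. Equivalently, $\bel_h(s_h)$ (the posterior mass on the true latent state) is a bounded submartingale, via the one-line Cauchy--Schwarz bound $\sum_x\BO_h(x\mid s_h)^2/p(x)\ge 1$.

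\emph{The bound $\epsilon^{\decodability}_h(\pi)\le(C\delta)^{(h-1)/9}$} is where determinism is essential. The driving mechanism: a \emph{clean} observation --- one drawn from $\BOT_h$, which by \cref{ass:perturbed-block-main} happens with probability $\ge1-\delta$ --- pins down the current latent state \emph{exactly} by the block property, and then (by determinism) pins down the latent state at every later step. So $V_h$ can only stay large across a window of steps whose observations are all ``consistent with corruption,'' which should be exponentially unlikely in the window length; the exponent $1/9$ is inherited verbatim from \cref{thm:belief-contraction-main}, and the argument reuses the martingale of \cref{lemma:martingale-bound-improved} on the same potential used there. Concretely, conditioning on the latent-augmented history through step $h-1$ and writing $q=\BO_h(\cdot\mid s_h)$, $\beta=\bel_h^-(s_h)$, and $r$ for the $\bel_h^-$-weighted mixture of the emissions of the competing states, one gets the exact identity $\En^\pi[1-\bel_h(s_h)\mid\cdot] = (1-\beta)\sum_x \tfrac{q(x)r(x)}{\beta q(x)+(1-\beta)r(x)}$, and then uses that $\BOT_h(\cdot\mid s_h)$ is disjointly supported from $\{\BOT_h(\cdot\mid s')\}_{s'\ne s_h}$ to bound the Bhattacharyya-type overlap $\sum_x\sqrt{q(x)r(x)}=O(\sqrt\delta)$. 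This already yields $\En^\pi[V_h\mid\cdot]=O(\delta)$ whenever the belief is concentrated on the \emph{correct} state; one then iterates over length-$9$ windows while \emph{additionally} tracking a density-ratio potential that penalizes being concentrated on a \emph{wrong} state, and the product potential is a supermartingale whose drift forces the claimed geometric decay.

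\emph{Main obstacle.} The difficulty --- and the reason the naive recursion only gives an $O(\delta)$ plateau --- is that a single observation need \emph{not} contract $V_h$ in expectation: the perturbation $E_h$ can be adversarially concentrated on an observation that is simultaneously plausible under a competing state's block, so a ``clean-looking'' observation can push the belief onto the \emph{wrong} state (cf.\ \cref{example:tv-not-contract}), after which the per-step estimate above degrades. The resolution is a win--win: whenever the $\TV$/$\ell_1$ error fails to contract by a $\poly(\delta)$ factor, the density ratio of the belief relative to the indicator of the true latent state must blow up by a compensating factor --- i.e.\ the trajectory is being driven onto an exponentially unlikely path --- so a suitable product of these two quantities is a supermartingale with enough drift. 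Making this dichotomy quantitative, and ensuring that no spurious additive $\poly(\delta)$ term survives from the low-probability ``misled'' events, is the technical heart of the proof and is precisely the role of \cref{lemma:martingale-bound-improved}.
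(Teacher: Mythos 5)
Your first half ($\epsilon^{\decodability}_h(\pi)\le\delta$, valid without determinism) and your monotonicity observation (deterministic transitions cannot spread the belief, since a pushforward along a function can only increase the top coordinate) are correct and match the paper's computations in \cref{lemma:vinf-bayes-update} and \cref{cor:vinf-belief-update}. Your high-level skeleton for the second half --- a per-step contraction of the decodability potential by $\delta^{1/3}$ with probability $1-O(\delta^{1/3})$, fed into \cref{lemma:martingale-bound-improved} to produce the exponent $(h-1)/9$ --- is also the paper's skeleton. However, the per-step high-probability contraction, which is the entire content of the hard half, is not actually established in your sketch, and the mechanism you describe does not close as stated. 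The driving heuristic is false: in a $\delta$-perturbed Block MDP a clean observation does \emph{not} pin down the latent state exactly, since the posterior cannot distinguish clean from corrupted emissions, so competing states retain mass through their $\delta E_h$ components even after a clean observation. More importantly, by conditioning on the latent-augmented history you change the tracked quantity from $\Vinf(\bel_h)$ to $1-\bel_h(s_h)$ and the observation law from the belief's own predictive distribution to $\BO_h(\cdot\mid s_h)$; your identity then controls the error only when the belief already favors the true state (and even there the Bhattacharyya bound yields $O(\sqrt{\delta})$ rather than $O(\delta)$, and the implicit $1/\bel_h^-(s_h)$ dependence costs a factor of $S$ once you average over $s_h$), while the ``belief misled'' case is handled only by an appeal to an unspecified product with ``a density-ratio potential relative to the indicator of the true latent state,'' whose supermartingale and drift properties are never defined or verified. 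That $\TV$-times-density-ratio win--win is the machinery the paper needs for \cref{thm:belief-contraction}, where two different beliefs are compared; it is not the potential used for this proposition, so your claim that the argument ``reuses the same potential'' is inaccurate, and the missing per-step lemma is a genuine gap.

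The paper's actual route avoids your obstacle entirely by never conditioning on the true state. Writing $b:=\bbP_h(a_{h-1})\cdot\bel_{h-1}$, under an \emph{executable} policy the next observation is distributed exactly as $\BO_h(\cdot\mid b)$ given the history (\cref{lemma:belief-is-cond-prob}), and \cref{lemma:vinf-bayes-update} proves directly that $\EE[\Vinf(\BB_h(b;x))]\le\min(\Vinf(b),\delta)$ and $\Pr[\Vinf(\BB_h(b;x))>\delta^{1/3}\Vinf(b)]\le 3\delta^{1/3}$: with probability at least $1-\delta-\Vinf(b)$ the clean label $\phi(x)$ coincides with the belief's current argmax $s^\star$, and on that event the residual posterior mass can only come from the $\delta E_h$ components of the competing states, giving $\EE[\Vinf(\BB_h(b;x))\,\mathbbm{1}[\phi(x)=s^\star]]\le\delta\,\Vinf(b)$; Markov's inequality plus a two-case split on whether $\Vinf(b)\le\delta^{1/3}$ finishes. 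Determinism is used only to ensure $\Vinf(\bbP_h(a)\cdot b)\le\Vinf(b)$, and the $1/9$ comes from \cref{lemma:martingale-bound-improved} with $\epsilon\approx\delta^{1/3}$ --- there is no windowing and no density-ratio bookkeeping. To repair your proof you would need either to define and verify the product potential you allude to, or (much simpler) to drop the conditioning on the true state and prove the per-step contraction for $\Vinf$ under the belief's own predictive observation law, as the paper does.
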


From \cref{lemma:tv-beltil-to-latent-main}, the ``ideal'' distillation of a latent expert $\pilat$ is $\pi^{\imitation} := \pi^{\latent} \circ \belief$, i.e., given any history, query the latent expert based on the true belief. Combining \pref{lemma:tv-beltil-to-latent-main} and \pref{prop:det-vinf-decay-main} immediately yields a strong, horizon-independent guarantee for this policy: if $\pilat$ is the optimal latent policy, then 
\[J(\pi^\star) - J(\pi^{\imitation}) \leq J(\pilat) - J(\pi^{\imitation}) \leq 2\sum_{h=1}^H \min(\delta, (C_{\ref{prop:det-vinf-decay-main}}\delta)^{(h-1)/9}) \leq \bigoh(\delta),\]
where the first inequality is by \pref{lemma:mdp_upper_pomdp}. Of course, exactly learning the true belief state may be unrealistic, since this would require conditioning on the entire history. However, we can prove that (a slight modification of) the $\forward$ algorithm \citep{ross2010efficient} (the non-stationary version of $\Dagger$) on $L$-step executable policies learns the following approximation of $\pi^{\imitation}$,\footnote{Note that behavior cloning will not learn the same policy, due to the latching effect \cite{swamy2022sequence} (i.e. conditioning on past actions of the latent expert). It may nevertheless achieve the same regret bound as $\forward$ in our setting: theoretically separating these algorithms likely requires assuming e.g. recoverability \citep{foster2024is}.
} in the infinite-sample limit:
    \[\pi^{\forward}_h(\cdot \mid{} x_{h-L+1:h}, a_{h-L:h-1}) =\begin{cases} 
    \pilat_h \circ \belapx_h(x_{h-L+1:h},a_{h-L:h-1}; d^{\pi^{\forward}}_{h-L}) & \text{ if } h > L \\ 
    \pilat_h \circ \bel_h(x_{1:h},a_{1:h-1}) & 
    \text{ otherwise}
    \end{cases}
    \]
See \pref{sec:forward-population} for the algorithm and proof. Applying this derivation to \cref{lemma:tv-beltil-to-latent-main}, then using \cref{prop:det-vinf-decay-main} to bound the decodability error and \cref{thm:belief-contraction-main} to bound the error in approximate beliefs, gives the following guarantee for expert distillation under deterministic latent dynamics:
\begin{theorem}[See \cref{thm:perturbed-mdp-il-guarantee}]\label{thm:perturbed-mdp-il-guarantee-main}
Suppose that the POMDP $\cP$ is a $\delta$-perturbed Block MDP with deterministic transitions, and fix $L \in \NN$. Let $\pilat \in \Pilat$ be the optimal latent policy, and let $\pi^{\forward}$ be the policy computed by $\forward$ with policy class $\Pi^L$ (i.e. all $L$-step executable policies) and expert $\pilat$, in the infinite-sample limit. Then
\[J(\pistar) - J(\pi^{\forward}) \leq J(\pilat) - J(\pi^{\forward}) \leq \TV(\bbP^{\pilat}, \bbP^{\pitil}) \leq \bigoh(\delta) + (C_{\ref{thm:belief-contraction}}\delta)^{L/9} SH.\]
\end{theorem}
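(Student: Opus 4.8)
The plan is to read the theorem off the explicit description of $\pi^{\forward}$ given just above the statement, feed that policy into \pref{lemma:tv-beltil-to-latent-main}, and then control the two resulting error terms with \pref{prop:det-vinf-decay-main} and \pref{thm:belief-contraction-main}. The first inequality is immediate: $J(\pistar)\le J(\pilat)$ because $\pilat$ is the optimal latent policy (\pref{lemma:mdp_upper_pomdp}), and subtracting $J(\pi^{\forward})$ from both sides gives $J(\pistar)-J(\pi^{\forward})\le J(\pilat)-J(\pi^{\forward})$. For the rest, recall from \pref{sec:forward-population} that in the infinite-sample limit $\forward$ returns exactly $\pi^{\forward}=\pilat\circ\beltil$, where $\beltil_h=\bel_h$ for $h\le L$ and $\beltil_h(x_{h-L+1:h},a_{h-L:h-1})=\belapx_h(x_{h-L+1:h},a_{h-L:h-1};\,d^{\pi^{\forward}}_{h-L})$ for $h>L$; in particular $\pitil=\pi^{\forward}$ in the notation of \pref{lemma:tv-beltil-to-latent-main}. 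Applying that lemma with $\pilat$ the optimal latent policy and this choice of $\beltil$ yields both $J(\pilat)-J(\pi^{\forward})\le\TV(\bbP^{\pilat},\bbP^{\pi^{\forward}})$ and
\[
\TV(\bbP^{\pilat},\bbP^{\pi^{\forward}})\le \sum_{h=1}^H 2\,\epsilon^{\decodability}_h(\pi^{\imitation})\;+\;\sum_{h=1}^H \EE^{\pi^{\forward}}\bigl[\,\norm{\bel_h(x_{1:h},a_{1:h-1})-\beltil_h(x_{1:h},a_{1:h-1})}_1\,\bigr],
\]
where $\pi^{\imitation}=\pilat\circ\bel$. It remains to bound the two sums by $\bigoh(\delta)$ and $(C_{\ref{thm:belief-contraction}}\delta)^{L/9}SH$, respectively, which then closes the chain.

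For the decodability sum, \pref{prop:det-vinf-decay-main} (which applies to \emph{any} executable policy, in particular $\pi^{\imitation}$, using that the latent transitions are deterministic) gives $\epsilon^{\decodability}_h(\pi^{\imitation})\le\min(\delta,(C_{\ref{prop:det-vinf-decay-main}}\delta)^{(h-1)/9})$. Provided $\delta$ is below a universal threshold, so that $C_{\ref{prop:det-vinf-decay-main}}\delta<1$ is bounded away from $1$, there is an index $h_0=\bigoh(1)$ (of order $\log C_{\ref{prop:det-vinf-decay-main}}$) past which $(C_{\ref{prop:det-vinf-decay-main}}\delta)^{(h-1)/9}\le\delta$; the first $h_0$ terms contribute $\bigoh(\delta)$, and the remaining terms form a geometric series with ratio $(C_{\ref{prop:det-vinf-decay-main}}\delta)^{1/9}<1$ and first term $\le\delta$, hence also sum to $\bigoh(\delta)$. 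This is exactly the horizon-free estimate already used for $\pi^{\imitation}$ preceding the theorem, so $\sum_{h=1}^H 2\,\epsilon^{\decodability}_h(\pi^{\imitation})\le\bigoh(\delta)$.

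For the belief-approximation sum, the terms with $h\le L$ vanish since $\beltil_h=\bel_h$ there. For $h>L$, condition on the prefix $(x_{1:h-L},a_{1:h-L-1})$ and apply \pref{thm:belief-contraction-main} to $\pi^{\forward}$ with prior $\cD=d^{\pi^{\forward}}_{h-L}$, obtaining $\EE^{\pi^{\forward}}[\,\norm{\bel_h-\beltil_h}_1\mid x_{1:h-L},a_{1:h-L-1}\,]\le(C_{\ref{thm:belief-contraction}}\delta)^{L/9}\,\norm{\frac{\bel_{h-L}(x_{1:h-L},a_{1:h-L-1})}{d^{\pi^{\forward}}_{h-L}}}_\infty$. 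Taking the expectation over the prefix under $\pi^{\forward}$, the proof reduces to the bound $\EE^{\pi^{\forward}}[\norm{\bel_{h-L}/d^{\pi^{\forward}}_{h-L}}_\infty]\le S$. This holds because, by \pref{lemma:belief-is-cond-prob}, $\bel_{h-L}(x_{1:h-L},a_{1:h-L-1})(s)=\bbP^{\pi^{\forward}}(s_{h-L}=s\mid x_{1:h-L},a_{1:h-L-1})$, whose expectation over the prefix is the marginal $d^{\pi^{\forward}}_{h-L}(s)$, so by the tower rule and a union bound over states,
\[
\EE^{\pi^{\forward}}\Bigl[\,\norm{\bel_{h-L}/d^{\pi^{\forward}}_{h-L}}_\infty\,\Bigr]\le \EE^{\pi^{\forward}}\Bigl[\,\textstyle\sum_{s\in\cS_{h-L}}\frac{\bel_{h-L}(s)}{d^{\pi^{\forward}}_{h-L}(s)}\,\Bigr]=\sum_{s\in\cS_{h-L}}1=S,
\]
with the $0/0=1$ convention handling states not reached by $\pi^{\forward}$. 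Hence each term with $h>L$ is at most $(C_{\ref{thm:belief-contraction}}\delta)^{L/9}S$, the full sum is at most $(C_{\ref{thm:belief-contraction}}\delta)^{L/9}SH$, and combining with the decodability sum gives $\TV(\bbP^{\pilat},\bbP^{\pi^{\forward}})\le\bigoh(\delta)+(C_{\ref{thm:belief-contraction}}\delta)^{L/9}SH$, completing the three inequalities.

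The one step beyond bookkeeping — since the structure of $\pi^{\forward}$ is taken as given from \pref{sec:forward-population} — is the handling of the data-dependent prior $d^{\pi^{\forward}}_{h-L}$ when invoking \pref{thm:belief-contraction-main}: one cannot bound $\norm{\bel_{h-L}/d^{\pi^{\forward}}_{h-L}}_\infty$ pointwise, since it can blow up on low-probability prefixes, so it is essential that \pref{lemma:tv-beltil-to-latent-main} places the expectation over histories \emph{outside} this density ratio, after which the averaging identity $\EE^{\pi^{\forward}}[\bel_{h-L}(\cdot)(s)]=d^{\pi^{\forward}}_{h-L}(s)$ contracts the expected ratio to exactly $S$. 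A minor care point is that the $\bigoh(\delta)$ bound on the decodability sum genuinely requires $\delta$ below a universal constant (otherwise $C_{\ref{prop:det-vinf-decay-main}}\delta$ is not bounded away from $1$ and the per-term bound only yields $\bigoh(H\delta)$), which is consistent with the precise requirement recorded in the formal statement \pref{thm:perturbed-mdp-il-guarantee}.
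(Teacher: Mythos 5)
Your proposal is correct and follows essentially the same route as the paper's proof of \pref{thm:perturbed-mdp-il-guarantee}: the reduction $J(\pistar)\le J(\pilat)$ via \pref{lemma:mdp_upper_pomdp}, the closed form of $\pi^{\forward}$ from \pref{lemma:forward-bayes}, the decomposition via \pref{lemma:tv-beltil-to-latent} with the expectation of the belief-error term taken under $\pi^{\forward}$, the $\bigoh(\delta)$ geometric bound from \pref{prop:vinf-decay}, and the bound $\EE^{\pi^{\forward}}\brk{\nrm{\bel_{h-L}/d^{\pi^{\forward}}_{h-L}}_\infty}\le S$ by summing the ratio over states and using \pref{lemma:belief-is-cond-prob} are exactly the paper's steps (your indexing of the density ratio at step $h-L$ is in fact cleaner than the paper's display). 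The only quibble is your closing remark: \pref{thm:perturbed-mdp-il-guarantee} does not actually record a smallness condition on $\delta$, but this is immaterial since when $C\delta$ is not small the stated bound is absorbed by the $(C_{\ref{thm:belief-contraction}}\delta)^{L/9}SH$ term or is trivial, matching the paper's implicit treatment.
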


\paragraph{Comparison with RL} While \cref{thm:perturbed-mdp-il-guarantee-main} is presented in the infinite-sample limit, the effective sample complexity is only $\approx(XA)^{\bigoh(L)}$, since the optimization is over $L$-step executable policies. More concretely, up to additional error $\epopt$, the above guarantee can be achieved by the same algorithm with only $\poly((AX)^L,H,\epopt^{-1})$ time and samples (\cref{thm:perturbed-mdp-il-finite-sample}). 
Thus, the guarantee for $\forward$ qualitatively matches the guarantee for RL (\cref{cor:golowich-perturbed-mdp-main}), aside from the additional horizon-independent term of $\bigoh(\delta)$ incurred above (due to poor decodability in initial steps). 

\begin{figure*}[t]
    \centering
    \includegraphics[width=0.3\textwidth]{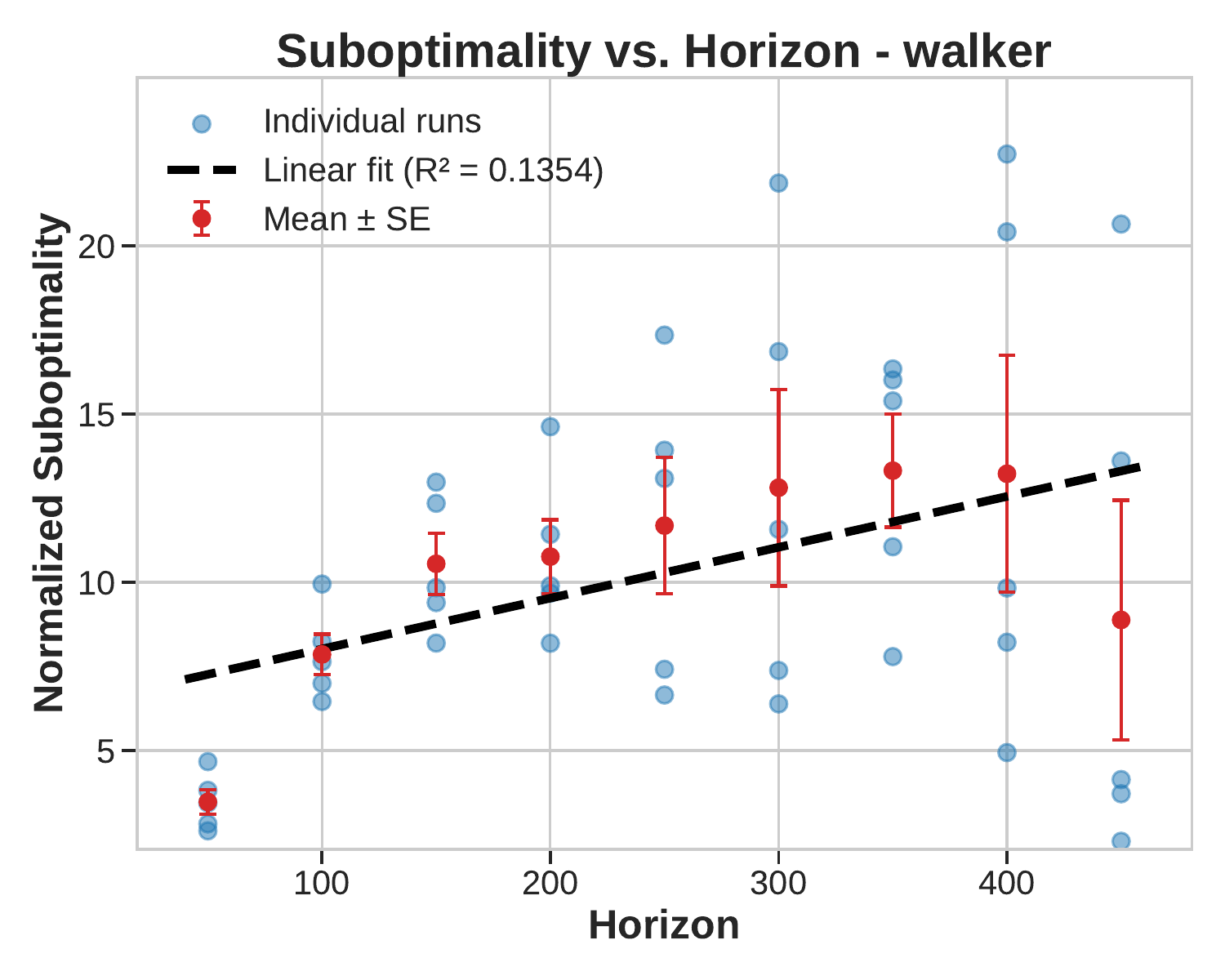}
    \includegraphics[width=0.3\textwidth]{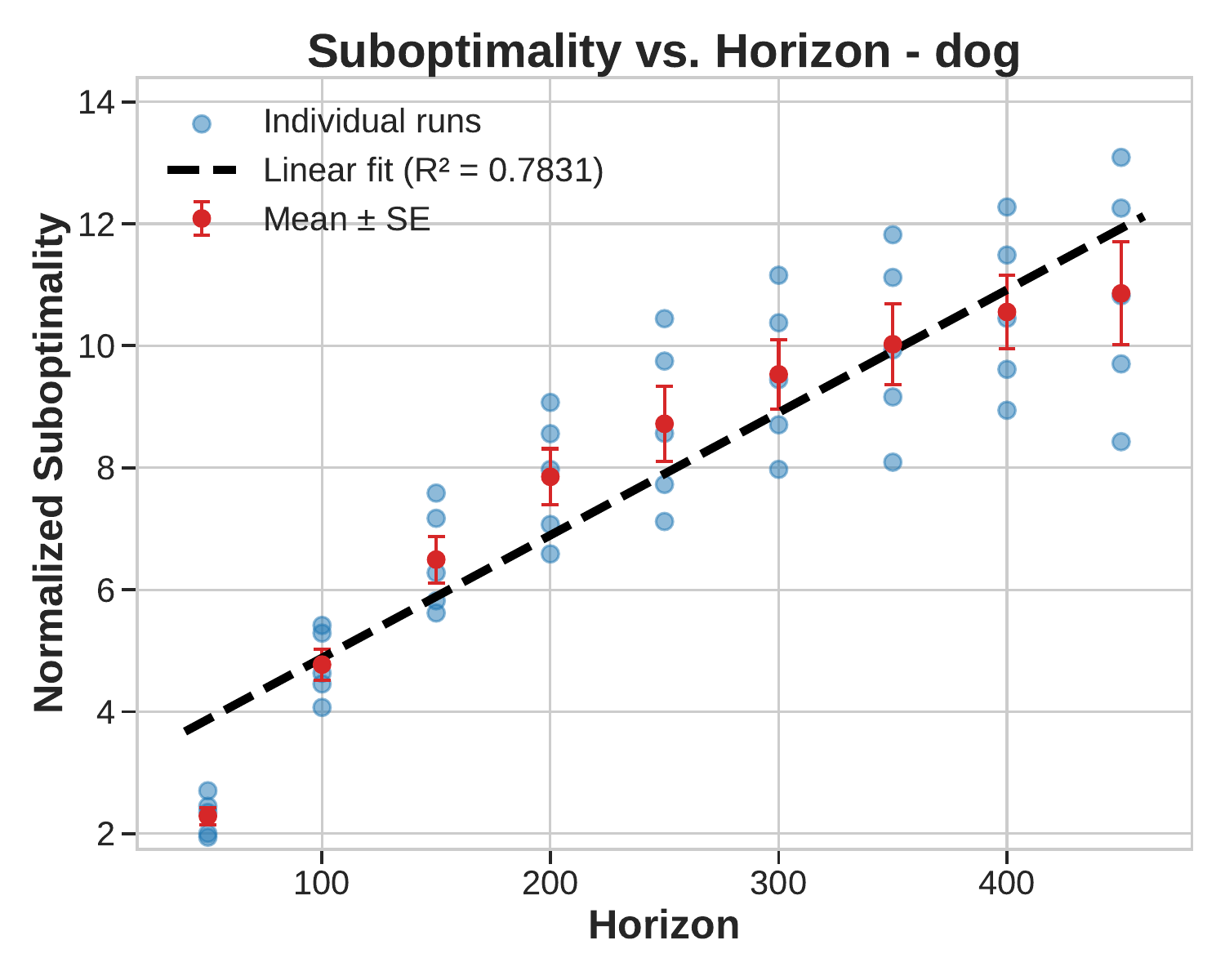}
    \includegraphics[width=0.3\textwidth]{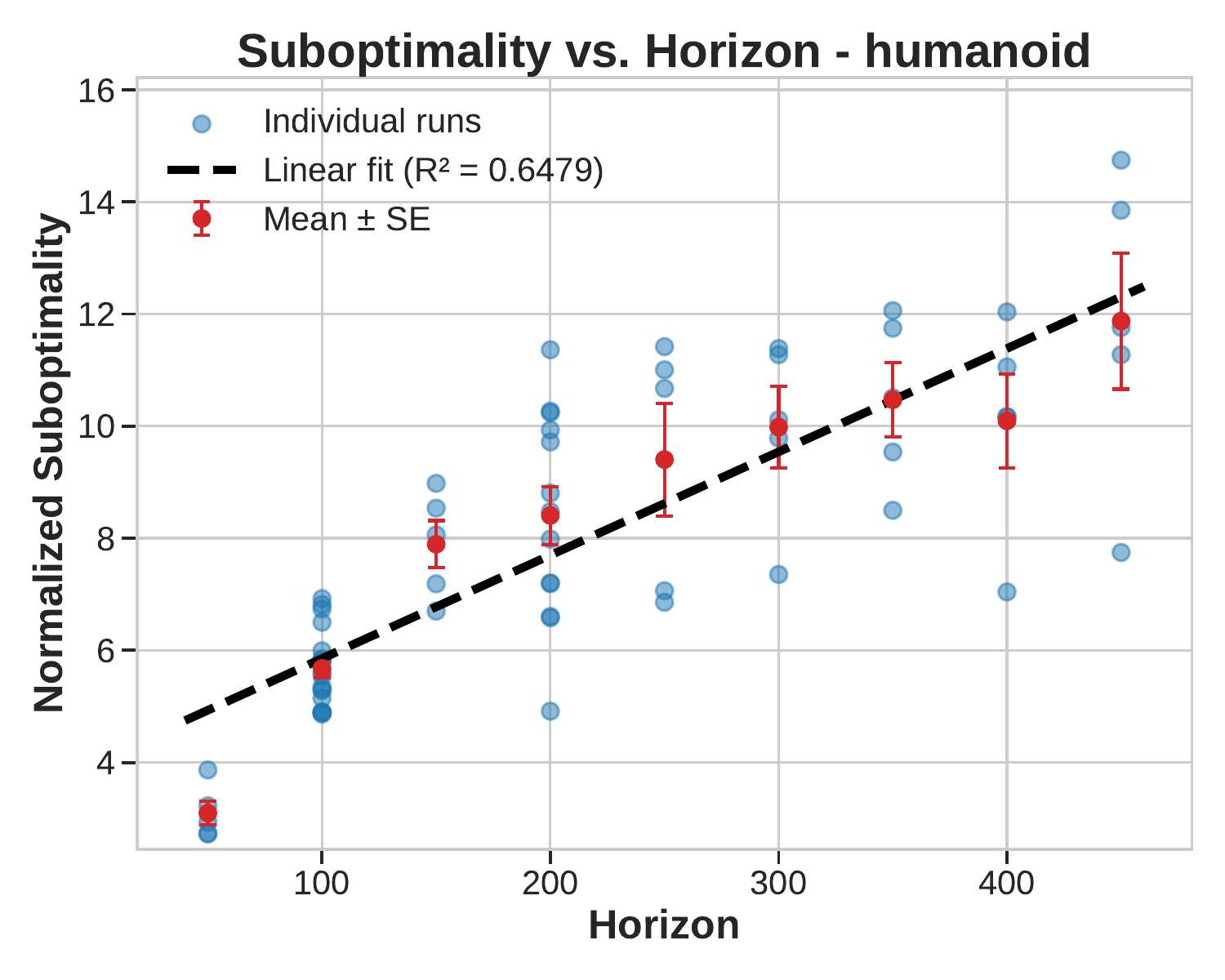}
    \includegraphics[width=0.3\textwidth]{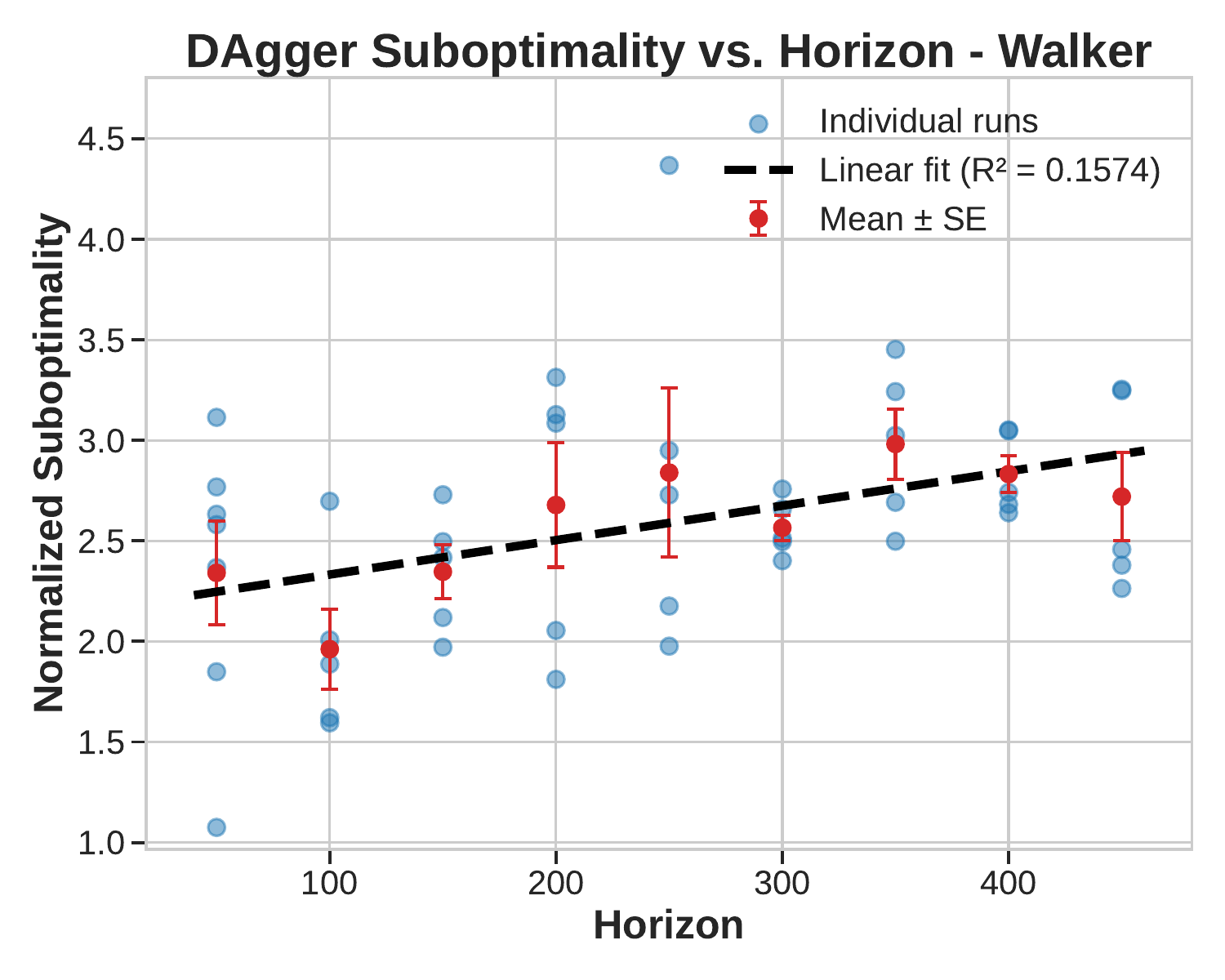}
    \includegraphics[width=0.3\textwidth]{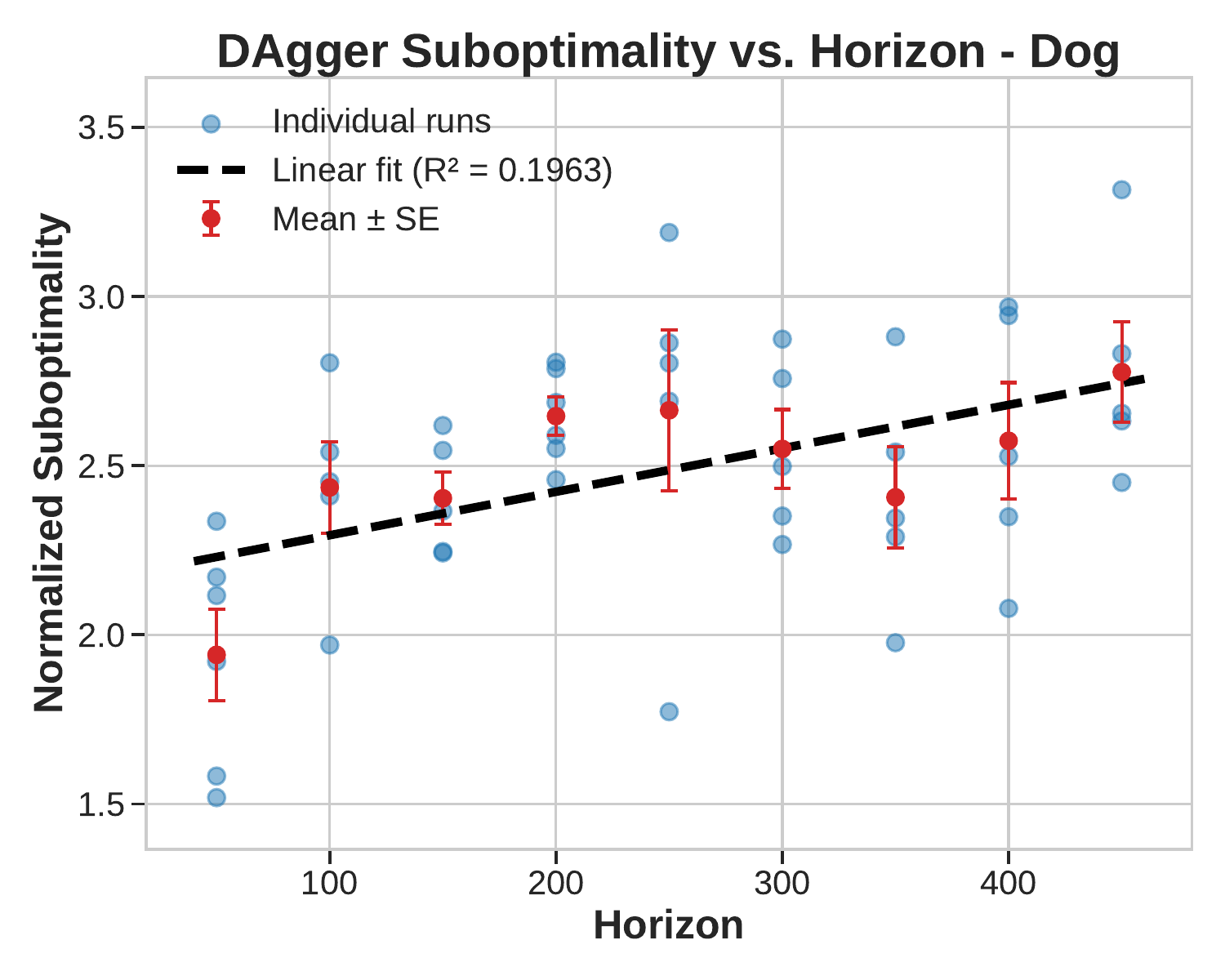}
    \includegraphics[width=0.3\textwidth]{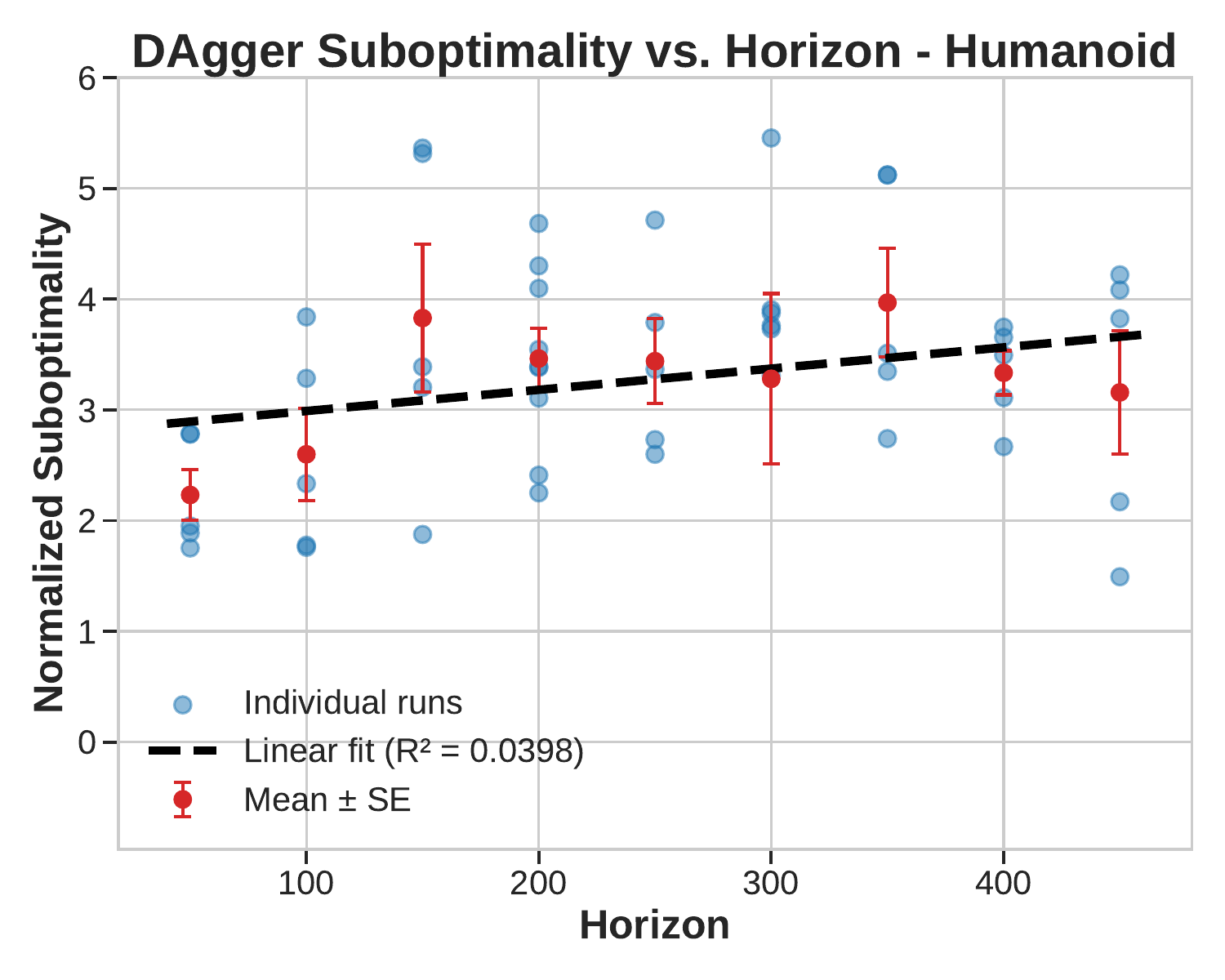}
    \caption{The normalized suboptimality of the expert distillation algorithms (top: behavior cloning; bottom: $\Dagger$) with respect to the horizon. We repeat 5 runs for each horizon and task, and perform linear regression on the results from each task. Note that the trajectory rewards for this plot have been normalized by horizon (and by action-prediction error), so linear scaling indicates compounding errors.}
    \label{fig:horizon}
\end{figure*}

\subsection{Empirical Analysis under Deterministic Dynamics}

\pref{thm:perturbed-mdp-il-guarantee-main} gives a strong performance guarantee for expert distillation under deterministic latent dynamics, nearly matching that of RL. This suggests that expert distillation may be preferred over standard RL due to its (practical) efficiency. Also, \pref{thm:perturbed-mdp-il-guarantee-main} suggests that error may compound with the horizon $H$. However, the result is only an upper bound, and only for a stylized setting. We now investigate whether these two theoretical implications hold up empirically. 

\paragraph{Expert distillation outperforms RL under deterministic dynamics} In this experiment, we compare the (a) asymptotic performance and (b) computational efficiency of expert distillation and standard RL. We train each method until convergence, and we plot the episodic return with respect to the wall clock time in \pref{fig:det-il-rl}. We see that offline expert distillation (i.e., behavior cloning) is competitive in easier tasks such as \texttt{walker}, but is suboptimal in harder tasks such as \texttt{humanoid} and \texttt{dog}. However, online imitation learning (i.e., $\Dagger$) is able to achieve the best performance in all tasks, and with better computational efficiency (i.e., faster convergence) than RL. This supports our theory that under deterministic dynamics, expert distillation can be close to optimal.

\textbf{Empirical vignette: the source of error compounding?}
The horizon dependence of the error in imitation learning has received intensive empirical \citep{ross2010efficient,laskey2017dart,block2023butterfly} and theoretical \citep{rajaraman2020toward,foster2024is,rohatgi2025computational} study, both from the perspective of sample complexity \citep{rajaraman2020toward,foster2024is} and misspecification \citep{rohatgi2025computational}. It is widely believed that behavior cloning suffers error compounding over the horizon, which is avoided by online methods such as $\Dagger$ that are able to \emph{recover} from mistakes \citep{ross2014reinforcement,rajaraman2020toward}. Does this compounding manifest in expert distillation for POMDPs, and is the cause sampling error or misspecification? 
In \pref{fig:horizon}, we vary the horizon $H \in [50,450]$, and measure the sub-optimality of offline and online expert distillation. We normalize rewards so that trajectory reward lies in $[0,1]$. We further normalize by mean action-prediction MSE (averaged over choice of $H$). 
We see strong horizon dependence for behavior cloning (and weaker for $\Dagger$, likely due to recoverability). This contrasts with empirical results of \cite{foster2024is}: they perform \emph{well-specified} behavior cloning in similar tasks, and find little horizon dependence. Together, our results therefore suggest that misspecification, rather than sampling error, may be the more fundamental source of horizon dependence for behavior cloning.

%\dhruv{
%\begin{itemize}
%    \item What does greedy belief decoding get in this setting?
%    \item Is there $\exp(H)$ time complexity lower bound for RL (for $m$-step-decodable)
%    \item $A^m$ vs $(OA)^m$ for RL
%\end{itemize}
%}

\section{RL Outperforms Distillation for Stochastic Dynamics}
\label{sec:observable}
While deterministic dynamics are plausible in some applications, there are also many potential sources of stochasticity; in real-world robotics, stochasticity may be required to model e.g. internal motor noise or unknowable features of the external environment. 
Some robotics simulators \citep{Makoviychuk2021IsaacGH} also have stochasticity arising from a PDE solver. How does the stochasticity of the environment affect the performance of expert distillation and RL? 

\begin{figure*}[t]
\centering
\includegraphics[width=0.32\textwidth]{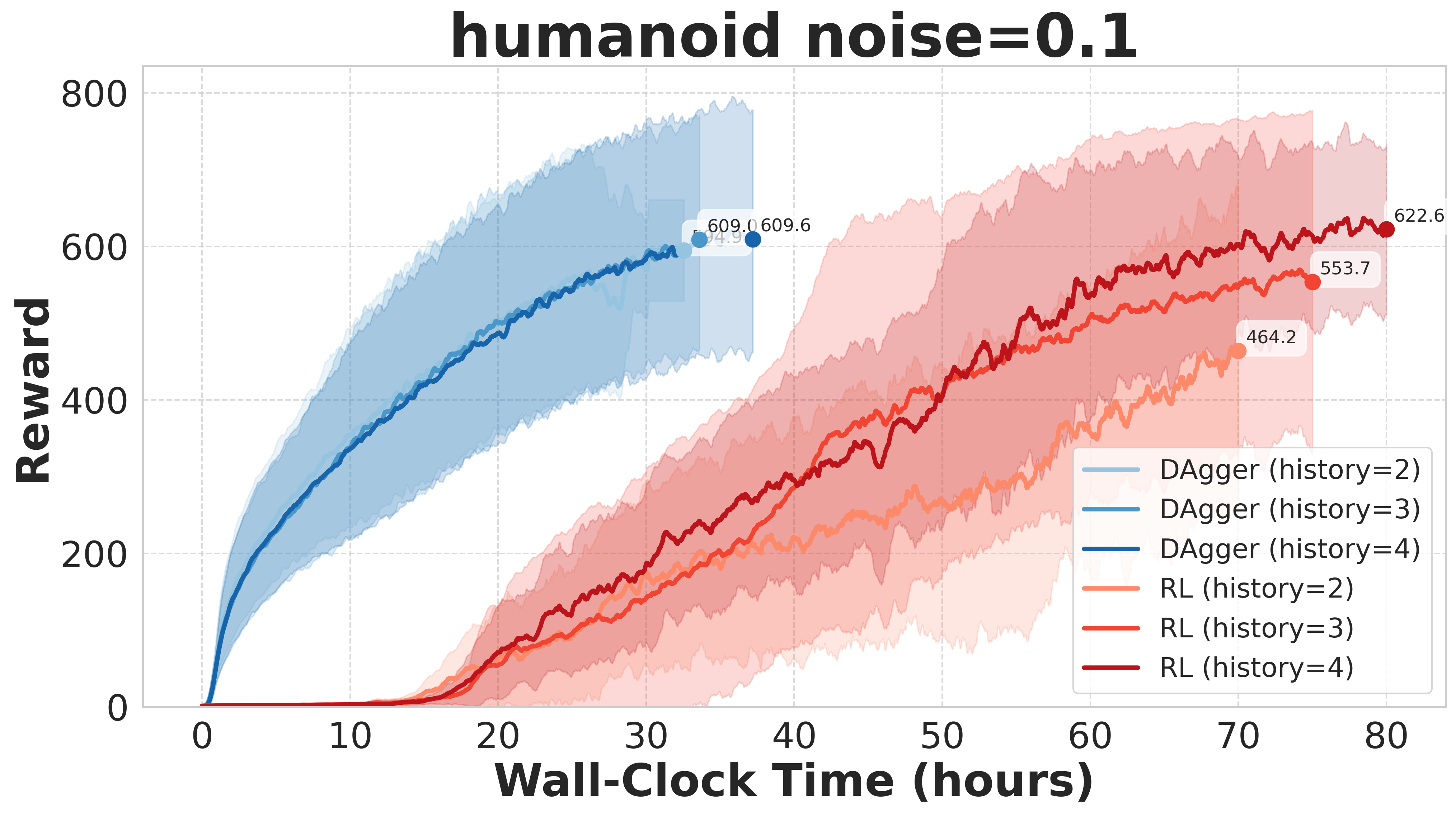}
\includegraphics[width=0.32\textwidth]{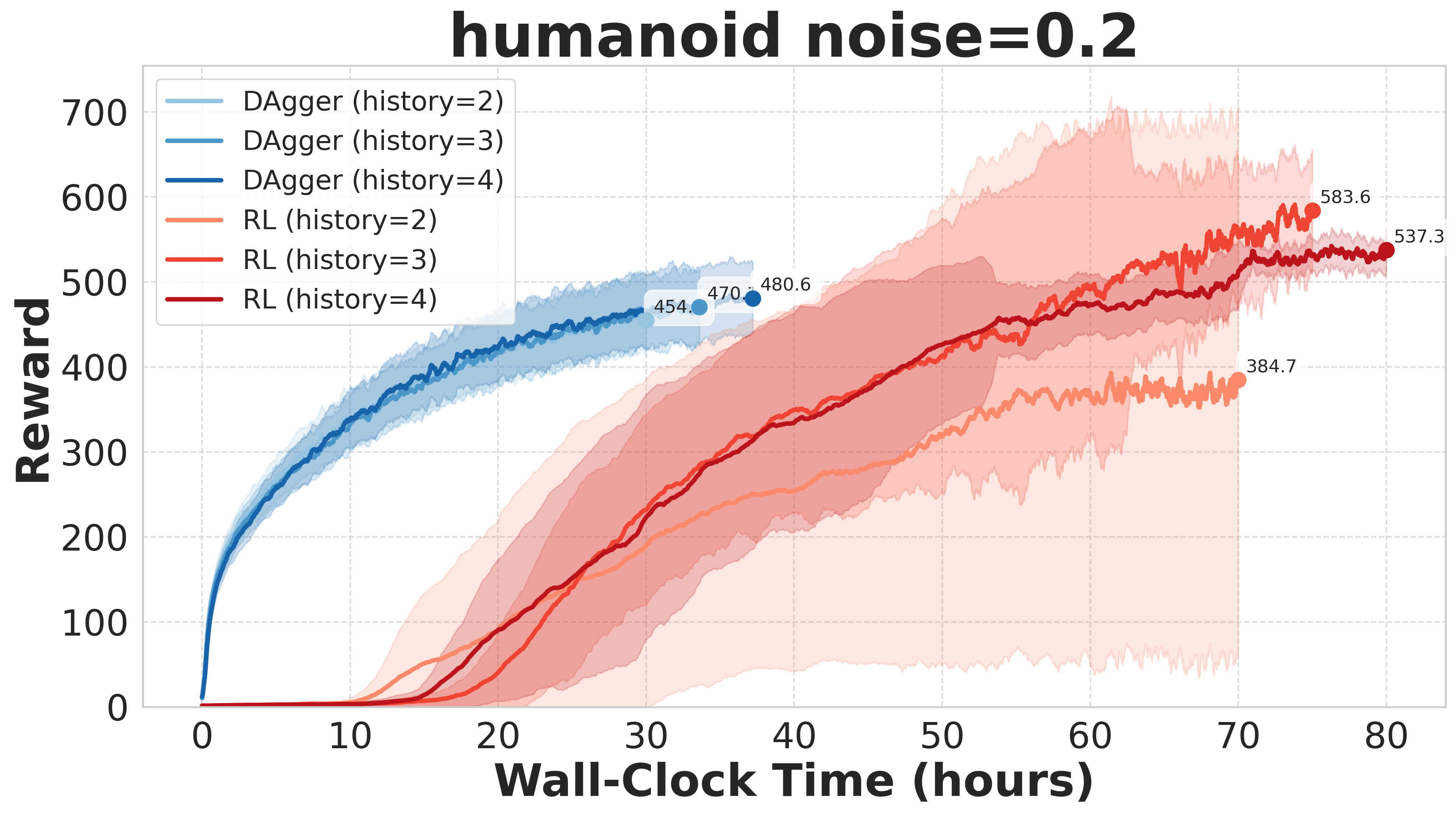}
\includegraphics[width=0.32\textwidth]{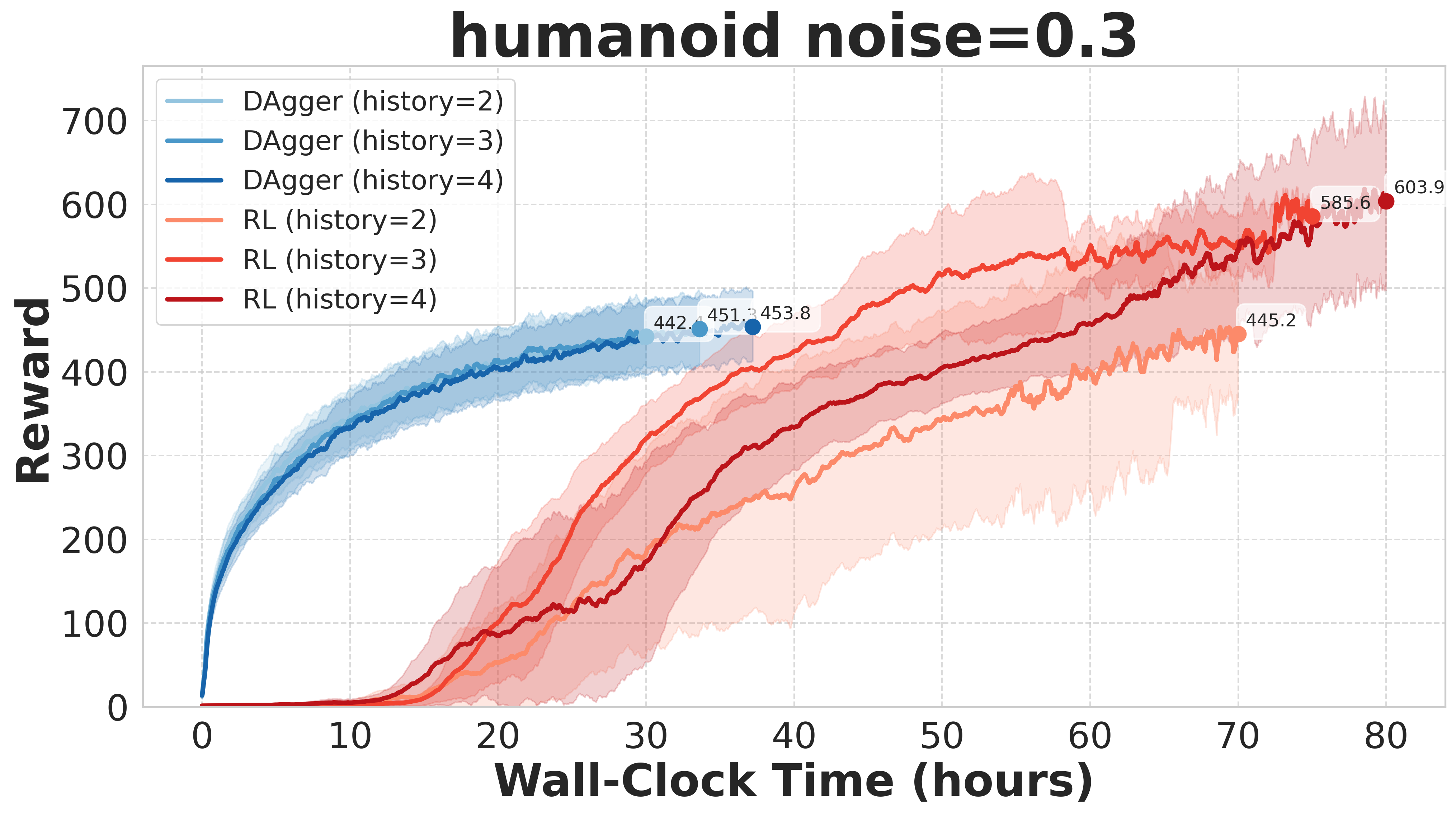}

\includegraphics[width=0.32\textwidth]{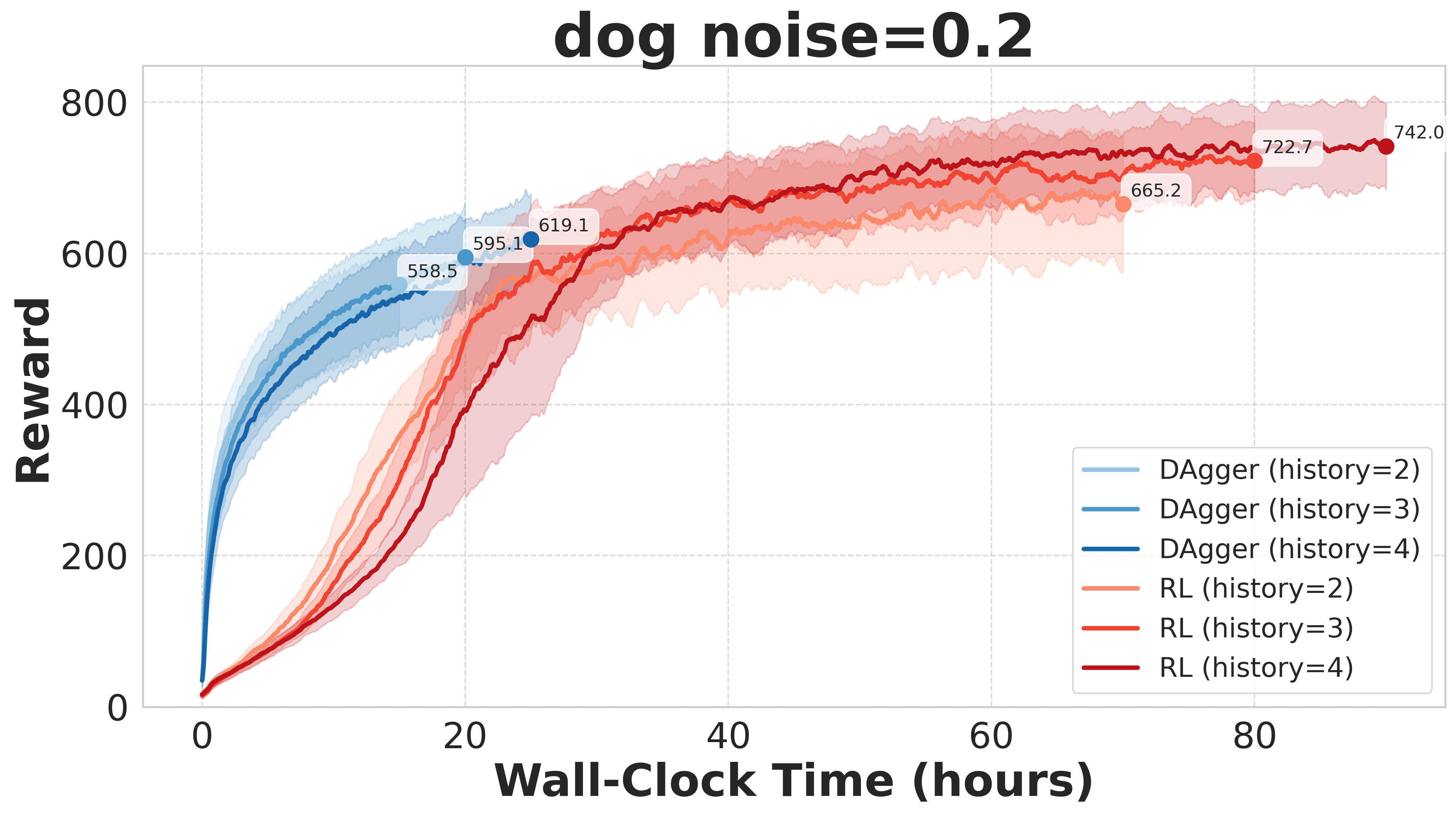}
\includegraphics[width=0.32\textwidth]{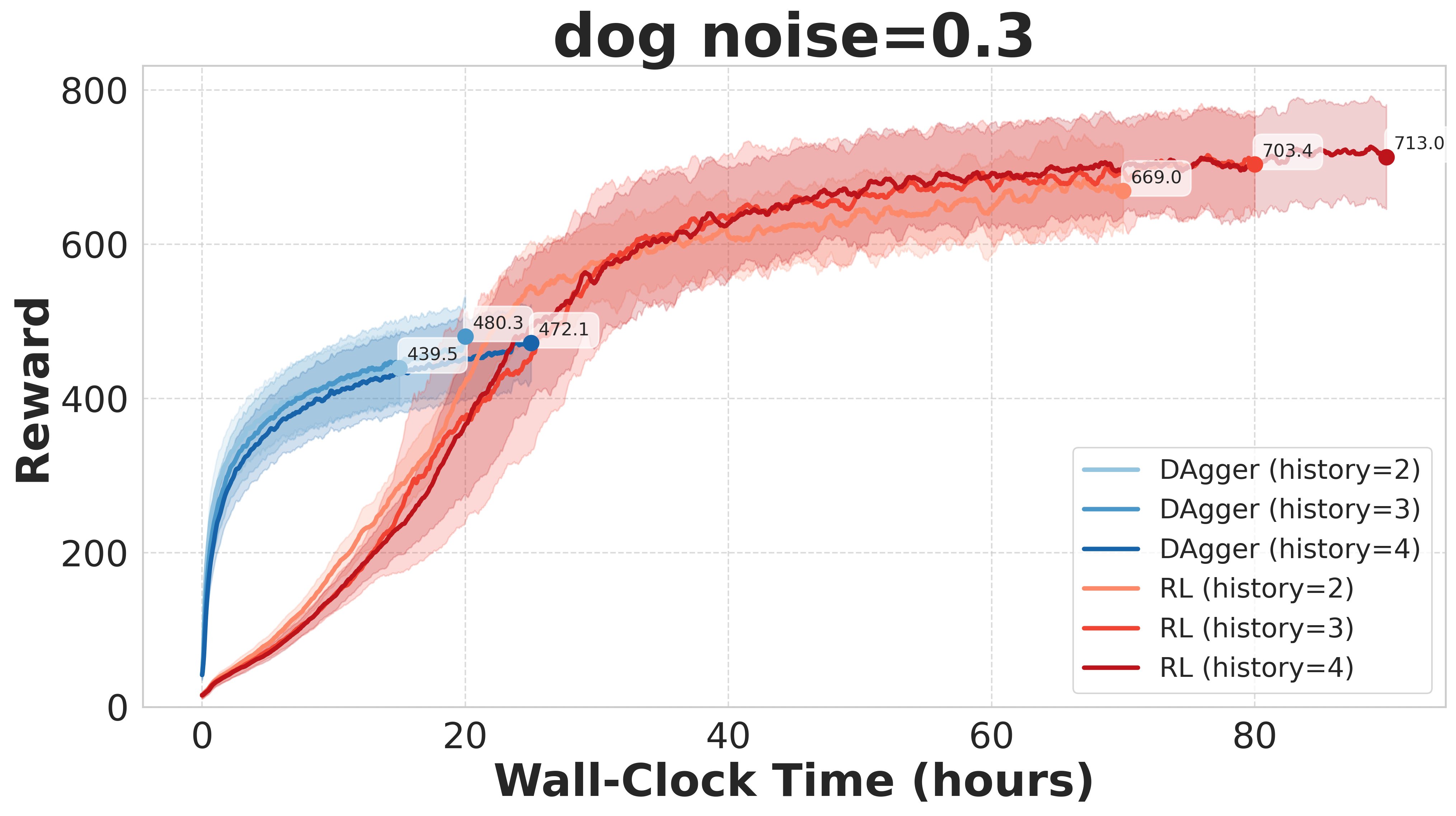}
\includegraphics[width=0.32\textwidth]{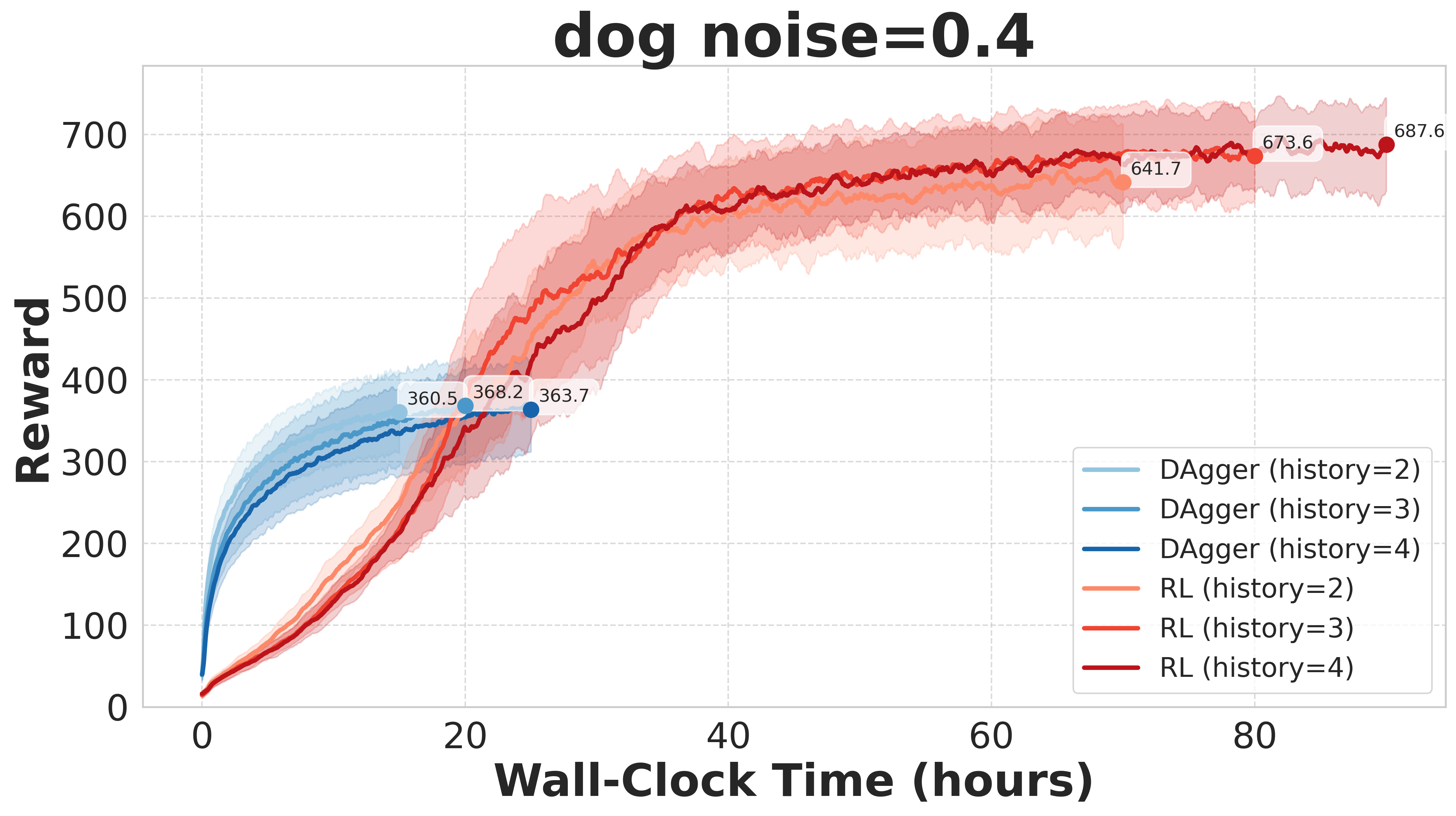}

\caption{Performance of $\Dagger$ and RL with different frame-stacks on \texttt{humanoid-walk} and \texttt{dog-walk} with motor noise. We repeat each experiment 5 times and plot the mean and standard deviation. Note that in general, the improvement of RL over $\Dagger$ increases with the motor noise.}\label{fig:stochastic}
\end{figure*}

\subsection{Theoretical Analysis under Stochastic Dynamics}

We show a negative result in the perturbed Block MDP model: for general dynamics, the misspecification of the optimal latent policy with respect to the class of $L$-step executable policies \emph{does not} necessarily decay as $L$ increases, in contrast with the case of deterministic dynamics (\cref{lemma:tv-beltil-to-latent-main}).

\begin{proposition}[See \cref{prop:stoch-decodability-lb}]\label{prop:stoch-decodability-lb-main}
Let $\delta>0$ and $H\in \NN$. There is a $\delta$-perturbed Block MDP $\cP$ with horizon $H$ such that for all $L \in [H]$, the optimal latent policy $\pilat$ satisfies the following bound, where $\Pi^L$ is the class of $L$-step executable policies:
\[\min_{\pi\in\Pi^L} \TV(\bbP^{\pilat}, \bbP^\pi) \geq \Omega(\min(1, \delta H)).\]
\end{proposition}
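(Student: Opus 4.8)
The plan is to exhibit an explicit $\delta$-perturbed Block MDP in which the latent state at \emph{every} step is a freshly sampled fair coin, so that — in sharp contrast with the deterministic case of \cref{lemma:tv-beltil-to-latent-main} — no window of past observations and actions carries any information about the current latent state. Concretely, I would take $\cS_h=\cA_h=\{0,1\}$ and $\cX_h=\{0,1,\star\}$ for all $h$; make the transitions completely state- and action-independent, $\bbP_h(\cdot\mid s_{h-1},a_{h-1})=\Unif(\{0,1\})$ with $s_1\sim\Unif(\{0,1\})$; take the emissions to be the noisy-sensor model of \cref{ass:perturbed-block-main}, i.e. $\BOT_h(\cdot\mid s_h)$ the point mass at $s_h$ (which has the block property, since distinct states emit distinct symbols) and $E_h(\cdot\mid s_h)$ the point mass at $\star$, so $\BO_h(\cdot\mid s_h)=(1-\delta)\,\mathbf 1_{\{\cdot=s_h\}}+\delta\,\mathbf 1_{\{\cdot=\star\}}$; and set the reward $R_h(s_h,a_h)=\tfrac1H\indic[a_h=s_h]$, so that $\sum_h r_h\le 1$ almost surely. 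This is a valid $\delta$-perturbed Block MDP with horizon $H$.

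The key steps would then be: (i) identify the optimal latent policy — since a latent policy sees $s_h$, the step-$h$ reward only rewards playing $a_h=s_h$, and transitions do not depend on the action, the unique optimal latent policy is the deterministic rule $\pilat_h(s_h)=s_h$, and under it the trajectory event $\cE:=\{a_h=s_h\text{ for all }h\in[H]\}$ has probability $1$; (ii) show that \emph{any} $\pi\in\Pi^L$ is forced to err on noisy steps — letting $\cN:=\{h:x_h=\star\}$, the indicator of $\{x_h=\star\}$ is an independent $\Ber(\delta)$ coin drawn independently of the latent states, so $|\cN|\sim\mathrm{Bin}(H,\delta)$ and $\cN\perp(s_{1:H})$; and because on $h\in\cN$ the observation $x_h=\star$ reveals nothing about $s_h$, the transitions are action-independent, and $\pi$ is a function only of observations and actions, conditioning on $\cN$ and the realized action sequence makes each $s_h$ ($h\in\cN$) a fair coin independent of $a_h$, whence $\bbP^{\pi}(\cE\mid\cN)\le 2^{-|\cN|}$; (iii) conclude by averaging, $\bbP^{\pi}(\cE)\le\EE[2^{-|\cN|}]=(1-\delta/2)^{H}\le e^{-\delta H/2}$, so for every $\pi\in\Pi^L$, $\TV(\bbP^{\pilat},\bbP^{\pi})\ge\bbP^{\pilat}(\cE)-\bbP^{\pi}(\cE)\ge 1-e^{-\delta H/2}=\Omega(\min(1,\delta H))$. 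Since the bound is uniform in $L$, it holds for all $L\in[H]$, giving the claim.

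The main obstacle is Step (ii): rigorously arguing that on a noisy step the current latent state is independent of the \emph{entire} observable history, for an arbitrary (possibly randomized) $L$-step executable policy, so that no amount of frame-stacking can help. This is precisely the structural feature separating stochastic from deterministic dynamics — in the deterministic case a window of recent observations and actions pins down $s_h$, whereas here each $s_h$ is drawn afresh and its only imprint on the observable trajectory, the emission $x_h$, is erased with probability $\delta$; the remaining arithmetic ($1-e^{-\delta H/2}\ge\tfrac14\min(1,\delta H)$, and the PGF identity $\EE[2^{-|\cN|}]=(1-\delta/2)^H$) is routine.
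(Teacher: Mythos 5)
Your proposal is correct and takes essentially the same route as the paper's proof of \cref{prop:stoch-decodability-lb}: uniformly mixing latent dynamics, reward $\tfrac1H\indic[a_h=s_h]$, a noisy-sensor emission, and a TV lower bound via the test event $\{a_h=s_h \text{ for all } h\}$, which the latent policy satisfies with probability $1$ but any executable policy satisfies with exponentially small excess probability. The only cosmetic difference is that the paper uses a bit-flip emission ($\BO_h(1-s\mid s)=\delta$) with the per-step conditional bound $\Pr^\pi[a_h=s_h\mid\tau_{1:h-1}]\le 1-\delta$, yielding $(1-\delta)^H$, whereas you use an erasure emission and condition globally on the set of noisy steps, yielding $(1-\delta/2)^H$ --- both give $\Omega(\min(1,\delta H))$, and your Step (ii) independence argument is sound as written.
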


This result also highlights the difference between decodability error and belief contraction error, which does decay as $L$ increases, regardless of the transition dynamics (\cref{thm:belief-contraction-main}). The intuition for \cref{prop:stoch-decodability-lb-main} is simple: in the extreme case where the dynamics are \emph{uniformly mixing} at every step, prior observations yield no information about the current state, so the $\delta$ error incurred by trying to decode the current observation is irreducible. This decodability error compounds over timesteps, and means that executable policies are unable to simulate the latent policy that plays an action uniquely indexed by the latent state. In contrast, POMDPs with uniform mixing are \emph{easy} for standard RL, precisely because they reduce to $H$ independent horizon-$1$ subproblems.

\paragraph{Comparison with RL} The above result, compared with \cref{cor:golowich-perturbed-mdp-main}, suggests a potential empirical benefit of standard RL over expert distillation: the former may generically be able to trade increased computation (by increasing $L$) for improved performance (by mitigating observation noise), whereas the latter --- at least in the worst case --- incurs irreducible error due to stochasticity in the dynamics. To be sure, the uniformly-mixing construction from \cref{prop:stoch-decodability-lb-main} is practically unrealistic; nevertheless, below we verify that this benefit occurs in more realistic environments.

\subsection{Experimental Analysis under Stochastic Dynamics}\label{sec:exp-stochastic}
\mathchardef\mhyphen="2D

\paragraph{RL with more computation eventually outperforms distillation} To simulate a POMDP with stochastic latent dynamics, we apply motor noise in the \texttt{humanoid-walk} task. We add $0$-mean isotropic Gaussian noise with $\mathsf{std\mhyphen dev}\in\{0.1,0.2,0.3\}$ to each action. We compare $\Dagger$ and RL with frame-stack $L\in\{2,3,4\}$. We run each method until convergence (with the same number of episodes for all runs with fixed algorithm/noise level) and plot episodic return against wall-clock time (\pref{fig:stochastic}). We observe that expert distillation does not benefit from larger $L$, whereas the performance of RL sometimes benefits (at the cost of longer wall-clock time). This improvement is not as dramatic as the theory predicts, perhaps suggesting that there is theoretically unaccounted-for \emph{dependence} between observation errors. Nevertheless, the results do corroborate the main prediction: RL robustly outperforms expert distillation for higher noise levels.

\arxiv{
    \begin{figure*}[t]
        \centering
        \includegraphics[width=0.32\textwidth]{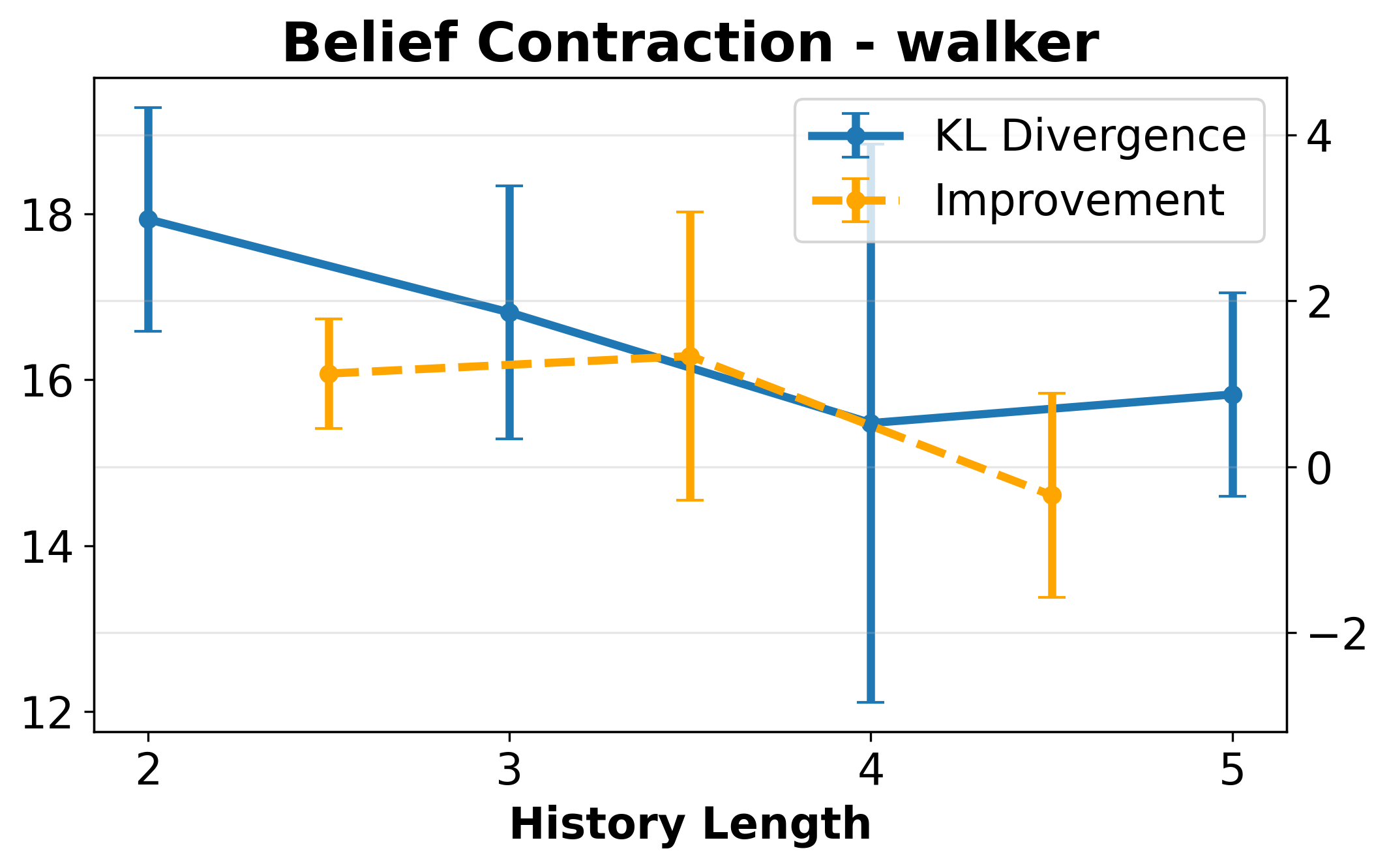}
        \includegraphics[width=0.32\textwidth]{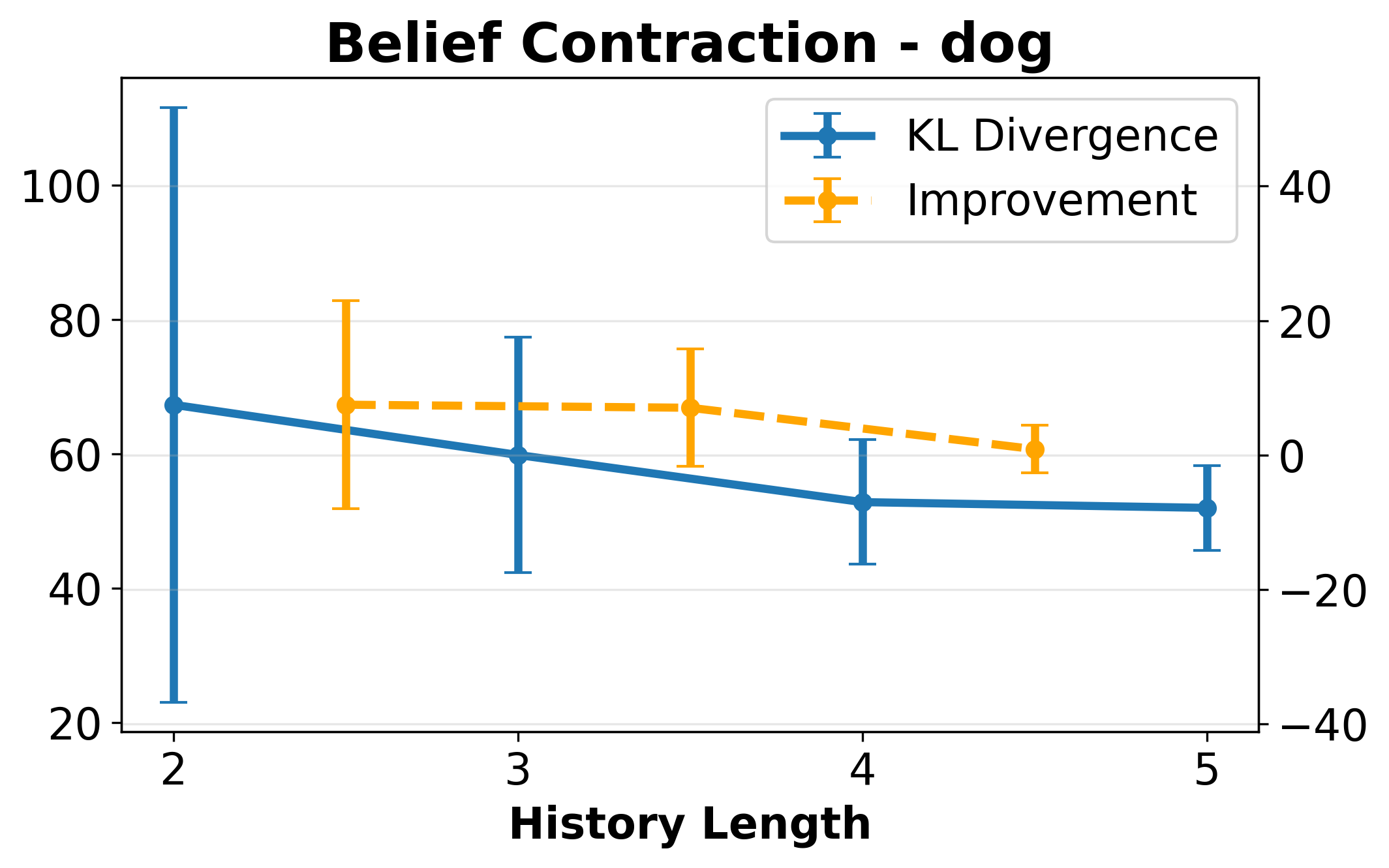}
        \includegraphics[width=0.32\textwidth]{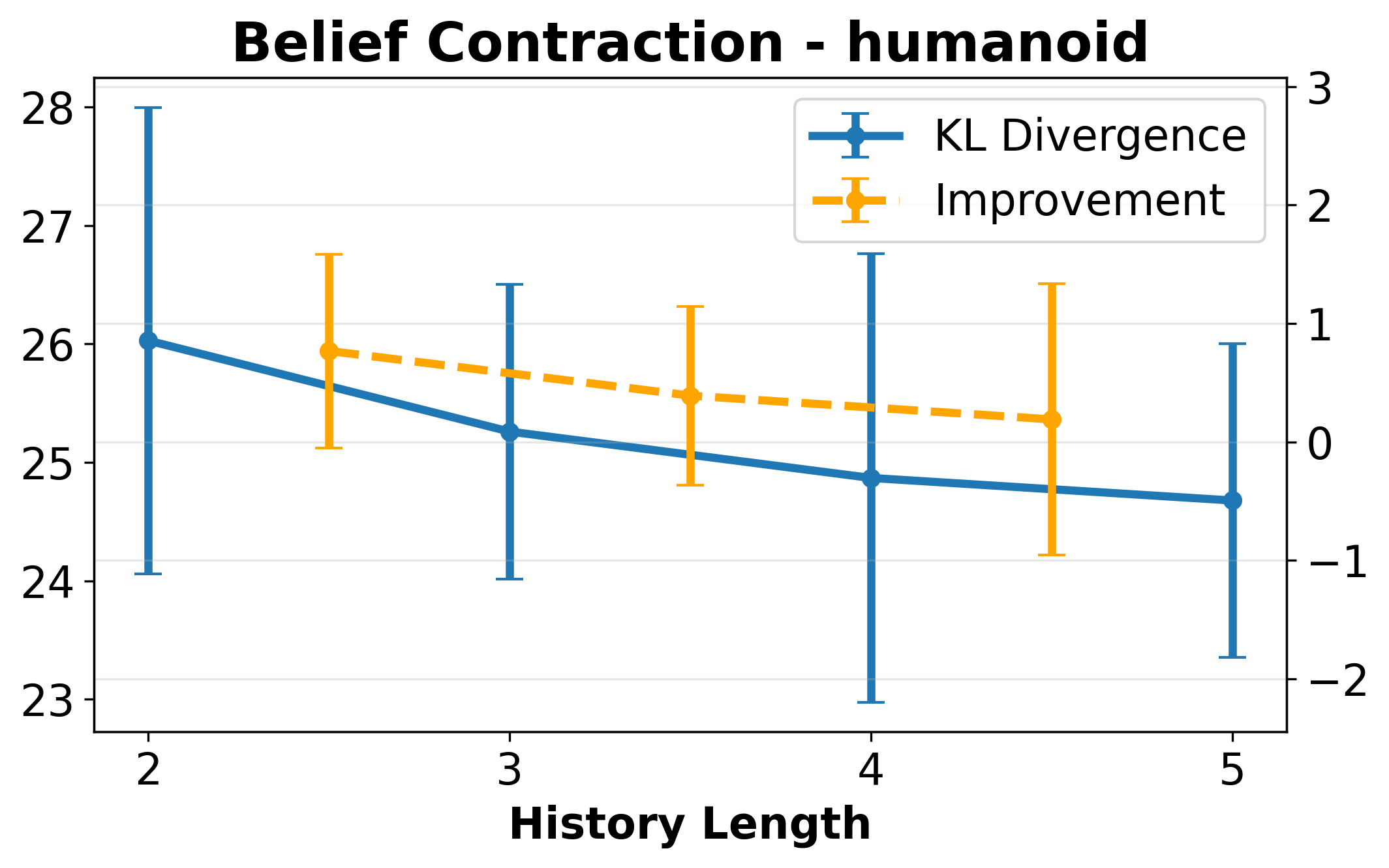}
        \caption{Belief contraction error with respect to the framestack $L = \{2,3,4,5\}$ on all tasks. For each framestack $L$, we train a Gaussian parametrized neural network to predict the belief with $L$ framestack input. We compute the KL distance to the output of an $L=10$ network (serving as an approximation of the true belief), averaged over a validation dataset with 100 episodes of data. The orange plot denotes the decrease in KL divergence between two numbers of framestacks. We repeat each experiment 5 times and plot the mean and standard deviation. We observe that the belief contraction error decreases (although not as fast as predicted by the theory) as the number of framestack increases.}\label{fig:belief-contraction}
        \end{figure*}
}

\textbf{Empirical vignette: does belief contraction error track RL sub-optimality?} We empirically estimate belief contraction error for each task with no motor noise, and for \texttt{humanoid-walk} with $\mathsf{std\mhyphen dev}=0.2$. We approximate the (unknown) ground truth belief by training a model $\wh\belief^{L^\star}$ that takes $L^\star=10$ input frames. We compare against models $\wh \belief^L$ with $L\in[2,5]$ input frames. Each model's output belief is parametrized as a multivariate Gaussian distribution with diagonal covariance. All models are trained on the same $2000$ trajectories collected by the latent expert policy. For each $L$ we compute the $\mathsf{KL}$-divergence (a tractable proxy for $\TV$-distance) between outputs of $\wh \belief^L$ and $\wh\belief^{L^\star}$, and average across $100$ episodes of validation data, also collected by the same latent expert policy. We find that the empirical error decreases slightly as $L$ increases (\cref{fig:belief-contraction}), though not as fast as the theory predicts.\footnote{Note that for $\gamma$-observable POMDPs, $\mathsf{KL}$-divergence is also predicted to decay as $L$ increases \citep{golowich2023planning}.} Adding motor noise has little noticeable effect (\pref{fig:belief-contraction-sto}). Interestingly, the error is \emph{not} predictive across tasks: \texttt{dog-walk} has highest empirical error among the three tasks, yet RL achieves the lowest sub-optimality on it (\cref{fig:det-il-rl}), indicating a theoretically-unexplained confounder. \loose

\section{Towards Better Distillation: Imitating a Smoother Expert}
\label{sec:smooth}
\begin{figure*}[t]
\centering
\includegraphics[width=0.4\textwidth]{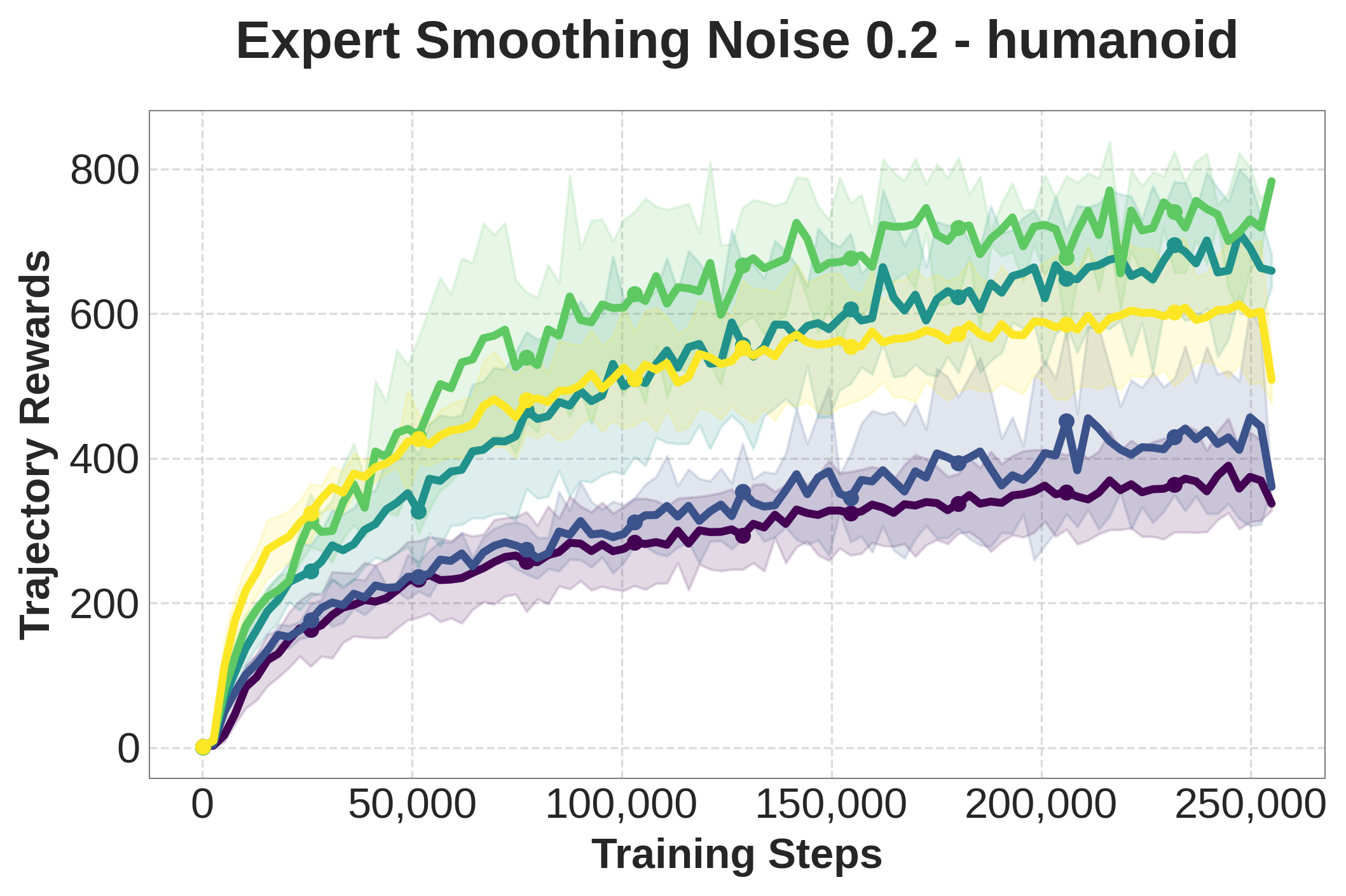}
\includegraphics[width=0.4\textwidth]{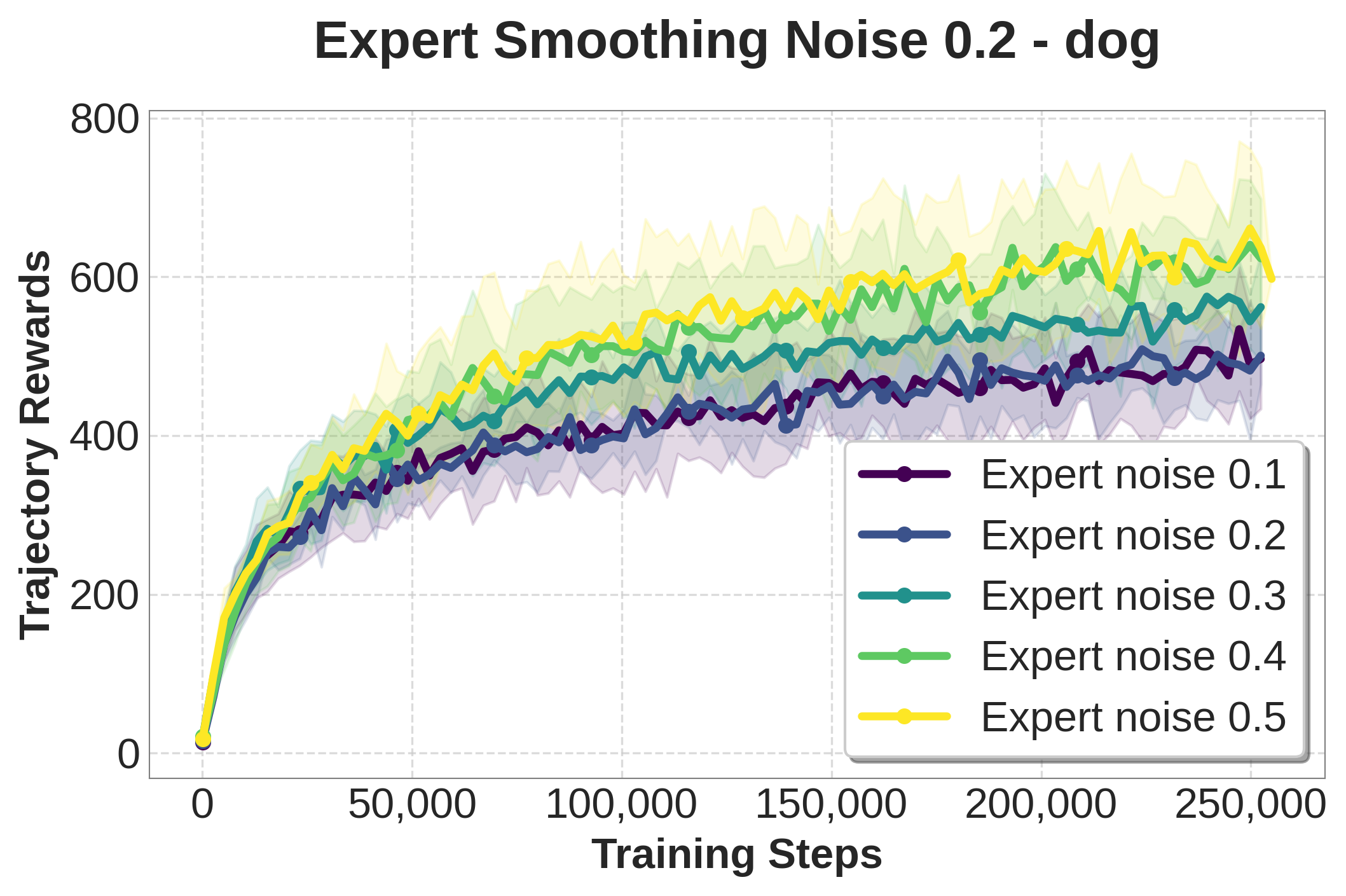}
\caption{Performance of $\Dagger$ on the validation dataset for the \texttt{humanoid-walk} and \texttt{dog-walk} environments with motor noise $\sigma=0.2$, as the noise level for the training environment (i.e. the environment in which the latent expert was trained) varies over $\{0.1,0.2,0.3,0.4,0.5\}$.}
\label{fig:smooth}
\end{figure*}

In this section, we discuss how the bounds via approximate decodability (e.g., \cref{lemma:tv-beltil-to-latent-main}) are loose since they fail to capture the \emph{smoothness} of the latent expert. A tighter bound with smoothness suggests potential benefits of artificially smoothing the latent expert before distillation. We then propose a broadly-applicable method for improving the smoothness, and show that it yields empirical benefits. We view these results as largely a proof-of-concept and leave more detailed investigation to future work. \loose

\paragraph{Smoothness of the latent policy} Suppose that  the true belief state at some step is always uniform over two particular states $\{s, s'\}$. Then decodability error is large, and a worst-case latent policy $\pilat$ --- namely, one that plays different actions on these states --- is unavoidably misspecified with respect to the class of executable policies. However, ambiguity between $s$ and $s'$ is most likely to occur if these states are somehow \emph{similar} (e.g., close w.r.t. a metric). If $\pilat$ is \emph{smooth} in the sense that it plays similar action distributions for nearby states, then the misspecification should be mitigated. This phenomenon can be captured more generally by the following variant of \cref{def:decodability-error}, which measures decodability error of the \emph{actions} (and hence is adaptive to the latent expert):

\begin{definition}[Action-prediction error]\label{def:action-prediction}
    Fix a latent policy $\pi^{\latent}$. For a fixed executable policy $\pi$, and timestep $h$, the \emph{action-prediction error} is defined as
    \begin{align*}
    \epsilon^{\asmooth;\pilat}_h(\pi) = \En^{\pi} \brk*{1 - \nrm*{\pilat\circ \belief_h(x_{1:h},a_{1:h-1})}_{\infty}}.
    \end{align*}
\end{definition}
In \pref{lemma:tv-beltil-to-latent-main}, the decodability error can indeed be replaced by the action-prediction error --- see \cref{lemma:smoothness-to-latent}. 
Note that so long as $\pilat$ is deterministic (which is without loss of generality for the optimal latent policy), it generically holds that $\epsilon^{\asmooth;\pilat}_h(\pi) \leq \epsilon^{\decodability}_h(\pi)$.

\paragraph{Algorithmic intervention: smoothing experts with motor noise} One way to construct a smoother expert policy is to pre- or post-compose the optimal latent policy at each step with e.g. a Gaussian convolution kernel (on the state or action space, respectively). However, such approaches ignore the sequential nature of decision-making: smoothing the policy at later steps means that earlier actions may no longer be optimal. We propose instead computing the optimal policy for a \emph{modified} latent MDP with \emph{additional motor noise}. This encourages robustness to motor noise, as a tractable proxy for robustness to observation noise---see \cref{app:smoothing} for an example of one potential mechanism by which the former may lead to the latter.

\paragraph{Experimental results} For both \texttt{humanoid-walk} and \texttt{dog-walk}, for each $\sigma \in \{0.1, 0.2, 0.3, 0.4, 0.5\}$, we train an expert latent policy $\pi^\sigma$ in the environment with mean-$0$, std. dev.-$\sigma$, Gaussian motor noise on each action. We distill each expert to an executable policy via $\Dagger$ in an environment with $\sigma = 0.2$. We observe that $\pi^{0.2}$ incurs worse estimated action-prediction error than some higher-noise experts (\cref{fig:error_smooth}). Moreover, despite being the optimal latent policy for this environment, it is \emph{not} the best expert to distill (\cref{fig:smooth}): the distillations of policies with lower action-prediction error achieve higher reward (substantially for \texttt{humanoid-walk} and modestly for \texttt{dog-walk}). We also observe that the effect disappears when the true environment has deterministic dynamics (\cref{app:smooth}), likely since it is near-decodable.

%This leads to two predictions. First, the action-prediction error initially decreases as $\eta$ increases. Second, when $\eta$ gets too large, the ``interior" of some of the $G(s)$ sets becomes empty, i.e. the optimal policy cannot play a robustly good action. As a result, it may play arbitrary actions for these states, so the action prediction error becomes larger again (and the policy value decreases). This prediction is validated in the humanoid task (Fig 8a) but unclear for the dog task (Fig 8b) -- where the changes in action-prediction error are within margin-of-error (and the downstream performance gains from smoothing are also smaller).

\paragraph{Related methods} We view this method as a lightweight version of asymmetric RL methods that iteratively refine the expert \citep{warrington2021robust}. It is also closely related to the principle of noise injection in imitation learning \citep{laskey2017dart,block2023provable}, which has been shown to robustify Behavior Cloning---to \emph{match} the performance of $\Dagger$---by mitigating out-of-distribution effects.\footnote{We remark that $\Dagger$ saturates most of the benchmarks used by \cite{laskey2017dart}.} \cref{fig:smooth} demonstrates that even though $\Dagger$ uses online data collection to mitigate out-of-distribution effects, noise injection can still \emph{improve} its performance in challenging image-based tasks---thus suggesting that there may be a qualitatively different phenomenon at play in highly misspecified settings.

\section{Limitations and future work} Our theoretical results are for discrete tabular models with independent observation noise; weakening these assumptions could yield more precise understanding of the fundamental challenges that arise in applications with rich partial observations. Our experiments use synthetic injected motor noise; extending to more natural sources of stochasticity could be valuable. Also, there is a vast design space of algorithmic interventions for smoothing, of which we have only touched the surface. 
Finally, our work is motivated by applications like robot learning where near-decodability is plausible, but an important problem --- which we did not explore --- is to understand the algorithmic trade-offs in applications that require active information-gathering.

\section*{Acknowledgements}
The authors are grateful to Noah Golowich, Audrey Huang, Nan Jiang, Akshay Krishnamurthy, Ankur Moitra, Wen Sun, Gokul Swamy, and Kaiqing Zhang for their insightful discussion. AS and YS acknowledge and thank the support of ONR grant N000142212363 and NSF AI Institute for Societal Decision Making AI-SDM grant IIS2229881. DR is supported by NSF awards CCF-2430381 and DMS-2022448, and ONR grant N00014-22-1-2339.

% \newpage
\bibliography{refs} 
\iclr{
\bibliographystyle{abbrv}  
}
% \iclr{
% \bibliographystyle{iclr2025_conference}
% }
\newpage
\iclr{
\section*{NeurIPS Paper Checklist}
%%% BEGIN INSTRUCTIONS %%%

\iffalse
IMPORTANT, please:
\begin{itemize}
    \item {\bf Delete this instruction block, but keep the section heading ``NeurIPS Paper Checklist"},
    \item  {\bf Keep the checklist subsection headings, questions/answers and guidelines below.}
    \item {\bf Do not modify the questions and only use the provided macros for your answers}.
\end{itemize} 
\fi 

%%% END INSTRUCTIONS %%%

\begin{enumerate}

\item {\bf Claims}
    \item[] Question: Do the main claims made in the abstract and introduction accurately reflect the paper's contributions and scope?
    \item[] Answer: \answerYes{} % Replace by \answerYes{}, \answerNo{}, or \answerNA{}.
    \item[] Justification: We provide both theoretical and empirical evidence on how approximate decodability and belief contraction explain the algorithmic trade-off between expert distillation and RL. 
    \item[] Guidelines:
    \begin{itemize}
        \item The answer NA means that the abstract and introduction do not include the claims made in the paper.
        \item The abstract and/or introduction should clearly state the claims made, including the contributions made in the paper and important assumptions and limitations. A No or NA answer to this question will not be perceived well by the reviewers. 
        \item The claims made should match theoretical and experimental results, and reflect how much the results can be expected to generalize to other settings. 
        \item It is fine to include aspirational goals as motivation as long as it is clear that these goals are not attained by the paper. 
    \end{itemize}

\item {\bf Limitations}
    \item[] Question: Does the paper discuss the limitations of the work performed by the authors?
    \item[] Answer: \answerYes{} % Replace by \answerYes{}, \answerNo{}, or \answerNA{}.
    \item[] Justification: We include a discussion on the limitations of the work in the final section.
    \item[] Guidelines:
    \begin{itemize}
        \item The answer NA means that the paper has no limitation while the answer No means that the paper has limitations, but those are not discussed in the paper. 
        \item The authors are encouraged to create a separate "Limitations" section in their paper.
        \item The paper should point out any strong assumptions and how robust the results are to violations of these assumptions (e.g., independence assumptions, noiseless settings, model well-specification, asymptotic approximations only holding locally). The authors should reflect on how these assumptions might be violated in practice and what the implications would be.
        \item The authors should reflect on the scope of the claims made, e.g., if the approach was only tested on a few datasets or with a few runs. In general, empirical results often depend on implicit assumptions, which should be articulated.
        \item The authors should reflect on the factors that influence the performance of the approach. For example, a facial recognition algorithm may perform poorly when image resolution is low or images are taken in low lighting. Or a speech-to-text system might not be used reliably to provide closed captions for online lectures because it fails to handle technical jargon.
        \item The authors should discuss the computational efficiency of the proposed algorithms and how they scale with dataset size.
        \item If applicable, the authors should discuss possible limitations of their approach to address problems of privacy and fairness.
        \item While the authors might fear that complete honesty about limitations might be used by reviewers as grounds for rejection, a worse outcome might be that reviewers discover limitations that aren't acknowledged in the paper. The authors should use their best judgment and recognize that individual actions in favor of transparency play an important role in developing norms that preserve the integrity of the community. Reviewers will be specifically instructed to not penalize honesty concerning limitations.
    \end{itemize}

\item {\bf Theory assumptions and proofs}
    \item[] Question: For each theoretical result, does the paper provide the full set of assumptions and a complete (and correct) proof?
    \item[] Answer: \answerYes{} % Replace by \answerYes{}, \answerNo{}, or \answerNA{}.
    \item[] Justification: We explicitly state our assumptions and provide proofs to all of our theoretical results.
    \item[] Guidelines:
    \begin{itemize}
        \item The answer NA means that the paper does not include theoretical results. 
        \item All the theorems, formulas, and proofs in the paper should be numbered and cross-referenced.
        \item All assumptions should be clearly stated or referenced in the statement of any theorems.
        \item The proofs can either appear in the main paper or the supplemental material, but if they appear in the supplemental material, the authors are encouraged to provide a short proof sketch to provide intuition. 
        \item Inversely, any informal proof provided in the core of the paper should be complemented by formal proofs provided in appendix or supplemental material.
        \item Theorems and Lemmas that the proof relies upon should be properly referenced. 
    \end{itemize}

    \item {\bf Experimental result reproducibility}
    \item[] Question: Does the paper fully disclose all the information needed to reproduce the main experimental results of the paper to the extent that it affects the main claims and/or conclusions of the paper (regardless of whether the code and data are provided or not)?
    \item[] Answer: \answerYes{} % Replace by \answerYes{}, \answerNo{}, or \answerNA{}.
    \item[] Justification: We discuss all details of the experiments, including hyperparameters.
    \item[] Guidelines:
    \begin{itemize}
        \item The answer NA means that the paper does not include experiments.
        \item If the paper includes experiments, a No answer to this question will not be perceived well by the reviewers: Making the paper reproducible is important, regardless of whether the code and data are provided or not.
        \item If the contribution is a dataset and/or model, the authors should describe the steps taken to make their results reproducible or verifiable. 
        \item Depending on the contribution, reproducibility can be accomplished in various ways. For example, if the contribution is a novel architecture, describing the architecture fully might suffice, or if the contribution is a specific model and empirical evaluation, it may be necessary to either make it possible for others to replicate the model with the same dataset, or provide access to the model. In general. releasing code and data is often one good way to accomplish this, but reproducibility can also be provided via detailed instructions for how to replicate the results, access to a hosted model (e.g., in the case of a large language model), releasing of a model checkpoint, or other means that are appropriate to the research performed.
        \item While NeurIPS does not require releasing code, the conference does require all submissions to provide some reasonable avenue for reproducibility, which may depend on the nature of the contribution. For example
        \begin{enumerate}
            \item If the contribution is primarily a new algorithm, the paper should make it clear how to reproduce that algorithm.
            \item If the contribution is primarily a new model architecture, the paper should describe the architecture clearly and fully.
            \item If the contribution is a new model (e.g., a large language model), then there should either be a way to access this model for reproducing the results or a way to reproduce the model (e.g., with an open-source dataset or instructions for how to construct the dataset).
            \item We recognize that reproducibility may be tricky in some cases, in which case authors are welcome to describe the particular way they provide for reproducibility. In the case of closed-source models, it may be that access to the model is limited in some way (e.g., to registered users), but it should be possible for other researchers to have some path to reproducing or verifying the results.
        \end{enumerate}
    \end{itemize}

\item {\bf Open access to data and code}
    \item[] Question: Does the paper provide open access to the data and code, with sufficient instructions to faithfully reproduce the main experimental results, as described in supplemental material?
    \item[] Answer: \answerNo{} % Replace by \answerYes{}, \answerNo{}, or \answerNA{}.
    \item[] Justification: We did not provide the code in the supplemental material as the algorithms we implemented are standard ones, and we provide all hyperparameters in \cref{app:experiment}.
    \item[] Guidelines:
    \begin{itemize}
        \item The answer NA means that paper does not include experiments requiring code.
        \item Please see the NeurIPS code and data submission guidelines (\url{https://nips.cc/public/guides/CodeSubmissionPolicy}) for more details.
        \item While we encourage the release of code and data, we understand that this might not be possible, so “No” is an acceptable answer. Papers cannot be rejected simply for not including code, unless this is central to the contribution (e.g., for a new open-source benchmark).
        \item The instructions should contain the exact command and environment needed to run to reproduce the results. See the NeurIPS code and data submission guidelines (\url{https://nips.cc/public/guides/CodeSubmissionPolicy}) for more details.
        \item The authors should provide instructions on data access and preparation, including how to access the raw data, preprocessed data, intermediate data, and generated data, etc.
        \item The authors should provide scripts to reproduce all experimental results for the new proposed method and baselines. If only a subset of experiments are reproducible, they should state which ones are omitted from the script and why.
        \item At submission time, to preserve anonymity, the authors should release anonymized versions (if applicable).
        \item Providing as much information as possible in supplemental material (appended to the paper) is recommended, but including URLs to data and code is permitted.
    \end{itemize}

\item {\bf Experimental setting/details}
    \item[] Question: Does the paper specify all the training and test details (e.g., data splits, hyperparameters, how they were chosen, type of optimizer, etc.) necessary to understand the results?
    \item[] Answer: \answerYes{}{} % Replace by \answerYes{}, \answerNo{}, or \answerNA{}.
    \item[] Justification: We discussed how to generate the training and validation data, and our choice of optimizers. 
    \item[] Guidelines:
    \begin{itemize}
        \item The answer NA means that the paper does not include experiments.
        \item The experimental setting should be presented in the core of the paper to a level of detail that is necessary to appreciate the results and make sense of them.
        \item The full details can be provided either with the code, in appendix, or as supplemental material.
    \end{itemize}

\item {\bf Experiment statistical significance}
    \item[] Question: Does the paper report error bars suitably and correctly defined or other appropriate information about the statistical significance of the experiments?
    \item[] Answer: \answerYes{} % Replace by \answerYes{}, \answerNo{}, or \answerNA{}.
    \item[] Justification: All of our results include error bars. 
    \item[] Guidelines:
    \begin{itemize}
        \item The answer NA means that the paper does not include experiments.
        \item The authors should answer "Yes" if the results are accompanied by error bars, confidence intervals, or statistical significance tests, at least for the experiments that support the main claims of the paper.
        \item The factors of variability that the error bars are capturing should be clearly stated (for example, train/test split, initialization, random drawing of some parameter, or overall run with given experimental conditions).
        \item The method for calculating the error bars should be explained (closed form formula, call to a library function, bootstrap, etc.)
        \item The assumptions made should be given (e.g., Normally distributed errors).
        \item It should be clear whether the error bar is the standard deviation or the standard error of the mean.
        \item It is OK to report 1-sigma error bars, but one should state it. The authors should preferably report a 2-sigma error bar than state that they have a 96\% CI, if the hypothesis of Normality of errors is not verified.
        \item For asymmetric distributions, the authors should be careful not to show in tables or figures symmetric error bars that would yield results that are out of range (e.g. negative error rates).
        \item If error bars are reported in tables or plots, The authors should explain in the text how they were calculated and reference the corresponding figures or tables in the text.
    \end{itemize}

\item {\bf Experiments compute resources}
    \item[] Question: For each experiment, does the paper provide sufficient information on the computer resources (type of compute workers, memory, time of execution) needed to reproduce the experiments?
    \item[] Answer: \answerYes{} % Replace by \answerYes{}, \answerNo{}, or \answerNA{}.
    \item[] Justification: We include our compute details in \cref{app:experiment}. 
    \item[] Guidelines:
    \begin{itemize}
        \item The answer NA means that the paper does not include experiments.
        \item The paper should indicate the type of compute workers CPU or GPU, internal cluster, or cloud provider, including relevant memory and storage.
        \item The paper should provide the amount of compute required for each of the individual experimental runs as well as estimate the total compute. 
        \item The paper should disclose whether the full research project required more compute than the experiments reported in the paper (e.g., preliminary or failed experiments that didn't make it into the paper). 
    \end{itemize}
    
\item {\bf Code of ethics}
    \item[] Question: Does the research conducted in the paper conform, in every respect, with the NeurIPS Code of Ethics \url{https://neurips.cc/public/EthicsGuidelines}?
    \item[] Answer: \answerYes{} % Replace by \answerYes{}, \answerNo{}, or \answerNA{}.
    \item[] Justification: Our research conforms with the NeurIPS Code of Ethics.
    \item[] Guidelines:
    \begin{itemize}
        \item The answer NA means that the authors have not reviewed the NeurIPS Code of Ethics.
        \item If the authors answer No, they should explain the special circumstances that require a deviation from the Code of Ethics.
        \item The authors should make sure to preserve anonymity (e.g., if there is a special consideration due to laws or regulations in their jurisdiction).
    \end{itemize}

\item {\bf Broader impacts}
    \item[] Question: Does the paper discuss both potential positive societal impacts and negative societal impacts of the work performed?
    \item[] Answer: \answerNA{} % Replace by \answerYes{}, \answerNo{}, or \answerNA{}.
    \item[] Justification: Our paper does not incur significant societal impacts. 
    \item[] Guidelines:
    \begin{itemize}
        \item The answer NA means that there is no societal impact of the work performed.
        \item If the authors answer NA or No, they should explain why their work has no societal impact or why the paper does not address societal impact.
        \item Examples of negative societal impacts include potential malicious or unintended uses (e.g., disinformation, generating fake profiles, surveillance), fairness considerations (e.g., deployment of technologies that could make decisions that unfairly impact specific groups), privacy considerations, and security considerations.
        \item The conference expects that many papers will be foundational research and not tied to particular applications, let alone deployments. However, if there is a direct path to any negative applications, the authors should point it out. For example, it is legitimate to point out that an improvement in the quality of generative models could be used to generate deepfakes for disinformation. On the other hand, it is not needed to point out that a generic algorithm for optimizing neural networks could enable people to train models that generate Deepfakes faster.
        \item The authors should consider possible harms that could arise when the technology is being used as intended and functioning correctly, harms that could arise when the technology is being used as intended but gives incorrect results, and harms following from (intentional or unintentional) misuse of the technology.
        \item If there are negative societal impacts, the authors could also discuss possible mitigation strategies (e.g., gated release of models, providing defenses in addition to attacks, mechanisms for monitoring misuse, mechanisms to monitor how a system learns from feedback over time, improving the efficiency and accessibility of ML).
    \end{itemize}
    
\item {\bf Safeguards}
    \item[] Question: Does the paper describe safeguards that have been put in place for responsible release of data or models that have a high risk for misuse (e.g., pretrained language models, image generators, or scraped datasets)?
    \item[] Answer: \answerNA{} % Replace by \answerYes{}, \answerNo{}, or \answerNA{}.
    \item[] Justification: Our paper poses no such risks. 
    \item[] Guidelines:
    \begin{itemize}
        \item The answer NA means that the paper poses no such risks.
        \item Released models that have a high risk for misuse or dual-use should be released with necessary safeguards to allow for controlled use of the model, for example by requiring that users adhere to usage guidelines or restrictions to access the model or implementing safety filters. 
        \item Datasets that have been scraped from the Internet could pose safety risks. The authors should describe how they avoided releasing unsafe images.
        \item We recognize that providing effective safeguards is challenging, and many papers do not require this, but we encourage authors to take this into account and make a best faith effort.
    \end{itemize}

\item {\bf Licenses for existing assets}
    \item[] Question: Are the creators or original owners of assets (e.g., code, data, models), used in the paper, properly credited and are the license and terms of use explicitly mentioned and properly respected?
    \item[] Answer: \answerNA{} % Replace by \answerYes{}, \answerNo{}, or \answerNA{}.
    \item[] Justification: Our paper does not leverage existing dataset or codebases. 
    \item[] Guidelines:
    \begin{itemize}
        \item The answer NA means that the paper does not use existing assets.
        \item The authors should cite the original paper that produced the code package or dataset.
        \item The authors should state which version of the asset is used and, if possible, include a URL.
        \item The name of the license (e.g., CC-BY 4.0) should be included for each asset.
        \item For scraped data from a particular source (e.g., website), the copyright and terms of service of that source should be provided.
        \item If assets are released, the license, copyright information, and terms of use in the package should be provided. For popular datasets, \url{paperswithcode.com/datasets} has curated licenses for some datasets. Their licensing guide can help determine the license of a dataset.
        \item For existing datasets that are re-packaged, both the original license and the license of the derived asset (if it has changed) should be provided.
        \item If this information is not available online, the authors are encouraged to reach out to the asset's creators.
    \end{itemize}

\item {\bf New assets}
    \item[] Question: Are new assets introduced in the paper well documented and is the documentation provided alongside the assets?
    \item[] Answer: \answerNA{} % Replace by \answerYes{}, \answerNo{}, or \answerNA{}.
    \item[] Justification: We do not release new assets.
    \item[] Guidelines:
    \begin{itemize}
        \item The answer NA means that the paper does not release new assets.
        \item Researchers should communicate the details of the dataset/code/model as part of their submissions via structured templates. This includes details about training, license, limitations, etc. 
        \item The paper should discuss whether and how consent was obtained from people whose asset is used.
        \item At submission time, remember to anonymize your assets (if applicable). You can either create an anonymized URL or include an anonymized zip file.
    \end{itemize}

\item {\bf Crowdsourcing and research with human subjects}
    \item[] Question: For crowdsourcing experiments and research with human subjects, does the paper include the full text of instructions given to participants and screenshots, if applicable, as well as details about compensation (if any)? 
    \item[] Answer: \answerNA{} % Replace by \answerYes{}, \answerNo{}, or \answerNA{}.
    \item[] Justification: The paper does not involve crowdsourcing nor research with human subjects.
    \item[] Guidelines:
    \begin{itemize}
        \item The answer NA means that the paper does not involve crowdsourcing nor research with human subjects.
        \item Including this information in the supplemental material is fine, but if the main contribution of the paper involves human subjects, then as much detail as possible should be included in the main paper. 
        \item According to the NeurIPS Code of Ethics, workers involved in data collection, curation, or other labor should be paid at least the minimum wage in the country of the data collector. 
    \end{itemize}

\item {\bf Institutional review board (IRB) approvals or equivalent for research with human subjects}
    \item[] Question: Does the paper describe potential risks incurred by study participants, whether such risks were disclosed to the subjects, and whether Institutional Review Board (IRB) approvals (or an equivalent approval/review based on the requirements of your country or institution) were obtained?
    \item[] Answer: \answerNA{} % Replace by \answerYes{}, \answerNo{}, or \answerNA{}.
    \item[] Justification: The paper does not involve crowdsourcing nor research with human subjects.
    \item[] Guidelines:
    \begin{itemize}
        \item The answer NA means that the paper does not involve crowdsourcing nor research with human subjects.
        \item Depending on the country in which research is conducted, IRB approval (or equivalent) may be required for any human subjects research. If you obtained IRB approval, you should clearly state this in the paper. 
        \item We recognize that the procedures for this may vary significantly between institutions and locations, and we expect authors to adhere to the NeurIPS Code of Ethics and the guidelines for their institution. 
        \item For initial submissions, do not include any information that would break anonymity (if applicable), such as the institution conducting the review.
    \end{itemize}

\item {\bf Declaration of LLM usage}
    \item[] Question: Does the paper describe the usage of LLMs if it is an important, original, or non-standard component of the core methods in this research? Note that if the LLM is used only for writing, editing, or formatting purposes and does not impact the core methodology, scientific rigorousness, or originality of the research, declaration is not required.
    %this research? 
    \item[] Answer: \answerNA{} % Replace by \answerYes{}, \answerNo{}, or \answerNA{}.
    \item[] Justification: The core method development in this research does not involve LLMs as any important, original, or non-standard components.
    \item[] Guidelines:
    \begin{itemize}
        \item The answer NA means that the core method development in this research does not involve LLMs as any important, original, or non-standard components.
        \item Please refer to our LLM policy (\url{https://neurips.cc/Conferences/2025/LLM}) for what should or should not be described.
    \end{itemize}

\end{enumerate}

}

\newpage

 \renewcommand{\contentsname}{Contents of Appendix}
 \addtocontents{toc}{\protect\setcounter{tocdepth}{2}}
 {\hypersetup{hidelinks}
 \tableofcontents
 }

\appendix

\section{Additional Related Work}
\label{sec:related}
\subsection{Theoretical literature} 

\paragraph{Planning and learning in POMDPs} It is well-known that the \emph{planning} problem in POMDPs (i.e. finding a near-optimal policy given the description of the POMDP) is computationally intractable \citep{papadimitriou1987complexity, littman1994memoryless,burago1996complexity,mundhenk2000complexity}, and the harder \emph{learning} problem (i.e. finding a near-optimal policy given interactive sample access to the POMDP) is also statistically intractable \citep{krishnamurthy2016pac}, without additional assumptions. 
In light of these results, there has been recent interest in uncovering natural assumptions that allow statistically or computationally efficient algorithms. On the computational side, \cite{efroni2022provable} introduced the $L$-step decodability assumption, and under this assumption derived a learning algorithm with time complexity $\poly(X^L, A^L, H)$ via frame-stacking. Additionally, \cite{golowich2023planning,golowich2022learning} derived a quasi-polynomial time algorithm for learning in $\gamma$-observable POMDPs. Computationally efficient learning algorithms are also known for certain classes of POMDPs with deterministic dynamics \citep{jin2020sample,uehara2023computationally} and certain latent MDPs \citep{kwon2021rl,kwon2024rl}, which are a special case of POMDPs with fixed latent information.

On the statistical side, \cite{jin2020sample} derived a statistically efficient algorithm for POMDPs satisfying a weak observability condition. Recently, \citep{liu2022partially,uehara2022provably,liu2023optimistic,zhan2023pac} proposed statistically efficient algorithms for POMDPs or Predictive State Representations (PSR) satisfying certain low-rank conditions. More tangential to our work, there has also been increasing interest in off-policy evaluation in POMDPs \citep{uehara2023future,zhang2025statistical}.

\paragraph{Learning with privileged information in POMDPs} The most relevant theoretical works to ours are recent works that study the problem of learning POMDPs with latent state information (also called \emph{hindsight observability}) \citep{kwon2021rl,zhou2022horizon,lee2023learning,cai2024provable}. Of these, \cite{kwon2021rl,zhou2022horizon} are focused on a narrow yet interesting special case of POMDPs called \emph{latent MDPs}, where the unobserved data is fixed and low-dimensional. \cite{lee2023learning} show that learning in general POMDPs with latent state information is statistically tractable, in contrast with the situation without latent state information. \cite{cai2024provable} show that with latent state information the sample complexity of the algorithm for learning $\gamma$-observable POMDPs \citep{golowich2022learning} can be improved from quasi-polynomial to polynomial, though it is an open question whether this is possible without latent state information. Finally, as mentioned earlier, \cite{cai2024provable} showed that in perfectly decodable POMDPs (\cref{def:perfectly-decodable}), expert distillation yields a fully polynomial time algorithm for learning (for arbitrarily large window size $L$). Since there is a statistical lower bound of $\Omega(A^L)$ in the absence of latent state information \citep{efroni2022provable}, this yields a provable computational benefit of latent state information (and, in particular, for expert distillation), but only for perfectly decodable POMDPs.

Compared to the preceding theoretical works, our work seeks both theoretically \emph{and} empirically grounded understanding of the relative merits of expert distillation versus standard reinforcement learning. Among works with similar motivations or results, \cite{choudhury2018data} derive expressions for the output of expert distillation (analogous to \cref{lemma:tv-beltil-to-latent-main}, except they use a slightly different value-based distillation procedure rather than policy-based) and establish sub-optimality bounds for several imitation learning algorithms. However, they do not instantiate these bounds for concrete models, or theoretically contrast with reinforcement learning. \cite{sun2017deeply} establish provable benefits of imitating the optimal policy in a \emph{fully-observed} MDP (versus learning it via reinforcement learning), but they do not consider partial observability nor the ensuing error due to misspecification. \cite{swamy2022sequence} discuss a failure mode of expert distillation in POMDPs when using \emph{offline} imitation learning to distill the latent expert. The ``latching'' effect that they discuss is due to conditioning on previous actions (see also \cite{seo2023regularized}) --- a technical issue that corresponds to why we analyze $\forward$ with $L$ random actions --- though it is not clear whether this effect is related to the performance gaps between behavior cloning and $\Dagger$ in our locomotion experiments. Finally, several works \citep{arora2018hindsight,weihs2021bridging} give examples of a more fundamental failure mode of expert distillation: in general POMDPs, the optimal policy may need to take \emph{information-gathering actions}. The classical ``Tiger Door'' exemplifies this failure mode \citep{littman1995learning}.

\paragraph{Learning with rich observations} There has been extensive recent interest in reinforcement learning with \emph{rich observations} \citep{krishnamurthy2016pac}, i.e. where the observation space is too large to enumerate. This line of work has developed largely in parallel with the literature on partial observability, but it is motivated by similar applications as our work (e.g. robotics with image-based perception), and these works formalize the fundamental empirical challenge of representation learning, i.e. ``learning to see'' \citep{chen2019lbc}. The most well-studied model is the Block MDP \citep{dann2018oracle,du2019latent}, which corresponds to perfect decodability with $L=1$, but is studied in function approximation settings where the observation space is extremely large or infinite, since the problem is computationally easy if the observation space has polynomially-bounded cardinality. While the task of learning in Block MDPs is typically computationally intractable as it inherits the intractable of PAC learning  \citep{golowich2024exploration}, there is by now precise understanding of the computational complexity relative to supervised learning oracles \citep{misra2020kinematic,zhang2022efficient,mhammedi2023efficient,song2024richobservation,rohatgi2025necessary}.

As observed by \cite{cai2024provable}, there is also a provable computational benefit of latent state information in Block MDPs. Recent works \citep{golowich2022learning,rohatgi2025necessary} showed that in the absence of latent state information, learning in $\Phi$-decodable Block MDPs (where $\Phi$ is the function approximation class) is strictly harder than the supervised learning task of $\Phi$-decodable \emph{one-context regression}. In contrast, it is straightforward to see that with latent state information and a one-context regression oracle, the true decoding function $\phi^\star \in \Phi$ can be learned up to inverse-polynomial error (on average over any exploratory policy). This function, composed with the optimal latent policy, yields the optimal executable policy. \cite{cai2024provable} formally proved this result with a slightly different (multi-class classification rather than regression) supervised learning oracle.

\subsection{Empirical literature}

\paragraph{Applied methods that learn with privileged information} Privileged information has been widely used in training policies for real-world POMDPs, such as in robotics and autonomous driving. The most prominent and successful method is expert distillation \citep{pan2017agile,chen2019lbc,Lee2020LearningQL,doi:10.1126/scirobotics.abk2822,kumar2021rma,zhuang2023robot,yang2023neural,song2023learning,fu2023learning,hoeller2024anymal,cheng2024extreme}. First, one trains an expert policy with access to privileged information --- either the latent state in a simulator \citep{chen2019lbc}, or observation data from more expensive sensors that will not be available at deployment \citep{pan2017agile}. Second, one trains an executable policy by performing offline or online imitation learning with respect to the latent policy. \cite{pan2017agile} also observe empirical benefits of online imitation learning compared to offline, which is corroborated by our results. While there are also notable successes of using RL without privileged information \citep{agarwal2023legged,luo2023perpetual}, some of the previously-mentioned works observed that RL without privileged information failed to learn locomotion in their environment \citep{Lee2020LearningQL}.

Motivated by the theoretical failure modes of expert distillation in the prequel, there is also a line of work in the middle ground between expert distillation and RL without privileged information, that seeks to avoid these failure modes while also improving the convergence of standard RL. These hybrid methods include Asymmetric Actor-Critic \citep{Pinto2017AsymmetricAC}, and more broadly are described as \emph{asymmetric learning} \citep{Pinto2017AsymmetricAC,weihs2021bridging,warrington2021robust,nguyen2022leveraging,walsman2022impossibly,shenfeld2023tgrl,messikommer2024student,mousa2025tar,li2025guided}. While there is some evidence that an algorithm inspired by Asymmetric Actor-Critic may enjoy an improved statistical/computational trade-off for $\gamma$-observable POMDPs \citep{cai2024provable} (compared to the best-known algorithm that does not use privileged information \citep{golowich2022learning}), the theoretical foundations for these methods remain otherwise largely unexplored. 

\paragraph{Learning with and without privileged information} In addition to the previously-mentioned ablation experiments \citep{Lee2020LearningQL}, recent work of \cite{mu2025should} conducted controlled comparisons between expert distillation and standard RL on simulated locomotion and manipulation tasks, with the goal of providing heuristic guidance on when to prefer expert distillation over RL without privileged information. They found that expert distillation converges faster. They also classified tasks as ``easy'' or ``hard'' based on the convergence speed of standard RL, and suggested that expert distillation performed better on the ``hard'' tasks.

\paragraph{Improvements to expert distillation} Recall that a key benefit of expert distillation for POMDPs with rich observations (e.g. as found in robotics with image-based perception) was that it avoids performing reinforcement learning on the high-dimensional and complex observation space. In contrast, most of the previously-mentioned works on asymmetric learning use exactly such an algorithm (e.g., as the ``Actor'' component in Asymmetric Actor-Critic). An exception is the method of \cite{warrington2021robust}, which is a variant of expert distillation that iteratively refines the expert with the goal of decreasing misspecification. Our smoothed distillation method (\cref{sec:smooth}) can be thought of as a more lightweight approach that refines the expert in one shot. As mentioned in \cref{sec:smooth}, it is also similar (though not identical) to several noise injection methods in imitation learning \cite{laskey2017dart,block2023provable}.

\section{Additional Preliminaries}
\label{sec:additional}
\subsection{Belief states}

The following operators describe how a belief state evolves as more information is revealed.

\begin{definition}[Belief state update \citep{golowich2023planning}]
  \label{def:bel-update}
For each $h \in \{1,\dots,H\}$, the \emph{Bayes operator} is $\BB_h: \Delta(\MS_h) \times \cX_h \to \Delta(\MS_h)$ defined by \[\BB_h(b; x_h)(s_h) := \frac{\BO_h(x_h\mid{}s_h)b(s_h)}{\sum_{z_h \in \MS_h} \BO_h(x_h\mid{}z_h)b(z_h)}.\]
For each $h \in \{2,\dots,H\}$, the \emph{belief update operator} $\BU_h: \Delta(\MS_{h-1}) \times \MA_{h-1} \times \cX_h \to \Delta(\MS_h)$, is defined by $\BU_h(b;a_{h-1},x_h) :=  \BB_h(\bbP_h(a_{h-1}) \cdot b;x_h)$
where $\bbP_h(a)$ denotes the real-valued $|\MS_{h}| \times |\MS_{h-1}|$ matrix of latent transition probabilities from step $h-1$ to step $h$ under action $a_{h-1}$. 
\end{definition}

The following definition of an approximate belief state is analogous to the inductive definition of a true belief state (\cref{def:true-belief-state}); the only difference is that it updates based on a window of the $L$ most recent observations and actions $(x_{h-L+1:h},a_{h-L:h-1})$ rather than the entire history, and is additionally parametrized by a distribution $\cD$ (which represents the prior on the latent state at step $h-L$).

\begin{definition}\label{def:apx-belief}
For a window length $L>0$, any $h > L$, and prior $\cD \in \Delta(\MS_{h-L})$, the \emph{approximate belief state} is inductively defined as
\[\belapx_h(x_{h-L+1:h}, a_{h-L:h-1};\cD) := \BU_h(\belapx_{h-1}(x_{h-L+1:h-1}, a_{h-L:h-2};\cD),a_{h-1},x_h)\]
where for $L = 0$, $\belapx_h(\emptyset;\cD) := \cD$. For $h \leq L$, the approximate belief state is defined to coincide with the true belief state.
\end{definition}

\subsection{Decodability and $\gamma$-Observability}

For completeness, we include the definition of (perfect) $L$-step decodability from \citep{efroni2022provable}.

\begin{definition}[$L$-step decodable model \citep{efroni2022provable}]\label{def:perfectly-decodable}
    A POMDP is said to be $L$-step decodable if, for each timestep $h \in [H]$, there exists a deterministic mapping $\phi_h: \cX^{h-L:h} \times \cA^{h-L:h-1} \to \mathcal{S}_h$ such that for any admissible trajectory $\tau = (s,x,a)_{1:h}$ (i.e., a trajectory that occurs with positive probability under the uniformly random policy), we have $s_h = \phi_h\prn*{x_{h-L:h},a_{h-L:h-1}}$.
\end{definition}

Next, we introduce relevant definitions and results relating to $\gamma$-observable POMDPs from \citep{golowich2023planning,golowich2022learning}.

\begin{definition}[$\gamma$-observability \citep{golowich2023planning}]
    \label{def:observable}
    Let $\gamma \in (0,1)$. A POMDP is $\gamma$-observable if for any $h \in [H]$ and distributions $b, b' \in \Delta(\cS_h)$, it holds that 
    \begin{align*}
        \nrm*{\bbO_h^{\top} b - \bbO_h^{\top} b'}_1 \geq \gamma \nrm*{b - b'}_1,
    \end{align*}
    where $\bbO_h \in \RR^{\cS_h \times \cX_h}$ is the observation matrix defined by $(\bbO_h)_{sx} := \bbO_h(x\mid{}s)$.
    \end{definition}

The algorithm of \cite{golowich2022learning} for learning $\gamma$-observable POMDPs in quasi-polynomial time requires bounding a slightly generalized version of the belief contraction error defined in \cref{def:belief-contraction-main}. We state this version below. 

\begin{definition}[Generalized belief contraction \citep{golowich2022learning}]\label{def:gen-belief-contraction}
Let $\epsilon,\phi \in (0,1)$ and $L \in \NN$. We say that a POMDP $\cP$ satisfies $(\epsilon; \phi, L)$-belief contraction if the following property holds. Let $\pi$ be an executable policy, let $h \in \{L+1,\dots,H\}$, and let $\cD,\cD' \in \Delta(\cS_{h-L})$. If $\norm{\frac{\cD'}{\cD}}_\infty \leq 1/\phi$, then for any fixed history $(x_{1:h-L}, a_{1:h-L-1})$ it holds that
\[\EE_{s_{h-L} \sim \cD'} \EE^\pi\left[\norm{\belapx_h(x_{h-L+1:h}, a_{h-L:h-1}; \cD') - \belapx(x_{h-L+1:h}, a_{h-L:h-1}; \cD)}_1\right] \leq \epsilon\]
where the inner expectation is over partial trajectories $(x_{h-L+1:h}, a_{h-L:h-1})$ drawn from $\cP$ by initializing to latent state $s_{h-L}$ at step $h-L$, and sampling action $a_k \sim \pi(x_{1:k},a_{1:k-1})$ at each $h-L \leq k < h$.
\end{definition}

 For context, see \cite[Theorem 6.2]{golowich2022learning} for the formal statement that $\gamma$-observability implies $(\epsilon;\phi,L)$-belief contraction with $L \sim \gamma^{-4}\log(1/(\epsilon \phi))$. \cref{def:belief-contraction-main} is a special case of the above definition, with $\cD' := \belief_{h-L}(x_{1:h-L}, a_{1:h-L-1})$ and $\cD := \unif(\cS_{h-L})$.

 \begin{theorem}[\citep{golowich2022learning}]\label{thm:golowich} There is a constant $C^\star$ with the following property. Given $\epsilon,\beta,\gamma > 0$, $L \in \NN$, and a $\gamma$-observable POMDP $\cP$, set $\phi := \frac{\gamma}{C^\star \cdot H^5 S^{7/2}X^2} \epsilon$. If $\cP$ satisfies $(\epsilon;\phi,L)$-belief contraction, the algorithm \texttt{BaSeCAMP} \citep{golowich2022learning} produces an executable policy $\pihat$ that satisfies \[J(\pistar) - J(\pihat) \leq \epsilon \cdot \poly(S, X, H, \gamma^{-1})\] with probability at least $1-\beta$. Moreover, the time complexity is $\poly((XA)^L,H,S, \epsilon^{-1}, \gamma^{-1}, \log(\beta^{-1}))$.
 \end{theorem}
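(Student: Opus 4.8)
The claimed guarantee is, up to parametrization, a restatement of the main learning result of \cite{golowich2022learning}; the only novelty is decoupling the window length $L$ from the observability parameter $\gamma$. In that paper, $L$ is fixed to a quasi-polynomial value $L_0 \sim \gamma^{-4}\log(1/(\epsilon\phi))$ precisely so that $\gamma$-observability \emph{by itself} forces the $(\epsilon;\phi,L_0)$-belief contraction property (their Theorem 6.2; cf. the remark following \cref{def:gen-belief-contraction}), and \texttt{BaSeCAMP} is then run with window $L_0$. The plan is to re-trace the analysis of \texttt{BaSeCAMP} and verify that it is \emph{modular} in $L$: that the window length enters only (i) through the belief-contraction bound and (ii) as a $(XA)^L$ factor counting the number of length-$L$ observation/action windows, so that taking $L$ arbitrary and assuming $(\epsilon;\phi,L)$-belief contraction directly as a hypothesis yields the stated conclusion with $(XA)^L$ in place of $(XA)^{L_0}$.

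Concretely, \texttt{BaSeCAMP} treats the POMDP under $L$-step frame-stacking as an approximate latent-state MDP: it maintains approximate belief states $\belapx_h(x_{h-L+1:h},a_{h-L:h-1};\cD)$ (\cref{def:apx-belief}) and runs an iterative, barycentric-spanner-based policy-cover / optimistic-planning procedure on this representation. Its correctness rests on two essentially independent pillars. The first is \emph{structural}: the approximate-belief MDP is $\epsilon$-close to the true belief-state MDP, in the sense that along trajectories visited by the exploration policies the quantity $\nrm*{\belapx_h - \bel_h}_1$ is small. This is exactly where belief contraction is invoked, and the density-ratio threshold $1/\phi$ is dictated by the density ratios of the prior beliefs at step $h-L$ that the exploration policies can induce, together with the $H$-fold accumulation of per-step errors and the conditioning of the spanner-based planner --- which is the source of the polynomial factor $H^5 S^{7/2} X^2$ in $\phi := \gamma\epsilon/(C^\star H^5 S^{7/2} X^2)$. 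The second pillar is \emph{statistical/algorithmic}: given the structural guarantee, \texttt{BaSeCAMP} learns an $\epsilon\cdot\poly(S,X,H,\gamma^{-1})$-suboptimal policy using $\poly((XA)^L, H, S, \epsilon^{-1},\gamma^{-1},\log(\beta^{-1}))$ samples and time, where $\gamma$-observability is used to invert empirical distributions over short observation windows into belief estimates (a use of observability that is agnostic to the frame-stack length), and the $(XA)^L$ factor arises purely from covering/enumerating the space of length-$L$ windows. I would substitute the hypothesized $(\epsilon;\phi,L)$-belief contraction for the consequence of $\gamma$-observability at window $L_0$ in the first pillar, leave the second pillar untouched, and read off the stated suboptimality and runtime.

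The main obstacle is therefore not a new mathematical idea but a careful audit of \cite{golowich2022learning}: one must confirm that $L$ never appears inside an error term (as opposed to a counting/complexity term), so that the analysis does not secretly require $L \gtrsim L_0$ for any reason beyond belief contraction, and that the $\phi$-threshold at which the contraction hypothesis is applied genuinely matches the density ratios of the prior beliefs that arise in the exploration step. The delicate bookkeeping is pinning down the polynomial $H^5 S^{7/2} X^2$: this requires tracking how per-step belief errors compound over the horizon, how they propagate through the observation-inversion step (whose modulus is $\Theta(\gamma)$), and how they are amplified by the spanner-based planning subroutine. I expect this to go through as claimed, since the contraction-based error control in \cite{golowich2022learning} is already phrased in terms of an abstract window length rather than the specific choice $L_0$.
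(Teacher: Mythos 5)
Your proposal is correct and takes essentially the same route as the paper: the paper's proof is precisely the observation that in the analysis of \texttt{BaSeCAMP} the specific choice $L \sim \gamma^{-4}\log(1/(\epsilon\phi))$ is used only to invoke \citep[Theorem 6.2]{golowich2022learning} (i.e.\ to obtain $(\epsilon;\phi,L)$-belief contraction from $\gamma$-observability), so assuming that contraction property directly for an arbitrary $L$ yields the stated guarantee with the $(XA)^L$ complexity factor. Your additional commentary on where $\phi$ and the $(XA)^L$ factor arise is consistent with, though more detailed than, the paper's one-line inspection argument.
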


\begin{proof}
Immediate from inspecting the analysis of \texttt{BaSeCAMP} \citep{golowich2022learning}: while their analysis sets $L := \gamma^{-4} \log(1/(\epsilon \phi))$, the only place this is used in the proof is to invoke \citep[Theorem 6.2]{golowich2022learning} (which is the claim that any $\gamma$-observable POMDP satisfies $(\epsilon;\phi,L)$-belief contraction with that choice of $L$). Thus, it is sufficient to choose any $L$ for which $(\epsilon;\phi,L)$-belief contraction holds. 
\end{proof}

\section{Technical Lemmas}

\begin{lemma}[Data processing inequality]\label{lemma:dpi}
Let $\MS,\MT$ be sets, let $p,q \in \Delta(\MS)$ be distributions, and let $K: \MS \to \Delta(\MT)$ be a conditional distribution function. Then 
\[\TV(K \circ p, K \circ q) \leq \TV(p, q).\]
Similarly, if $p \ll q$, then
\[\norm{\frac{K \circ p}{K \circ q}}_\infty \leq \norm{\frac{p}{q}}_\infty.\]
\end{lemma}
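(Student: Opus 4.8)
The plan is to derive both inequalities directly from the definitions, using only two facts about the kernel: non-negativity, $K(t\mid s)\ge 0$, and that $K(\cdot\mid s)$ is a probability distribution, i.e. has total mass $1$. Since the paper's POMDPs are finite I would work with sums throughout; the general measurable case is literally identical after replacing sums by integrals, with the interchange of integration order justified by Tonelli (all integrands non-negative).

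For the total variation bound I would use the identity $\TV(\mu,\nu)=\tfrac12\sum_{t\in\MT}|\mu(t)-\nu(t)|$, expand $(K\circ p)(t)-(K\circ q)(t)=\sum_{s\in\MS}K(t\mid s)(p(s)-q(s))$, push the absolute value inside the $s$-sum by the triangle inequality (legitimate as $K(t\mid s)\ge0$), swap the order of summation, and collapse $\sum_{t\in\MT}K(t\mid s)=1$:
\[
\TV(K\circ p, K\circ q)\;\le\;\frac12\sum_{s\in\MS}|p(s)-q(s)|\sum_{t\in\MT}K(t\mid s)\;=\;\frac12\sum_{s\in\MS}|p(s)-q(s)|\;=\;\TV(p,q).
\]
(If one prefers to avoid densities, the same conclusion follows from the variational form $\TV(\mu,\nu)=\sup_A|\mu(A)-\nu(A)|$ applied to $K(A\mid\cdot)$, again using $0\le K(A\mid s)\le1$.)

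For the density-ratio bound, set $M:=\norm{p/q}_\infty\in[1,\infty]$; if $M=\infty$ there is nothing to prove. The crux is that $p(s)\le M\,q(s)$ for \emph{every} $s$ — where $q(s)>0$ this is the definition of $M$, and where $q(s)=0$ it holds because $p(s)=0$ by $p\ll q$. Applying the linear, positivity-preserving map $K$ preserves this pointwise domination: $(K\circ p)(t)=\sum_s K(t\mid s)p(s)\le M\sum_s K(t\mid s)q(s)=M\,(K\circ q)(t)$ for all $t$. Dividing gives $(K\circ p)(t)/(K\circ q)(t)\le M$ wherever $(K\circ q)(t)>0$; and wherever $(K\circ q)(t)=0$, the same inequality forces $(K\circ p)(t)=0$, so the ratio is $0/0=1\le M$ under the stated convention (recall $M\ge1$). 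Taking the supremum over $t$ finishes it.

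The main obstacle: there is no substantive one — this is a textbook fact and both parts are a few lines. The only points deserving a sentence of care are (i) the $0/0=1$ convention in the second claim, handled by noting that the kernel annihilates the numerator wherever it annihilates the denominator, and (ii) in a non-finite setting, the Tonelli interchange, which is immediate from non-negativity and is not even needed for the discrete models used in this paper.
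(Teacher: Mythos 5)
Your proof is correct, and for the second inequality it is essentially the paper's argument: both establish the pointwise domination $(K\circ p)(t)\le \norm{p/q}_\infty\,(K\circ q)(t)$ by expanding the sum; your write-up is just more careful about the $M=\infty$ case, the role of $p\ll q$ at points with $q(s)=0$, and the $0/0=1$ convention, none of which changes the substance. For the first inequality the paper simply cites that total variation is an $f$-divergence (hence satisfies data processing), whereas you prove it directly from the $\tfrac12\ell_1$ identity via the triangle inequality and summation interchange; your route is more elementary and self-contained, the paper's is shorter, and both rest on the same standard fact.
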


\begin{proof}
The first inequality follows from the fact that total variation distance is an $f$-divergence. The second inequality can be directly checked: for all $y \in \MT$,
\[(K\circ p)(y) = \sum_{x\in\MS} K(y\mid{} x) p(x) \leq \norm{\frac{p}{q}}_\infty \sum_{x\in\MS} K(y\mid{} x) q(x) = \norm{\frac{p}{q}}_\infty (K \circ q)(y)\]
as needed.
\end{proof}

Recall that a policy is \emph{executable} if the action distribution at any step is determined by the action/observation history (note that a latent policy therefore may not be executable). The following lemma, which was implicitly used in prior work \citep{golowich2023planning}, verifies under any executable policy, the conditional distribution of the latent state given the history is the true belief state.\footnote{Note that this is not true for all policies, since a latent policy could reveal the latent state (or, more generally, bias the conditional distribution) by its choice of action. This issue underpins the ``latching'' effect observed in behavior cloning of privileged experts \citep{swamy2022sequence}.}

\begin{lemma}\label{lemma:belief-is-cond-prob}
Fix any step $h \in [H]$ and executable policy $\pi$. Then
\[\bbP^\pi[s_h \mid{} x_{1:h}, a_{1:h-1}] = \bel_h(x_{1:h}, a_{1:h-1})(s_h)\]
and, if $h>1$,
\[\bbP^\pi[x_h \mid{} x_{1:h-1},a_{1:h-1}] = (\BO_h^\t \cdot \bbP_h(a_{h-1}) \cdot \bel_{h-1}(x_{1:h-1},a_{1:h-2}))(x_h)\]
for any action/observation history $(x_{1:h},a_{1:h-1})$ and latent state $s_h$.
\end{lemma}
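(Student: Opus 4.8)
The plan is to prove both identities by a single induction on $h$, in which the inductive hypothesis is the first identity at step $h-1$ and both identities at step $h$ are derived from it. Throughout I would work with the explicit joint law of a trajectory, which the generative description in \cref{sec:prelim} factors as
\[\bbP^\pi[s_{1:h}, x_{1:h}, a_{1:h-1}] = \bbP_1(s_1)\,\BO_1(x_1\mid{}s_1)\prod_{k=2}^{h}\pi(a_{k-1}\mid{}x_{1:k-1},a_{1:k-2})\,\bbP_k(s_k\mid{}s_{k-1},a_{k-1})\,\BO_k(x_k\mid{}s_k);\]
every conditional-probability manipulation below is just Bayes' rule applied to this product. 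For the base case $h=1$, the factorization gives $\bbP^\pi[s_1,x_1] = \bbP_1(s_1)\BO_1(x_1\mid{}s_1)$, so Bayes' rule yields $\bbP^\pi[s_1\mid{}x_1](s_1) = \BO_1(x_1\mid{}s_1)\bbP_1(s_1)\big/\sum_{z}\BO_1(x_1\mid{}z)\bbP_1(z) = \BB_1(\bbP_1;x_1)(s_1) = \bel_1(x_1)(s_1)$, as in \cref{def:true-belief-state}.

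For the inductive step, assume $\bbP^\pi[s_{h-1}\mid{}x_{1:h-1},a_{1:h-2}] = \bel_{h-1}(x_{1:h-1},a_{1:h-2})$. The crucial --- and only --- use of executability is the following: because $\pi$ is executable, $a_{h-1}$ is a randomized function of $(x_{1:h-1},a_{1:h-2})$ alone, so it is conditionally independent of $s_{h-1}$ given $(x_{1:h-1},a_{1:h-2})$; hence further conditioning on $a_{h-1}$ leaves the posterior on $s_{h-1}$ unchanged, i.e. $\bbP^\pi[s_{h-1}\mid{}x_{1:h-1},a_{1:h-1}] = \bel_{h-1}(x_{1:h-1},a_{1:h-2})$. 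Pushing through the latent transition (a ``predict'' step) gives, with $b' := \bbP_h(a_{h-1})\cdot\bel_{h-1}(x_{1:h-1},a_{1:h-2})$, that $\bbP^\pi[s_h\mid{}x_{1:h-1},a_{1:h-1}](s_h) = \sum_{s_{h-1}}\bbP_h(s_h\mid{}s_{h-1},a_{h-1})\,\bel_{h-1}(x_{1:h-1},a_{1:h-2})(s_{h-1}) = b'(s_h)$. Since $x_h\sim\BO_h(s_h)$ depends only on $s_h$, marginalizing gives $\bbP^\pi[x_h\mid{}x_{1:h-1},a_{1:h-1}](x_h) = \sum_{s_h}\BO_h(x_h\mid{}s_h)b'(s_h) = (\BO_h^\t b')(x_h)$, which is the second identity; and one more application of Bayes' rule gives $\bbP^\pi[s_h\mid{}x_{1:h},a_{1:h-1}](s_h) = \BO_h(x_h\mid{}s_h)b'(s_h)\big/(\BO_h^\t b')(x_h) = \BB_h(b';x_h)(s_h) = \BU_h(\bel_{h-1};a_{h-1},x_h)(s_h) = \bel_h(x_{1:h},a_{1:h-1})(s_h)$, using \cref{def:bel-update} and \cref{def:true-belief-state}. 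This closes the induction.

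The one genuine subtlety --- otherwise this is bookkeeping --- is handling histories of zero probability under $\bbP^\pi$, since every Bayes' rule step requires a positive-probability conditioning event (the denominators above must be nonzero). I would handle this by restricting attention to $\pi$-reachable histories, which suffices for every downstream use (the lemma is always invoked inside an expectation under $\bbP^\pi$); alternatively one verifies that $\BB_h$ and $\BU_h$ are consistent with the stated $0/0 = 1$ convention so that the identities continue to hold verbatim on the null set.
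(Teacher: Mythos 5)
Your proposal is correct and takes essentially the same route as the paper's proof: an induction on $h$ driven by the trajectory factorization, with executability entering exactly where the paper's argument drops the policy factor (it depends only on the observable history), followed by a predict--update step matching $\BU_h$. The paper merely organizes the same content as a direct proportionality computation of the posterior, with the second identity obtained by a similar marginalization, so the difference is one of bookkeeping rather than approach.
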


\begin{proof}
We prove the first claim by induction on $h$. It is clear that $\bbP^\pi[s_1\mid{}x_1] \propto \BO_1(x_1\mid{}s_1)\bbP[s_1] \propto \BB_1(\bbP_1; x_1) = \bel_1(x_1)(s_1)$, where proportionality is up to factors independent of $s_1$. Since $\bbP^\pi[\cdot\mid{}x_1]$ and $\bel_1(x_1)$ are distributions, it follows from the proportionality that they are equal. Now fix any $h \in \{2,\dots,H\}$ and assume the claim holds for $h-1$. Let $(s_{1:h}, x_{1:h}, a_{1:h-1})$ be a random trajectory drawn from $\bbP^\pi$, i.e. generated via $s_h \sim \bbP_h(s_{h-1},a_{h-1})$, $x_h \sim \BO_h(s_h)$, and $a_h \sim \pi(x_{1:h}, a_{1:h-1})$ (since we assumed that $\pi$ is executable, the action distribution does not directly depend on $s_{1:h}$). Then,
\begin{align*}
\bbP^\pi[s_h \mid{} x_{1:h}, a_{1:h-1}]
&\propto \sum_{s_{1:h-1}} \bbP^\pi[s_{1:h}, x_{1:h}, a_{1:h-1}] \\ 
&= \sum_{s_{1:h-1}} \bbP^\pi[s_{1:h-1}, x_{1:h-1}, a_{1:h-2}] \pi(x_{1:h-1},a_{1:h-2})\bbP[s_h\mid{}s_{h-1},a_{h-1}]\BO_h(x_h\mid{}s_h) \\ 
&\propto \sum_{s_{1:h-1}} \bbP^\pi[s_{1:h-1}, x_{1:h-1}, a_{1:h-2}] \bbP[s_h\mid{}s_{h-1},a_{h-1}]\BO_h(x_h\mid{}s_h) \\ 
&= \BO_h(x_h\mid{}s_h)\sum_{s_{h-1}} \bbP[s_h\mid{}s_{h-1},a_{h-1}] \sum_{s_{1:h-2}} \bbP^\pi[s_{1:h-1}, x_{1:h-1}, a_{1:h-2}] \\ 
&\propto \BO_h(x_h\mid{}s_h)\sum_{s_{h-1}} \bbP[s_h\mid{}s_{h-1},a_{h-1}] \bbP^\pi[s_{h-1} \mid{} x_{1:h-1}, a_{1:h-2}] \\ 
&= \BO_h(x_h\mid{}s_h)\sum_{s_{h-1}} \bbP[s_h\mid{}s_{h-1},a_{h-1}] \bel_{h-1}(x_{1:h-1},a_{1:h-2})(s_{h-1}) \\
 &\propto \bel_h(x_{1:h}, a_{1:h-1})(s_h)
\end{align*}
where the penultimate equality uses the induction hypothesis and the final equality uses \cref{eq:belief-defn}. This proves the first claim. To prove the second claim, observe that by a similar argument to above, for any $h>1$,
\[\bbP^\pi[s_h\mid{}x_{1:h-1},a_{1:h-1}] \propto \sum_{s_{h-1}} \bbP[s_h\mid{}s_{h-1},a_{h-1}] \bel_{h-1}(x_{1:h-1},a_{1:h-2})(s_{h-1})\]
so that $\bbP^\pi[s_h\mid{}x_{1:h-1},a_{1:h-1}] = (\bbP_h(a_{h-1}) \cdot \bel_{h-1}(x_{1:h-1},a_{1:h-2}))(s_h)$. But then
\begin{align*}
\bbP^\pi[x_h\mid{}x_{1:h-1},a_{1:h-1}]
&\propto \sum_{s_{1:h}} \bbP^\pi[s_{1:h},x_{1:h},a_{1:h-1}] \\ 
&= \sum_{s_h} \BO_h(x_h\mid{}s_h) \sum_{s_{1:h-1}} \bbP^\pi[s_{1:h},x_{1:h-1},a_{1:h-1}] \\ 
&\propto \sum_{s_h} \BO_h(x_h\mid{}s_h) \bbP^\pi[s_h\mid{}x_{1:h-1},a_{1:h-1}].
\end{align*}
Therefore $\bbP^\pi[x_h \mid{} x_{1:h-1}, a_{1:h-1}] = (\BO_h^\t \cdot \bbP_h(a_{h-1}) \cdot \bel_{h-1}(x_{1:h-1},a_{1:h-2}))(x_h)$ as claimed.
\end{proof}

We will also need the following variant of \cref{lemma:belief-is-cond-prob}, which shows how approximate belief states arise as conditional probability distributions:

\begin{lemma}\label{lemma:belapx-cond-prob}
Fix any $h \in [H]$, $L \in \{0,\dots,h-1\}$, and executable policy $\pi$ where $\pi_{h-t}(\cdot\mid{}x_{1:h-t},a_{1:h-t-1})$ is determined by $(x_{h-L+1:h-t},a_{h-L:h-t-1})$ for all $t \in [L]$. Then
\[\bbP^\pi[s_h \mid{} x_{h-L+1:h}, a_{h-L:h-1}] = \belapx_h(x_{h-L+1:h},a_{h-L:h-1}; d^\pi_{h-L}).\]
\end{lemma}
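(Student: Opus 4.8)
The plan is to mimic the inductive argument in the proof of \cref{lemma:belief-is-cond-prob}, but carried out over the window $\{h-L,\dots,h\}$ rather than the whole horizon, with the state-occupancy measure $d^\pi_{h-L}$ playing the role of the initial distribution $\bbP_1$. Write $m := h-L$ and, for $t \in \{0,1,\dots,L\}$, set $b_t := \belapx_{m+t}(x_{m+1:m+t}, a_{m:m+t-1}; d^\pi_m)$. I will prove by induction on $t$ that $\bbP^\pi[s_{m+t} \mid x_{m+1:m+t}, a_{m:m+t-1}] = b_t$ as distributions over $\cS_{m+t}$, for every window-history of positive probability (the model is discrete, so there are no measure-theoretic obstructions, and zero-probability histories are handled by convention); the lemma is then the case $t = L$, since then the conditioning is exactly on $(x_{h-L+1:h}, a_{h-L:h-1})$.

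For the base case $t = 0$ there is nothing to condition on, and $\belapx_m(\emptyset; d^\pi_m) = d^\pi_m = \bbP^\pi[s_m = \cdot]$ by definition of the occupancy measure. For the inductive step from $t$ to $t+1$ (valid for $0 \le t < L$, i.e. for the step indexed $m+t \in \{m,\dots,h-1\}$), I chain the three elementary operations that generate step $m+t+1$ from step $m+t$ under $\bbP^\pi$. \textbf{(i) Action:} $a_{m+t}$ is drawn from $\pi_{m+t}(\cdot \mid x_{1:m+t}, a_{1:m+t-1})$, which by the hypothesis on $\pi$ (with $h-t$ replaced by $m+t$) is a function only of the in-window coordinates $(x_{m+1:m+t}, a_{m:m+t-1})$; since $\pi$ is executable this distribution does not depend on $s_{m+t}$ either, so conditionally on $(x_{m+1:m+t}, a_{m:m+t-1})$ the action $a_{m+t}$ is independent of $s_{m+t}$, giving $\bbP^\pi[s_{m+t} \mid x_{m+1:m+t}, a_{m:m+t}] = \bbP^\pi[s_{m+t} \mid x_{m+1:m+t}, a_{m:m+t-1}] = b_t$. \textbf{(ii) Transition:} summing over $s_{m+t}$ against $\bbP_{m+t+1}(\cdot \mid s_{m+t}, a_{m+t})$ yields $\bbP^\pi[s_{m+t+1} \mid x_{m+1:m+t}, a_{m:m+t}] = (\bbP_{m+t+1}(a_{m+t}) \cdot b_t)(\cdot)$. \textbf{(iii) Emission:} since $x_{m+t+1}$ depends only on $s_{m+t+1}$, Bayes' rule with likelihood $\BO_{m+t+1}(x_{m+t+1} \mid s_{m+t+1})$ turns this into $\BB_{m+t+1}(\bbP_{m+t+1}(a_{m+t}) \cdot b_t; x_{m+t+1})$, which by \cref{def:bel-update} equals $\BU_{m+t+1}(b_t; a_{m+t}, x_{m+t+1})$, hence by \cref{def:apx-belief} equals $b_{t+1}$. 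As in \cref{lemma:belief-is-cond-prob}, each identity is cleanest stated up to an $s$-independent normalizing constant, and equality then follows since both sides are probability distributions.

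I expect the only genuinely delicate point to be step (i): extracting the conditional independence $a_{m+t} \perp s_{m+t} \mid (x_{m+1:m+t}, a_{m:m+t-1})$ from the hypothesis. The argument is that, conditionally on the full trajectory prefix $(s_{1:m+t}, x_{1:m+t}, a_{1:m+t-1})$, the law of $a_{m+t}$ equals a fixed function of $(x_{m+1:m+t}, a_{m:m+t-1})$ alone; since this function involves none of the pre-window variables $(s_{1:m+t-1}, x_{1:m}, a_{1:m-1})$ nor $s_{m+t}$, marginalizing them out leaves the conditional law of $a_{m+t}$ given $(s_{m+t}, x_{m+1:m+t}, a_{m:m+t-1})$ unchanged, which is precisely the required independence. (Note that at $t=0$ the hypothesis forces $\pi_m$ to be a constant distribution over $\cA_m$, which is exactly what lets the induction start from the unconditioned occupancy measure $d^\pi_m$.) Everything else is a routine transcription of the Bayes-operator manipulations already carried out for \cref{lemma:belief-is-cond-prob}.
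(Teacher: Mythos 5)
Your proposal is correct and follows essentially the same route as the paper: an induction over the window length starting from the prior $d^\pi_{h-L}$, where each step factorizes into the action (using the hypothesis that in-window policy steps depend only on the window history, hence $a_{m+t}$ is conditionally independent of $s_{m+t}$), the Markov transition, and a Bayes/emission update matching \cref{def:bel-update,def:apx-belief}. The paper phrases this as a single chain of proportionalities inducting on $L$ rather than forward on $t$, but the content — including the implicit use of the conditional-independence point you make explicit in step (i) — is identical.
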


\begin{proof}
We induct on $L$. If $L = 0$ then, for any $h \in [H]$, by definition $\bbP^\pi[s_h] = d^\pi_h(s_h) = \belapx_h(\emptyset; d^\pi_h)(s_h)$. Fix any $L>0$ and suppose the claim holds for $L-1$ (for all $h > L-1$). Then for any $h>L$,
\begin{align*}
&\bbP^\pi[s_h \mid{} x_{h-L+1:h}, a_{h-L:h-1}]\\
&\propto \sum_{s_{h-L:h-1}} \bbP^\pi[s_{h-L:h}, x_{h-L+1:h}, a_{h-L:h-1}] \\ 
&= \sum_{s_{h-L:h-1}} \bbP^\pi[s_{h-L:h-1}, x_{h-L+1:h-1}, a_{h-L:h-2}] \pi(a_{h-1} \mid{} x_{h-L+1:h-1}, a_{h-L:h-2}) \bbP_h[s_h \mid{} s_{h-1}, a_{h-1}] \BO_h(x_h \mid{} s_h) \\ 
&\propto \sum_{s_{h-L:h-1}} \bbP^\pi[s_{h-L:h-1}, x_{h-L+1:h-1}, a_{h-L:h-2}] \bbP_h[s_h \mid{} s_{h-1}, a_{h-1}] \BO_h(x_h \mid{} s_h) \\ 
&= \BO_h(x_h\mid{} s_h) \sum_{s_{h-1}} \bbP_h[s_h \mid{} s_{h-1}, a_{h-1}] \bbP^\pi[s_{h-1}, x_{h-L+1:h-1}, a_{h-L:h-2}] \\ 
&\propto \BO_h(x_h\mid{} s_h) \sum_{s_{h-1}} \bbP_h[s_h \mid{} s_{h-1}, a_{h-1}] \bbP^\pi[s_{h-1} \mid{} x_{h-L+1:h-1}, a_{h-L:h-2}] \\ 
&= \BO_h(x_h\mid{} s_h) \sum_{s_{h-1}} \bbP_h[s_h \mid{} s_{h-1}, a_{h-1}] \belapx_{h-1}(x_{h-L+1:h-1}, a_{h-L:h-2};d^\pi_{h-L})(s_{h-1}) \\ 
&\propto \belapx_h(x_{h-L+1:h},a_{h-L:h-1}; d^\pi_{h-L})(s_h)
\end{align*}
by the induction hypothesis and the definition of $\belapx_h(x_{h-L+1:h},a_{h-L:h-1}; d^\pi_{h-L})$.
\end{proof}

The following fact is well-known.

\begin{lemma}\label{lemma:mdp_upper_pomdp}
Let $\pi^\ast$ be the optimal policy of the POMDP, and let $\pi^\latent$ be the optimal policy of the MDP. Then we have that
\begin{align*}
    J(\pi^\ast) \leq J(\pi^\latent).
\end{align*} 
\end{lemma}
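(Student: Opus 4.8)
The plan is to show that the optimal executable policy of the POMDP can be perfectly \emph{simulated} by a history-dependent policy of the latent MDP $\cM$, and then to invoke the classical fact that in a finite-horizon MDP there is a Markovian policy that is optimal among all history-dependent randomized policies. First I would fix the optimal executable policy $\pistar \in \Pi$ and define a policy $\wb\pi$ for $\cM$ as follows: at step $h$, given the latent history $(s_{1:h}, a_{1:h-1})$, let $\wb\pi$ internally draw fresh observations $x_k \sim \bbO_k(s_k)$ for each $k \le h$ (treating these as auxiliary randomness, which is permitted since $\cM$'s dynamics do not constrain how a policy uses memory and internal coins), and then play $a_h \sim \pistar(x_{1:h}, a_{1:h-1})$. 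This is a valid — albeit non-Markovian and randomized — policy for $\cM$.

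The second step is a coupling argument: I would show that the law of $(s_{1:H}, a_{1:H})$ induced by $\wb\pi$ in $\cM$ coincides with its law under $\bbP^{\pistar}$ in $\cP$. This is essentially immediate from unrolling both generative processes, recalling the trajectory law from the preliminaries: in both cases $s_1 \sim \bbP_1$, and then for each $k$ we have $x_k \sim \bbO_k(s_k)$, $a_k \sim \pistar(x_{1:k}, a_{1:k-1})$, $s_{k+1} \sim \bbP_{k+1}(s_k, a_k)$ — the only difference is that in $\cM$ the observations $x_{1:H}$ are internal randomness rather than part of the observed trajectory. A short induction on $h$ over the joint distribution of $(s_{1:h}, x_{1:h}, a_{1:h})$ makes this precise. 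Since $r_h = R_h(s_h, a_h)$ depends only on the pair $(s_h, a_h)$, the value functional depends only on this marginal, so $J(\wb\pi) = J(\pistar)$, where we extend $J$ to general history-dependent policies of $\cM$ in the obvious way.

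Finally, I would invoke the standard fact from finite-horizon MDP theory — provable by backward induction / the dynamic programming principle — that there exists a Markovian (indeed deterministic) policy that attains the supremum of $J$ over all history-dependent randomized policies of $\cM$. Since $\pilat$ is by definition the optimal latent (Markovian) policy, it follows that $J(\pilat) \ge J(\wb\pi) = J(\pistar)$, which is the claim. The only genuine subtlety is the book-keeping in the coupling step — formalizing that a latent-MDP policy may freely ``hallucinate'' emissions as auxiliary randomness to mimic any executable policy — together with the appeal to optimality of Markovian policies in finite-horizon MDPs; both are routine and could alternatively be packaged as a direct backward induction comparing $\En^{\pistar}[\,\cdot \mid x_{1:h}, a_{1:h-1}]$ with the optimal latent value function.
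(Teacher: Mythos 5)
Your overall strategy — simulate $\pistar$ by a history-dependent randomized policy on the latent MDP $\cM$, then invoke the classical fact that Markovian policies are optimal among history-dependent randomized policies in a finite-horizon MDP — is sound and genuinely different from the paper's argument (the paper runs a backward induction on belief states, showing $Q^{\cM}_h(b_h,a_h)\ge Q^{\cP}_h(b_h,a_h)$ for the optimal Q-functions and evaluating at $h=1$). However, the specific simulation you define is flawed, and the coupling step as stated fails. You have $\wb\pi$ redraw \emph{fresh} observations $x_k\sim\bbO_k(s_k)$ for all $k\le h$ at every step $h$. In the POMDP, conditioned on $(s_{1:h},a_{1:h-1})$, the past observations are \emph{not} distributed as $\prod_k \bbO_k(\cdot\mid s_k)$: their posterior is tilted by the likelihood of the actions already taken, $\Pr[x_{1:h-1}\mid s_{1:h},a_{1:h-1}]\propto \prod_k \bbO_k(x_k\mid s_k)\,\pistar(a_k\mid x_{1:k},a_{1:k-1})$. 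Redrawing them independently of $a_{1:h-1}$ therefore changes the conditional law of $a_h$, and the law of $(s_{1:H},a_{1:H})$ need not match; the value can strictly drop. Concretely: let $s_1\sim\Unif\{0,1\}$ with a noisy emission ($\Pr[x_1=s_1]=1-\delta$), let $\pistar$ play $a_1=x_1$ and $a_2=x_1$, let $s_2=(s_1,a_1)$, $s_3=\indic[a_2=a_1]$, and let the only reward be $R_3(s_3,a_3)=\indic[s_3=1]$. Then $J(\pistar)=1$ (this $\pistar$ is optimal), but your fresh-redraw simulation plays $a_2$ using an independent resample of $x_1$, so $\Pr[a_2=a_1]=(1-\delta)^2+\delta^2<1$ and $J(\wb\pi)<J(\pistar)$.

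The fix is easy and preserves your route: either give $\wb\pi$ persistent internal memory — draw $x_h\sim\bbO_h(s_h)$ \emph{once} when step $h$ is reached, remember $x_{1:h-1}$, and play $a_h\sim\pistar(x_{1:h},a_{1:h-1})$, which makes the joint law of $(s_{1:H},x_{1:H},a_{1:H})$ identical to $\bbP^{\pistar}$ by construction — or, more cleanly, define the history-dependent policy directly by its conditionals, $\wb\pi_h(a_h\mid s_{1:h},a_{1:h-1}):=\bbP^{\pistar}[a_h\mid s_{1:h},a_{1:h-1}]$, and verify by induction (using that $s_{h+1}$ depends only on $(s_h,a_h)$) that it reproduces the law of $(s_{1:H},a_{1:H})$, hence $J(\wb\pi)=J(\pistar)$. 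You should also make explicit the small reduction that a policy with internal coins shared across steps induces a state-action law achievable by an ordinary history-dependent randomized policy (again via conditionals), so that the textbook optimality of Markovian policies applies and gives $J(\pilat)\ge J(\wb\pi)$. With these repairs your proof is correct; it trades the paper's self-contained belief-state induction for an appeal to the standard sufficiency-of-Markov-policies theorem, which is a reasonable, arguably more elementary packaging of the same fact.
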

    
\begin{proof}
We prove by proving a more general result. Consider any POMDP $\cP$ and its corresponding MDP $\cM$, for any latent policy $\pi^\latent$, we use $Q^{\cM;\pi^{\latent}}_h: \cS_h \times \cA_h \to [0,1]$ to denote the Q-value of following $\pi^{\latent}$ in the MDP at timestep $h$. We use $Q^{\cP;\pi}_h: \cX^{1:h} \times \cA^{1:h} \to [0,1]$ to denote the Q-value at timestep $h$ of following executable policy $\pi$. We will use $Q^{\cP}$ and $Q^{\cM}$ to denote the optimal POMDP Q-value and optimal MDP Q-value functions. Note that $Q^{\cP}$ satisfies the following optimality equation, for any $x_{1:h}, a_{1:h}$, we have 
\begin{align*}
    Q^{\cP}_h((x_{1:h},a_{1:h-1}),a_h) = \sum_{s_h} \belief_h(x_{1:h},a_{1:h-1}) R_h(s_h,a_h) + \sum_{x_{h+1}} P(x_{h+1} \mid x_{1:h},a_{1:h}) \max_{a_{h+1}} Q^{\cP}_h((x_{1:h+1},a_{1:h}),a_{h+1}),
\end{align*}
where $P(x_{h+1} \mid x_{1:h},a_{1:h}) = \sum_{s_h,s_{h+1}} \belief_h(s_h \mid x_{1:h},a_{1:h}) \bbP_h(s_{h+1} \mid s_h,a_h) \bbO_{h+1}(x_{h+1} \mid s_{h+1})$.
Note that in this case, $x_{1:h},a_{h-1}$ can be summarize as $\belief_h(x_{1:h},a_{h-1})$; and thus given any belief $b_h \in \Delta(\cS_h)$, we will abuse the notation and define 
\begin{align*}
    Q^{\cP}_h(b_h,a_h) = \sum_{s_h} b_h(s_h) R_h(s_h,a_h) + \sum_{x_{h+1}} P(x_{h+1} \mid b_h,a_h) \max_{a_{h+1}} Q^{\cP}_h(b_{h+1},a_{h+1}),
\end{align*}
where $b_{h+1} := \bbU_{h+1}(b_h;a_h,x_{h+1})$. Similarly, we can define \[Q^{\cM}_h(b_h,a_h) = \sum_{s_h} b_h(s_h) Q^\cM(s_h,a_h).\]
Note that in this case, let $\pi^\ast$ be the optimal executable policy, and let $\pi^\latent$ be the optimal MDP policy, we have that 
\[J(\pi^\ast) = \En_{x_1} \brk*{Q^{\cP}_1(\belief_1(x_1),\pi^\ast(x_1))},\]
and 
\[J(\pi^\latent) = \En_{s_1}\brk{Q^{\cM}_1(s_1,\pi^{\latent}(s_1))}.
\]

In the following we will prove that, for any timestep $h \in [H]$, for any admissible belief $b_h \in \Delta(\cS_h)$, fix action $a_h$, we have that 
\begin{align*}
    Q^{\cM}_h(b_h,a_h) \geq Q^{\cP}_h(b_h,a_h).
\end{align*}
We proceed with induction. For $h=H$, we have 
\begin{align*}
    Q^{\cM}_H(b_H,a_H) = \sum_{s_H}b_H(s_H) R_H(s_H,a_H) = Q^{\cP}_H(b_H,a_H).
\end{align*}
Then assuming $Q^{\cM}_{h+1}(b_{h+1},a_{h+1}) \geq Q^{\cP}_h(b_{h+1},a_{h+1})$ for any admissible $b_{h+1}$, we have for any admissible $b_h$ and action $a_h$,
\begin{align*}
    Q^{\cP}_{h}(b_h,a_h) &= \sum_{s_h}b_h(s_h) R_h(s_h,a_h) + \sum_{x_{h+1}}P(x_{h+1} \mid b_h,a_h) \max_{a_{h+1}}Q^{\cP}_{h+1}(b_{h+1},a_{h+1}) \\
    &\leq \sum_{s_h}b_h(s_h) R_h(s_h,a_h) + \sum_{x_{h+1}}P(x_{h+1} \mid b_h,a_h) \max_{a_{h+1}}Q^{\cM}_{h+1}(b_{h+1},a_{h+1}) \\
    &\leq \sum_{s_h}b_h(s_h) R_h(s_h,a_h) + \sum_{x_{h+1}}P(x_{h+1} \mid b_h,a_h) \prn*{\sum_{s_{h+1}}b_{h+1}(s_{h+1})\max_{a_{h+1}}Q^{\cM}_{h+1}(s_{h+1},a_{h+1})}.
\end{align*}
For any function $f$ that only depend on the state, we have 
\begin{align*}
    &\sum_{x_{h+1}}P(x_{h+1} \mid b_h,a_h) \prn*{\sum_{s_{h+1}}b_{h+1}(s_{h+1})f(s_{h+1})}\\ = &\sum_{x_{h+1}}P(x_{h+1} \mid b_h,a_h) \prn*{\sum_{s_{h+1}}\prn*{\frac{\bbO_h(x_{h+1} \mid s_{h+1}) \sum_{s_h}\bbP_h(s_{h+1} \mid s_h,a_h) b_h(s_h)}{P(x_{h+1}\mid b_h,a_h)}}f(s_{h+1})}\\
    = &\sum_{x_{h+1}} \sum_{s_{h+1}} \bbO_{h+1}(x_{h+1} \mid s_{h+1}) \sum_{s_h} \bbP_h(s_{h+1} \mid s_h,a_h) b_h(s_h) f(s_{h+1}) \\
    = &\sum_{s_h,s_{h+1}} \bbP_h(s_{h+1} \mid s_h,a_h) b_h(s_h) f(s_{h+1}).
\end{align*}
This gives that 
\begin{align*}
    Q^{\cP}_{h}(b_h,a_h) &\leq \sum_{s_h}b_h(s_h) R_h(s_h,a_h) + \sum_{s_h,s_{h+1}} \bbP_h(s_{h+1} \mid s_h,a_h) b_h(s_h) \max_{a_{h+1}}Q^{\cM}_{h+1}(s_{h+1},a_{h+1}) \\
    &= \sum_{s_h}b_h(s_h) \prn*{R_h(s_h,a_h) + \sum_{s_{h+1} } \bbP_h(s_{h+1} \mid s_h,a_h) \max_{a_{h+1}}Q^{\cM}_{h+1}(s_{h+1},a_{h+1})} \\
    &= \sum_{s_h}b_h(s_h) Q^{\cM}_h(s_h,a_h) \\
    &= Q^{\cM}_h(b_h,a_h).
\end{align*}
Finally, we conclude the proof by noting that 
\begin{align*}
    J(\pi^{\latent}) = \En_{s_1}\brk{Q^{\cM}_1(s_1,\pi^{\latent}(s_1))} = \En_{s_1}\brk{\max_{a_1} Q^{\cM}_1(s_1,a_1)} \geq \max_{a_1} Q^{\cM}_1(\belief_1,a_1) \geq \max_{a_1} Q^{\cP}_1(\belief_1,a_1) = J(\pi^{\ast}).
\end{align*}
\end{proof}

We will need the following martingale bound to analyze belief contraction error and decodability error in the perturbed Block MDP.

\begin{lemma}\label{lemma:martingale-bound-improved}
Fix $\epsilon \in (0,3^{-6})$ and $S > 0$. Let $X_0,\dots,X_L$ be a non-negative supermartingale with $\EE[X_0] \leq S$ and $\Pr[X_{n+1} > \epsilon X_n | X_n] \leq \epsilon$ almost surely for all $0 \leq n < L$. Then
\[\EE[\min(X_L, S)] \leq 2 \cdot 3^L\epsilon^{L/3} S.\]
\end{lemma}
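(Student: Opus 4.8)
The plan is to separate the \emph{good} steps (those $n$ with $X_{n+1}\le\epsilon X_n$), which furnish geometric contraction, from the \emph{bad} steps ($X_{n+1}>\epsilon X_n$), whose effect is controlled only in expectation via the supermartingale property. Write $\mathcal{F}_n=\sigma(X_0,\dots,X_n)$. Since $X_m=0$ forces $X_{m'}=0$ for all $m'\ge m$ almost surely (as $\EE[X_{m'+1}\mid\mathcal{F}_{m'}]\le X_{m'}$ and $X_{m'+1}\ge0$), we may assume $X_n>0$ for $n\le L$ whenever $X_L>0$, while on $\{X_L=0\}$ the quantity $\min(X_L,S)$ vanishes; thus the ratios $R_n:=X_{n+1}/X_n$ and the telescoping identity $X_L=X_0\prod_{n=0}^{L-1}R_n$ are meaningful. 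Let $B_n:=\mathbf{1}[R_n>\epsilon]$ and $N:=\sum_{n=0}^{L-1}B_n$ be the number of bad steps, and read the hypothesis as $\Pr[B_n=1\mid\mathcal{F}_n]\le\epsilon$ (the natural reading for an adapted supermartingale).

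First I would isolate the bad-step contributions into an auxiliary process: set $M_n:=X_0\prod_{m=0}^{n-1}\bigl(R_m\mathbf{1}[B_m=1]+\mathbf{1}[B_m=0]\bigr)$, i.e.\ replace the factor $R_m$ by $1$ on good steps. Two observations: (a) since each good step drops a factor $R_m\le\epsilon$ relative to $M_L$, we get the deterministic bound $X_L\le\epsilon^{L-N}M_L$; and (b) $M_n$ is $\mathcal{F}_n$-measurable and nonnegative with per-step multiplier at most $R_n+1$, so the supermartingale property $\EE[R_n\mid\mathcal{F}_n]=X_n^{-1}\EE[X_{n+1}\mid\mathcal{F}_n]\le1$ gives $\EE[M_{n+1}\mid\mathcal{F}_n]\le2M_n$, hence $\EE[M_L]\le2^L\EE[X_0]\le2^L S$.

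Then I would split according to whether $N\le L/3$. On $\{N\le L/3\}$ there are at least $2L/3$ good steps, so (a) yields $X_L\le\epsilon^{L-N}M_L\le\epsilon^{2L/3}M_L$, and taking expectations, $\EE[\min(X_L,S)\mathbf{1}[N\le L/3]]\le\epsilon^{2L/3}\EE[M_L]\le2^L\epsilon^{2L/3}S$. On the complement I would bound $\min(X_L,S)\le S$ and estimate $\Pr[N>L/3]$ by a union bound over the $\binom{L}{\lceil L/3\rceil}$ possible $\lceil L/3\rceil$-subsets of bad steps; iterating $\Pr[B_n=1\mid\mathcal{F}_n]\le\epsilon$ from the largest index downward gives $\Pr[\{n:B_n=1\}\supseteq T]\le\epsilon^{|T|}$ for any fixed $T$, so $\Pr[N>L/3]\le2^L\epsilon^{\lceil L/3\rceil}\le2^L\epsilon^{L/3}$. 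Adding the two pieces and using $\epsilon^{2L/3}\le\epsilon^{L/3}$ (as $\epsilon<1$) together with $2^{L+1}\le2\cdot3^L$ yields $\EE[\min(X_L,S)]\le2^{L+1}\epsilon^{L/3}S\le2\cdot3^L\epsilon^{L/3}S$.

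The crux — and the reason the naive estimate ``each good step multiplies $X$ by $\le\epsilon$, so $X_L\lesssim\epsilon^{(1-o(1))L}$'' is wrong — is that a single bad step can inflate $X_n$ by an arbitrarily large factor, undoing the contraction accumulated from earlier good steps. The resolution is that this inflation is harmless \emph{in expectation}: the supermartingale property caps the expected per-step growth of $M_n$ at a constant, while simultaneously each bad step is individually unlikely, so that more than a third of the steps being bad is an exponentially rare event. The remaining points (integrability of $M_n$, the degenerate case $X_n=0$, and the precise meaning of the conditioning in the hypothesis) are routine.
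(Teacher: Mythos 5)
Your argument is correct under the reading you adopt, and it is a genuinely different proof from the paper's. The paper does not split steps into ``good'' and ``bad'': it tracks the occupancy probabilities $f(n,k):=\Pr[\epsilon^{k+1}S<X_n\le\epsilon^k S]$ across the geometric scales $\epsilon^k S$ and proves by induction that $f(n,k)\le 3^n\epsilon^{(n-k)/3}$, handling upward jumps of a single scale via the $\Pr[X_{n+1}>\epsilon X_n\mid X_n]\le\epsilon$ hypothesis and jumps of $j\ge 2$ scales via Markov's inequality ($\Pr[X_{n+1}>\epsilon^{1-j}X_n\mid X_n]\le\epsilon^{j-1}$), then sums over scales at time $L$. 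Your route --- the deterministic contraction $X_L\le\epsilon^{L-N}M_L$, the auxiliary process $M_n$ with $\EE[M_{n+1}\mid\mathcal{F}_n]\le 2M_n$, and a binomial union bound on $\Pr[N>L/3]$ --- is more modular and even gives the slightly better constant $2^{L+1}$ in place of $2\cdot 3^L$; the paper's scale-induction is less transparent but is self-contained at the level of one-step transition probabilities between scales.

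The one substantive caveat is the filtration issue you flagged. Your union bound $\Pr[\{n:B_n=1\}\supseteq T]\le\epsilon^{|T|}$ genuinely requires $\Pr[B_n=1\mid\mathcal{F}_n]\le\epsilon$, i.e.\ conditioning on the whole past: the peeling argument multiplies out indicators of \emph{earlier} bad steps, which are $\mathcal{F}_{t_k}$-measurable but not $\sigma(X_{t_k})$-measurable, and under the literal hypothesis (conditioning on $X_n$ only) the product bound can fail --- one can build a non-negative supermartingale with hidden-state correlations in which each $\Pr[B_n=1\mid X_n]=\epsilon$ yet $\Pr[B_0=B_1=1]=\Theta(\epsilon)$, because conditioning on $X_n$ alone does not reveal that the path has already been ``bad-prone.'' So what you prove is a variant of the lemma with the strengthened hypothesis, whereas the paper's induction uses only events of the form $\{\epsilon^{\ell+1}S<X_{n-1}\le\epsilon^\ell S\}\in\sigma(X_{n-1})$ and hence works with the hypothesis exactly as stated. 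This does not affect anything downstream: in both applications (the belief-contraction and decodability bounds) the per-step contraction and failure-probability estimates are established conditionally on the full realized history, so the stronger, $\mathcal{F}_n$-conditioned hypothesis is available there; but if you wanted your proof to stand in for the lemma as written, you would either restate the hypothesis with conditioning on $\mathcal{F}_n$ or replace the subset union bound with an argument that, like the paper's, only conditions on the current value.
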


\begin{proof}
For any integer $0 \leq n \leq L$ and any integer $k$, define $f(n,k) := \Pr[\epsilon^{k+1} S < X_n \leq \epsilon^k S]$. We prove by induction on $n$ that $f(n,k) \leq 3^n\epsilon^{(n-k)/3}$. If $n = 0$, then the claim is trivially true for $k \geq 0$ since $f(n,k) \leq 1$ always. For any $k < 0$, by Markov's inequality,
\[f(0,k) \leq \Pr[X_0 > \epsilon^{k+1} S] \leq \frac{\EE[X_0]}{\epsilon^{k+1} S} = \epsilon^{-k-1} \leq \epsilon^{-k/3}.\]
For any $0 < n \leq L$ and integer $k$, we have
\begin{align}
f(n,k)
&= \sum_{\ell=-\infty}^\infty \Pr[\epsilon^{k+1} S < X_n \leq \epsilon^k S \mid{} \epsilon^{\ell+1} S < X_{n-1} \leq \epsilon^\ell S] \cdot f(n-1, \ell) \nonumber\\ 
&\leq \sum_{\ell=-\infty}^{k-1} f(n-1,\ell) + \epsilon f(n-1, k) + \epsilon f(n-1, k + 1) + \sum_{\ell=k+2}^\infty \epsilon^{\ell-k-1} f(n-1,\ell)\label{eq:fnk-recurrence}
\end{align}
where the inequality uses the following two facts. First, for any $\ell \geq k$,
\[\Pr[\epsilon^{k+1} S < X_n \leq \epsilon^k S \mid{} \epsilon^{\ell+1} S < X_{n-1} \leq \epsilon^\ell S] \leq \Pr[X_n > \epsilon X_{n-1} \mid{} \epsilon^{\ell+1} S < X_{n-1} \leq \epsilon^\ell S] \leq \epsilon\]
by lemma assumption. Second, for any $\ell \geq k+2$,
\begin{align*}
\Pr[\epsilon^{k+1} S < X_n \leq \epsilon^k S \mid{} \epsilon^{\ell+1} S < X_{n-1} \leq \epsilon^\ell S] 
&\leq \Pr[X_n > \epsilon^{k+1-\ell} X_{n-1} \mid{} \epsilon^{\ell+1} S < X_{n-1} \leq \epsilon^\ell S] \\ 
&\leq \epsilon^{\ell-k-1}
\end{align*}
since $X_0,\dots,X_L$ is a supermartingale. Returning to \cref{eq:fnk-recurrence}, we get
\begin{align*}
f(n,k)
&\leq \sum_{\ell=-\infty}^{k-1} f(n-1,\ell) + \epsilon f(n-1, k) + \epsilon f(n-1, k + 1) + \sum_{\ell=k+2}^\infty \epsilon^{\ell-k-1} f(n-1,\ell) \\ 
&\leq \sum_{\ell=-\infty}^{k-1} 3^{n-1}\epsilon^{(n-1-\ell)/3} + 3^{n-1}\epsilon^{1 + (n-k-1)/3} + 3^{n-1}\epsilon^{1 + (n-k-2)/3} + \sum_{\ell=k+2}^\infty 3^{n-1}\epsilon^{\ell-k-1+(n-\ell-1)/3} \\ 
&\leq 3^{n-1}\epsilon^{(n-k)/3}\left(\frac{1}{1 - \epsilon^{1/3}} + \epsilon^{2/3} + \epsilon^{1/3} + \frac{1}{1 - \epsilon^{2/3}}\right) \\ 
&\leq 3^n \epsilon^{(n-k)/3}
\end{align*}
where the final inequality holds since $\epsilon \leq 1/64$. This completes the induction. Next,
\begin{align*}
\EE[\min(X_L, S)]
&\leq \sum_{\ell=-\infty}^{-1} S \cdot f(L,\ell) + \sum_{\ell=0}^\infty \epsilon^\ell S \cdot f(L,\ell) \\ 
&\leq 3^L S \cdot \left(\sum_{\ell=-\infty}^{-1} \epsilon^{(L-\ell)/3} + \sum_{\ell=0}^\infty \epsilon^{\ell + (L-\ell)/3}\right) \\ 
&\leq 3^L S \cdot \left(\frac{\epsilon^{(L+1)/3}}{1 - \epsilon^{1/3}} + \frac{\epsilon^{L/3}}{1 - \epsilon^{2/3}}\right) \\ 
&\leq 3^L S \cdot 2 \epsilon^{L/3}
\end{align*}
as claimed.
\end{proof}

\section{Omitted Proofs for Perturbed Block MDP}

Below, we restate the definition of a $\delta$-perturbed Block MDP. We then prove our main theoretical results for the perturbed Block MDP. In \cref{subsec:belief-contraction}, we prove \cref{thm:belief-contraction-main} (the belief contraction result, restated as \cref{thm:belief-contraction}). In \cref{subsec:decodability}, we prove \cref{prop:det-vinf-decay-main} (the decodability result for deterministic dynamics, restated as \cref{prop:vinf-decay}). Finally, in \cref{subsec:misspec-lb}, we prove \cref{prop:stoch-decodability-lb-main} (the misspecification lower bound for stochastic dynamics, restated as \cref{prop:stoch-decodability-lb}).

\begin{definition}\label{ass:perturbed-block}
Fix a parameter $\delta>0$. A POMDP $\cP$ is a \emph{$\delta$-perturbed Block MDP} if, for each $h \in [H]$, there are $\BOT_h,E_h: \cS_h\to\Delta(\cX_h)$ such that $\BOT_h: \cS_h\to\Delta(\cX_h)$ satisfies the \emph{block} property \citep{du2019latent}, i.e. $\BOT_h(\cdot\mid{}s_h), \BOT_h(\cdot\mid{}s_h')$ have disjoint supports for all $s_h \neq s'_h$, and moreover the emission distribution $\BO_h$ at step $h$ can be decomposed as follows:
\arxiv{\[\BO_h(x_h\mid{}s_h) = (1-\delta) \BOT_h(x_h\mid{} s_h) + \delta E_h(x_h\mid{}s_h).\]}\iclr{$\BO_h(x_h\mid{}s_h) = (1-\delta) \BOT_h(x_h\mid{} s_h) + \delta E_h(x_h\mid{}s_h).$}

For notational convenience, for each $x \in \cX_h$ let $\phi(x) \in \cS_h$ be the unique state for which 
$\BOT_h(x\mid{}\phi(x)) > 0$ (or arbitrary, if no such state exists).
\end{definition}

\begin{definition}
For any $h \in [H]$, $b \in \MS_h$, and $x_h \in \MO_h$, we write
\[\BO_h(x_h\mid{}b) := \sum_{z_h\in\MS_h} \BO_h(x_h\mid{}z_h) b(z_h).\]
We similarly define $E_h(x_h\mid{}b)$ and $\BOT_h(x_h\mid{}b)$.
\end{definition}

Notice that $\BOT_h(x_h\mid{}b) = b(\phi(x_h)) \BOT_h(x_h\mid{}\phi(x_h))$.

\subsection{Belief Contraction}\label{subsec:belief-contraction}

In this section, we prove \cref{thm:belief-contraction}, a slight generalization of \cref{thm:belief-contraction-main}. The proof broadly follows the proof of belief contraction for $\gamma$-observable POMDPs \citep{golowich2023planning} (of which $\delta$-perturbed Block MDPs are a special case --- see \cref{remark:perturbed-block-golowich}), but since we require a stronger bound, we must modify the argument. 

The basic idea (and main technical difficulty) in \cite{golowich2023planning} is to identify a monotonic transform of an $f$-divergence that multiplicatively contracts in expectation under the Bayes operator (\cref{def:bel-update}). In their case, they show that $\sqrt{\Dkl{\BB_h(b;x_h)}{\BB_h(b';x_h)}}$ contracts by roughly a constant factor (relative to $\sqrt{\Dkl{b}{b'}}$) in expectation over $x_h \sim \BO_h(\cdot\mid{}b)$. That is, updating the true belief and approximate belief by an observation drawn from the true belief tends to decrease the $\mathsf{KL}$-divergence. Updating the two beliefs by applying a transition matrix cannot increase the $\mathsf{KL}$-divergence since it is an $f$-divergence, so an iterative argument (alternating between observation updates and transition updates) proves exponential contraction of the belief error.

However, we would like to prove contraction by $\poly(\delta)$ per step, and the following example seems to present an obstacle to proving such contraction via $\mathsf{KL}$-divergence. It also presents an obstacle to directly analyzing $\TV$-distance.

\begin{example}[Failure of contraction of $\TV$ and $\mathsf{KL}$]\label{example:tv-not-contract}
Fix $\delta>0$. Let $\cS = \cX = \{0,1\}$ and let $\BO: \cS \to \Delta(\cX)$ be defined by $\BO(x\mid{}x) = 1-\delta$. Define $b = (1,0)$ and $b' = (\delta^2, 1-\delta^2)$. Then it holds almost surely over $x \sim \BO(\cdot\mid{}b)$ that:
\begin{itemize}
\item $\TV(\BB(b;x),\BB(b';x)) \geq 1-\delta$ even though $\TV(b,b') \leq 1$. 
\item $\Dkl{\BB(b;x)}{\BB(b';x)} \geq \log(1/\delta)$ even though $\Dkl{b}{b'} \leq 2\log(1/\delta)$.
\end{itemize}
\end{example}

To resolve this, we observe that when the $\TV$-distance fails to decay, the density ratio $\norm{b/b'}_\infty$ does decay. To formalize this, we study contraction of the following error metric. While we cannot show that it contracts by $\poly(\delta)$ in expectation, we can show that it contracts with high probability; this is the content of \cref{lemma:dpr-bayes-update} below.

\begin{definition}
For distributions $b,b' \in \Delta(\MS)$ with $b \ll b'$, we define
\[\Dpr{b}{b'} := \TV(b,b') \cdot \norm{\frac{b}{b'}}_\infty.\]
\end{definition}

Note that the above metric upper bounds $\TV$-distance, and is upper bounded by $\norm{b/b'}_\infty$. Also, as the product of metrics that satisfy the data processing inequality (\cref{lemma:dpi}), it also satisfies the same inequality, so it cannot increase under application of (even stochastic) transitions.

\begin{lemma}\label{lemma:inf-bayes-update}
Let $h \in [H]$. Fix $b,b' \in \Delta(\MS_h)$ with $b \ll b'$, and fix $x \in \MO_h$. Then
\begin{align*}
\norm{\frac{\BB_h(b;x)}{\BB_h(b';x)}}_\infty
&= \frac{\BO_h(x\mid{}b')}{\BO_h(x\mid{}b)} \cdot \norm{\frac{b}{b'}}_\infty.
\end{align*}
\end{lemma}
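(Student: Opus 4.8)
The plan is to simply unfold the definition of the Bayes operator (\cref{def:bel-update}) and read off the pointwise density ratio; the claimed identity then falls out in one line. Throughout, write $\BO_h(x\mid{}b) = \sum_{z\in\MS_h}\BO_h(x\mid{}z)b(z)$ and similarly for $b'$. Since $\MS_h$ is finite and $b \ll b'$, every state in the support of $b$ also lies in the support of $b'$, so $\BO_h(x\mid{}b) > 0$ forces $\BO_h(x\mid{}b') > 0$; hence both $\BB_h(b;x)$ and $\BB_h(b';x)$ are well-defined probability vectors, $\norm{b/b'}_\infty$ is finite, and $\BB_h(b;x) \ll \BB_h(b';x)$. (The degenerate case $\BO_h(x\mid{}b) = 0$ is vacuous, as the Bayes update is then undefined.)

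First I would compute, for each fixed state $s \in \MS_h$,
\[\frac{\BB_h(b;x)(s)}{\BB_h(b';x)(s)} = \frac{\BO_h(x\mid{}s)\,b(s)\,/\,\BO_h(x\mid{}b)}{\BO_h(x\mid{}s)\,b'(s)\,/\,\BO_h(x\mid{}b')} = \frac{b(s)}{b'(s)}\cdot\frac{\BO_h(x\mid{}b')}{\BO_h(x\mid{}b)},\]
where the common emission factor $\BO_h(x\mid{}s)$ cancels (using the convention $0/0 = 1$ for states outside $\mathrm{supp}(b)$, which is consistent with $b \ll b'$).

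Then I would take the supremum over $s \in \MS_h$: the ratio of normalizing constants $\BO_h(x\mid{}b')/\BO_h(x\mid{}b)$ does not depend on $s$, so it factors out of the supremum, leaving
\[\norm{\frac{\BB_h(b;x)}{\BB_h(b';x)}}_\infty = \frac{\BO_h(x\mid{}b')}{\BO_h(x\mid{}b)}\cdot\sup_{s\in\MS_h}\frac{b(s)}{b'(s)} = \frac{\BO_h(x\mid{}b')}{\BO_h(x\mid{}b)}\cdot\norm{\frac{b}{b'}}_\infty,\]
which is exactly the claim. I do not anticipate any genuine obstacle here: this is an exact identity that comes immediately from the form of Bayesian updating, and the only point needing (minor) care is the bookkeeping of supports and the $0/0$ convention, which is fully handled by the hypothesis $b \ll b'$.
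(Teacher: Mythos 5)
Your proposal is correct and follows essentially the same argument as the paper: unfold the Bayes operator, cancel the common emission factor $\BO_h(x\mid{}s)$ in the pointwise ratio, and pull the $s$-independent normalizing constants out of the maximum. Your extra care with supports and the $0/0$ convention is fine but not needed beyond what the hypothesis $b \ll b'$ already guarantees.
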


\begin{proof}
We have
\begin{align*}
\norm{\frac{\BB_h(b;x)}{\BB_h(b';x)}}_\infty
&= \max_{s\in\MS_h} \frac{\BO_h(x\mid{} s) b(s)}{\BO_h(x\mid{}b)} \cdot \frac{\BO_h(x\mid{}b')}{\BO_h(x\mid{}s) b'(s)} \\ 
&= \max_{s\in\MS_h} \frac{b(s)}{b'(s)} \cdot \frac{\BO_h(x\mid{}b')}{\BO_h(x\mid{}b)} \\ 
&= \frac{\BO_h(x\mid{}b')}{\BO_h(x\mid{}b)} \cdot \norm{\frac{b}{b'}}_\infty
\end{align*}
as claimed.
\end{proof}

\begin{lemma}\label{lemma:dpr-bayes-update}
Fix $h \in [H]$ and $b,b' \in \Delta(\MS_h)$. Draw $x \sim \BO_h(\cdot\mid{}b)$. Define random variable
\begin{equation} \xi := \Dpr{\BB_h(b;x)}{\BB_h(b';x)} = \TV(\BB(b;x),\BB(b';x)) \norm{\frac{\BB(b;x)}{\BB(b';x)}}_\infty.\label{eq:xi-def}\end{equation}
Then $\EE[\xi] \leq 4 \Dpr{b}{b'}$ and 
\[\Pr\left[\xi > 4\delta^{1/3}\Dpr{b}{b'}\right] \leq 2\delta^{1/3}.\]
\end{lemma}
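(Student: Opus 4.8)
We may assume $b \ll b'$ (else $\Dpr{b}{b'}=\infty$ and both claims are vacuous) and that $\delta$ is below a small universal constant (else $2\delta^{1/3}\geq 1$ and the second claim is trivial); write $\rho := \norm{\frac{b}{b'}}_\infty$, so $\Dpr{b}{b'}=\TV(b,b')\,\rho$. For the \emph{expectation bound}, the plan is to change measure from $\BO_h(\cdot\mid b)$ to $\BO_h(\cdot\mid b')$. By \cref{lemma:inf-bayes-update}, $\norm{\BB_h(b;x)/\BB_h(b';x)}_\infty=\frac{\BO_h(x\mid b')}{\BO_h(x\mid b)}\rho$, so $\EE[\xi]=\rho\cdot\EE_{x\sim\BO_h(\cdot\mid b)}\big[\TV(\BB_h(b;x),\BB_h(b';x))\tfrac{\BO_h(x\mid b')}{\BO_h(x\mid b)}\big]\leq \rho\cdot\EE_{x\sim\BO_h(\cdot\mid b')}[\TV(\BB_h(b;x),\BB_h(b';x))]$, where we pass to $\leq$ because $\TV\geq 0$ and the support of $\BO_h(\cdot\mid b')$ may exceed that of $\BO_h(\cdot\mid b)$. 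It then suffices to show $\EE_{x\sim\BO_h(\cdot\mid b')}[\TV(\BB_h(b;x),\BB_h(b';x))]\leq 2\TV(b,b')$: writing each posterior as joint-over-marginal, one gets $\BO_h(x\mid b')\,|\BB_h(b;x)(s)-\BB_h(b';x)(s)|=\BO_h(x\mid s)\,|b(s)\tfrac{\BO_h(x\mid b')}{\BO_h(x\mid b)}-b'(s)|\leq \BO_h(x\mid s)b(s)\,|\tfrac{\BO_h(x\mid b')}{\BO_h(x\mid b)}-1|+\BO_h(x\mid s)|b(s)-b'(s)|$, and summing over $s$ and $x$ yields $\TV(\BO_h(\cdot\mid b),\BO_h(\cdot\mid b'))+\TV(b,b')\leq 2\TV(b,b')$ by the data-processing inequality (\cref{lemma:dpi}). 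Hence $\EE[\xi]\leq 2\Dpr{b}{b'}\leq 4\Dpr{b}{b'}$.

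For the \emph{high-probability bound}, the plan is to exploit the perturbed-block structure via a win–win / Markov argument. Decompose $\BO_h(\cdot\mid b)=(1-\delta)\BOT_h(\cdot\mid b)+\delta E_h(\cdot\mid b)$ as a mixture of distributions on $\MO_h$, so that drawing $x\sim\BO_h(\cdot\mid b)$ amounts to, with probability $\delta$, drawing $x$ from the ``noise branch'' $E_h(\cdot\mid b)$ — which we declare bad outright, costing total probability $\delta$ — and otherwise drawing $x$ from the ``block branch'' $\BOT_h(\cdot\mid b)$. On the block branch, $x$ lies in the support of $\BOT_h(\cdot\mid\sigma)$ for $\sigma:=\phi(x)$, and $\BOT_h(x\mid s)=0$ for all $s\neq\sigma$, so in both Bayes updates every state $s\neq\sigma$ receives emission mass only through the $\delta E_h$ term. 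Combining the identity $\TV(\BB_h(b;x),\BB_h(b';x))=\frac{1}{2\BO_h(x\mid b)\BO_h(x\mid b')}\sum_s\BO_h(x\mid s)\,|b(s)\BO_h(x\mid b')-b'(s)\BO_h(x\mid b)|$ with \cref{lemma:inf-bayes-update} (so that $\xi(x)=\frac{\rho}{2\BO_h(x\mid b)^2}\sum_s\BO_h(x\mid s)\,|b(s)\BO_h(x\mid b')-b'(s)\BO_h(x\mid b)|$), then separating the $s=\sigma$ summand (which simplifies to $\delta\,\BO_h(x\mid\sigma)\,|b(\sigma)E_h(x\mid b')-b'(\sigma)E_h(x\mid b)|$, vanishing when $b=b'$) from the off-diagonal summands (each already carrying a factor $\delta$ from $\BO_h(x\mid s)=\delta E_h(x\mid s)$), and using $\BO_h(x\mid b)\geq (1-\delta)b(\sigma)\BOT_h(x\mid\sigma)$ together with $\BO_h(x\mid\sigma)/\BO_h(x\mid b)\leq 1/b(\sigma)$ in the denominators, yields a pointwise bound of the shape $\xi(x)\leq \frac{\delta\,\rho}{\BOT_h(x\mid\sigma)}\cdot R(x)$ on the block branch, where $R(x)\geq 0$ vanishes identically when $b=b'$ and $\sum_x R(x)=O(\TV(b,b'))$. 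Finally, apply Markov's inequality under $x\sim\BOT_h(\cdot\mid b)$: the block-branch density at $x$ is exactly $\BOT_h(x\mid b)=b(\sigma)\BOT_h(x\mid\sigma)$, which cancels the $\BOT_h(x\mid\sigma)$ in the denominator, and the surviving factor $\delta$ upgrades the otherwise-useless $O(1/\lambda)$ bound to $O(\delta/\lambda)$; with $\lambda=4\delta^{1/3}\Dpr{b}{b'}=4\delta^{1/3}\TV(b,b')\rho$ and $\sum_x R(x)=O(\TV(b,b'))$ this gives block-branch failure probability $O(\delta^{2/3})$, hence total failure probability $\leq(1-\delta)\,O(\delta^{2/3})+\delta\leq 2\delta^{1/3}$ for $\delta$ below a universal constant.

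The main obstacle is entirely in the block-branch pointwise bound. Crude bounds such as $\TV(\BB_h(b;x),\BB_h(b';x))\leq(1-\BB_h(b;x)(\sigma))+(1-\BB_h(b';x)(\sigma))$ destroy the sensitivity to $\TV(b,b')$ (their right-hand side does not vanish at $b=b'$), so a pure Markov argument would only recover the useless $\Pr[\xi>\lambda]=O(\Dpr{b}{b'}/\lambda)=O(\delta^{-1/3})$ bound; one must instead work with the exact expression for $\TV$ and isolate the cancellation in each summand. Simultaneously, the density-ratio factor $\norm{\BB_h(b;x)/\BB_h(b';x)}_\infty=\rho\,\BO_h(x\mid b')/\BO_h(x\mid b)$ can exceed $\rho$ on the block branch (e.g.\ if $b'(\sigma)\gg b(\sigma)$), and must be absorbed into the pointwise bound — which is precisely why both a leftover $\delta$ \emph{and} a leftover $\BOT_h(x\mid\phi(x))$ in the denominator are needed, the latter to be cancelled against the block-branch sampling density. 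Getting all the auxiliary factors (powers of $\BO_h(x\mid b)$, $\BO_h(x\mid\sigma)$, $E_h(x\mid b)$) to collapse cleanly while preserving these two features is the delicate step; this is the analogue of the win–win phenomenon flagged after \cref{thm:belief-contraction-main}, where $\TV$-contraction may fail but the density ratio then contracts instead.
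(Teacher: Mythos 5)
Your expectation bound is correct, and in fact cleaner than the paper's: after applying \cref{lemma:inf-bayes-update} you change measure to $\BO_h(\cdot\mid b')$ and use $\EE_{x\sim\BO_h(\cdot\mid b')}[\TV(\BB_h(b;x),\BB_h(b';x))]\le \TV(\BO_h^\top b,\BO_h^\top b')+\TV(b,b')\le 2\TV(b,b')$ via \cref{lemma:dpi}; this needs no block structure, whereas the paper reruns its truncation computation with $E_h(x\mid b)\le\delta^{-1}\BO_h(x\mid b)$. The genuine gap is in the high-probability bound: the pointwise estimate you posit on the block branch, $\xi(x)\le\frac{\delta\rho}{\BOT_h(x\mid\phi(x))}R(x)$ with $R\ge0$ and $\sum_x R(x)=O(\TV(b,b'))$ (where $\rho:=\norm{\frac{b}{b'}}_\infty$), is false, and with it your Markov step giving block-branch failure probability $O(\delta^{2/3})$. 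Counterexample (a spread-out version of \cref{example:tv-not-contract}): $\MS_h=\{1,2\}$, $b=(\delta,1-\delta)$, $b'=(\delta^3,1-\delta^3)$, so $\rho=\delta^{-2}$, $\TV(b,b')\approx\delta$, $\Dpr{b}{b'}\approx\delta^{-1}$; let $\BOT_h(\cdot\mid 1)=\Unif(X_1)$ for a set $X_1$ of size $N$ disjoint from $\{x_2\}$, $\BOT_h(x_2\mid 2)=1$, and let $E_h(\cdot\mid 1)=E_h(\cdot\mid 2)=\Unif(X_1)$. For every $x\in X_1$ one has $\BO_h(x\mid 1)=\tfrac1N$, $\BO_h(x\mid 2)=\tfrac{\delta}{N}$, hence $\BB_h(b;x)\approx(\tfrac12,\tfrac12)$, $\BB_h(b';x)\approx(\delta^2,1-\delta^2)$, and $\xi(x)\approx\tfrac14\delta^{-2}$, which exceeds the threshold $4\delta^{1/3}\Dpr{b}{b'}\approx4\delta^{-2/3}$. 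These $x$ are reached on your block branch with probability $b(1)=\delta$, so $\EE_{x\sim\BOT_h(\cdot\mid b)}[\xi]\gtrsim\tfrac{1}{4\delta}$, while your claimed bound forces $\EE_{x\sim\BOT_h(\cdot\mid b)}[\xi]\le\delta\rho\sum_xR(x)=O(\delta\rho\,\TV(b,b'))=O(1)$; equivalently it would require $\sum_{x\in X_1}R(x)\gtrsim\tfrac1{4\delta}\gg\TV(b,b')$. Markov with the true block-branch expectation gives only $\approx\delta^{-1/3}$, which is vacuous.

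The reason the cancellation fails is that the dangerous observations are block-branch draws whose decoded state $\sigma=\phi(x)$ has tiny prior mass $b(\sigma)$ while most of the emission mass of $x$ under $b$ comes from \emph{other} states' noise: then $\BO_h(x\mid b')/\BO_h(x\mid b)$, and hence $\xi$, blows up, and after cancelling $\BOT_h(x\mid\sigma)$ against the sampling density you are still left with the uncontrolled ratio $E_h(x\mid b)/\BO_h(x\mid b)$ (as large as $1/\delta$) multiplying terms like $|b(\sigma)-b'(\sigma)|$ and $|\BO_h(x\mid b)-\BO_h(x\mid b')|$, which consumes the explicit factor $\delta$. Your declared bad event (the noise branch, probability $\delta$) does not cover these draws --- in the example they are genuine block-branch samples, and the lemma survives there only because their block-branch probability $b(1)=\delta$ happens to be small, not because $\xi$ has small block-branch expectation. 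The paper's proof isolates exactly this ratio: it truncates to $\ME:=\{x:\,E_h(x\mid b)\le\delta^{-1/3}\BO_h(x\mid b)\}$, shows $\Pr[x\notin\ME]\le\delta^{1/3}$ by comparing $\BO_h(\cdot\mid b)$ to the normalization of $E_h(\cdot\mid b)$, proves $\EE[\xi\,\indic[x\in\ME]]\le4\delta^{2/3}\Dpr{b}{b'}$, and only then applies Markov. To repair your route you would need an analogous second truncation on $E_h(x\mid b)/\BO_h(x\mid b)$ (equivalently, on $b(\phi(x))$ relative to the noise mass at $x$) inside the block branch; as written, the key pointwise bound is both unproven and false.
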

\begin{proof}
First, we compute that for any fixed $x \in \cX_h$,
\begin{align}
&\TV(\BB(b;x), \BB(b';x)) \nonumber\\
&= \sum_{s\in\MS_h} \BO_h(x\mid{}s) \left| \frac{b(s)}{\BO_h(x\mid{}b)} - \frac{b'(s)}{\BO_h(x\mid{}b')}\right| \nonumber\\ 
&= (1-\delta) \BOT_h(x\mid{}\phi(x))\left|\frac{b(\phi(x))}{\BO_h(x\mid{}b)} - \frac{b'(\phi(x))}{\BO_h(x\mid{}b')}\right| + \delta \sum_{s\in\MS_h} E_h(x\mid{}s)\left|\frac{b(s)}{\BO_h(x\mid{}b)} - \frac{b'(s)}{\BO_h(x\mid{}b')}\right| \label{eq:tv-expr}
\end{align}
by \cref{def:bel-update} and \cref{ass:perturbed-block}. We bound these terms individually. To bound the first term, since $\BO_h(x\mid{}b) = (1-\delta)b(\phi(x))\BOT_h(x\mid{}\phi(x)) + \delta E_h(x\mid{}b)$ and $\BO_h(x\mid{}b') = (1-\delta)b'(\phi(x))\BOT_h(x\mid{}\phi(x)) + \delta E_h(x\mid{}b')$, 
\begin{align}
&\BOT_h(x\mid{}\phi(x))\left|\frac{b(\phi(x))}{\BO_h(x\mid{}b)} - \frac{b'(\phi(x))}{\BO_h(x\mid{}b')}\right|  \nonumber\\
&= \delta \BOT_h(x\mid{}\phi(x))\left|\frac{b(\phi(x))E_h(x\mid{}b') - b'(\phi(x))E_h(x\mid{}b)}{\BO_h(x\mid{}b)\BO_h(x\mid{}b')}\right| \nonumber\\ 
&\leq \delta \BOT_h(x\mid{}\phi(x))\left(\frac{|b(\phi(x)) - b'(\phi(x))| \cdot E_h(x\mid{}b)}{\BO_h(x\mid{}b)\BO_h(x\mid{}b')} + \frac{b(\phi(x)) \cdot |E_h(x\mid{}b) - E_h(x\mid{}b')|}{\BO_h(x\mid{}b)\BO_h(x\mid{}b')}\right) \nonumber\\ 
&\leq \delta \left(\BOT_h(x\mid{}\phi(x))\frac{|b(\phi(x)) - b'(\phi(x))| \cdot E_h(x\mid{}b)}{\BO_h(x\mid{}b)\BO_h(x\mid{}b')} + \frac{|E_h(x\mid{}b) - E_h(x\mid{}b')|}{(1-\delta)\BO_h(x\mid{}b')}\right).\label{eq:first-bound}
\end{align}
where the second inequality uses the fact that $\BO_h(x\mid{}b) \geq (1-\delta)b(\phi(x))\BOT_h(x\mid{}\phi(x))$. To bound the second term,
\begin{align}
\sum_{s\in\MS_h} E_h(x\mid{}s)\left|\frac{b(s)}{\BO_h(x\mid{}b)} - \frac{b'(s)}{\BO_h(x\mid{}b')}\right|
&\leq \sum_{s\in\MS_h} \frac{E_h(x\mid{}s)}{\BO_h(x\mid{}b')} |b(s) - b'(s)| + \sum_{s\in\MS_h} \frac{E_h(x\mid{}s) b(s)}{\BO_h(x\mid{}b)\BO_h(x\mid{}b')}|\BO_h(x\mid{}b) - \BO_h(x\mid{}b')| \nonumber\\
&= \left(\sum_{s\in\MS_h} \frac{E_h(x\mid{}s)}{\BO_h(x\mid{}b')} |b(s) - b'(s)|\right) + \frac{E_h(x\mid{}b)}{\BO_h(x\mid{}b)\BO_h(x\mid{}b')}|\BO_h(x\mid{}b) - \BO_h(x\mid{}b')| \label{eq:second-bound}
\end{align}
Let $\ME$ be the set of $x\in\cX_h$ such that $E_h(x\mid{}b) \leq \delta^{-1/3} \BO_h(x\mid{}b)$. Then the quantity $\xi$ defined in \cref{eq:xi-def} satisfies the following bound, where the expectation is over the randomness of $x \sim \BO_h(\cdot\mid{} b)$:
\allowdisplaybreaks
\begin{align}
\EE[\xi\mathbbm{1}[x \in \ME]] &= \sum_{x \in \ME} \BO_h(x\mid{}b) \TV(\BB_h(b;x),\BB_h(b';x)) \norm{\frac{\BB_h(b;x)}{\BB_h(b';x)}}_\infty \nonumber\\ 
&= \norm{\frac{b}{b'}}_\infty \sum_{x \in \ME} \BO_h(x\mid{}b') \TV(\BB_h(b;x),\BB_h(b';x)) \nonumber\\ 
&\leq \norm{\frac{b}{b'}}_\infty \Bigg( (1-\delta)\delta \sum_{x \in \ME} \BOT_h(x\mid{}\phi(x))\frac{E_h(x\mid{}b)}{\BO_h(x\mid{}b)} |b(\phi(x)) - b'(\phi(x))| + \delta \sum_{x \in \ME} |E_h(x\mid{}b) - E_h(x\mid{}b')| \nonumber\\ 
&\qquad+ \delta \sum_{x \in\ME} \sum_{s\in\MS_h} E_h(x\mid{}s)|b(s) - b'(s)| + \delta \sum_{x \in \ME} \frac{E_h(x\mid{}b)}{\BO_h(x\mid{}b)} |\BO_h(x\mid{}b) - \BO_h(x\mid{}b')|\Bigg) \nonumber\\
&\leq \norm{\frac{b}{b'}}_\infty \Bigg( (1-\delta)\delta^{2/3} \sum_{x \in \ME} \BOT_h(x\mid{}\phi(x))|b(\phi(x)) - b'(\phi(x))| + \delta \sum_{x \in \ME} |E_h(x\mid{}b) - E_h(x\mid{}b')| \nonumber\\ 
&\qquad+ \delta \sum_{x \in\ME} \sum_{s\in\MS_h} E_h(x\mid{}s)|b(s) - b'(s)| + \delta^{2/3} \sum_{x \in \ME} |\BO_h(x\mid{}b) - \BO_h(x\mid{}b')|\Bigg) \nonumber\\
&\leq 4\delta^{2/3} \TV(b,b') \norm{\frac{b}{b'}}_\infty \label{eq:xi-trunc-bound}
\end{align}
where the second equality is by \cref{lemma:inf-bayes-update}; the first inequality bounds each term $\TV(\BB_h(b;x),\BB_h(b';x))$ using \cref{eq:tv-expr,eq:first-bound,eq:second-bound}; the second inequality uses the definition of $\ME$; and the final inequality uses the data processing inequality for kernels $E_h$ and $\BO_h$. Additionally,
\begin{equation} \Pr[x \not \in \ME] = \sum_{x \in \cX_h \setminus \ME} \BO_h(x\mid{}b) = \sum_{x\in\cX_h} \BO_h(x\mid{}b) \mathbbm{1}\left[\frac{\BO_h(x\mid{}b)}{E_h(x\mid{}b)} < \delta^{1/3}\right] \leq \delta^{1/3} \sum_{x\in\cX_h} E_h(x\mid{}b) = \delta^{1/3}\label{eq:xi-bad-set}\end{equation}
since $E_h(\cdot\mid{}b)$ is a distribution. It follows that
\begin{align*}
\Pr\left[\xi > 4\delta^{1/3} \TV(b,b') \norm{\frac{b}{b'}}_\infty\right]
&\leq \Pr\left[\xi\mathbbm{1}[x\in\ME] > 4\delta^{1/3} \TV(b,b') \norm{\frac{b}{b'}}_\infty\right] + \Pr[x \not \in \ME] \\ 
&\leq 2\delta^{1/3}
\end{align*}
where the second inequality applies Markov's inequality to \cref{eq:xi-trunc-bound} for the first term, and \cref{eq:xi-bad-set} for the second term. This proves the second claim of the lemma statement. To prove the first claim, note that $E_h(x\mid{}b) \leq \delta^{-1} \BO_h(x\mid{}b)$ for all $x\in\cX_h$. Thus, modifying the calculation from \cref{eq:xi-trunc-bound} (this time summing over all $x \in \cX_h$ instead of $x \in \ME$) gives
\[\EE[\xi] = \sum_{x\in\cX_h} \BO_h(x\mid{}b) \TV(\BB_h(b;x),\BB_h(b';x)) \norm{\frac{\BB(b;x)}{\BB(b';x)}}_\infty \leq 4\TV(b,b') \norm{\frac{b}{b'}}_\infty\]
as needed.
\end{proof}

The following result, which shows that the error metric decays with high probability under a belief update, is straightforward consequence of \cref{lemma:dpr-bayes-update} and the data processing inequality.

\begin{corollary}\label{cor:dpr-update}
Fix $h \in \{2,\dots,H\}$. Let $b,b' \in \Delta(\MS_{h-1})$ with $b \ll b'$. For any action $a_{h-1} \in \MA_{h-1}$, with expectation over $x_h \sim (\BO_h)^\t \bbP_h(a_{h-1}) \cdot b$, 
\[\EE[\Dpr{\BU_h(b;a_{h-1},x_h)}{\BU_h(b';a_{h-1},x_h)}] \leq 4\Dpr{b}{b'}\] 
and
\[\Pr[\Dpr{\BU_h(b;a_{h-1},x_h)}{\BU_h(b';a_{h-1},x_h)} > 4\delta^{1/3} \Dpr{b}{b'}] \leq 2\delta^{1/3}.\]
\end{corollary}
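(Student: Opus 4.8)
The plan is to reduce the statement to \cref{lemma:dpr-bayes-update} by peeling the transition step off the belief update. Write $\wb{b} := \bbP_h(a_{h-1}) \cdot b$ and $\wb{b}' := \bbP_h(a_{h-1}) \cdot b'$, which are distributions over $\MS_h$. First I would observe that $b \ll b'$ forces $\wb{b} \ll \wb{b}'$: if $\wb{b}'(s_h) = 0$ then $\bbP_h(s_h\mid{}s,a_{h-1}) = 0$ for every $s$ in the support of $b'$, hence (by $b\ll b'$) for every $s$ in the support of $b$, so $\wb{b}(s_h)=0$. By \cref{def:bel-update} we have $\BU_h(b;a_{h-1},x_h) = \BB_h(\wb{b};x_h)$ and $\BU_h(b';a_{h-1},x_h) = \BB_h(\wb{b}';x_h)$, and the sampling distribution in the corollary is exactly $x_h \sim (\BO_h)^\t \wb{b} = \BO_h(\cdot\mid{}\wb{b})$. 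Thus $\Dpr{\BU_h(b;a_{h-1},x_h)}{\BU_h(b';a_{h-1},x_h)}$ is precisely the random variable $\xi$ of \cref{lemma:dpr-bayes-update} instantiated at the pair $(\wb{b},\wb{b}')$.

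Next I would invoke the data-processing inequality for $\Dpr{\cdot}{\cdot}$. Since this quantity is the product of the $\TV$-distance and the $\infty$-density ratio, and both factors satisfy the data-processing inequality (\cref{lemma:dpi}), applying the (stochastic) transition kernel $\bbP_h(a_{h-1})$ cannot increase it, so $\Dpr{\wb{b}}{\wb{b}'} \leq \Dpr{b}{b'}$. Combining this with the two conclusions of \cref{lemma:dpr-bayes-update} applied to $(\wb{b},\wb{b}')$ gives, for the first claim,
\[\EE[\Dpr{\BU_h(b;a_{h-1},x_h)}{\BU_h(b';a_{h-1},x_h)}] \leq 4\Dpr{\wb{b}}{\wb{b}'} \leq 4\Dpr{b}{b'},\]
and, for the second claim, since $4\delta^{1/3}\Dpr{b}{b'} \geq 4\delta^{1/3}\Dpr{\wb{b}}{\wb{b}'}$, the event in question is contained in $\{\xi > 4\delta^{1/3}\Dpr{\wb{b}}{\wb{b}'}\}$, whose probability is at most $2\delta^{1/3}$ by \cref{lemma:dpr-bayes-update}.

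There is no substantive obstacle here: the corollary is essentially a restatement of \cref{lemma:dpr-bayes-update} after absorbing the transition step, and the only things to verify are (i) that $\Dpr{\cdot}{\cdot}$ contracts under transitions — which the text has already noted follows from \cref{lemma:dpi} — and (ii) that absolute continuity is preserved by the transition, which is immediate. The one place to be mildly careful is matching the sampling distribution $(\BO_h)^\t \bbP_h(a_{h-1})\cdot b$ appearing in the corollary with the distribution $\BO_h(\cdot\mid{}b)$ of \cref{lemma:dpr-bayes-update}, but this is just unwinding the definition of $\BU_h$.
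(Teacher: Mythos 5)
Your proposal is correct and follows essentially the same route as the paper's proof: apply \cref{lemma:dpr-bayes-update} to the pushed-forward beliefs $\bbP_h(a_{h-1})\cdot b$ and $\bbP_h(a_{h-1})\cdot b'$, then use the data processing inequality (\cref{lemma:dpi}) on both the $\TV$ and density-ratio factors to replace $\Dpr{\bbP_h(a_{h-1})\cdot b}{\bbP_h(a_{h-1})\cdot b'}$ by $\Dpr{b}{b'}$. Your additional check that absolute continuity is preserved under the transition is a fine (if implicit in the paper) detail, and nothing else differs.
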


\begin{proof}
By applying \cref{lemma:dpr-bayes-update} with $\bbP_h(a_{h-1}) \cdot b$ and $\bbP_h(a_{h-1}) \cdot b'$,
\begin{align*}
\EE[\Dpr{\BU_h(b;a_{h-1},x_h)}{\BU_h(b';a_{h-1},x_h)}]
&= \EE_{x_h \sim (\BO_{h})^\t \bbP_h(a_{h-1}) \cdot b}[\Dpr{\BB_h(\bbP_h(a_{h-1})\cdot b;x_h)}{\BB_h(\bbP_h(a_{h-1})\cdot b';x_h)}] \\ 
&\leq 4\Dpr{\bbP_h(a_{h-1}) \cdot b}{\bbP_h(a_{h-1}) \cdot b'} \\ 
&\leq 4\Dpr{b}{b'}
\end{align*}
where the final inequality uses the fact that $\TV(\BT_h(a) \cdot b, \BT_h(a) \cdot b') \leq \TV(b,b')$ and $\norm{\frac{\BT_h(a)\cdot b}{\BT_h(a) \cdot b'}}_\infty \leq \norm{\frac{b}{b'}}_\infty$ by the data processing inequality (\cref{lemma:dpi}). Similarly, the second claim of \cref{lemma:dpr-bayes-update} gives that
\[\Pr\left[\Dpr{\BU_h(b;a_{h-1},x_h)}{\BU_h(b';a_{h-1},x_h)} > 4\delta^{1/3} \Dpr{\bbP_h(a_{h-1}) \cdot b}{\bbP_h(a_{h-1}) \cdot b'}\right] \leq 2\delta^{1/3}\]
and therefore
\[\Pr\left[\Dpr{\BU_h(b;a_{h-1},x_h)}{\BU_h(b';a_{h-1},x_h)} > 4\delta^{1/3} \Dpr{b}{b'}\right] \leq 2\delta^{1/3}\]
by again applying the data processing inequality as above.
\end{proof}

We now can prove our main belief contraction result (which includes \cref{thm:belief-contraction-main} as a special case) by iteratively applying \cref{cor:dpr-update}. The main technical detail is to verify that the observations are indeed drawn from the true belief states, which relies on \cref{lemma:belief-is-cond-prob}.

\begin{theorem}\label{thm:belief-contraction}
There is a universal constant $C_{\ref{thm:belief-contraction}}>1$ with the following property. Fix an executable policy $\pi$, indices $1 \leq h < h+L \leq H$, and distributions $\cD,\cD' \in \Delta(\MS_h)$. Then for any partial history $(x_{1:h},a_{1:h-1})$ it holds that
\[\EE_{s_h \sim \cD'} \EE^\pi[\TV(\belapx_{h+L}(x_{h+1:h+L}, a_{h:h+L-1};\cD'), \belapx_{h+L}(x_{h+1:h+L}, a_{h:h+L-1};\cD)) \mid{} s_h] \leq (C_{\ref{thm:belief-contraction}}\delta)^{L/9} \norm{\frac{\cD'}{\cD}}_\infty\]
where the inner expectation is over partial trajectories $(x_{h+1:h+L}, a_{h:h+L-1})$ drawn from policy $\pi$ with the environment initialized in state $s_h$ at step $h$. 

As a consequence, it holds for any partial history $(x_{1:h},a_{1:h-1})$ that \[\EE^\pi[\TV(\bel_{h+L}(x_{1:h+L}, a_{1:h+L-1}), \belapx_{h+L}(x_{h+1:h+L}, a_{h:h+L-1};\cD))] \leq (C_{\ref{thm:belief-contraction}}\delta)^{L/9} \norm{\frac{\belief_h(x_{1:h},a_{1:h-1})}{\cD}}_\infty\] where the expectation is over trajectories drawn from $\pi$ conditioned on the partial history $(x_{1:h},a_{1:h-1})$.
\end{theorem}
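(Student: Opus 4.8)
The plan is to iterate the single-step, high-probability contraction of $\Dpr{\cdot}{\cdot}$ from \cref{cor:dpr-update} along a trajectory, and then feed the resulting process into the martingale estimate of \cref{lemma:martingale-bound-improved}.

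\textbf{Setup and per-step contraction.} Fix the partial history $(x_{1:h},a_{1:h-1})$ and work under the law in which $s_h\sim\cD'$ and then $a_h,s_{h+1},x_{h+1},a_{h+1},\dots$ are generated by the environment dynamics together with $\pi$ (whose action distribution at step $h+k$ reads off $x_{1:h+k}$, with $x_{1:h}$ the fixed prefix). Let $\mathcal{F}_n:=\sigma(a_{h:h+n-1},x_{h+1:h+n})$, let $b_n:=\belapx_{h+n}(x_{h+1:h+n},a_{h:h+n-1};\cD')$ and $b_n':=\belapx_{h+n}(x_{h+1:h+n},a_{h:h+n-1};\cD)$, and set $X_n:=\Dpr{b_n}{b_n'}$, which is $\mathcal{F}_n$-measurable; I may assume $\cD'\ll\cD$, else $\norm{\cD'/\cD}_\infty=\infty$ and the bound is vacuous. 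By the inductive definition of the approximate belief, $b_{n+1}=\BU_{h+n+1}(b_n;a_{h+n},x_{h+n+1})$ (and likewise for $b_{n+1}'$), and by the argument of \cref{lemma:belief-is-cond-prob,lemma:belapx-cond-prob} — applied to the POMDP restricted to steps $\ge h$ with initial state distribution $\cD'$, for which $\pi$ is executable with respect to the post-$h$ history — the conditional law of $x_{h+n+1}$ given $\mathcal{F}_n$ and $a_{h+n}$ is $(\BO_{h+n+1})^\t\bbP_{h+n+1}(a_{h+n})\cdot b_n$. Hence \cref{cor:dpr-update} applies at each step; averaging its two conclusions over $a_{h+n}\sim\pi$ gives $\EE[X_{n+1}\mid\mathcal{F}_n]\le 4X_n$ and $\Pr[X_{n+1}>4\delta^{1/3}X_n\mid\mathcal{F}_n]\le 2\delta^{1/3}$.

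\textbf{Martingale iteration.} Since $X_n$ may grow by a factor $4$ in expectation it is not a supermartingale, so I rescale: $Y_n:=4^{-n}X_n$ satisfies $\EE[Y_{n+1}\mid\mathcal{F}_n]\le Y_n$, $\EE[Y_0]=\Dpr{\cD'}{\cD}\le\norm{\cD'/\cD}_\infty=:S$, and $\Pr[Y_{n+1}>\epsilon Y_n\mid\mathcal{F}_n]\le\epsilon$ with $\epsilon:=2\delta^{1/3}$ (using $\{Y_{n+1}>8\delta^{1/3}Y_n\}\subseteq\{X_{n+1}>4\delta^{1/3}X_n\}$ and $\mathcal{F}_n$-measurability of $Y_n$). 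Assuming $\epsilon<3^{-6}$ — if not, $\delta$ exceeds a fixed constant and $(C_{\ref{thm:belief-contraction}}\delta)^{L/9}\ge 1\ge\TV$ once the universal constant is taken large enough — \cref{lemma:martingale-bound-improved} yields $\EE[\min(Y_L,S)]\le 2\cdot 3^L\epsilon^{L/3}S=2\,(3\cdot 2^{1/3})^L\delta^{L/9}S$. Finally, since $\TV(b_L,b_L')\le\min(X_L,1)\le\min(4^LY_L,4^LS)=4^L\min(Y_L,S)$ (valid because $S\ge 1$), I conclude $\EE[\TV(b_L,b_L')]\le 2\,(12\cdot 2^{1/3})^L\delta^{L/9}S\le(C_{\ref{thm:belief-contraction}}\delta)^{L/9}S$ for a suitable universal constant, which is exactly the first display of the theorem.

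\textbf{The consequence, and the main obstacle.} Specializing to $\cD':=\bel_h(x_{1:h},a_{1:h-1})$ gives the second display: by \cref{lemma:belief-is-cond-prob} the law ``$s_h\sim\cD'$, then run $\pi$'' coincides with $\bbP^\pi$ conditioned on $(x_{1:h},a_{1:h-1})$, and unwinding the inductive definitions of $\bel$ and $\belapx$ shows $\belapx_{h+L}(x_{h+1:h+L},a_{h:h+L-1};\bel_h(x_{1:h},a_{1:h-1}))=\bel_{h+L}(x_{1:h+L},a_{1:h+L-1})$; substituting both into the first display finishes. I expect the main obstacle to be the bookkeeping in the setup step — verifying that along the trajectory the observations are genuinely distributed according to the true belief $b_n$ built from the \emph{arbitrary} prior $\cD'$, which is exactly what makes \cref{cor:dpr-update} applicable at every step — together with the mildly fiddly rescaling that absorbs the factor-$4$ expected growth and the conversion of the $\min(Y_L,S)$ bound back to a $\TV$ bound without losing more than a constant per step in the exponent's base. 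The genuinely delicate part of the overall argument — that per step neither $\TV$ nor $\mathsf{KL}$ contracts but $\Dpr{\cdot}{\cdot}$ does with high probability (cf.\ \cref{example:tv-not-contract}) — has already been packaged into \cref{cor:dpr-update}.
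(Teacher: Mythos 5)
Your proposal is correct and follows essentially the same route as the paper's proof: iterate \cref{cor:dpr-update} along the rollout (justified via \cref{lemma:belief-is-cond-prob} for the POMDP re-initialized at $s_h\sim\cD'$), feed the $4^{-n}$-rescaled process into \cref{lemma:martingale-bound-improved}, and convert back to $\TV$ at a constant-per-step cost absorbed into $C_{\ref{thm:belief-contraction}}$. The only cosmetic differences are that the paper bakes the $4^{-t}$ factor directly into its definition of $X_t$ and leaves the $\epsilon<3^{-6}$ edge case implicit, which you handle explicitly.
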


\begin{proof}
We first observe that the second claim follows from the first claim by setting $\cD' := \belief_h(x_{1:h}, a_{1:h-1})$. Indeed, conditioned on $(x_{1:h},a_{1:h-1})$, the law of $s_h$ is precisely $\belief_h(x_{1:h},a_{1:h-1})$ (\cref{lemma:belief-is-cond-prob}), so drawing $(x_{h+1:h+L}, a_{h:h+L-1})$ conditioned on $(x_{1:h},a_{1:h-1})$ is equivalent to first drawing $s_h \sim \belief_h(x_{1:h},a_{1:h-1})$ and then drawing $(x_{h+1:h+L}, a_{h:h+L-1})$ from the POMDP initialized at $s_h$. Moreover, by the recursive definitions of $\belief$ and $\belapx$, we have
\[\belief_{h+L}(x_{1:h+L},a_{1:h+L-1}) = \belapx_{h+L}(x_{h+1:h+L},a_{h:h+L-1}; \belief_h(x_{1:h},a_{1:h-1})).\]

It remains to prove the first claim. Fix $(x_{1:h},a_{1:h-1})$. For $0 \leq t \leq L$, define the random variable
\[X_t := 4^{-t} \Dpr{\belapx_{h+t}(x_{h+1:h+t}, a_{h:h+t-1};\cD')}{\belapx_{h+t}(x_{h+1:h+t}, a_{h:h+t-1}; \cD)}\]
where $(x_{h+1:h+L}, a_{h:h+L-1})$ is drawn by sampling $s_h \sim \cD'$, initializing the POMDP at $s_h$, and then rolling out with policy $\pi$ (to be precise, the action distribution at step $h+t$ is $\pi(x_{1:h+t},a_{1:h+t-1})$). Note that the roll-out does not resample $x_h$, which is already fixed. Recall that $\Dpr{p}{q} := \TV(p,q) \norm{\frac{p}{q}}_\infty$, so that $\TV(p,q) \leq \Dpr{p}{q} \leq \norm{\frac{p}{q}}_\infty$ for any distributions $p,q$. Then 
\begin{align*}
X_0 
&= \Dpr{\belapx_h(\emptyset;\cD')}{\belapx_h(\emptyset;\cD)} \\ 
&= \Dpr{\cD'}{\cD} \\ 
&\leq \norm{\frac{\cD'}{\cD}}_\infty.
\end{align*}
Moreover, 
\begin{align}
&\TV(\belapx_{h+L}(x_{h+1:h+L}, a_{h:h+L-1};\cD'), \belapx_{h+L}(x_{h+1:h+L}, a_{h:h+L-1};\cD))  \nonumber \\
&\leq \min(\Dpr{\belapx_{h+L}(x_{h+1:h+L}, a_{h:h+L-1};\cD')}{\belapx_{h+L}(x_{h+1:h+L}, a_{h:h+L-1};\cD)}, 1) \nonumber \\ 
&\leq 4^L \min(X_L, 1) \label{eq:tv-to-xl}
\end{align}
since $\TV(p,q) \leq 1$ for any distributions $p,q$.
Fix $0 < t \leq L$ and condition on $(x_{h+1:h+t-1}, a_{h:h+t-2})$, which determines $X_{t-1}$. The conditional distribution of $a_{h+t-1}$ is then $\pi(x_{1:h+t-1}, a_{1:h+t-2})$, and for any fixed $a_{h+t-1}$ the conditional distribution of $x_{h+t}$ is $(\BO_{h+t})^\t \bbP_{h+t}(a_{h+t-1}) \cdot \belapx_{h+t-1}(x_{h+1:h+t-1},a_{h:h+t-2};\cD')$ by \cref{lemma:belief-is-cond-prob} (applied to the modified POMDP that is initialized to a latent state $s_h \sim \cD'$ immediately before the action $a_h$ is taken; for this POMDP $\belapx_{h+t-1}(x_{h+1:h+t-1},a_{h:h+t-2};\cD')$ is the true belief state). Recall that by definition,
\[\belapx_{h+t}(x_{h+1:h+t}, a_{h:h+t-1};\cD') = \BU_{h+t}(\belapx_{h+t-1}(x_{h+1:h+t-1},a_{h:h+t-2};\cD'), a_{h+t-1}, x_{h+t})\]
and
\[\belapx_{h+t}(x_{h+1:h+t},a_{h:h+t-1};\cD) = \BU_{h+t}(\belapx_{h+t-1}(x_{h+1:h+t-1}, a_{h:h+t-2};\cD), a_{h+t-1}, x_{h+t}).\]
By \cref{cor:dpr-update}, it holds in expectation (resp., in probability) over $x_{h+t}$, conditioned on the prior history, that
\begin{align*}
\EE[X_t] 
&= 4^{-t}\EE[\Dpr{\belapx_{h+t}(x_{h+1:h+t}, a_{h:h+t-1};\cD')}{\belapx_{h+t}(x_{h+1:h+t},a_{h:h+t-1};\cD)}] \\ 
&\leq 4^{1-t} \Dpr{\belapx_{h+t-1}(x_{h+1:h+t-1},a_{h:h+t-2};\cD')}{\belapx_{h+t-1}(x_{h+1:h+t-1}, a_{h:h+t-2};\cD)} \\
&= X_{t-1}
\end{align*}
and similarly
\[\Pr[X_t > \delta^{1/3} X_{t-1}] \leq 2\delta^{1/3}.\]
Since these bounds hold for any fixed $a_{h+t-1} \in \MA_h$, they also hold in expectation (resp., in probability) over the joint draws of $a_{h+t-1}$ and $x_{h+t}$, conditioned on any realization of $(x_{h+1:h+t-1}, a_{h:h+t-2})$. Thus, $\EE[X_t\mid{}X_{t-1}] \leq X_{t-1}$ and $\Pr[X_t > \delta^{1/3}X_{t-1} \mid{} X_{t-1}] \leq 2\delta^{1/3}$ both hold almost surely. We can now apply \cref{lemma:martingale-bound-improved} to the sequence $(X_0,\dots,X_L)$ with parameters $S := \norm{\frac{\cD'}{\cD}}_\infty$ and $\epsilon := 2\delta^{1/3}$; we get that
\[\EE[\min(X_L, 1)] \leq \EE[\min(X_L, S)] \leq 2 \cdot 3^L 2^{L/3}\delta^{L/9} \norm{\frac{\cD'}{\cD}}_\infty.\]
Combining this bound with \cref{eq:tv-to-xl}, and setting $C_{\ref{thm:belief-contraction}}$ to be a sufficiently large constant, completes the proof.
\end{proof}

\begin{remark}\label{remark:perturbed-block-golowich}
Any $\delta$-perturbed Block MDP is $\gamma$-observable 
with $\gamma = 1-2\delta$ (\cref{def:observable}): for any $h\in[H]$, we have $\BO_h = (1-\delta)\BOT_h + \delta E_h$, and thus given any $b,b'$, we have 
\begin{align*}
    \norm{\BO^\top_h b - \BO^\top_h b'}_1 &= 
    \norm{(1-\delta)\BOT^\top_h(b-b') + \delta E_h^\top(b-b')}_1 \\
    &\geq (1-\delta)\norm{\BOT^\top_h(b-b')}_1 - \delta\norm{E_h^\top(b-b')}_1 \\
    &\geq (1-\delta)\norm{b-b'}_1 - \delta \norm{E_h}_{\textrm{op}} \norm{b-b'}_1 \\
    &\geq (1-2\delta)\norm{b-b'}_1
\end{align*}
because $\BOT_h$ satisfies the block property. It was shown in \cite[Theorem 4.7]{golowich2023planning} that, for any $\gamma$-observable POMDP $\cP$, the belief contraction error can be bounded as $\epsilon_h^{\contraction}(\pi;L) \leq (1 - \gamma^4/2^{40})^L \cdot \bigoh(S)$. However, substituting in $\gamma := 1-2\delta$, we see that due to the constant factor of $2^{40}$, this bound does not asymptotically improve as $\delta$ decreases --- indeed, it is never better than $(1 - 1/2^{40})^L \cdot \bigoh(S)$ --- and moreover is vacuous for $L = \littleoh(\log S)$.
\end{remark}

\subsection{Approximate Decodability}\label{subsec:decodability}

In this section we prove \cref{prop:det-vinf-decay-main}, restated below as \cref{prop:vinf-decay}, which states that for $\delta$-perturbed Block MDPs with \emph{deterministic} latent transitions, the decodability error decays exponentially. The proof is somewhat analogous to that of \cref{thm:belief-contraction}; the key difference is that the claim that the transitions do not increase decodability error is only true for deterministic transitions (whereas the analogous claim for belief contraction error is unconditionally true).

For notational convenience, we make the following definition of the ``$\ell_\infty$ variance'' $\Vinf(b)$ for a given distribution $b$.

\begin{definition}
For any set $\MS$ and distribution $b \in \Delta(\MS)$, define $\Vinf(b) := 1 - \norm{b}_\infty$.
\end{definition}

The following lemma shows that the $\ell_\infty$ variance contracts by $\poly(\delta)$ with high probability under the Bayes operator.

\begin{lemma}\label{lemma:vinf-bayes-update}
Let $\delta>0$, and suppose that $\cP$ is a $\delta$-perturbed Block MDP with deterministic latent transitions (but potentially stochastic initial state). Fix $h \in [H]$ and $b \in \Delta(\MS_h)$. Draw $x \sim \BO_h(\cdot\mid{}b)$. Then $\EE[\Vinf(\BB_h(b;x))] \leq \min(\Vinf(b),\delta)$ and 
\[\Pr[\Vinf(\BB_h(b;x)) > \delta^{1/3} \Vinf(b)] \leq 3\delta^{1/3}.\]
\end{lemma}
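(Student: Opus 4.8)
The statement only involves the Bayes operator $\BB_h$, which does not see the latent transitions, so the ``deterministic transitions'' hypothesis will not actually be used. Everything reduces to two pointwise bounds on $\Vinf(\BB_h(b;x)) = 1 - \norm{\BB_h(b;x)}_\infty$. Since $\norm{\BB_h(b;x)}_\infty \ge \BB_h(b;x)(\phi(x))$ and, by the block property, $\BOT_h(x\mid s)=0$ for every $s\neq\phi(x)$, a direct computation gives
\[\Vinf(\BB_h(b;x)) \;\le\; 1 - \BB_h(b;x)(\phi(x)) \;=\; \frac{\delta\sum_{s\ne\phi(x)}E_h(x\mid s)b(s)}{\BO_h(x\mid b)} \;\le\; \frac{\delta\,E_h(x\mid b)}{\BO_h(x\mid b)}.\]
Likewise, writing $s^\star\in\argmax_s b(s)$ (so $\norm{b}_\infty=b(s^\star)$), the same identity with $\phi(x)$ replaced by $s^\star$ shows that \emph{on the event $\phi(x)=s^\star$} one has $\Vinf(\BB_h(b;x)) \le \frac{\delta\sum_{s\ne s^\star}E_h(x\mid s)b(s)}{\BO_h(x\mid b)}$.

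For the first claim, integrating the first pointwise bound over $x\sim\BO_h(\cdot\mid b)$ cancels the denominator and leaves $\delta\sum_x E_h(x\mid b)=\delta$, so $\EE[\Vinf(\BB_h(b;x))]\le\delta$. For the bound by $\Vinf(b)$, I would instead lower-bound $\EE[\norm{\BB_h(b;x)}_\infty] = \sum_x \max_s \BO_h(x\mid s)b(s) \ge \max_s\sum_x\BO_h(x\mid s)b(s) = \max_s b(s) = \norm{b}_\infty$ (pull a fixed maximizing $s$ out of the sum), whence $\EE[\Vinf(\BB_h(b;x))] = 1 - \EE[\norm{\BB_h(b;x)}_\infty] \le \Vinf(b)$. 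The degenerate case $\Vinf(b)=0$ is immediate, since then $b$ is a point mass and so is every $\BB_h(b;x)$.

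For the high-probability claim I would split on the size of $\Vinf(b)$ (assume $\Vinf(b)>0$, else trivial). If $\Vinf(b)\ge\delta^{1/3}$, then $\delta^{1/3}\Vinf(b)\ge\delta^{2/3}$, and Markov applied to $\EE[\Vinf(\BB_h(b;x))]\le\delta$ already gives probability $\le\delta/(\delta^{1/3}\Vinf(b))\le\delta^{1/3}$. If $\Vinf(b)<\delta^{1/3}$, I would bound the bad event by $\{\phi(x)\ne s^\star\}\cup\{\phi(x)=s^\star\text{ and }\Vinf(\BB_h(b;x))>\delta^{1/3}\Vinf(b)\}$. For the first piece, expanding $\BO_h(x\mid b) = (1-\delta)\BOT_h(x\mid\phi(x))b(\phi(x)) + \delta E_h(x\mid b)$ and summing, the block normalization $\sum_{x:\phi(x)=s}\BOT_h(x\mid s)=1$ yields $\Pr[\phi(x)\ne s^\star]\le(1-\delta)\Vinf(b)+\delta\le 2\delta^{1/3}$. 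For the second piece, on $\phi(x)=s^\star$ the second pointwise bound forces $\BO_h(x\mid b)<N(x)/(\delta^{1/3}\Vinf(b))$ with $N(x):=\delta\sum_{s\ne s^\star}E_h(x\mid s)b(s)\ge0$, so summing $\BO_h(x\mid b)$ over that set is at most $\frac{1}{\delta^{1/3}\Vinf(b)}\sum_{x:\phi(x)=s^\star}N(x)\le\frac{\delta\Vinf(b)}{\delta^{1/3}\Vinf(b)}=\delta^{2/3}\le\delta^{1/3}$, where the middle step again uses $\sum_x E_h(x\mid s)=1$. Adding the two pieces gives $3\delta^{1/3}$.

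The main difficulty is the high-probability bound when $\Vinf(b)$ is tiny. There the crude estimate $\Vinf(\BB_h(b;x))\le\delta E_h(x\mid b)/\BO_h(x\mid b)$ is of order $\delta$ for a constant fraction of observations --- much larger than the target threshold $\delta^{1/3}\Vinf(b)$ --- so plain Markov fails. The fix is to compare the posterior against $b$'s own mode $s^\star$ rather than against the decoded state $\phi(x)$: this makes the ``error mass'' $N(x)$ integrate only to $\delta\Vinf(b)$, small enough for the ratio-form Markov argument to clear the threshold. The price is that observations decoding to a state other than $s^\star$ (or with no block parent) must be handled separately, which is exactly why the event $\phi(x)\ne s^\star$ is peeled off and controlled by the block normalization identity.
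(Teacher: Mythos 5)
Your proof is correct and follows essentially the same route as the paper's: compare the posterior to the prior's mode $s^\star$, bound $\Pr[\phi(x)\neq s^\star]$ via the block decomposition, apply a Markov-type argument to the error mass on the event $\phi(x)=s^\star$ (which integrates to $\delta\,\Vinf(b)$), and split into cases according to whether $\Vinf(b)$ exceeds $\delta^{1/3}$. Your side remarks are also consistent with the paper — the deterministic-transition hypothesis is indeed not used for the Bayes operator itself (it only enters in the subsequent corollary for the belief-update operator).
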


\begin{proof}
Pick any $s^\star \in \MS_h$ such that $b_{s^\star} = \norm{b}_\infty$, and hence $\sum_{s \in \MS_h \setminus \{s^\star\}} b(s) = \Vinf(b)$. Then
\begin{align}
\EE[\Vinf(\BB_h(b;x))]
&= \sum_{x \in \cX_h} \BO_h(x \mid{} b) \left(1 - \max_{s \in \MS_h} \BB_h(b;x)(s)\right) \nonumber\\ 
&= \sum_{x \in \cX_h} \min_{s \in \MS_h} \left(\BO_h(x \mid{} b) - b(s) \BO_h(x \mid{} s)\right) \nonumber\\ 
&= \sum_{x \in \cX_h} \min_{s \in \MS_h} \sum_{s' \in \MS_h \setminus \{s\}} b(s') \BO_h(x \mid{} s') \label{eq:vinf-expr-init}
\end{align}
where the second equality is by the definition $\BB_h(b;x)(s) := \frac{b(s) \BO_h(x\mid{} s)}{\BO_h(x \mid{} b)}$ (\cref{def:bel-update}). First, \cref{eq:vinf-expr-init} implies that
\[\EE[\Vinf(\BB_h(b;x))] \leq \sum_{x \in \cX_h} \sum_{s' \in \MS_h \setminus \{s^\star\}} b(s') \BO_h(x\mid{}s') = \sum_{s' \in \MS_h \setminus \{s^\star\}} b(s') = \Vinf(b),\]
where the first equality uses the fact that $\BO_h(\cdot\mid{}s')$ is a distribution for any fixed $s'$. Next, \cref{eq:vinf-expr-init} implies that
\begin{align}
\EE[\Vinf(\BB_h(b;x))]
&\leq \sum_{x \in \cX_h} \sum_{s' \in \MS_h \setminus \{\phi(x)\}} b(s') \BO_h(x \mid{} s') \nonumber \\ 
&= \delta \sum_{x \in \cX_h} \sum_{s' \in \MS_h \setminus \{\phi(x)\}} b(s') E_h(x \mid{} s') \nonumber \\ 
&= \delta \sum_{s' \in \MS_h} b(s') \sum_{x \in \cX_h:\phi(x)\neq s'} E_h(x \mid{} s') \nonumber \\ 
&\leq \delta \sum_{s' \in \MS_h} b(s') \nonumber \\ 
&\leq \delta. \label{eq:vinf-exp-delta}
\end{align}
This, together with the preceding bound, proves the first claim of the lemma. Now consider the event that $\phi(x) = s^\star$. Since $\BO_h(x\mid{}b) \geq (1-\delta)b(\phi(x))\BOT_h(x\mid{}\phi(x))$, we have
\begin{equation} \Pr[\phi(x) \neq s^\star] = 1 - \sum_{x\in\cX_h:\phi(x)=s^\star}\BO_h(x \mid{} b) \leq 1 - (1-\delta) b(s^\star) = 1 - (1-\delta)(1-\Vinf(b)) \leq \delta + \Vinf(b).\label{eq:vinf-bad-event}\end{equation}
Moreover, by an analogous calculation as \cref{eq:vinf-expr-init},
\begin{align*}
\EE[\Vinf(\BB_h(b;x)) \mathbbm{1}[\phi(x) = s^\star]]
&= \sum_{x\in\cX_h:\phi(x)=s^\star} \BO_h(x \mid{} b) \left(1 - \max_{s \in \MS_h} \BB_h(b;x)(s)\right) \\ 
&= \sum_{x\in\cX_h:\phi(x)=s^\star} \min_{s \in \MS_h}\left(\BO_h(x\mid{}b) - b(s)\BO_h(x\mid{}s)\right) \\ 
&\leq \sum_{x\in\cX_h:\phi(x)=s^\star} \BO_h(x\mid{}b) - b(s^\star) \BO_h(x\mid{}s^\star) \\ 
&= \sum_{x\in\cX_h:\phi(x)=s^\star} \sum_{s \in \MS_h \setminus \{s^\star\}} b(s) \BO_h(x\mid{}s) \\ 
&= \delta \sum_{x\in\cX_h:\phi(x)=s^\star}\sum_{s \in \MS_h \setminus \{s^\star\}} b(s) E_h(x\mid{}s) \\ 
&\leq \delta \Vinf(b).
\end{align*}
It follows that
\begin{align*}
\Pr[\Vinf(\BB_h(b;x)) > \delta^{1/3} \Vinf(b)]
&\leq \Pr[\Vinf(\BB_h(b;x)) \mathbbm{1}[\phi(x)=s^\star] > \delta^{1/3} \Vinf(b)] + \Pr[\phi(x) \neq s^\star] \\ 
&\leq \delta^{2/3} + \delta + \Vinf(b)
\end{align*}
by Markov's inequality and \cref{eq:vinf-bad-event}. To conclude, we distinguish two cases. If $\Vinf(b) \leq \delta^{1/3}$, then we get
\[\Pr[\Vinf(\BB_h(b;x)) > \delta^{1/3} \Vinf(b)] \leq \delta^{2/3} + \delta + \delta^{1/3} \leq 3\delta^{1/3}\]
as needed. Otherwise, $\Vinf(b) > \delta^{1/3}$, so
\[\Pr[\Vinf(\BB_h(b;x)) > \delta^{1/3} \Vinf(b)] \leq \Pr[\Vinf(\BB_h(b;x)) > \delta^{2/3}] \leq \delta^{1/3}\]
by Markov's inequality and \cref{eq:vinf-exp-delta}. This completes the proof.
\end{proof}

Using \cref{lemma:vinf-bayes-update} and the assumption of deterministic latent dynamics, it is straightforward to show that the $\ell_\infty$ variance contracts by $\poly(\delta)$ with high probability under the \emph{belief update} operator:

\begin{corollary}\label{cor:vinf-belief-update}
Let $\delta>0$, and suppose that $\cP$ is a $\delta$-perturbed Block MDP with deterministic latent transitions (but potentially stochastic initial state). Fix $h \in [H]$ and $b \in \Delta(\MS_h)$. For any action $a_{h-1} \in \MA_{h-1}$, with expectation over $x_h \sim (\BO_h)^\t \bbP_h(a_{h-1}) \cdot b$, it holds that $\EE[\Vinf(\BU_h(b;a_{h-1},x_h))] \leq \Vinf(b)$ and
\[\Pr[\Vinf(\BU_h(b;a_{h-1},x_h)) > \delta^{1/3} \Vinf(b)] \leq 3\delta^{1/3}.\]
\end{corollary}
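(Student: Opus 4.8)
The plan is to reduce this to \cref{lemma:vinf-bayes-update} by peeling off the latent transition, using determinism only to argue that this transition cannot increase $\Vinf$. Recall from \cref{def:bel-update} that $\BU_h(b;a_{h-1},x_h) = \BB_h(\bbP_h(a_{h-1})\cdot b;\, x_h)$, so I would set $b' := \bbP_h(a_{h-1})\cdot b$. Since $(\BO_h)^\t b'(x) = \sum_{s}\BO_h(x\mid{}s) b'(s) = \BO_h(x\mid{}b')$, the draw $x_h \sim (\BO_h)^\t\bbP_h(a_{h-1})\cdot b$ appearing in the statement is exactly $x_h \sim \BO_h(\cdot\mid{}b')$, and both assertions become statements about $\Vinf(\BB_h(b';x_h))$ under this law.

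Next I would record the one deterministic-transition fact that is needed: $\Vinf(b')\leq\Vinf(b)$. Because $\bbP_h(a_{h-1})$ is deterministic, there is a map $f$ with $\bbP_h(a_{h-1})\cdot\mathbbm{1}_{s} = \mathbbm{1}_{f(s)}$, so $b'(s') = \sum_{s:\,f(s)=s'} b(s)$ and hence $\norm{b'}_\infty = \max_{s'}\sum_{s:\,f(s)=s'}b(s) \geq \max_s b(s) = \norm{b}_\infty$, i.e. $\Vinf(b') = 1-\norm{b'}_\infty \leq 1-\norm{b}_\infty = \Vinf(b)$. This monotonicity is precisely where determinism enters: a stochastic transition can spread mass and strictly increase $\Vinf$, which is why the analogous claim fails for general dynamics (and why the proof of \cref{thm:belief-contraction} does not need this assumption, but the decodability argument does).

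Then I would apply \cref{lemma:vinf-bayes-update} to the distribution $b'$. It yields $\EE[\Vinf(\BB_h(b';x_h))] \leq \min(\Vinf(b'),\delta) \leq \Vinf(b')\leq\Vinf(b)$, which is the first claim, and $\Pr[\Vinf(\BB_h(b';x_h)) > \delta^{1/3}\Vinf(b')] \leq 3\delta^{1/3}$. Since $\Vinf(b')\leq\Vinf(b)$, the event $\{\Vinf(\BB_h(b';x_h)) > \delta^{1/3}\Vinf(b)\}$ is contained in $\{\Vinf(\BB_h(b';x_h)) > \delta^{1/3}\Vinf(b')\}$, so it too has probability at most $3\delta^{1/3}$, giving the second claim.

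There is no genuinely hard step here: the substance has been front-loaded into \cref{lemma:vinf-bayes-update}, and what remains is the one-line pushforward monotonicity $\Vinf(b')\leq\Vinf(b)$ together with relaxing $\Vinf(b')$ to $\Vinf(b)$ in the tail bound. The only point requiring care is keeping track of which belief generates the observation — the pushforward belief $b'$, not $b$ — which is exactly what makes the reduction to \cref{lemma:vinf-bayes-update} go through cleanly.
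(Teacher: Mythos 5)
Your proposal is correct and follows essentially the same route as the paper's proof: decompose $\BU_h(b;a_{h-1},x_h)=\BB_h(\bbP_h(a_{h-1})\cdot b;x_h)$, apply \cref{lemma:vinf-bayes-update} to the pushforward belief $\bbP_h(a_{h-1})\cdot b$ (noting $x_h\sim\BO_h(\cdot\mid{}\bbP_h(a_{h-1})\cdot b)$), and use determinism of the transition matrix to get $\Vinf(\bbP_h(a_{h-1})\cdot b)\leq\Vinf(b)$, which transfers both the expectation and tail bounds. Your justification of the monotonicity step (mass aggregation under the induced map $f$) is equivalent to the paper's column-is-a-basis-vector argument.
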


\begin{proof}
From \cref{def:bel-update}, we have for any $x_h$ that $\BU_h(b;a_{h-1},x_h) = \BB_h(\bbP_h(a_{h-1}) \cdot b; x_h)$. Applying \cref{lemma:vinf-bayes-update} to the distribution $\bbP_h(a_{h-1}) \cdot b$ (observe that $x_h$ is indeed distributed according to $\BO_h(\cdot \mid{} \bbP_h(a_{h-1}) \cdot b)$), we get $\EE[\Vinf(\BU_h(b;a_{h-1}, x_h))] \leq \Vinf(\bbP_h(a_{h-1}) \cdot b)$ and
\[\Pr[\Vinf(\BU_h(b;a_{h-1},x_h)) > \delta^{1/3} \Vinf(\bbP_h(a_{h-1})\cdot b)] \leq 3\delta^{1/3}.\]
To complete the proof, it suffices to show that $\Vinf(\bbP_h(a_{h-1}) \cdot b) \leq \Vinf(b)$. Indeed, since the transitions are deterministic, the matrix $\bbP_h(a_{h-1}) \in \RR^{|\MS_h| \times |\MS_{h-1}|}$ satisfies that every column is a standard basis vector. Identify any $s^\star \in \MS_{h-1}$ with $b_{s^\star} = \norm{b}_\infty$. Then there is some $s_h \in \MS_h$ with $\bbP_h(a_{h-1})_{s_h, s^\star} = \bbP_h(s_h \mid{} s^\star, a_{h-1}) = 1$. But then the entry of $\bbP_h(a_{h-1}) \cdot b$ indexed by $s_h$ is at least $b_{s^\star}$. So indeed $\Vinf(\bbP_h(a_{h-1}) \cdot b) \leq \Vinf(b)$.
\end{proof}

We can now prove the following restatement of \cref{prop:det-vinf-decay-main}.

\begin{proposition}\label{prop:vinf-decay}
There is a universal constant $C_{\ref{prop:vinf-decay}}>1$ so that the following holds. Let $\delta>0$, and suppose that $\cP$ is a $\delta$-perturbed Block MDP with deterministic latent transitions (but potentially stochastic initial state). Fix any executable policy $\pi$ and index $h \in [H]$. It holds that
\[\EE^\pi[\Vinf(\bel_h(x_{1:h},a_{1:h-1}))] \leq \min(\delta, (C_{\ref{prop:vinf-decay}}\delta)^{(h-1)/9}).\]
\end{proposition}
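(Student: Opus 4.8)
The plan is to mirror the proof of \cref{thm:belief-contraction}, with the metric $\Dpr{\cdot}{\cdot}$ replaced by the $\ell_\infty$ variance $\Vinf(\cdot)$ and \cref{cor:dpr-update} replaced by \cref{cor:vinf-belief-update}. There are two bounds to prove, whose minimum is the claim. For the ``$\delta$'' bound, observe that $\bel_1(x_1) = \BB_1(\bbP_1;x_1)$ and, for $h \ge 2$, $\bel_h(x_{1:h},a_{1:h-1}) = \BU_h(\bel_{h-1}(x_{1:h-1},a_{1:h-2}); a_{h-1}, x_h) = \BB_h(\bbP_h(a_{h-1}) \cdot \bel_{h-1}; x_h)$; moreover, by \cref{lemma:belief-is-cond-prob}, conditioned on the history $(x_{1:h-1},a_{1:h-1})$ the observation $x_h$ is distributed as $\BO_h(\cdot \mid \bbP_h(a_{h-1}) \cdot \bel_{h-1})$ (and $x_1 \sim \BO_1(\cdot\mid\bbP_1)$). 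Applying \cref{lemma:vinf-bayes-update} to this conditional law and taking the total expectation under the policy-induced measure yields $\EE^\pi[\Vinf(\bel_h)] \le \delta$ for every $h \in [H]$. Since a $\delta$-perturbed Block MDP requires $\delta \le 1$, this already handles $h = 1$, so it remains to show $\EE^\pi[\Vinf(\bel_h)] \le (C_{\ref{prop:vinf-decay}}\delta)^{(h-1)/9}$ for $h \ge 2$.

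For the second bound, fix $h \ge 2$ and define $X_t := \Vinf(\bel_{t+1}(x_{1:t+1},a_{1:t}))$ for $t = 0,\dots,h-1$, under the law of trajectories drawn from $\pi$. The key claim is that $(X_0,\dots,X_{h-1})$ is a non-negative supermartingale with $\EE[X_0] \le 1$ and $\Pr[X_{t+1} > 3\delta^{1/3} X_t \mid X_t] \le 3\delta^{1/3}$ almost surely: indeed $\EE[X_0] = \EE^\pi[\Vinf(\bel_1)] \le \delta \le 1$, and conditioning on $(x_{1:t+1},a_{1:t+1})$ (where $a_{t+1} \sim \pi(\cdot\mid x_{1:t+1},a_{1:t})$), \cref{lemma:belief-is-cond-prob} identifies the conditional law of $x_{t+2}$ as $\BO_{t+2}(\cdot\mid\bbP_{t+2}(a_{t+1}) \cdot \bel_{t+1})$, so \cref{cor:vinf-belief-update} with $b = \bel_{t+1}$ gives $\EE[\Vinf(\bel_{t+2}) \mid x_{1:t+1},a_{1:t+1}] \le \Vinf(\bel_{t+1})$ and $\Pr[\Vinf(\bel_{t+2}) > \delta^{1/3}\Vinf(\bel_{t+1}) \mid x_{1:t+1},a_{1:t+1}] \le 3\delta^{1/3}$; averaging over $a_{t+1}$ and then conditioning down to $X_t$ (using $3\delta^{1/3} \ge \delta^{1/3}$) gives the stated properties. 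If $\delta \ge 3^{-21}$, then taking $C_{\ref{prop:vinf-decay}} \ge 3^{21}$ makes $(C_{\ref{prop:vinf-decay}}\delta)^{(h-1)/9} \ge 1 \ge \Vinf(\bel_h)$, so the bound is trivial. If $\delta < 3^{-21}$, then $\epsilon := 3\delta^{1/3} < 3^{-6}$ and \cref{lemma:martingale-bound-improved} applies with $S := 1$ and $L := h-1$, giving $\EE^\pi[\Vinf(\bel_h)] = \EE[\min(X_{h-1},1)] \le 2 \cdot 3^{h-1}(3\delta^{1/3})^{(h-1)/3} = 2 \cdot 3^{(4/3)(h-1)}\delta^{(h-1)/9}$ (using $X_{h-1} \le 1$); since $h-1 \ge 1$, this is at most $(C_{\ref{prop:vinf-decay}}\delta)^{(h-1)/9}$ as long as $C_{\ref{prop:vinf-decay}} \ge 2^9 \cdot 3^{12}$. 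Thus $C_{\ref{prop:vinf-decay}} := 3^{21}$ works in all cases, and combining with the ``$\delta$'' bound gives the claim.

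The main obstacle is the measure-theoretic bookkeeping, exactly as in \cref{thm:belief-contraction}: one must carefully use \cref{lemma:belief-is-cond-prob} to confirm that, along a trajectory generated by an \emph{executable} policy, the next observation is genuinely drawn from $\BO_h(\cdot\mid\bbP_h(a_{h-1}) \cdot \bel_{h-1})$ --- the input distribution assumed by \cref{lemma:vinf-bayes-update} and \cref{cor:vinf-belief-update}. The second delicate point is that \cref{cor:vinf-belief-update}, and hence the supermartingale property, genuinely requires \emph{deterministic} latent transitions, via the inequality $\Vinf(\bbP_h(a)\cdot b) \le \Vinf(b)$; this is the sole place the determinism hypothesis enters, and it is precisely what breaks in the stochastic regime (cf.\ \cref{prop:stoch-decodability-lb-main}). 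The constant-tracking needed to absorb the $3^{(4/3)(h-1)}$ prefactor into $C_{\ref{prop:vinf-decay}}^{(h-1)/9}$ is routine once one uses $h - 1 \ge 1$.
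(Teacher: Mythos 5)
Your proof is correct and follows essentially the same route as the paper's: the same supermartingale $X_t = \Vinf(\bel_t)$ built from \cref{cor:vinf-belief-update} via \cref{lemma:belief-is-cond-prob}, followed by \cref{lemma:martingale-bound-improved} with $S=1$ and $\epsilon = 3\delta^{1/3}$. The only (harmless) differences are cosmetic: you obtain the $\EE^\pi[\Vinf(\bel_h)]\le\delta$ bound by applying \cref{lemma:vinf-bayes-update} conditionally at step $h$ rather than combining $\EE[X_1]\le\delta$ with the supermartingale property, and you spell out the constant-tracking and the $\epsilon<3^{-6}$ case split that the paper leaves implicit.
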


\begin{proof}
Define a sequence of random variables $X_t := \Vinf(\bel_t(x_{1:t}, a_{1:t-1}))$ for $1 \leq t \leq h$, where $(x_{1:h}, a_{1:h-1})$ is a random trajectory drawn from policy $\pi$. We have $X_1 \leq 1$ almost surely. By the same argument as in \cref{thm:belief-contraction} (except using \cref{cor:vinf-belief-update} rather than \cref{cor:dpr-update}), we have for all $1 < t \leq h$ that $\EE[X_t \mid{} X_{t-1}] \leq X_{t-1}$ and $\Pr[X_t > \delta^{1/3} X_{t-1} \mid{} X_{t-1}] \leq 3\delta^{1/3}$ hold almost surely. Thus, \cref{lemma:martingale-bound-improved} applied to $(X_1,\dots,X_h)$ with parameters $S := 1$ and $\epsilon := 3\delta^{1/3}$ implies that 
\[\EE[X_h] = \EE[\min(X_h, 1)] \leq 2 \cdot 3^{h-1} 3^{(h-1)/3} \delta^{(h-1)/9} \leq (C_{\ref{prop:vinf-decay}}\delta)^{(h-1)/9}\]
so long as $C_{\ref{prop:vinf-decay}}$ is sufficiently large. Additionally, we know that $\bel_1(x_1) = \BB_1(\bbP_1;x_1)$ so \[\EE[X_1] = \EE^\pi[\Vinf(\BB_1(\bbP_1;x_1))] \leq \delta\] by \cref{lemma:vinf-bayes-update} and the fact that $x_1$ has distribution $\BO_1(\cdot \mid{} \bbP_1)$. Thus, $\EE[X_t] \leq \delta$ for all $1 \leq t \leq h$.
\end{proof}

\subsection{Misspecification Lower Bound for Stochastic Dynamics}\label{subsec:misspec-lb}

In this section we prove the following restatement of \cref{prop:stoch-decodability-lb-main}, which shows that in a $\delta$-perturbed Block MDP with general (stochastic) latent transitions, the misspecification of the optimal latent policy with respect to the class of executable policies can be as large as $\Omega(\delta H)$ (for $\delta \leq 1/H$). This implies an analogous lower bound on decodability error, i.e. it cannot improve exponentially as $h$ increases, unlike the case of deterministic latent transitions. Moreover, it shows a fundamental source of (horizon-dependent) error that is not mitigated by increasing the frame-stack $L$: since the following bound applies to all executable policies, it also applies to the class of $L$-step executable policies for any $L$.

\begin{proposition}\label{prop:stoch-decodability-lb}
Let $\delta>0$ and $H\in \NN$. There is a $\delta$-perturbed Block MDP $\cP$ with horizon $H$ such that the optimal latent policy $\pilat$ satisfies
\[\min_{\pi\in\Pi} \TV(\bbP^{\pilat}, \bbP^\pi) \geq \Omega(\min(1, \delta H))\]
where $\Pi$ is the class of executable policies.
\end{proposition}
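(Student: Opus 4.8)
The plan is to exhibit a \emph{uniformly mixing} $\delta$-perturbed Block MDP, in which prior observations carry no information about the current latent state, so that every executable policy incurs an irreducible probability $\Theta(\delta)$ of mis-decoding the state at each of the $H$ steps, and these errors compound. Concretely, I would take $\cS_h = \cX_h = \cA_h = \{1,2\}$ for all $h$, let every latent transition kernel be $\bbP_h(\cdot \mid s,a) = \Unif(\{1,2\})$ with the initial distribution uniform, and set the emissions to $\BOT_h(x\mid s) := \mathbbm{1}[x=s]$ (which satisfies the block property trivially, since $\cX_h$ is identified with $\cS_h$) and $E_h(x\mid s) := 1/2$, so that $\BO_h(x\mid s) = (1-\delta)\mathbbm{1}[x=s] + \delta/2$; this is a valid $\delta$-perturbed Block MDP. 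Finally I would put $R_h(s,a) := \tfrac1H\mathbbm{1}[a=s]$, which keeps $\sum_h r_h \le 1$ almost surely. Because the latent transitions do not depend on the action, the latent MDP decouples across timesteps, so the unique optimal latent policy is $\pilat_h(s) = s$; and under $\bbP^{\pilat}$ the event $A := \{a_h = s_h \text{ for all } h \in [H]\}$ has probability $1$.

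Next I would upper bound $\bbP^\pi[A]$ for an arbitrary (history-dependent, possibly randomized) executable policy $\pi$. The key point is that, since every transition kernel equals $\Unif(\{1,2\})$, the latent state $s_h$ is independent of the entire past $(s_{1:h-1}, x_{1:h-1}, a_{1:h-1})$, so that conditioning additionally on $x_h$ and applying Bayes' rule gives $\bbP[s_h = x_h \mid x_{1:h}, a_{1:h-1}] = 1-\delta/2$ and $\bbP[s_h \ne x_h \mid x_{1:h}, a_{1:h-1}] = \delta/2$. Since $\pi$ is executable, $a_h$ is conditionally independent of $s_h$ given $(x_{1:h}, a_{1:h-1})$, whence $\bbP^\pi[a_h = s_h \mid x_{1:h}, a_{1:h-1}] \le \max_a \bbP[s_h = a \mid x_h] = 1-\delta/2$. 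I would then chain these bounds: conditioning on $\sigma(s_{1:h-1}, x_{1:h}, a_{1:h-1})$ makes $\prod_{k<h}\mathbbm{1}[a_k=s_k]$ measurable while the conditional expectation of $\mathbbm{1}[a_h = s_h]$ is at most $1-\delta/2$, so peeling off indicators from $h=H$ down to $h=1$ with the tower property yields $\bbP^\pi[A] = \En^\pi\big[\prod_{h=1}^H \mathbbm{1}[a_h=s_h]\big] \le (1-\delta/2)^H$.

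It then follows that $\TV(\bbP^{\pilat}, \bbP^\pi) \ge \bbP^{\pilat}[A] - \bbP^\pi[A] \ge 1 - (1-\delta/2)^H$ for every executable $\pi$. An elementary two-case estimate (using $1-t\le e^{-t}$ and $1-e^{-x}\ge x/2$ for $x\in[0,1]$, splitting on whether $\delta H \le 2$) gives $1-(1-\delta/2)^H \ge \tfrac14\min(1,\delta H)$, which is the claimed lower bound; since $\Pi^L \subseteq \Pi$ for every $L$, the same bound holds over the class of $L$-step executable policies, recovering the main-text statement.

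The main obstacle is the conditional-independence bookkeeping in the chaining step rather than any single hard inequality: one must argue carefully that under an \emph{arbitrary} executable policy the latent state $s_h$ is genuinely independent of the observed history, so that the Bayes-optimal decoder succeeds with probability exactly $1-\delta/2$ no matter what $\pi$ did earlier, and that the product of indicators telescopes cleanly under the tower property. Verifying the block property, the optimality of $\pilat$, and the final scalar inequality are all routine.
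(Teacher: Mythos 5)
Your proposal is correct and follows essentially the same route as the paper's proof: a uniformly mixing latent MDP with reward $\frac{1}{H}\mathbbm{1}[a_h=s_h]$, where the event $\{a_h=s_h\ \forall h\}$ has probability $1$ under $\pilat$ but at most $(1-\Theta(\delta))^H$ under any executable policy, giving the TV lower bound. The only differences are cosmetic — you use uniform perturbation noise $E_h \equiv 1/2$ (flip probability $\delta/2$) where the paper flips the observation with probability $\delta$, and you spell out the tower-property chaining that the paper states more tersely.
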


\begin{proof}
We define $\cP$ as follows. For all $h \in [H]$, define the latent state space and observation space to be $\cS_h := \cX_h := \{0,1\}$; also define $\cA_h := \{0,1\}$. Let the initial distribution and latent transition dynamics at each step be uniformly random (independent of the previous state and action). For each $h \in [H]$, define the reward function $R_h: \cS_h \times \cA_h \to [0,1]$ be defined by $R_h(s,a) = \frac{1}{H}\mathbbm{1}[s=a]$. Define the observation distribution $\BO_h: \cS_h \to \Delta(\cS_h)$ so that $\BO_h(s\mid{}s) = 1-\delta$ and $\BO_h(1-s\mid{}s) = \delta$.

It is clear that $\cP$ is a $\delta$-perturbed Block MDP. Under the trajectory distribution $\bbP^{\pilat}$ induced by the optimal latent policy $\pilat$, it holds that $a_h = s_h$ for all $h \in [H]$ with probability $1$. However, for any executable policy $\pi$, for any step $h$ and history $\tau_{1:h-1} = (s_{1:h-1}, x_{1:h-1}, a_{1:h-1})$, it holds that $\Pr^\pi[a_h=s_h \mid{} \tau_{1:h-1}] \leq 1-\delta$ since the prior history is independent of $s_h$, and the conditional distribution $s_h \mid{} x_h$ has only $1-\delta$ mass on $x_h$. Thus,
\[\Pr^\pi[\forall h \in [H]: a_h = s_h] \leq (1-\delta)^H.\]
If $\delta \geq 1/H$ then $(1-\delta)^H \leq e^{-1} \leq 1 - \Omega(1)$. Otherwise, $(1-\delta)^H \leq 1-\Omega(\delta H)$. Thus, $\TV(\bbP^\pi,\bbP^{\pilat}) \geq \Omega(\min(1,\delta H))$ as claimed.
\end{proof}

\section{Omitted Proofs for Expert Distillation}
\subsection{Misspecification Bounds for Composed Policies}

In this section we prove upper bounds on the misspecification of a latent policy (with respect to certain executable policies obtained by \emph{composing} the latent with some belief state) in terms of instance-dependent error metrics. The first main result is \cref{lemma:tv-beltil-to-latent} (a restatement of \cref{lemma:tv-beltil-to-latent-main}), where the upper bound is in terms of decodability error (\cref{def:decodability-error}) and error in approximating the true belief state. The second main result is \cref{lemma:smoothness-to-latent}, where the decodability error term is replaced by \emph{action-prediction error} (\cref{def:action-prediction}).

To prove \cref{lemma:tv-beltil-to-latent}, it is convenient to first analyze the policy that samples a state from the true belief state induced by the current history, and then samples an action from $\pilat$ accordingly. The key technical observation, below, encapsulates the intuition that resampling a near-deterministic random variable is likely to yield the same realization.

\begin{lemma}\label{lemma:resampling-tv}
Let $\bbP_{Y,\Ynuis}$ be a joint distribution over random variables $(Y,\Ynuis)$. Let $\bbP_{Z|Y}$ be a conditional distribution. Define $\bbQ_{Z}$ as follows. Resample $Y' \sim \bbP_{Y}$ and then sample $Z \sim \bbP_{Z|Y'}$. Then
\[\TV(\bbP_{Y,\Ynuis}\bbP_{Z|Y}, \bbP_{Y,\Ynuis}\bbQ_{Z}) \leq 2\Vinf(\bbP_Y).\]
Additionally,
\[\TV(\bbP_{Y,\Ynuis}\bbP_{Z|Y}, \bbP_{Y,\Ynuis}\bbQ_{Z}) \leq 2\Vinf(\bbP_Z).\]
\end{lemma}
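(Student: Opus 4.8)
The plan is to first identify $\bbQ_Z$ explicitly. Since the conditional $\bbP_{Z|Y}$ depends on $Y$ alone (not on $\Ynuis$), resampling $Y'\sim\bbP_Y$ and drawing $Z\sim\bbP_{Z|Y'}$ produces exactly the $Z$-marginal of $\bbP_{Y,\Ynuis}\bbP_{Z|Y}$, which I will call $\bbP_Z := \sum_y \bbP_Y(y)\,\bbP_{Z|Y=y}$. Thus $\bbP_{Y,\Ynuis}\bbQ_Z = \bbP_{Y,\Ynuis}\otimes\bbP_Z$. Expanding total variation as a half-$\ell_1$ norm, factoring out $\bbP_{Y,\Ynuis}(y,\ynuis)$, and summing out the nuisance coordinate $\ynuis$ gives the identity
\[\TV\prn*{\bbP_{Y,\Ynuis}\bbP_{Z|Y},\,\bbP_{Y,\Ynuis}\bbQ_Z} = \sum_y \bbP_Y(y)\,\TV\prn*{\bbP_{Z|Y=y},\bbP_Z}.\]
Both claimed bounds then follow by estimating the right-hand side in two different ways.

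For the $2\Vinf(\bbP_Y)$ bound, I would let $y^\star \in \argmax_y \bbP_Y(y)$, so that $\bbP_Y(y^\star) = 1 - \Vinf(\bbP_Y)$. Writing $\bbP_Z - \bbP_{Z|Y=y} = \sum_{y'} \bbP_Y(y')\prn*{\bbP_{Z|Y=y'} - \bbP_{Z|Y=y}}$ (the convex weights sum to one), applying the triangle inequality, and bounding each $\TV(\bbP_{Z|Y=y'},\bbP_{Z|Y=y})$ by $1$ — and by $0$ when $y'=y$ — yields $\TV(\bbP_{Z|Y=y^\star},\bbP_Z) \le \sum_{y'\neq y^\star}\bbP_Y(y') = \Vinf(\bbP_Y)$, and $\TV(\bbP_{Z|Y=y},\bbP_Z) \le 1$ for every $y \neq y^\star$. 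Substituting into the identity, the $y=y^\star$ term contributes at most $\bbP_Y(y^\star)\Vinf(\bbP_Y) \le \Vinf(\bbP_Y)$ and the remaining terms contribute at most $\sum_{y\neq y^\star}\bbP_Y(y) = \Vinf(\bbP_Y)$, for a total of $2\Vinf(\bbP_Y)$.

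For the $2\Vinf(\bbP_Z)$ bound, I would let $z^\star \in \argmax_z \bbP_Z(z)$, so $\bbP_Z(z^\star) = 1 - \Vinf(\bbP_Z)$. By the triangle inequality through the point mass $\delta_{z^\star}$, for every $y$ we have $\TV(\bbP_{Z|Y=y},\bbP_Z) \le \prn*{1 - \bbP_{Z|Y=y}(z^\star)} + \prn*{1 - \bbP_Z(z^\star)}$. Averaging over $y \sim \bbP_Y$ and using $\sum_y \bbP_Y(y)\bbP_{Z|Y=y}(z^\star) = \bbP_Z(z^\star)$, the first term averages to $1 - \bbP_Z(z^\star) = \Vinf(\bbP_Z)$ and the second equals $\Vinf(\bbP_Z)$, giving $2\Vinf(\bbP_Z)$ in total.

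The argument is short and essentially self-contained; the only point requiring a little care is choosing, in the first bound, the convex decomposition of $\bbP_Z$ around the dominant mass $\bbP_Y(y^\star)$ so that near-determinism of $Y$ is actually exploited. I do not expect a genuine obstacle — the lemma is a clean ``resampling a near-deterministic variable barely changes the joint law'' statement, proven by two applications of the triangle inequality for total variation.
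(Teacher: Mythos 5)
Your proposal is correct. Both of your bounds go through: the identity $\TV(\bbP_{Y,\Ynuis}\bbP_{Z|Y},\,\bbP_{Y,\Ynuis}\otimes\bbP_Z)=\sum_y \bbP_Y(y)\,\TV(\bbP_{Z|Y=y},\bbP_Z)$ is exact because the two joint laws share the $(Y,\Ynuis)$-marginal and $Z$ is conditionally independent of $\Ynuis$ given $Y$ in the first one (and $\bbQ_Z$ is indeed the marginal $\bbP_Z$); the convex decomposition around $y^\star$ and the triangle inequality through $\delta_{z^\star}$ then give the two stated bounds with the claimed constants. Your route differs from the paper's: the paper constructs an explicit coupling of the two processes (resample $Y'$ independently, reuse $Z$ when $Y'=Y$, otherwise redraw $Z'\sim\bbP_{Z\mid Y'}$) and bounds the disagreement probability by $\Pr[Y\neq Y']\leq 2\Vinf(\bbP_Y)$ for the first claim and by $\Pr[Z\neq Z']\leq 2\Vinf(\bbP_Z)$ for the second (since both $Z$ and $Z'$ have marginal $\bbP_Z$), whereas you compute the total variation analytically via the conditional decomposition and two applications of the triangle inequality. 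The coupling proof is shorter and makes the ``resampling a near-deterministic variable rarely changes anything'' intuition literal; your analytic proof has the modest advantage of passing through an exact identity for the TV distance (so it is transparent exactly where the factor of $2$ is lost) and avoids having to verify that the constructed coupling has the right marginals. Both yield the same constants, and either argument would serve in place of the paper's.
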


\begin{proof}
Consider the process where we draw $(Y,\Ynuis) \sim \bbP_{Y,\Ynuis}$, $Z \sim \bbP_{Z|Y}$, and $Y' \sim \bbP_Y$. If $Y' = Y$ then we set $Z' = Z$; otherwise we sample $Z' \sim \bbP_{Z|Y'}$. Then $(Y,\Ynuis,Z)$ is distributed according to $\bbP_{Y,\Ynuis}\bbP_{Z|Y}$, and $(Y,\Ynuis, Z')$ is distributed according to $\bbP_{Y,\Ynuis}\bbQ_{Z}$. Thus, we have defined a coupling. Moreover,
\begin{align*}
\Pr[(Y,\Ynuis,Z)\neq (Y,\Ynuis,Z')] 
&\leq \Pr[Y \neq Y'] \\ 
&\leq 2\Pr[Y \neq \argmax_y \bbP_Y(y)] \\ 
&= 2 \Vinf(\bbP_Y)
\end{align*}
as needed for the first claim. For the second claim,
\begin{align*}
\Pr[(Y,\Ynuis,Z)\neq (Y,\Ynuis,Z')] 
&\leq \Pr[Z \neq Z'] \\
&= 2 \Vinf(\bbP_Z)
\end{align*}
as needed.
\end{proof}

\begin{lemma}\label{lemma:tv-bel-to-latent}
Let $\pilat \in \Pilat$ be a latent (Markovian) policy and define the executable policy $\pi$ by 
\[\pi(x_{1:h},a_{1:h-1}) := \pilat \circ \bel_h(x_{1:h},a_{1:h-1}).\]
Then 
\[\TV(\bbP^{\pilat}, \bbP^{\pi}) \leq 2\sum_{h=1}^H \EE^{\pi}[\Vinf(\bel_h(x_{1:h}, a_{1:h-1}))].\]
\end{lemma}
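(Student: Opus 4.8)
The plan is a hybrid (telescoping) argument over the horizon, converting one step at a time from the latent policy $\pilat$ to the belief-composed policy $\pi$, with each single-step conversion controlled by \cref{lemma:resampling-tv}. For $0 \le k \le H$, let $\bbP^{(k)}$ denote the law of a trajectory $(s_{1:H},x_{1:H},a_{1:H})$ generated with the usual dynamics and emissions ($s_h \sim \bbP_h(s_{h-1},a_{h-1})$, $x_h \sim \BO_h(s_h)$), where the actions $a_1,\dots,a_k$ are drawn executably via $\pi$ (i.e.\ $a_h \sim \pilat \circ \bel_h(x_{1:h},a_{1:h-1})$) and the actions $a_{k+1},\dots,a_H$ are drawn via the latent policy (i.e.\ $a_h \sim \pilat_h(\cdot\mid s_h)$). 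Then $\bbP^{(0)} = \bbP^{\pilat}$ and $\bbP^{(H)} = \bbP^{\pi}$, so by the triangle inequality it suffices to prove that $\TV(\bbP^{(k-1)},\bbP^{(k)}) \le 2\,\EE^\pi[\Vinf(\bel_k(x_{1:k},a_{1:k-1}))]$ for each $k \in [H]$.

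Fix $k$. The laws $\bbP^{(k-1)}$ and $\bbP^{(k)}$ use an identical generative process up through $(s_{1:k},x_{1:k},a_{1:k-1})$ — and on this prefix the actions are chosen executably — and they also apply the identical Markov kernel to generate $(s_{k+1:H},x_{k+1:H},a_{k+1:H})$ from $(s_k,a_k)$ (namely, rolling out the latent policy). They differ only in the conditional law of $a_k$: under $\bbP^{(k-1)}$ it is $\pilat_k(\cdot\mid s_k)$, and under $\bbP^{(k)}$ it is $\pilat \circ \bel_k(x_{1:k},a_{1:k-1})$. Hence, by the data-processing inequality, $\TV(\bbP^{(k-1)},\bbP^{(k)})$ is at most the $\TV$-distance between the two laws of $(s_{1:k},a_k)$; here and below we condition implicitly on the observable history $(x_{1:k},a_{1:k-1})$, which has a common law under both hybrids. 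Since the prefix uses executable actions, \cref{lemma:belief-is-cond-prob} (whose proof uses only executability of the actions played before step $k$) gives that, conditioned on $(x_{1:k},a_{1:k-1})$, the latent state $s_k$ is distributed as $\bel_k(x_{1:k},a_{1:k-1})$ under both hybrids. Now apply \cref{lemma:resampling-tv} with $Y := s_k$, $\Ynuis := s_{1:k-1}$, and $Z := a_k$ with kernel $\bbP_{Z\mid Y} := \pilat_k(\cdot\mid Y)$: this reproduces the $\bbP^{(k-1)}$ law of $(s_{1:k},a_k)$; and since the marginal of $Y$ equals $\bel_k(x_{1:k},a_{1:k-1})$, resampling $Y' \sim \bbP_Y$ and drawing $Z\sim\pilat_k(\cdot\mid Y')$ reproduces exactly the $\bbP^{(k)}$ law. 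The lemma therefore yields the bound $2\,\Vinf(\bel_k(x_{1:k},a_{1:k-1}))$ conditionally on the observable history; taking expectations over the history under $\bbP^\pi$ and summing over $k \in [H]$ completes the proof.

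I expect the main obstacle to be the bookkeeping in the second paragraph: carefully verifying along each hybrid that the conditional law of the current latent state given the observable history is the true belief state (so that \cref{lemma:belief-is-cond-prob} legitimately applies), and that two successive hybrids share a common suffix kernel so that the comparison genuinely reduces to the single-step action laws. Once those two structural facts are isolated, the per-step bound is an immediate instance of \cref{lemma:resampling-tv}, and the telescoping over $h=1,\dots,H$ is routine.
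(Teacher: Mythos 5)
Your proposal is correct and follows essentially the same route as the paper's proof: the same hybrid/telescoping construction over which step's action switches from $\pilat$ to $\pi$, the same identification (via \cref{lemma:belief-is-cond-prob} and executability of the prefix actions) of the conditional law of $s_h$ given the observable history as $\bel_h$, and the same per-step application of \cref{lemma:resampling-tv} with $Y=s_h$, $\Ynuis=s_{1:h-1}$, $Z=a_h$ after conditioning on the history. The bookkeeping you flag (common suffix kernel and common history law across consecutive hybrids) is exactly how the paper argues, so no gap remains.
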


\begin{proof}
For each $1 \leq h \leq H+1$ let $\pi \circ_h \pilat$ denote the policy that follows $\pi$ for the first $h-1$ actions and subsequently follows $\pilat$. Since $\pi \circ_1 \pilat = \pilat$ and $\pi \circ_{H+1} \pilat = \pi$, it suffices to show that, for each $h \in [H]$,
\[\TV(\bbP^{\pi \circ_h \pilat}, \bbP^{\pi \circ_{h+1} \pilat}) \leq 2\EE^\pi[\Vinf(\bel_h(x_{1:h},a_{1:h-1}))].\]
Observe that both distributions have the same conditional distributions over $(s_{h+1:H},x_{h+1:H},a_{h+1:H})$ given $(s_{1:h},x_{1:h},a_{1:h})$. By this fact and the data processing inequality,
\[\TV(\bbP^{\pi \circ_h \pilat}, \bbP^{\pi \circ_{h+1} \pilat}) = \TV(\bbP^{\pi \circ_h \pilat}_{X,Y,\Ynuis,Z}, \bbP^{\pi \circ_{h+1} \pilat}_{X,Y,\Ynuis,Z})\]
where $X = (x_{1:h},a_{1:h-1})$, $Y = s_h$, $\Ynuis = s_{1:h-1}$, and $Z = a_h$. Both distributions have the same marginal over $(X,Y,\Ynuis)$. The distribution of $Y|X$ is precisely $\bel_h(x_{1:h},a_{1:h-1})$ by \cref{lemma:belief-is-cond-prob} and the fact that $\pi$ is executable. In the former distribution, $a_h$ is generated by sampling from $\pilat(s_h)$. In the latter distribution, $a_h$ is generated by sampling $s'_h \sim \bel_h(x_{1:h},a_{1:h-1})$ and then sampling from $\pilat(s'_h)$. Thus, the conditions of \cref{lemma:resampling-tv} are met (after conditioning out $X$), and we get
\[\TV(\bbP^{\pi \circ_h \pilat}, \bbP^{\pi \circ_{h+1} \pilat}) \leq 2\EE^\pi[\Vinf(\bel_h(x_{1:h},a_{1:h-1}))]\]
as needed.
\end{proof}

We now prove the following restatement of \cref{lemma:tv-beltil-to-latent-main}.

\begin{lemma}\label{lemma:tv-beltil-to-latent}
Let $\pilat \in \Pilat$ be a latent (Markovian) policy and let $\beltil_{1:H}$ be a collection of functions $\beltil_h: \cX^h \times \cA^{h-1} \to \Delta(\cS_h)$. Define the executable policy $\pitil$ by 
\[\pitil(x_{1:h},a_{1:h-1}) := \pilat \circ \beltil_h(x_{1:h},a_{1:h-1}).\]
Then 
\[\TV(\bbP^{\pilat}, \bbP^{\pitil}) \leq \sum_{h=1}^H \EE^{\pi}\left[2\Vinf(\bel_h(x_{1:h}, a_{1:h-1})) + \norm{\bel_h(x_{1:h}, a_{1:h-1}) - \beltil_h(x_{1:h}, a_{1:h-1})}_1\right]\]
and
\[\TV(\bbP^{\pilat}, \bbP^{\pitil}) \leq \sum_{h=1}^H \EE^{\pi}\left[2\Vinf(\bel_h(x_{1:h}, a_{1:h-1}))\right] + \EE^{\pitil}\left[\norm{\bel_h(x_{1:h}, a_{1:h-1}) - \beltil_h(x_{1:h}, a_{1:h-1})}_1\right]\]
where $\pi(x_{1:h},a_{1:h-1}) := \pilat \circ \bel_h(x_{1:h},a_{1:h-1})$.
\end{lemma}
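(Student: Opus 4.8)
The plan is to route through the intermediate executable policy $\pi$ — which is already defined in the lemma statement by $\pi(x_{1:h},a_{1:h-1}) := \pilat \circ \bel_h(x_{1:h},a_{1:h-1})$ — and apply the triangle inequality, $\TV(\bbP^{\pilat},\bbP^{\pitil}) \le \TV(\bbP^{\pilat},\bbP^{\pi}) + \TV(\bbP^{\pi},\bbP^{\pitil})$. The first term is exactly what \cref{lemma:tv-bel-to-latent} controls: it gives $\TV(\bbP^{\pilat},\bbP^{\pi}) \le 2\sum_{h=1}^H \EE^{\pi}[\Vinf(\bel_h(x_{1:h},a_{1:h-1}))]$, which supplies the $\Vinf$ terms in both displayed inequalities. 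So the only remaining work is to bound the second term, where now $\pi$ and $\pitil$ are \emph{both} executable policies differing only in their history-to-action map ($\pilat\circ\bel_h$ versus $\pilat\circ\beltil_h$).

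For the second term I would run the standard one-step-deviation hybrid argument, mirroring the proof of \cref{lemma:tv-bel-to-latent}. For $1 \le h \le H+1$, let $\rho_h$ be the executable policy that follows $\pi$ on steps $1,\dots,h-1$ and $\pitil$ on steps $h,\dots,H$, so that $\rho_1 = \pitil$ and $\rho_{H+1} = \pi$, whence $\TV(\bbP^{\pi},\bbP^{\pitil}) \le \sum_{h=1}^H \TV(\bbP^{\rho_h},\bbP^{\rho_{h+1}})$. The laws $\bbP^{\rho_h}$ and $\bbP^{\rho_{h+1}}$ have the same distribution over $(s_{1:h},x_{1:h},a_{1:h-1})$ (both follow $\pi$ on the first $h-1$ steps, and $x_h$ is an emission) and the same continuation kernel on $(s_{h+1:H},x_{h+1:H},a_{h+1:H})$ given $(s_{1:h},x_{1:h},a_{1:h})$ (both follow $\pitil$ afterwards), and differ only in the conditional law of $a_h$ given the history $(x_{1:h},a_{1:h-1})$. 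Discarding the shared randomness via the data-processing inequality, $\TV(\bbP^{\rho_h},\bbP^{\rho_{h+1}}) \le \EE^{\pi}[\TV(\pilat\circ\bel_h(x_{1:h},a_{1:h-1}),\,\pilat\circ\beltil_h(x_{1:h},a_{1:h-1}))]$; then \cref{lemma:dpi} applied to the channel $\pilat_h:\cS_h\to\Delta(\cA_h)$, together with $\TV(p,q)\le\norm{p-q}_1$, bounds each summand by $\EE^{\pi}[\norm{\bel_h(x_{1:h},a_{1:h-1}) - \beltil_h(x_{1:h},a_{1:h-1})}_1]$. Summing over $h$ and adding the \cref{lemma:tv-bel-to-latent} bound yields the first displayed inequality. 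For the second displayed inequality I would instead define $\rho_h$ to follow $\pitil$ on steps $1,\dots,h-1$ and $\pi$ on steps $h,\dots,H$; then the shared prefix is governed by $\pitil$, so the reference measure in the step-$h$ summand becomes $\bbP^{\pitil}$ rather than $\bbP^{\pi}$, the identical computation gives $\TV(\bbP^{\pi},\bbP^{\pitil}) \le \sum_h \EE^{\pitil}[\norm{\bel_h - \beltil_h}_1]$, and this combines with \cref{lemma:tv-bel-to-latent} to give the claim.

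The argument involves no real difficulty beyond bookkeeping; the one step to be careful about is the hybrid decomposition — verifying that $\bbP^{\rho_h}$ and $\bbP^{\rho_{h+1}}$ genuinely agree on every coordinate except the step-$h$ action kernel, so that their total-variation distance collapses to the expected total-variation distance between the two step-$h$ action distributions under the shared-prefix law (which in turn uses that $\pi$ and $\pitil$ are executable, so these action kernels depend on $(x_{1:h},a_{1:h-1})$ alone). The freedom to let that shared prefix be generated by $\pi$ or by $\pitil$ is exactly what allows the $\ell_1$-error term to be placed under $\EE^{\pi}$ or under $\EE^{\pitil}$, producing the two stated variants.
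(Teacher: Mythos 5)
Your proposal is correct and follows essentially the same route as the paper: the paper also bounds $\TV(\bbP^{\pilat},\bbP^{\pi})$ via \cref{lemma:tv-bel-to-latent} and handles $\TV(\bbP^{\pi},\bbP^{\pitil})$ by a step-by-step coupling/hybrid argument in which the two policies differ only in the step-$h$ action kernel, with the choice of which policy generates the shared prefix yielding the $\EE^{\pi}$ and $\EE^{\pitil}$ variants. Your write-up just makes explicit the data-processing step through the channel $\pilat_h$ and the executability bookkeeping that the paper leaves implicit.
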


\begin{proof}
We can couple $\bbP^\pi$ and $\bbP^{\pitil}$ so that at any step $h$, if the trajectories have thus far been the same sequence $(x_{1:h},a_{1:h-1})$, then the probability that they choose different actions $a_h$ is at most \[\norm{\bel_h(x_{1:h}, a_{1:h-1}) - \beltil_h(x_{1:h}, a_{1:h-1})}_1.\]
By a standard hybrid argument, it follows that
\[\TV(\bbP^\pi,\bbP^{\pitil}) \leq \sum_{h=1}^H \EE^\pi\left[\norm{\bel_h(x_{1:h}, a_{1:h-1}) - \beltil_h(x_{1:h}, a_{1:h-1})}_1\right]\]
and also
\[\TV(\bbP^\pi,\bbP^{\pitil}) \leq \sum_{h=1}^H \EE^{\pitil}\left[\norm{\bel_h(x_{1:h}, a_{1:h-1}) - \beltil_h(x_{1:h}, a_{1:h-1})}_1\right].\]
Combining with \cref{lemma:tv-bel-to-latent} completes the proof.
\end{proof}

The preceding lemma used the intuition that if the latent state is near-deterministic when conditioned on the observation/action history, then resampling it is unlikely to change it. The following lemma uses the intuition that if the \emph{action} is near-deterministic (which is likely when the action-prediction is small) when conditioned on the action/observation history, then resampling the latent state --- and subsequently resampling the action conditioned on this resampled latent state --- is unlikely to change the action, though it may have changed the state.

\begin{lemma}\label{lemma:smoothness-to-latent}
    Let $\pilat \in \Pilat$ be a latent (Markovian) policy and let $\beltil_{1:H}$ be a collection of functions $\beltil_h: \cX^h \times \cA^{h-1} \to \Delta(\cS_h)$. Define the executable policy $\pitil$ by 
    \[\pitil(x_{1:h},a_{1:h-1}) := \pilat \circ \beltil_h(x_{1:h},a_{1:h-1}).\]
    Then 
    \[\TV(\bbP^{\pilat}, \bbP^{\pitil}) \leq \sum_{h=1}^H \EE^{\pi}\left[2 \epsilon^{\asmooth;\pi^{\latent}}_h(\pi) + \norm{\pilat \circ \bel_h(x_{1:h}, a_{1:h-1}) - \pilat \circ \beltil_h(x_{1:h}, a_{1:h-1})}_1\right]\]
    where $\pi(x_{1:h},a_{1:h-1}) := \pilat \circ \bel_h(x_{1:h},a_{1:h-1})$.
    \end{lemma}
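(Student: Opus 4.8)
The plan is to mirror the proof of \cref{lemma:tv-beltil-to-latent} almost line for line, replacing the single step that bounds $\TV(\bbP^{\pilat},\bbP^\pi)$ in terms of decodability error (which came from \cref{lemma:tv-bel-to-latent}) by a twin step that bounds it in terms of action-prediction error (\cref{def:action-prediction}). So the first thing I would do is isolate and prove the following analogue of \cref{lemma:tv-bel-to-latent}: if $\pi(x_{1:h},a_{1:h-1}) := \pilat\circ\bel_h(x_{1:h},a_{1:h-1})$, then $\TV(\bbP^{\pilat},\bbP^\pi) \le 2\sum_{h=1}^H \epsilon^{\asmooth;\pilat}_h(\pi)$. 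The remaining two steps -- relating $\pi$ to $\pitil$, and combining via the triangle inequality -- are then identical to \cref{lemma:tv-beltil-to-latent}.

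For the new step, I run the same hybrid decomposition: let $\pi\circ_h\pilat$ denote the policy that plays $\pi$ for the first $h-1$ steps and $\pilat$ thereafter, so $\pi\circ_1\pilat = \pilat$ and $\pi\circ_{H+1}\pilat = \pi$, and it suffices to bound $\TV(\bbP^{\pi\circ_h\pilat},\bbP^{\pi\circ_{h+1}\pilat})$ for each $h$. The two hybrids share the conditional law of $(s_{h+1:H},x_{h+1:H},a_{h+1:H})$ given $(s_{1:h},x_{1:h},a_{1:h})$, so by the data-processing inequality it is enough to compare the marginals of $(X,Y,\Ynuis,Z) := ((x_{1:h},a_{1:h-1}),\, s_h,\, s_{1:h-1},\, a_h)$; these agree on $(X,Y,\Ynuis)$. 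Under $\pi\circ_h\pilat$ the action is $a_h\sim\pilat(s_h)$, and under $\pi\circ_{h+1}\pilat$ it is generated by resampling $s'_h\sim\bel_h(x_{1:h},a_{1:h-1})$ -- which by \cref{lemma:belief-is-cond-prob} is exactly the conditional law of $s_h$ given $X$, since $\pi\circ_{h+1}\pilat$ is executable through step $h$ -- and then $a_h\sim\pilat(s'_h)$. Conditioning out $X$, the hypotheses of \cref{lemma:resampling-tv} hold with $Y=s_h$, $\Ynuis=s_{1:h-1}$, $Z=a_h$. The one genuinely new ingredient is that I invoke the \emph{second} bound of \cref{lemma:resampling-tv}, namely $\TV\le 2\Vinf(\bbP_Z)$, rather than the first: the conditional law of $Z=a_h$ given $X$ is precisely $\pilat\circ\bel_h(x_{1:h},a_{1:h-1})$, so its $\Vinf$ equals $1-\norm{\pilat\circ\bel_h(x_{1:h},a_{1:h-1})}_\infty$, and taking expectation over $X$ (whose law under either hybrid is the $\pi$-induced law of $(x_{1:h},a_{1:h-1})$) gives $\TV(\bbP^{\pi\circ_h\pilat},\bbP^{\pi\circ_{h+1}\pilat}) \le 2\epsilon^{\asmooth;\pilat}_h(\pi)$. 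Summing over $h$ yields the claimed bound.

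The rest is routine. For $\pi$ versus $\pitil$: both are executable, and at any shared history $(x_{1:h},a_{1:h-1})$ their action distributions are $\pilat\circ\bel_h(x_{1:h},a_{1:h-1})$ and $\pilat\circ\beltil_h(x_{1:h},a_{1:h-1})$, so the standard step-wise coupling / hybrid argument (exactly as in \cref{lemma:tv-beltil-to-latent}) gives $\TV(\bbP^\pi,\bbP^{\pitil}) \le \sum_{h=1}^H \EE^\pi\brk*{\norm{\pilat\circ\bel_h(x_{1:h},a_{1:h-1}) - \pilat\circ\beltil_h(x_{1:h},a_{1:h-1})}_1}$. Combining with the bound on $\TV(\bbP^{\pilat},\bbP^\pi)$ via the triangle inequality for $\TV$, and noting that $\epsilon^{\asmooth;\pilat}_h(\pi)$ is a constant (so the outer $\EE^\pi$ in the statement acts trivially on that term), gives precisely the claimed inequality.

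I do not expect a real obstacle here: \cref{lemma:resampling-tv} was evidently stated with its second ($2\Vinf(\bbP_Z)$) bound precisely to enable this variant, so the whole argument is careful re-bookkeeping of the proof of \cref{lemma:tv-beltil-to-latent}. The only point requiring care is the identification of the conditional law of $a_h$ given the observable history under $\pi\circ_{h+1}\pilat$: it must be $\pilat\circ\bel_h$ -- i.e. a faithful marginalization over the true belief rather than something that "peeks" at the latent state -- which holds exactly because $\pi\circ_{h+1}\pilat$ is executable through step $h$, so \cref{lemma:belief-is-cond-prob} applies.
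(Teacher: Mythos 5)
Your proposal is correct and follows essentially the same route as the paper: the paper likewise reduces to bounding $\TV(\bbP^{\pilat},\bbP^{\pi})$ by $2\sum_{h}\epsilon^{\asmooth;\pilat}_h(\pi)$ via the hybrid argument of \cref{lemma:tv-bel-to-latent}, with the only change being the use of the second claim ($2\Vinf(\bbP_Z)$) of \cref{lemma:resampling-tv} in place of the first, and then handles the $\pi$-versus-$\pitil$ comparison and triangle inequality exactly as in \cref{lemma:tv-beltil-to-latent}. Your identification of the conditional law of $a_h$ given the history as $\pilat\circ\bel_h$ (via \cref{lemma:belief-is-cond-prob} and executability of the hybrid through step $h$) is precisely the point the paper relies on.
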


\begin{proof}
As with the proof of \cref{lemma:tv-beltil-to-latent}, it suffices to show that
\[\TV(\bbP^{\pilat}, \bbP^{\pi}) \leq \sum_{h=1}^H \EE^{\pi}\left[2 \epsilon^{\asmooth;\pi^{\latent}}_h(\pi)\right].\]
The proof is identical to that of \cref{lemma:tv-beltil-to-latent} except for using the second claim of \cref{lemma:resampling-tv} instead of the first.
\end{proof}

\subsection{Analysis of $\forward$ for $L$-step Executable Policies}
\label{sec:forward-population}

In this section we describe the (slightly modified) $\forward$ imitation learning algorithm \citep{ross2010efficient}, applied to the problem of distilling an expert latent policy $\pilat$ to an $L$-step executable policy $\pihat \in \pi^L$. We first formally derive the expression for the policy learned in the infinite-sample limit (\cref{lemma:forward-bayes}), and then prove \cref{thm:perturbed-mdp-il-guarantee} (the regret bound for this policy, restated as \cref{thm:perturbed-mdp-il-guarantee-main}). Finally, we prove a finite-sample guarantee for the modified $\forward$ algorithm (\cref{thm:perturbed-mdp-il-finite-sample}), using the same ideas together with a standard finite-sample analysis for Maximum Likelihood Estimation \citep{foster2024is}.

\paragraph{$\forward$ with $L$-step random actions} For a latent Markovian policy $\pilat \in \Pilat$ and a parameter $L \in [H]$, the (modified) $\forward$ algorithm computes an $L$-step executable policy $\pihat = \pihat_{1:H}$ as follows. For $h = 1,\dots, H$:
\begin{enumerate}
\item Draw $n$ trajectories $\tau^i = (s_{1:H}^i, x_{1:H}^i, a_{1:H}^i)$ from the policy $\pihat_{1:h-L-1} \circ_{h-L} \Unif(\cA)$ (which follows $\pihat$ until step $h-L-1$ and subsequently plays uniformly random actions).
\item Compute
\[\pihat_h := \argmax_{\pi_h \in \Pi^L_h} \frac{1}{n} \sum_{i=1}^n \log \pi_h(\pilat_h(s_h) \mid{} x_{\max(1,h-L+1):h}, a_{\max(1,h-L):h-1}).\]
\end{enumerate}

Above, $\Pi^L_h$ is the set of $L$-step conditional distributions $\pi_h: \cX^{h-L+1:h} \times \cA^{h-L:h-1} \to \Delta(\cA)$. Note that the standard $\forward$ algorithm is identical except that it draws data from $\pihat_{1:h-1}$ at step $h$. The modified algorithm is simpler to analyze since the random actions do not bias the conditional distribution of the latent state at step $h$ given the partial history $(x_{h-L+1:h},a_{h-L:h-1})$, so this distribution can be directly related to an approximate belief state with appropriate prior (\cref{lemma:belapx-cond-prob}).

For notational convenience, let $\pitil$ denote the policy obtained from the above algorithm in the infinite-sample limit, i.e. at step $h$ we define
\[\pitil_h := \argmax_{\pi_h \in \Pi^L_h} \EE^{\pitil_{1:h-L-1}\circ_{h-L}\Unif(\cA)}\left[ \log \pi_h(\pilat_h(s_h) \mid{} x_{\max(1,h-L+1):h}, a_{\max(1,h-L):h-1})\right].\]

The following lemma gives a closed-form expression for this policy. 

\begin{lemma}\label{lemma:forward-bayes}
It holds for all $h \in [H]$ that
\[\pitil_h(\cdot \mid{} x_{h-L+1:h}, a_{h-L:h-1}) =\begin{cases} 
\pilat_h \circ \belapx_h(x_{h-L+1:h},a_{h-L:h-1}; d^{\pitil}_{h-L}) & \text{ if } h > L \\ 
\pilat_h \circ \bel_h(x_{1:h},a_{1:h-1}) & 
\text{ otherwise}
\end{cases}
\]
so long as $\Pi_h$ contains this conditional distribution.
\end{lemma}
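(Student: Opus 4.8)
The plan is to reduce the optimization defining $\pitil_h$ to a standard \emph{population maximum-likelihood} computation, and then to identify the resulting conditional distribution with the claimed belief-state composition via \cref{lemma:belief-is-cond-prob} and \cref{lemma:belapx-cond-prob}. Fix $h \in [H]$; the policies $\pitil_{1:h-1}$ are already defined by the recursive construction, so $\pitil$ is a well-defined $L$-step executable policy on steps $1,\dots,h-1$ and in particular the occupancy $d^{\pitil}_{h-L}$ makes sense when $h>L$. Abbreviate the roll-in policy $\mu := \pitil_{1:h-L-1}\circ_{h-L}\Unif(\cA)$, the conditioning window $W := (x_{\max(1,h-L+1):h}, a_{\max(1,h-L):h-1})$, and the expert label $a^\star := \pilat_h(s_h)$ (well-defined since we may take the optimal latent policy to be deterministic).

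First I would carry out the MLE step. For any $\pi_h \in \Pi^L_h$, writing $p_w(\cdot) := \bbP^\mu[\pilat_h(s_h) = \cdot \mid W = w]$, the population objective decomposes as $\EE^\mu[\log \pi_h(a^\star \mid W)] = \EE^\mu[\log p_W(a^\star)] - \EE^\mu_W[\Dkl{p_W}{\pi_h(\cdot\mid W)}]$, where the first term does not depend on $\pi_h$. Since the $\mathsf{KL}$ term is non-negative and vanishes exactly when $\pi_h(\cdot \mid w) = p_w$ for $\mu$-almost every $w$, the conditional distribution $w \mapsto p_w$ maximizes the objective; under the hypothesis that $\Pi^L_h$ contains it, it is therefore $\pitil_h$. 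It thus remains to show $p_W = \pilat_h \circ \belapx_h(x_{h-L+1:h},a_{h-L:h-1};d^{\pitil}_{h-L})$ when $h>L$ and $p_W = \pilat_h \circ \bel_h(x_{1:h},a_{1:h-1})$ when $h \le L$. For this, observe that $p_W$ is the pushforward of the conditional law of $s_h$ given $W$ under $\pilat_h$: $p_W(a) = \sum_{s_h} \bbP^\mu[s_h \mid W]\,\pilat_h(a \mid s_h) = (\pilat_h \circ \bbP^\mu[s_h \mid W])(a)$, using \eqref{eq:belief-pi-comp}. So it suffices to identify $\bbP^\mu[s_h \mid W]$ with the appropriate belief state. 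When $h > L$: $\mu$ is executable and at each step $h-t$ with $t \in [L]$ it plays the constant distribution $\Unif(\cA)$, which is (trivially) determined by $(x_{h-L+1:h-t},a_{h-L:h-t-1})$, so \cref{lemma:belapx-cond-prob} gives $\bbP^\mu[s_h \mid x_{h-L+1:h},a_{h-L:h-1}] = \belapx_h(x_{h-L+1:h},a_{h-L:h-1};d^\mu_{h-L})$; and $d^\mu_{h-L} = d^{\pitil}_{h-L}$ because $\mu$ agrees with $\pitil$ on the first $h-L-1$ actions, on which $s_{h-L}$ alone depends. When $h \le L$: $\mu$ is simply the uniform executable policy, $W = (x_{1:h},a_{1:h-1})$, and \cref{lemma:belief-is-cond-prob} gives $\bbP^\mu[s_h \mid x_{1:h},a_{1:h-1}] = \bel_h(x_{1:h},a_{1:h-1})$. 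Composing with $\pilat_h$ yields the two cases.

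The main obstacle here is purely bookkeeping rather than analytic: carefully matching the index conventions of the modified $\forward$ algorithm (the $\max(1,\cdot)$ truncations, the meaning of $\circ_{h-L}$ when $h-L \le 0$, and which policy's step-$(h-L)$ occupancy appears as the prior) against the hypotheses of \cref{lemma:belapx-cond-prob}, and confirming that the uniform-random roll-in actions genuinely leave the conditional law of $s_h$ given the window unbiased — this is precisely the design reason for using $L$ random actions, and it is what makes \cref{lemma:belapx-cond-prob} applicable with the clean prior $d^{\pitil}_{h-L}$. The only substantive content is the one-line cross-entropy identity and the two conditional-probability lemmas already established.
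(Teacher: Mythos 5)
Your proposal is correct and follows essentially the same route as the paper: the population MLE step is resolved by the cross-entropy/KL decomposition (the paper simply cites that population maximum likelihood minimizes KL), and the resulting conditional law of the expert's action given the window is identified with $\pilat_h$ composed with the appropriate belief via \cref{lemma:belapx-cond-prob} (for $h>L$, using that the $L$ uniform roll-in actions satisfy its hypothesis and that $d^{\mu}_{h-L}=d^{\pitil}_{h-L}$) and \cref{lemma:belief-is-cond-prob} (for $h\le L$). Your write-up is, if anything, slightly more explicit than the paper's on the KL decomposition and the occupancy-matching step.
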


\begin{proof}
We have for each $h>L$ that
\begin{align*}
\pitil_h 
&=\argmax_{\pi_h \in \Pi_h} \EE^{\pitil_{1:h-L-1} \circ_{h-L} \Unif(\cA)}\left[ \log \pi_h(\pilat_h(s_h) \mid{} x_{h-L+1:h}, a_{h-L:h-1})\right].
\end{align*}
Since population-level maximum likelihood minimizes KL divergence, we get
\begin{align*}
\pitil_h(a_h \mid{} x_{h-L+1:h}, a_{h-L:h-1}) 
&= \sum_{s_h \in \cS_h} \pilat_h(a_h \mid{} s_h) \cdot \bbP^{\pitil_{1:h-L-1} \circ_{h-L} \Unif(\cA)}[s_h \mid{} x_{h-L+1:h}, a_{h-L:h-1}]  \\ 
&= \sum_{s_h \in \cS_h} \pilat_h(a_h \mid{} s_h) \cdot \belapx_h(x_{h-L+1:h},a_{h-L:h-1}; d^{\pitil_{1:h-L-1} \circ_{h-L} \Unif(\cA)}_{h-L})(s_h)\\
&= \sum_{s_h \in \cS_h} \pilat_h(a_h \mid{} s_h) \cdot \belapx_h(x_{h-L+1:h},a_{h-L:h-1}; d^{\pitil}_{h-L})(s_h)
\end{align*}
where the second equality uses \cref{lemma:belapx-cond-prob}. The application of this lemma uses the fact that $\pitil_{1:h-L-1} \circ_{h-L} \Unif(\cA)$ plays actions at steps $h-L,\dots,h-1$ that are uniformly random. This proves the lemma in the case $h>L$. The case $h \leq L$ is analogous but uses \cref{lemma:belief-is-cond-prob} instead of \cref{lemma:belapx-cond-prob}.
\end{proof}

\begin{theorem}\label{thm:perturbed-mdp-il-guarantee}
Suppose that the POMDP $\cP$ is a $\delta$-perturbed Block MDP with deterministic transitions, and fix $L \in \NN$. Let $\pilat \in \Pilat$ be a latent (Markovian) policy, and let $\pitil$ be the policy computed by $\forward$ with $L$-step random actions, in the infinite-sample limit. Then
\[J(\pilat) - J(\pitil) \leq \TV(\bbP^{\pilat}, \bbP^{\pitil}) \leq O(\delta) + (C_{\ref{thm:belief-contraction}}\delta)^{L/9} SH.\]
\end{theorem}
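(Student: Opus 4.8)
The plan is to invoke the misspecification bound of \cref{lemma:tv-beltil-to-latent} with the specific choice $\beltil = \beltil_{1:H}$ obtained from the closed-form expression for $\pitil$ established in \cref{lemma:forward-bayes}, and then control each of the two resulting error terms using the perturbed Block MDP structure. Concretely, \cref{lemma:forward-bayes} shows that $\pitil_h = \pilat_h \circ \beltil_h$ where $\beltil_h(x_{1:h},a_{1:h-1}) = \belapx_h(x_{h-L+1:h},a_{h-L:h-1}; d^{\pitil}_{h-L})$ for $h>L$ and $\beltil_h = \bel_h$ for $h \le L$. So $\pitil$ is exactly of the form to which \cref{lemma:tv-beltil-to-latent} applies, and the first inequality $J(\pilat) - J(\pitil) \le \TV(\bbP^{\pilat},\bbP^{\pitil})$ is immediate since the value functional is $1$-Lipschitz in total variation distance (rewards sum to at most $1$).

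Next I would apply the first bound of \cref{lemma:tv-beltil-to-latent}, which gives
\[\TV(\bbP^{\pilat},\bbP^{\pitil}) \le \sum_{h=1}^H \EE^\pi\left[2\Vinf(\bel_h(x_{1:h},a_{1:h-1})) + \norm{\bel_h(x_{1:h},a_{1:h-1}) - \beltil_h(x_{1:h},a_{1:h-1})}_1\right].\]
For the first sum, \cref{prop:vinf-decay} (i.e., \cref{prop:det-vinf-decay-main}) bounds $\EE^\pi[\Vinf(\bel_h)] \le \min(\delta,(C_{\ref{prop:vinf-decay}}\delta)^{(h-1)/9})$, and summing this geometric-type series over $h$ yields a horizon-independent $O(\delta)$ contribution. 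For the second sum, I would split into $h \le L$ and $h>L$. When $h \le L$ the term vanishes since $\beltil_h = \bel_h$ exactly. When $h>L$, I need to bound $\EE^\pi\norm{\bel_h - \belapx_h(\cdot; d^{\pitil}_{h-L})}_1$; this is precisely a belief-contraction quantity, and \cref{thm:belief-contraction} (i.e., \cref{thm:belief-contraction-main}) applied with the prior $\cD = d^{\pitil}_{h-L}$ gives a bound of $(C_{\ref{thm:belief-contraction}}\delta)^{L/9}\norm{\bel_h / d^{\pitil}_{h-L}}_\infty$ — except that here the expectation in \cref{thm:belief-contraction} is taken under $\pi$, not $\pitil$, so one should be slightly careful and instead use the version of \cref{thm:belief-contraction} where the expectation is over trajectories drawn from a policy conditioned on the partial history, initialized from $\cD'$. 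Summing the $(C_{\ref{thm:belief-contraction}}\delta)^{L/9}$ bound over the (at most $H$) values of $h$, and bounding the density ratio crudely by $S$ (or by noting $\epsilon^{\contraction}_h(\pi;L) \le (C_{\ref{thm:belief-contraction}}\delta)^{L/9} S$ from \cref{thm:belief-contraction-main}), gives the $(C_{\ref{thm:belief-contraction}}\delta)^{L/9} SH$ term.

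The main obstacle I anticipate is the bookkeeping around \emph{which} policy the expectations are taken under. \cref{lemma:tv-beltil-to-latent} states its bound in terms of $\EE^\pi$ where $\pi = \pilat \circ \bel$, while \cref{lemma:forward-bayes} defines $\pitil$ via its own occupancy measures $d^{\pitil}_{h-L}$, and \cref{thm:belief-contraction} requires the expectation over trajectories rolled out from a given prior. Reconciling these — in particular verifying that the approximate belief appearing in $\beltil_h$ with prior $d^{\pitil}_{h-L}$ can be meaningfully compared to $\bel_h$ under the measure $\pi$, likely by using the second (asymmetric) bound of \cref{lemma:tv-beltil-to-latent} which puts the belief-error term under $\EE^{\pitil}$, and then noting that under $\pitil$ the law of $s_{h-L}$ is exactly $d^{\pitil}_{h-L}$ so the density ratio $\norm{\bel_{h-L}(\cdot)/d^{\pitil}_{h-L}}_\infty$ is controlled on average — is the delicate step. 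Everything else (summing geometric series, applying $J$-Lipschitzness, the $h \le L$ vanishing) is routine.
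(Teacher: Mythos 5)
Your proposal takes essentially the same route as the paper: combine the closed form of \cref{lemma:forward-bayes} with \cref{lemma:tv-beltil-to-latent}, bound the decodability term by $O(\delta)$ via \cref{prop:vinf-decay}, and bound the belief-approximation term via \cref{thm:belief-contraction} with prior $d^{\pitil}_{h-L}$; and the resolution you flag as the delicate step --- using the asymmetric version of \cref{lemma:tv-beltil-to-latent} that places the belief error under $\EE^{\pitil}$, and then exploiting that $\EE^{\pitil}\brk{\bel_{h-L}(x_{1:h-L},a_{1:h-L-1})(s)} = d^{\pitil}_{h-L}(s)$ so the density ratio is $S$ only \emph{in expectation} --- is exactly what the paper does. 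One caution: the parenthetical shortcuts you mention would not work, since $\norm{\bel_{h-L}/d^{\pitil}_{h-L}}_\infty$ is not pointwise bounded by $S$ (that bound requires the uniform prior), and $\epsilon^{\contraction}_h(\pi;L)$ is defined with prior $\unif(\cS_{h-L})$, which is not the prior appearing in $\pitil$; only the on-average argument under $\EE^{\pitil}$ closes the proof.
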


\begin{proof}
Define the executable policy $\pi$ by $\pi(x_{1:h},a_{1:h-1}) := \pilat \circ \bel_h(x_{1:h},a_{1:h-1})$. By \cref{lemma:tv-beltil-to-latent} and the closed-form expression for $\pitil$ (\cref{lemma:forward-bayes}), we have
\begin{align*}
\TV(\bbP^{\pilat}, \bbP^{\pitil}) 
&\leq \sum_{h=1}^H \EE^\pi[2\Vinf(\bel_h(x_{1:h},a_{1:h-1}))] \\
&\qquad+ \sum_{h=L+1}^H \EE^{\pitil} \left[\norm{\bel_h(x_{1:h},a_{1:h-1}) - \belapx_h(x_{h-L+1:h},a_{h-L:h-1}; d^{\pitil}_{h-L}}_1\right].
\end{align*}
By \cref{prop:vinf-decay} and the fact that $\pi$ is executable, the first term can be bounded as
\begin{align*}
\sum_{h=1}^H \EE^\pi[2\Vinf(\bel_h(x_{1:h},a_{1:h-1}))]
&\leq \sum_{h=1}^H \min\left(\delta, (C_{\ref{prop:vinf-decay}} \delta)^{(h-1)/9}\right) \\ 
&\leq O(\delta).
\end{align*}
Next, for each $h \in \{L+1,\dots,H\}$, we can bound
\begin{align*}
&\EE^{\pitil} \left[\norm{\bel_h(x_{1:h},a_{1:h-1}) - \belapx_h(x_{h-L+1:h},a_{h-L:h-1}; d^{\pitil}_{h-L})}_1\right] \\ 
&\leq (C_{\ref{thm:belief-contraction}}\delta)^{L/9} \EE^{\pitil} \left[\norm{\frac{\bel_h(x_{1:h},a_{1:h-1})}{d^{\pitil}_{h-L}}}_\infty \right]\\ 
&\leq (C_{\ref{thm:belief-contraction}}\delta)^{L/9} \EE^{\pitil}\left[ \sum_{s \in \cS_h} \frac{\bel_h(x_{1:h},a_{1:h-1})(s)}{d^{\pitil}_{h-L}(s)}\right] \\ 
&= (C_{\ref{thm:belief-contraction}}\delta)^{L/9} \sum_{s \in \cS_h} \frac{1}{d^{\pitil}_{h-L}(s)} \EE^{\pitil} [\bel_h(x_{1:h},a_{1:h-1})(s)] \\ 
&= (C_{\ref{thm:belief-contraction}}\delta)^{L/9} \sum_{s \in \cS_h} \frac{1}{d^{\pitil}_{h-L}(s)} \EE^{\pitil} \left[\bbP^{\pitil}[s_h = s \mid{} x_{1:h},a_{1:h-1}]\right] \\ 
&= (C_{\ref{thm:belief-contraction}}\delta)^{L/9} \sum_{s \in \cS_h} \frac{d^{\pitil}_{h-L}(s)}{d^{\pitil}_{h-L}(s)} \\
&= (C_{\ref{thm:belief-contraction}}\delta)^{L/9} S
\end{align*}
where the first inequality uses \cref{thm:belief-contraction} (and the fact that $\pitil$ is an executable policy), and the second equality uses \cref{lemma:belief-is-cond-prob} (and the fact that $\pitil$ is an executable policy). Putting everything together, we get
\[\TV(\bbP^{\pilat}, \bbP^{\pitil}) \leq O(\delta) + (C_{\ref{thm:belief-contraction}}\delta)^{L/9} SH\]
as claimed.
\end{proof}

\subsection{Finite-sample guarantee}

\begin{theorem}\label{thm:perturbed-mdp-il-finite-sample}
There is a constant $C_{\ref{thm:perturbed-mdp-il-finite-sample}}>0$ with the following property. Let $\delta,\eta,\epopt>0$ and suppose that the POMDP $\cP$ is a $\delta$-perturbed Block MDP with deterministic transitions. If $n \geq C_{\ref{thm:perturbed-mdp-il-finite-sample}} X^L A^{3L+1}H^2 \epopt^{-2}\log(Hn/\eta)$, the output $\pihat$ of the modified $\forward$ algorithm with $n$ samples per step satisfies, with probability at least $1-\eta$,
\[J(\pilat) - J(\pihat) \leq \TV(\bbP^{\pilat}, \bbP^{\pihat}) \leq O(\delta) + (C_{\ref{thm:belief-contraction}}\delta)^{L/9} SH + \epopt.\]
Moreover, $\pihat$ can be computed in time $\poly(n,H,X^L,A^L)$.
\end{theorem}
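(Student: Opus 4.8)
The plan is to re-run the argument behind \cref{thm:perturbed-mdp-il-guarantee}, but with the finite-sample output $\pihat$ in place of the infinite-sample policy $\pitil$ everywhere, paying at each step one extra maximum-likelihood estimation error that will be controlled by a standard finite-sample analysis. I would introduce the hybrid policies $\rho_h$ that follow $\pihat$ for the first $h-1$ steps and the expert $\pilat$ thereafter, so that $\bbP^{\rho_1}=\bbP^{\pilat}$ and $\bbP^{\rho_{H+1}}=\bbP^{\pihat}$, and bound $\TV(\bbP^{\pilat},\bbP^{\pihat})\le\sum_{h=1}^H\TV(\bbP^{\rho_h},\bbP^{\rho_{h+1}})$. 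Since $\rho_h$ and $\rho_{h+1}$ differ only in the step-$h$ action, conditioning on the $\pihat$-rollout history $(x_{1:h},a_{1:h-1})$ --- under which $s_h$ has law $\bel_h(x_{1:h},a_{1:h-1})$ by \cref{lemma:belief-is-cond-prob} (as $\pihat$ is executable) --- and invoking the resampling bound \cref{lemma:resampling-tv} exactly as in the proofs of \cref{lemma:tv-bel-to-latent,lemma:tv-beltil-to-latent} yields, for $h>L$,
\[\TV(\bbP^{\rho_h},\bbP^{\rho_{h+1}})\;\le\;2\,\EE^{\pihat}\big[\Vinf(\bel_h(x_{1:h},a_{1:h-1}))\big]+\EE^{\pihat}\big[\|\bel_h-\belapx_h(w_h;d^{\pihat}_{h-L})\|_1\big]+\EE^{\pihat}\big[\|\bar\pi_h(\cdot\mid w_h)-\pihat_h(\cdot\mid w_h)\|_1\big],\]
where $w_h:=(x_{h-L+1:h},a_{h-L:h-1})$ and $\bar\pi_h:=\pilat_h\circ\belapx_h(\cdot\,;d^{\pihat}_{h-L})$ (for $h\le L$ the middle term is absent and $\bar\pi_h:=\pilat_h\circ\bel_h$). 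The essential point --- which lets the \cref{thm:perturbed-mdp-il-guarantee} argument go through with $\pihat$'s own state occupancies as the belief priors --- is that all three error terms are expectations under the single executable policy $\pihat$.

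The first two sums are controlled exactly as in \cref{thm:perturbed-mdp-il-guarantee}. By \cref{prop:vinf-decay} applied to $\pihat$, $\sum_h 2\,\EE^{\pihat}[\Vinf(\bel_h)]\le\sum_h 2\min(\delta,(C_{\ref{prop:vinf-decay}}\delta)^{(h-1)/9})=\bigoh(\delta)$. For the middle term I would apply \cref{thm:belief-contraction} with prior $\cD:=d^{\pihat}_{h-L}$; since $\EE^{\pihat}[\bel_{h-L}(x_{1:h-L},a_{1:h-L-1})(s)]=\bbP^{\pihat}[s_{h-L}=s]=d^{\pihat}_{h-L}(s)$ by \cref{lemma:belief-is-cond-prob}, the density ratio collapses and $\EE^{\pihat}[\|\bel_h-\belapx_h(w_h;d^{\pihat}_{h-L})\|_1]\le(C_{\ref{thm:belief-contraction}}\delta)^{L/9}S$, so this sum is at most $(C_{\ref{thm:belief-contraction}}\delta)^{L/9}SH$. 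It remains to bound the estimation term $\sum_h\EE^{\pihat}[\|\bar\pi_h-\pihat_h\|_1]$ by $\epopt$.

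For that, fix $h$ and condition on all randomness from steps $<h$, so that $\pihat_{1:h-1}$, the step-$h$ data-collection law (which plays $\pihat$ for the first $h-L-1$ steps and uniformly random actions thereafter), its $w_h$-marginal $\mu_h$, and the target $\bar\pi_h$ are determined. Because the data collection uses uniform actions over $[h-L,h-1]$, \cref{lemma:belapx-cond-prob} identifies $\bar\pi_h(\cdot\mid w_h)=\pilat_h\circ\bbP^{\mathrm{data}}[s_h\mid w_h]$ as precisely the population maximum-likelihood solution over $\Pi^L_h$ (using also that the data law induces state distribution $d^{\pihat}_{h-L}$ at step $h-L$). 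A standard finite-sample MLE guarantee (as in \citep{foster2024is}) over $\Pi^L_h$ --- which after discretizing each $\Delta(\cA)$ to accuracy $\poly(1/n)$ has log-covering number $\bigoht(X^L A^{L+1})$ --- then gives, with probability at least $1-\eta/H$, $\EE^{\mu_h}[\Dhelshort(\pihat_h(\cdot\mid w_h),\bar\pi_h(\cdot\mid w_h))]\le\epsilon_{\mathrm{stat}}$ with $\epsilon_{\mathrm{stat}}=\bigoht(X^L A^{L+1}\log(Hn/\eta)/n)$. Transferring to the $\pihat$-rollout law costs a factor $A^L$, since $\bbP^{\pihat}[w_h]\le A^L\mu_h(w_h)$ (both laws induce the same state distribution at step $h-L$ and the same dynamics given the actions, while uniform actions carry density $A^{-L}$ over the $L$-step block and $\pihat$ at most $1$); combining with $\TV(p,q)^2\le\Dhelshort(p,q)$ and Jensen gives $\EE^{\pihat}[\|\bar\pi_h-\pihat_h\|_1]\le\bigoh(A^L\sqrt{\epsilon_{\mathrm{stat}}})$. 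Union-bounding over $h$ and summing, the estimation contribution is $\bigoh(HA^L\sqrt{\epsilon_{\mathrm{stat}}})$, which is at most $\epopt$ once $n\ge C_{\ref{thm:perturbed-mdp-il-finite-sample}}X^L A^{3L+1}H^2\epopt^{-2}\log(Hn/\eta)$ --- the $A^{3L+1}$ being $A^{L+1}$ from the covering number times $A^{2L}$ from the $A^L$ change-of-measure factor appearing squared once the per-step error is pushed below $\epopt/H$. Collecting the three pieces bounds $\TV(\bbP^{\pilat},\bbP^{\pihat})$ as claimed; $J(\pilat)-J(\pihat)\le\TV(\bbP^{\pilat},\bbP^{\pihat})$ since total reward lies in $[0,1]$; and the per-step MLE decomposes over distinct windows $w$ --- for each, the maximizer is the empirical frequency of $\pilat_h(s_h)$ among samples with window $w$ --- so $\pihat$ is computed in time $\poly(n,H,X^L,A^L)$.

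I expect the main obstacle to be the bookkeeping that keeps every error term anchored at the learned policy $\pihat$ rather than at the idealized belief-composed policy: it is only because $\bel_{h-L}$ is averaged under $\pihat$ itself that the density ratio in \cref{thm:belief-contraction} collapses to $S$ (the analogous quantity for the idealized policy would be an uncontrolled ratio of occupancies). A secondary point is the sequential dependence of the data law on $\pihat_{1:h-1}$, handled by conditioning before invoking the MLE bound, along with pinning down the precise power of $A$; the boundary steps $h\le L$ (no belief-contraction term, expert composed with the exact belief) need only minor separate handling and are strictly easier.
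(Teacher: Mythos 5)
Your proposal is correct and follows essentially the same route as the paper: decompose $\TV(\bbP^{\pilat},\bbP^{\pihat})$ step-by-step through the idealized policy $\pilat_h\circ\belapx_h(\cdot\,;d^{\pihat}_{h-L})$, bound the decodability term by $\bigoh(\delta)$ via \cref{prop:vinf-decay}, the belief term by $(C_{\ref{thm:belief-contraction}}\delta)^{L/9}S$ per step using \cref{thm:belief-contraction} with the density ratio collapsing under $\EE^{\pihat}$, and the estimation term by the MLE analysis of \citep{foster2024is} with log-covering number $\approx X^LA^{L+1}$ and an $A^L$ change of measure from the uniform-action data law to $\pihat$, giving exactly the stated $X^LA^{3L+1}H^2\epopt^{-2}$ sample requirement and the per-window empirical-frequency computation. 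The only differences (keeping the $\Vinf$ term under $\EE^{\pihat}$ rather than the idealized policy, and passing through Hellinger plus Jensen rather than TV directly) are cosmetic and do not change the argument.
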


\begin{proof}
By a standard analysis of the log-loss for unconstrained distribution classes, $\pihat_h(x_{\max(1,h-L+1):h}, a_{\max(1,h-L):h-1})$ is precisely the empirical distribution of $\pilat_h(s_h^i)$ over the data $i \in [n]$ with $(x^i_{\max(1,h-L+1):h}, a^i_{\max(1,h-L):h-1}) = (x_{\max(1,h-L+1):h}, a_{\max(1,h-L):h-1})$. Thus, $\pihat$ can be computed in the stated time complexity. 

To prove the claimed statistical bound, fix $h \in [H]$. Let $\cG = \{g_{\pi_h}: \pi_h \in \Pi_h^L\}$ denote the family of distributions indexed by $\Pi_h^L$, where $g_{\pi_h}$ is the joint distribution of $\tau_h=(x_{\max(1,h-L+1):h}, a_{\max(1,h-L):h-1})$ and $\pi_h(\tau_h)$ over trajectories drawn from $\pihat_{1:h-L-1} \circ_{h-L} \Unif(\cA)$. Also let $g^\star$ denote the joint distribution of $\tau_h$ and $\pilat_h(s_h)$ over trajectories drawn from $\pihat_{1:h-L-1} \circ_{h-L} \Unif(\cA)$. Then $g^\star = g_{\pihat_h^\star} \in \cG$ where

\[\pihat^\star_h(\cdot \mid{} x_{h-L+1:h}, a_{h-L:h-1}) =\begin{cases} 
\pilat_h \circ \belapx_h(x_{h-L+1:h},a_{h-L:h-1}; d^{\pihat}_{h-L}) & \text{ if } h > L \\ 
\pilat_h \circ \bel_h(x_{1:h},a_{1:h-1}) & 
\text{ otherwise}
\end{cases}
\]
by the same argument as in \cref{lemma:forward-bayes}. Moreover, $g_{\pihat_h}$ is precisely the Maximum Likelihood Estimation (MLE) estimate over distribution class $\cG$ with $n$ samples from $g^\star$. Thus, we can compare $\pihat_h$ and $\pihat^\star_h$ using a standard analysis for MLE, e.g. \cite[Proposition B.1]{foster2024is}: we can bound the log-covering number of $\cG$ (with discretization error $\epsilon := 1/(Hn)$) by $X^L A^{L+1} \log(1/\epsilon)$, so we get with probability at least $1-\eta/H$ that
\begin{align*}
\TV(g^\star, g_{\pihat_h})
&= \EE^{\pihat_{1:h-L-1} \circ_{h-L} \Unif(\cA)} [\TV(\pihat^\star_h(\cdot \mid{} \tau_h), \pihat_h(\cdot \mid{} \tau_h)]  \\ 
&\leq \bigoh\left(\sqrt{\frac{X^L A^{L+1}\log(Hn/\eta)}{n}}\right)
\end{align*}
where $\tau_h := (x_{\max(1,h-L+1):h}, a_{\max(1,h-L):h-1}))$. By change-of-measure, it follows that
\begin{equation}\EE^{\pihat_{1:h-1}} [\TV(\pihat^\star_h(\cdot \mid{} \tau_h), \pihat_h(\cdot \mid{} \tau_h)] \leq \bigoh\left(A^L \cdot \sqrt{\frac{X^L A^{L+1}\log(Hn/\eta)}{n}}\right) \leq \frac{\epopt}{H}\label{eq:opt-error-bound}
\end{equation}
where the final inequality holds by the theorem assumption that $n \geq C_{\ref{thm:perturbed-mdp-il-finite-sample}} X^L A^{3L+1}H^2 \epopt^{-2}\log(Hn/\eta)$, so long as $C_{\ref{thm:perturbed-mdp-il-finite-sample}}$ is a sufficiently large constant. Next, we observe that
\begin{align*}
&\TV(\bbP^{\pilat}, \bbP^{\pihat})\\
&\leq \sum_{h=1}^H \EE^{\pihat} [\TV(\pilat_h(s_h), \pihat_h(\tau_h))] \\ 
&\leq \sum_{h=1}^H \EE^{\pihat} [\TV(\pilat_h(s_h), \pihat^\star_h(\tau_h))] + \sum_{h=1}^H\EE^{\pihat}[ \TV(\pihat^\star_h(\tau_h), \pihat_h(\tau_h))] \\ 
&\leq \sum_{h=1}^H \EE^\pi[2\Vinf(\belief_h(x_{1:h},a_{1:h-1}))] + \sum_{h=L+1}^H \EE^{\pihat}\left[\norm{\belief_h(x_{1:h},a_{1:h-1}) - \belapx_h(x_{h-L+1:h},a_{h-L:h-1};d^{\pihat}_{h-L})}_1\right] \\ 
&\qquad+ \sum_{h=1}^H\EE^{\pihat}[ \TV(\pihat^\star_h(\tau_h), \pihat_h(\tau_h))]
\end{align*}
where the final inequality is by \cref{lemma:tv-beltil-to-latent}. As in \cref{thm:perturbed-mdp-il-guarantee}, the first term can be bounded by $O(\delta)$ and the second term can be bounded by $(C_{\ref{thm:belief-contraction}}\delta)^{L/9}S$. By \cref{eq:opt-error-bound} and a union bound over $h \in [H]$, the third term is at most $\epopt$ with probability at least $1-\eta$. Substituting these bounds into the above expression completes the proof.
\end{proof}

\section{Omitted Proofs for Reinforcement Learning}
In this section we prove \cref{cor:golowich-perturbed-mdp-main}, stated formally below.

\begin{corollary}\label{cor:golowich-perturbed-mdp}
There is a reinforcement learning algorithm that, for any given $\delta,\beta \in (0,1/3)$ and $L \in \NN$, and any $\delta$-perturbed Block MDP $\cP$, learns a policy $\pi^{\rl}$ satisfying \[J(\pistar) - J(\pi^{\rl}) \leq (C_{\ref{thm:belief-contraction-main}}\delta)^{L/18} \cdot \poly(S,X,H)\] with probability at least $1-\beta$ and in time $(XA/\delta)^{\bigoh(L)} \cdot \poly(H,S,\log(1/\beta))$.
\end{corollary}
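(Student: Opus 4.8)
The plan is to combine the belief-contraction estimate for $\delta$-perturbed Block MDPs (\cref{thm:belief-contraction}) with the learning algorithm of \cite{golowich2022learning} for $\gamma$-observable POMDPs (\cref{thm:golowich}). The argument is essentially a parameter computation: \cref{thm:belief-contraction} is already stated in exactly the generality (arbitrary priors $\cD,\cD'$, and the correct trajectory law) needed to feed into \cref{def:gen-belief-contraction}, which is the hypothesis of \cref{thm:golowich}.

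First I would extract two consequences of \cref{ass:perturbed-block}. (i) By \cref{remark:perturbed-block-golowich}, $\cP$ is $\gamma$-observable with $\gamma = 1-2\delta$, and since $\delta < 1/3$ we have $\gamma^{-1} \le 3 = \bigoh(1)$. (ii) Relabeling the indices in \cref{thm:belief-contraction} via $h \mapsto h-L$ (so that the $L$-window ends at the current step) and using $\norm{\cdot}_1 = 2\TV(\cdot,\cdot)$ gives: for every executable $\pi$, every $h \in \{L+1,\dots,H\}$, every $\cD,\cD' \in \Delta(\cS_{h-L})$, and every fixed partial history $(x_{1:h-L},a_{1:h-L-1})$,
\begin{align*}
&\EE_{s_{h-L}\sim\cD'}\EE^\pi\!\left[\norm{\belapx_h(x_{h-L+1:h},a_{h-L:h-1};\cD') - \belapx_h(x_{h-L+1:h},a_{h-L:h-1};\cD)}_1\right] \\
&\qquad\le 2(C_{\ref{thm:belief-contraction}}\delta)^{L/9}\norm{\frac{\cD'}{\cD}}_\infty,
\end{align*}
with the inner expectation over trajectories drawn from $\pi$ with the environment reset to $s_{h-L}$ at step $h-L$ --- which is precisely the trajectory law in \cref{def:gen-belief-contraction}. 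Hence, for any $\phi \in (0,1)$, whenever $\norm{\cD'/\cD}_\infty \le 1/\phi$ the left-hand side is at most $2(C_{\ref{thm:belief-contraction}}\delta)^{L/9}/\phi$, so $\cP$ satisfies $(\epsilon;\phi,L)$-belief contraction as soon as $\phi \ge 2(C_{\ref{thm:belief-contraction}}\delta)^{L/9}/\epsilon$.

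Next I would pick the target accuracy $\epsilon$ so that the choice $\phi = \frac{\gamma}{C^\star H^5 S^{7/2} X^2}\,\epsilon$ forced by \cref{thm:golowich} clears this threshold. The requirement $\frac{\gamma}{C^\star H^5 S^{7/2} X^2}\,\epsilon \ge 2(C_{\ref{thm:belief-contraction}}\delta)^{L/9}/\epsilon$ is equivalent to $\epsilon^2 \ge \frac{2C^\star H^5 S^{7/2} X^2}{\gamma}(C_{\ref{thm:belief-contraction}}\delta)^{L/9}$, so I set $\epsilon := \left(\frac{2C^\star H^5 S^{7/2}X^2}{\gamma}\right)^{1/2}(C_{\ref{thm:belief-contraction}}\delta)^{L/18} = (C_{\ref{thm:belief-contraction}}\delta)^{L/18}\cdot\poly(S,X,H)$, using $\gamma^{-1} = \bigoh(1)$. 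If this value is $\ge 1$, then the claimed bound $(C_{\ref{thm:belief-contraction}}\delta)^{L/18}\poly(S,X,H)$ is already at least $1 \ge J(\pistar) - J(\pi^{\rl})$ for any policy (rewards sum to at most $1$), so the algorithm may output an arbitrary policy; otherwise $\epsilon < 1$ and the corresponding $\phi$ is $<1$, and \cref{thm:golowich} produces $\pihat$ with $J(\pistar) - J(\pihat) \le \epsilon\cdot\poly(S,X,H,\gamma^{-1}) = (C_{\ref{thm:belief-contraction}}\delta)^{L/18}\poly(S,X,H)$ with probability $\ge 1-\beta$. For the running time, \cref{thm:golowich} runs in $\poly((XA)^L,H,S,\epsilon^{-1},\gamma^{-1},\log(\beta^{-1}))$; since $C_{\ref{thm:belief-contraction}}>1$ and $\delta<1$ we have $\epsilon^{-1} \le (C_{\ref{thm:belief-contraction}}\delta)^{-L/18} \le \delta^{-L/18}$, so the total time is $(XA)^{\bigoh(L)}(1/\delta)^{\bigoh(L)}\poly(H,S,\log(1/\beta)) = (XA/\delta)^{\bigoh(L)}\poly(H,S,\log(1/\beta))$, as claimed.

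Since both inputs are available in exactly the needed form, there is no real obstacle; the only points requiring care are (a) the index relabeling in step (ii) --- verifying that the approximate belief with prior $\cD'$ in \cref{def:gen-belief-contraction} and the quantity bounded in \cref{thm:belief-contraction} refer to the same object under the same trajectory distribution --- and (b) solving the (quadratic-in-$\epsilon$) inequality so that the $\phi$ forced by \cref{thm:golowich} meets the belief-contraction threshold while keeping $\epsilon^{-1} \le (1/\delta)^{\bigoh(L)}$, which is what collapses the running time to $(XA/\delta)^{\bigoh(L)}$.
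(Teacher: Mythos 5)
Your proposal is correct and follows essentially the same route as the paper's proof: invoke \cref{remark:perturbed-block-golowich} for $\gamma$-observability with $\gamma^{-1}=\bigoh(1)$, use \cref{thm:belief-contraction} to certify $(\epsilon;\phi,L)$-belief contraction, and choose $\epsilon \asymp (C_{\ref{thm:belief-contraction}}\delta)^{L/18}\sqrt{C^\star H^5 S^{7/2}X^2}$ so that the $\phi$ forced by \cref{thm:golowich} meets the contraction threshold. Your added bookkeeping (the factor of $2$ from $\nrm{\cdot}_1$ versus $\TV$, the trivial case $\epsilon\geq 1$, and the explicit bound $\epsilon^{-1}\leq \delta^{-L/18}$ for the runtime) only makes explicit details the paper absorbs into constants.
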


\begin{proof}
Recall from \cref{remark:perturbed-block-golowich} that any $\delta$-perturbed Block MDP $\cP$ is $(1-2\delta)$-observable (\cref{def:observable}). Also, by \cref{thm:belief-contraction}, $\cP$ satisfies $(\epsilon;\phi,L)$-belief contraction for any $\phi>0$ and $L \in \NN$ with $\epsilon := (C_{\ref{thm:belief-contraction}}\delta)^{L/9} \cdot \phi^{-1}$. Thus, we can invoke \cref{thm:golowich} with $\gamma := 1/3$ and $\epsilon := (C_{\ref{thm:belief-contraction}}\delta)^{L/18} \sqrt{3C^\star H^5 S^{7/2} X^2}$, where $C^\star$ is as defined in \cref{thm:golowich}. The choice of $\phi$ in \cref{thm:golowich} indeed satisfies $\epsilon = (C_{\ref{thm:belief-contraction}}\delta)^{L/9} \cdot \phi^{-1}$, so $\cP$ satisfies $(\epsilon;\phi,L)$-belief contraction, and thus the algorithm $\mathsf{BaSeCAMP}$ \citep{golowich2022learning} produces an executable policy $\pihat$ that satisfies $J(\pistar) - J(\pihat) \leq \epsilon \cdot \poly(S,X,H)$ in time $\poly((AX)^L, H,S,\epsilon^{-1}, \log(\beta^{-1}))$. Substituting in the choice of $\epsilon$ completes the proof.
\end{proof}

\section{A Motivating Toy Model for Smoothing}\label{app:smoothing}

Adding to the discussion from \cref{sec:smooth}, we informally discuss a theoretical toy model in which smoothing the expert may decrease (a metric version of) action-prediction error and thus improve final performance. Consider a horizon-$1$ POMDP where the state and action spaces have metrics $d_S$ and $d_A$, and the reward function $R$ is binary-valued. For each latent state $s$, let $G(s)$ be the set of ``good'' actions, i.e. $G(s) := \{a\in\cA: R(s,a)=1\}$, and let $D(s)$ be the diameter of $G(s)$. Suppose that the following natural assumptions hold:
\begin{enumerate} 
\item The map $s \mapsto G(s)$ is $d_S \to d_A$ Lipschitz. That is, perturbing $s$ by $\epsilon$ (with respect to metric $d_S$) only perturbs the set $G(s)$ by $O(\epsilon)$ (with respect to metric $d_A$).
\item Under any observation, the posterior on states is ``$\veps$-local'' with respect to $d_S$, i.e. contained in an $\epsilon$-ball. 
\end{enumerate}
With no smoothing, the optimal expert may, for each $s$, play an arbitrary action in $G(s)$, so the ``metric'' action-prediction error (i.e. expected dispersion of actions played by the expert, conditioned on an observation)
can be as large as $O(\epsilon) + \max_s D(s)$. However, suppose the environment has motor noise. In particular, suppose the support of the noise is an $\eta$-radius ball (with respect to $d_A$) around the chosen action. Then for each $s$, the optimal policy is forced to the ``interior'' of $G(s)$, i.e. not within $\eta$ of the boundary, effectively equivalent to decreasing the diameter of $G(s)$ by $\eta$. Moreover, if $\eta \gtrsim \veps$, then for any two states $s,s'$ in the posterior of observation $o$, the actions chosen by the optimal policy will lie in both $G(s)$ and $G(s')$, so (under mild additional structural assumptions, e.g. convexity of $G(s)$ in Euclidean space) distillation should produce an optimal policy. 

Of course, if $\eta$ is too large, the ``interior" of some of the $G(s)$ sets becomes empty, i.e. the optimal policy cannot play a robustly good action. As a result, it may play arbitrary actions for these states, so the action prediction error can become large again (and the policy value decreases).

\section{Supplemental Materials for Experiments}\label{app:experiment}

In \cref{app:exp-mis-dec} we present details for our empirical test of whether perfect decodability holds in image-based locomotion tasks. In \cref{app:omitted-figures} we present figures omitted from the main body of the paper. In \cref{app:hyperparameters} we include hyperparameter choices and details about compute resources.

\horizon{
\subsection{Misspecification of Decodability in Practice}\label{app:exp-mis-dec}
\begin{figure}[h]
    \centering
    \includegraphics[width=0.32\textwidth]{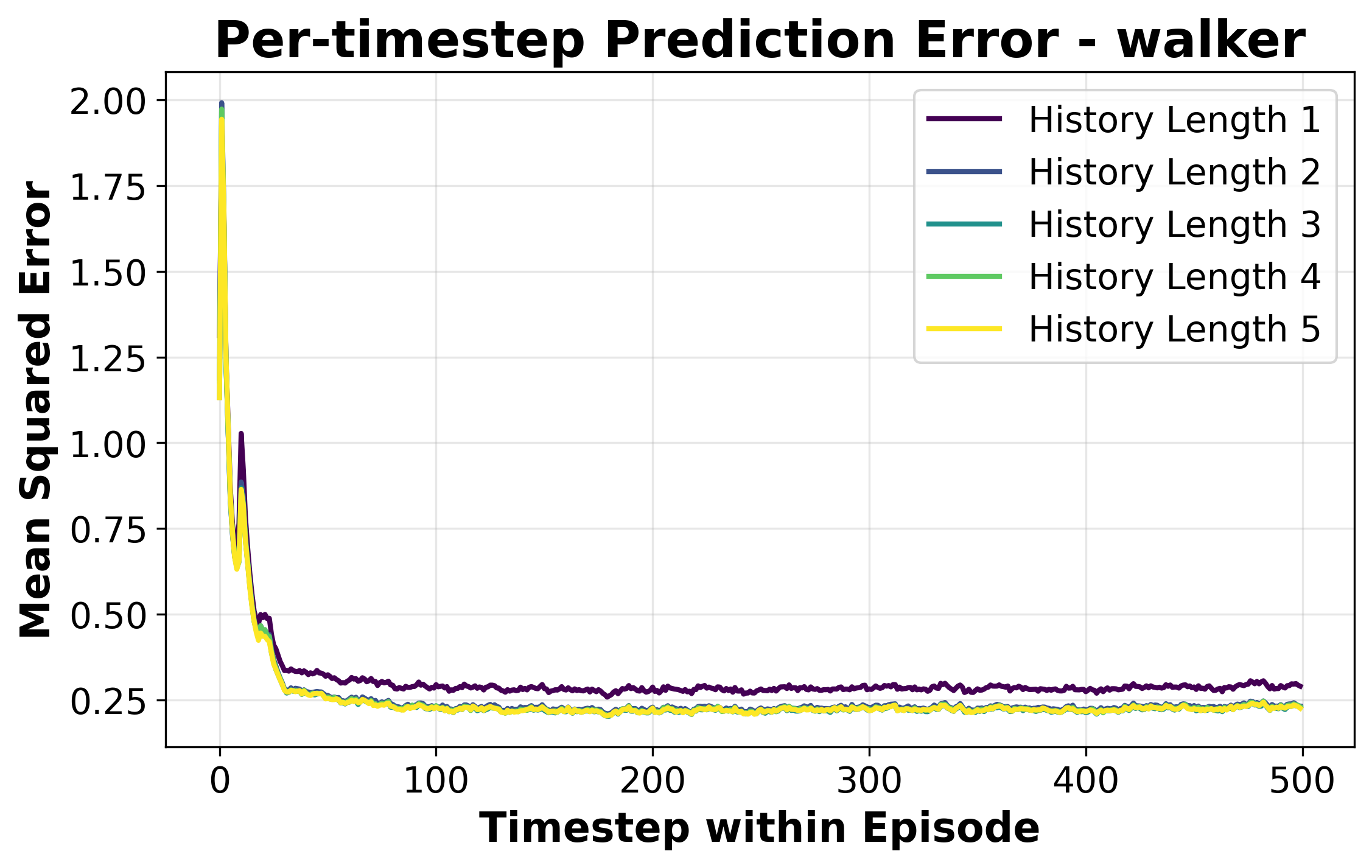}
    \includegraphics[width=0.32\textwidth]{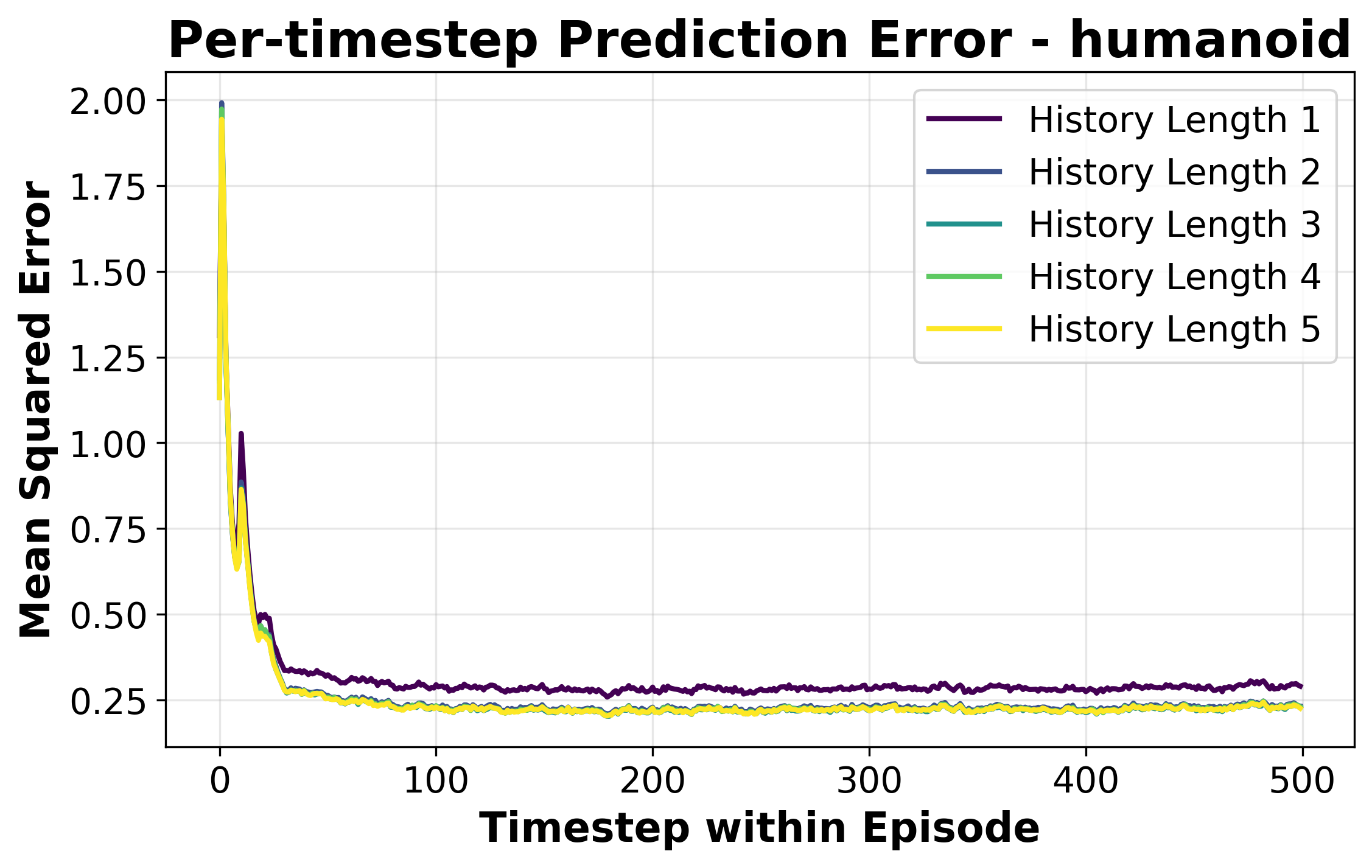}
    \includegraphics[width=0.32\textwidth]{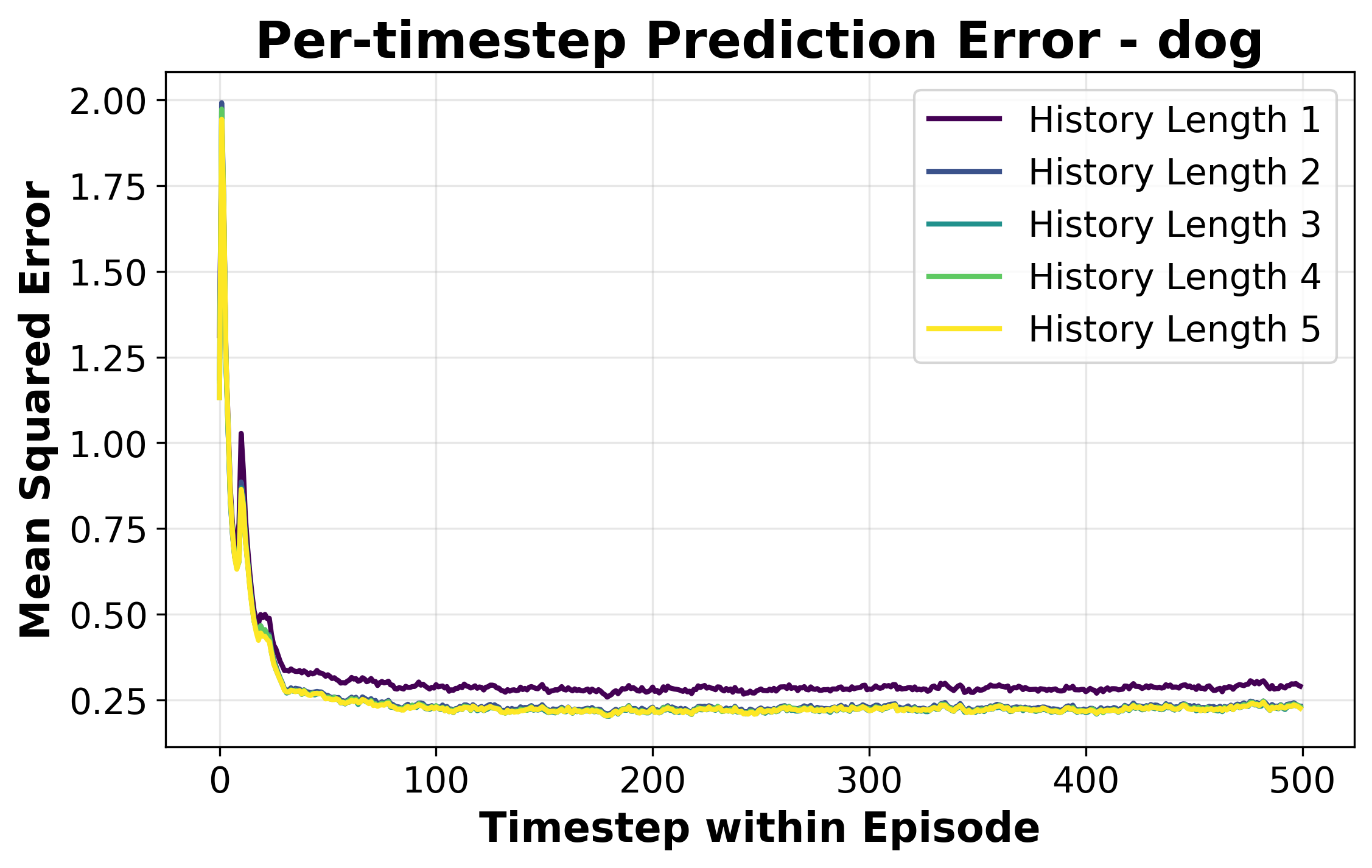}
    \caption{Per timestep validation error of the state-prediction model for different choices of frame-stack $L \in \{1,2,3,4,5\}$. The model is trained to predict the state given the most recent $L$ observations. We present the average error across 5 random seeds.}
    \label{fig:decodable}
\end{figure}

For each of the three tasks (\texttt{walker-run}, \texttt{humanoid-walk}, and \texttt{dog-walk}), we collect $2000$ trajectories from the expert latent policy, and for each $L \in \{1,2,3,4,5\}$ we train a model directly mapping from $L$ observations $x_{h-L+1:h}$ to the state $s_h$, by minimizing mean-squared error. We then evaluate the validation error of the model on $100$ trajectories collected from the same policy and plot the per-timestep error in \pref{fig:decodable}. We normalize the states with the trajectory mean and standard deviation of the combined dataset. Note that the error is composed with three parts: 1) error due to model capacity; 2) using fixed-length frame-stacks instead of the whole history; 3) inherent failure of perfect decodability. We observe that the trained model is able to achieve a small error in the later timesteps, which suggests that error 1, model capacity error, is (likely) small. However, the error is large in the initial timesteps. As error 2 does not exist for steps $h \leq L$, it follows that error 3 is non-trivial, i.e., perfect decodability fails. 

\paragraph{Ablation} To investigate whether the error at initial timesteps is due to parameter sharing, we also tried to train non-stationary models (i.e., one model for each timestep) or using weighted loss with higher weights for the initial timesteps, but neither approach significantly changed the results. 

\paragraph{Conclusions} We conjecture that the higher error at early timesteps is due to a nearly uniform distribution for the initial state distribution, where states that induce occlusion may be quite likely. In later time-steps, the observations along the expert trajectory are in a more stable regime and thus may introduce less occlusion (and hence be more likely to correspond to a unique latent state).} Finally, we observe that even at later time-steps there is still significant prediction error. While this could potentially be due to model capacity error, it nevertheless demonstrates impracticality of learning a perfect decoder, and it roughly corresponds with task difficulty (\texttt{humanoid-walk} and \texttt{dog-walk} are harder than \texttt{walker-run}).

\paragraph{Source of the error} One may naturally wonder if the error is caused by unpredictable state components that are irrelevant to decision-making (one example could be the absolute position of the agent, but in the environments that we test on, absolute coordinates are  actually not part of the latent space). If this is the case, then the error would not negatively impact the performance of the policy distilled from the latent expert. 

One piece of evidence against this hypothesis is that the action-prediction error is indeed non-trivial for our tasks of interest (c.r. \pref{fig:error_smooth}), which suggests that the error is not only due to irrelevant components. In addition, we take a more detailed look at the state prediction error and identify the components that contribute most to the error. With the same setup as in \pref{fig:decodable}, we compute the mean squared error for each coordinate of the state, averaged over the whole trajectories, and present the top 5 and bottom 5 coordinates in \pref{tab:top_5}. We see that the coordinates that contribute most to the error are mostly angular velocities of limbs, which are indeed hard to predict from images. On the other hand, the coordinates that contribute least to the error are mostly joint angles or balance point coordinates, which are easier to predict from images. From first principles, all of these coordinates are crucial for the optimal policy, providing additional confirmation that the error is not only due to irrelevant components.

\begin{table}[h]
    \centering
        \caption{(Left) Top 5 and (right) bottom 5 coordinates contributing to state prediction error.}
        \centering
      \begin{tabular}{c|c}
        \toprule
       Coordinates & Error \\
        \midrule
        left\_ankle\_y angular velocity & 0.82 \\
        left\_hip\_x angular velocity & 0.81 \\
        right\_shoulder1 angular velocity & 0.69 \\
        right\_shoulder2 angular velocity & 0.67 \\
        right\_ankle\_y angular velocity & 0.64 \\
      \bottomrule
      \end{tabular}
      \hspace{1cm}
      \begin{tabular}{c|c}
        \toprule
       Coordinates & Error \\
        \midrule
      left\_elbow joint angle & 0.006 \\
      balance point z & 0.020 \\
      balance point y & 0.034 \\
      balance point x & 0.044 \\
      left\_knee joint angle & 0.045 \\
      \bottomrule
      \end{tabular}\\
      \label{tab:top_5}
\end{table}

\nothorizon{
\subsection{Error compounding is algorithm-dependent}\label{app:horizon}
The horizon dependence of the error in imitation learning has received intensive theoretical \citep{rajaraman2020toward,foster2024is,rohatgi2025computational} and empirical \citep{ross2010efficient,laskey2017dart,block2023butterfly} study. It is widely believed that online methods such as $\Dagger$ outperform offline methods such as behavior cloning due to being able to \emph{recover} from mistakes \citep{ross2014reinforcement,rajaraman2020toward} and thereby avoid error compounding over the horizon. Does this gap manifest in expert distillation for POMDPs? 
In \pref{fig:horizon,fig:horizon-app}, we vary the horizon $H$ from $50$ to $450$, and measure the sub-optimality of both offline and online expert distillation. Notably we first normalize the trajectory return by the horizon so that trajectory reward is between $0$ and $1$ to match our theoretical setting.  As our theory predicts that approximately, $\text{suboptimality} = \epsilon \times H$, thus we further normalize the suboptimality by the mean validation action prediction error. We show the comparison between the normalized suboptimality and the horizon in \pref{fig:horizon}, and we see a strong linear relationship in BC, but weaker in $\Dagger$, likely due to recoverability \citep{ross2011reduction,rajaraman2021value,foster2024is}. This contrasts with empirical results of \cite{foster2024is}: they perform \emph{well-specified} behavior cloning in similar tasks, and find little horizon dependence. Our results therefore suggest that misspecification, rather than sampling error, may be a more critical source of horizon dependence. \loose}

\subsection{Imitating a Smoother Expert under Deterministic Latent Dynamics}\label{app:smooth}

In \pref{sec:smooth}, we showed that under stochastic latent dynamics, imitating a smoother expert (which is trained under higher motor noise) can lead to better performance. A natural question is whether this phenomenon also exists under deterministic latent dynamics. Theoretically, the answer is no as we showed in \pref{thm:perturbed-mdp-il-guarantee} that with enough framestack, imitating the non-smoothed expert can already be optimal under deterministic latent dynamics. That said, it remains unclear if smoothing the expert can help improve the performance in practice. To answer this question, we conduct experiments in the same setup as in \pref{sec:smooth}, but with motor noise $\sigma=0$ when performing expert distillation. We vary the motor noise level used to train the latent expert over $\{0.1,0.2,0.3,0.4,0.5\}$, and we use a framestack of size $3$. 

We present the results in \pref{fig:smooth_c0}. We see that imitating a smoother expert does not lead to better performance in this case, and the best performance is achieved by imitating the non-smoothed expert (c.r., \pref{fig:det-il-rl}). This corroborates our theoretical findings.
\begin{figure*}[h]
    \centering
    \includegraphics[width=0.4\textwidth]{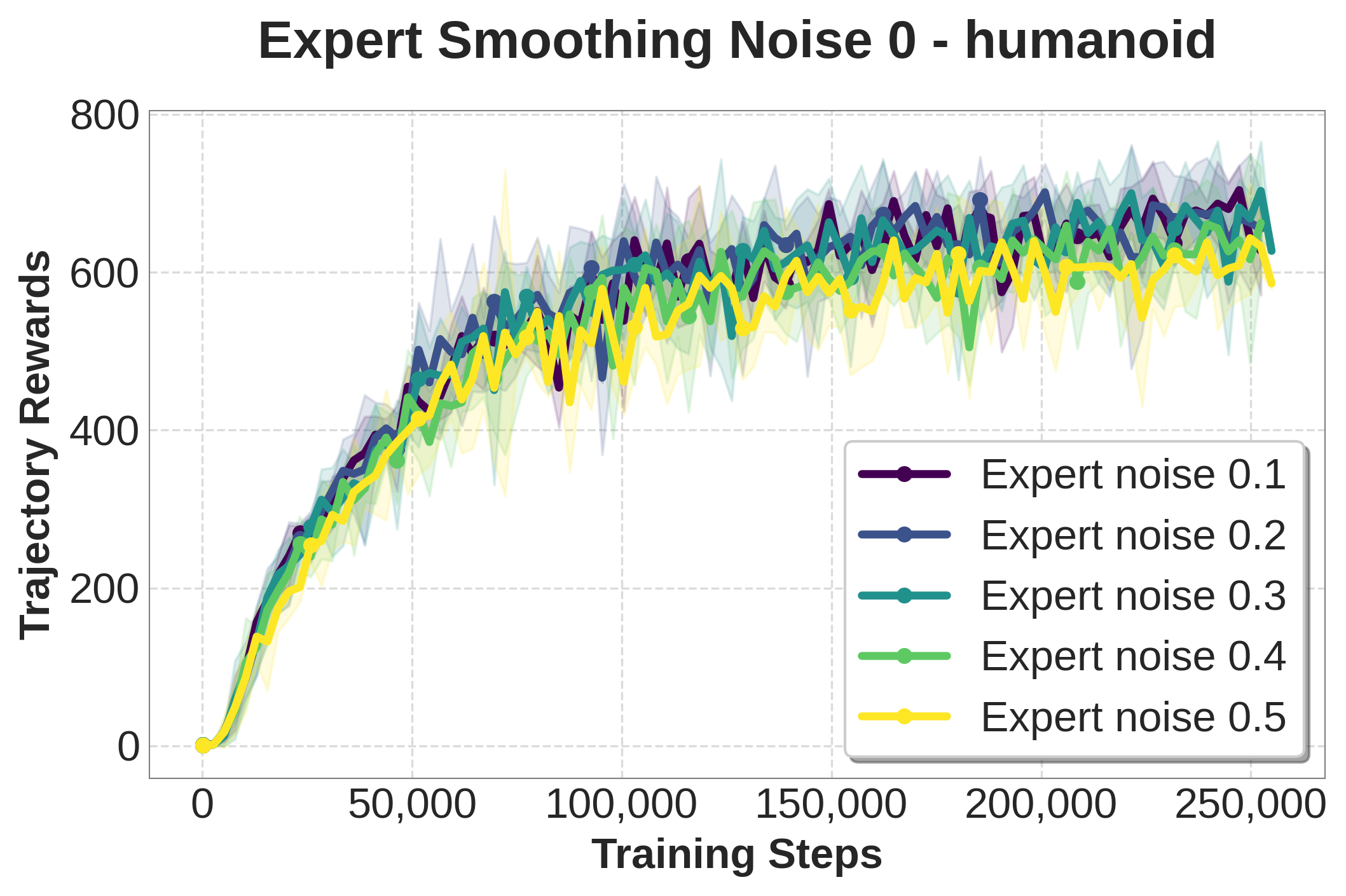}
    \includegraphics[width=0.4\textwidth]{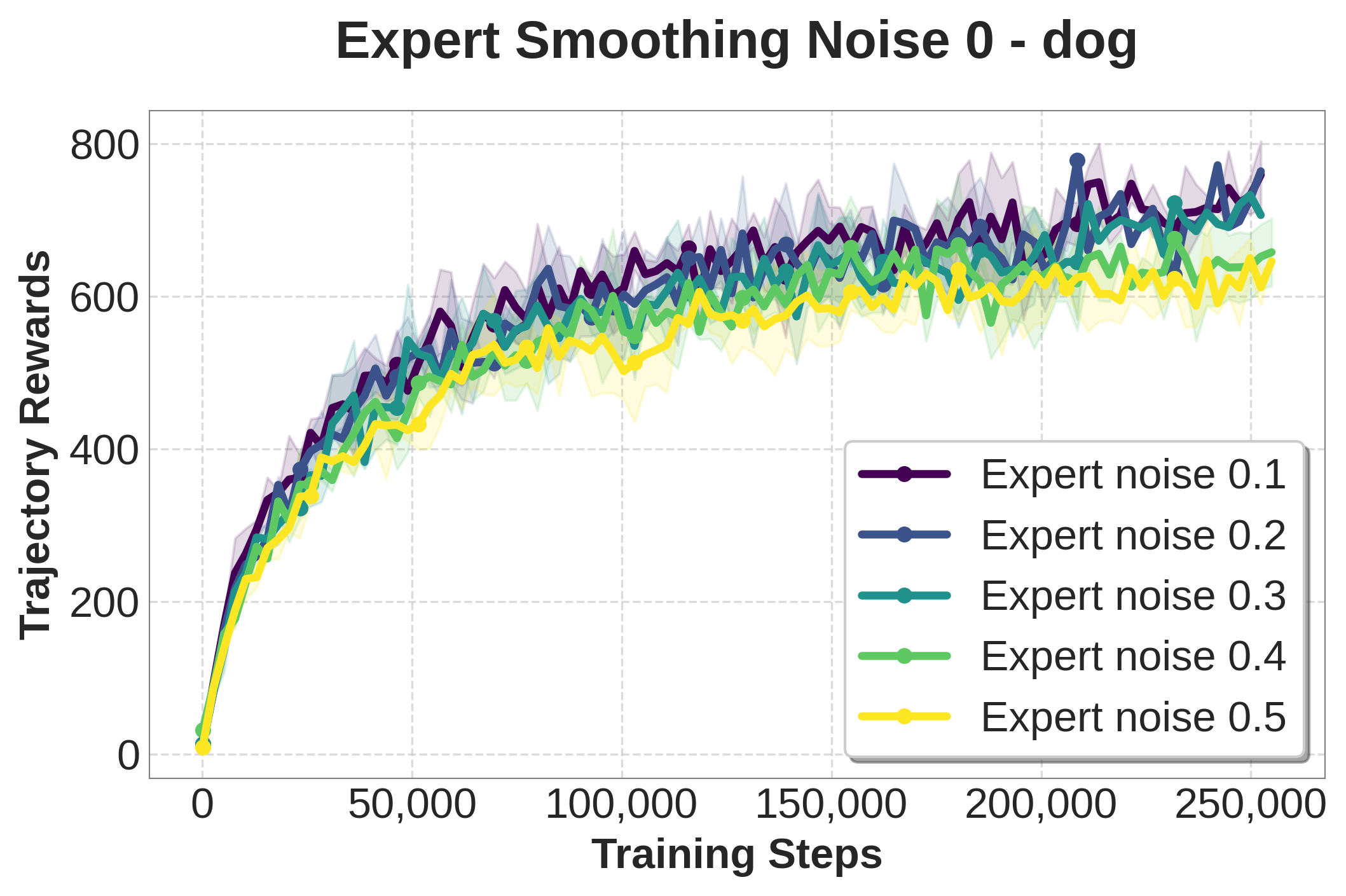}
    \caption{Performance of $\Dagger$ on the validation dataset for the \texttt{humanoid-walk} and \texttt{dog-walk} environments with motor noise $\sigma=0$, as the noise level for the training environment (i.e. the environment in which the latent expert was trained) varies over $\{0.1,0.2,0.3,0.4,0.5\}$. }
    \label{fig:smooth_c0}
\end{figure*}

\newpage
\subsection{Omitted Figures}\label{app:omitted-figures}
\subsubsection{Belief contraction with/without motor noise}
\iclr{
\begin{figure*}[h]
\centering
\includegraphics[width=0.3\textwidth]{figures/belief_contraction_walker.png}
\includegraphics[width=0.3\textwidth]{figures/belief_contraction_dog.png}
\includegraphics[width=0.3\textwidth]{figures/belief_contraction_humanoid.png}
\caption{Belief contraction error with respect to the framestack $L = \{2,3,4,5\}$ on all tasks. For each framestack $L$, we use train a Gaussian parametrized neural network to predict the belief with $L$ framestack input. We compute the KL distance to the output of an $L=10$ network (serving as an approximation of the true belief), averaged over a validation dataset with 100 episodes of data. The orange plot denotes the decrease in KL divergence between two numbers of framestacks. We repeat the experiment for 5 times and plot the mean and standard deviation. We observe that the belief contraction error decreases (although not as fast as predicted by the theory) as the number of framestack increases.}\label{fig:belief-contraction}
\end{figure*}
}

\begin{figure*}[h]
\centering
\includegraphics[width=0.4\textwidth]{figures/belief_contraction_humanoid.png}
\includegraphics[width=0.4\textwidth]{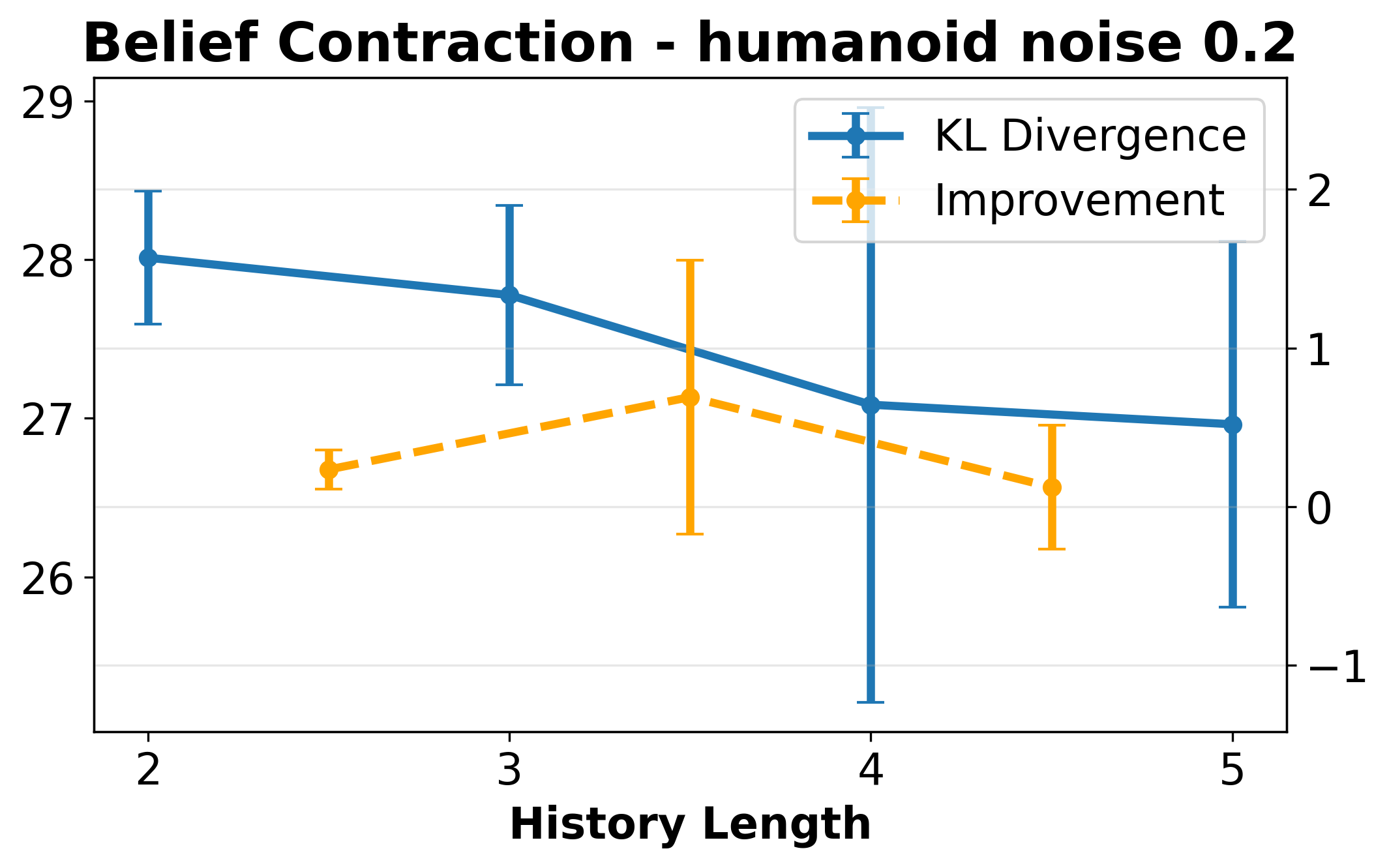}
\caption{Belief contraction error with respect to the framestack $L = \{2,3,4,5\}$ on \texttt{humanoid-walk} tasks, with and without motor noise. For each framestack $L$, we train a Gaussian parametrized neural network to predict the belief with $L$ framestack input. We compute the KL distance to the output of an $L=10$ network (serving as an approximation of the true belief), averaged over a validation dataset with 100 episodes of data. The orange plot denotes the decrease in KL divergence between two numbers of framestacks. We repeat the experiment for 5 times and plot the mean and standard deviation. We observe that very similar belief contraction phenomena occur with or without the motor noise.}\label{fig:belief-contraction-sto}
\end{figure*}

\subsubsection{Action prediction error with smoothed experts}

\begin{figure*}[h]
    \centering
    \includegraphics[width=0.4\textwidth]{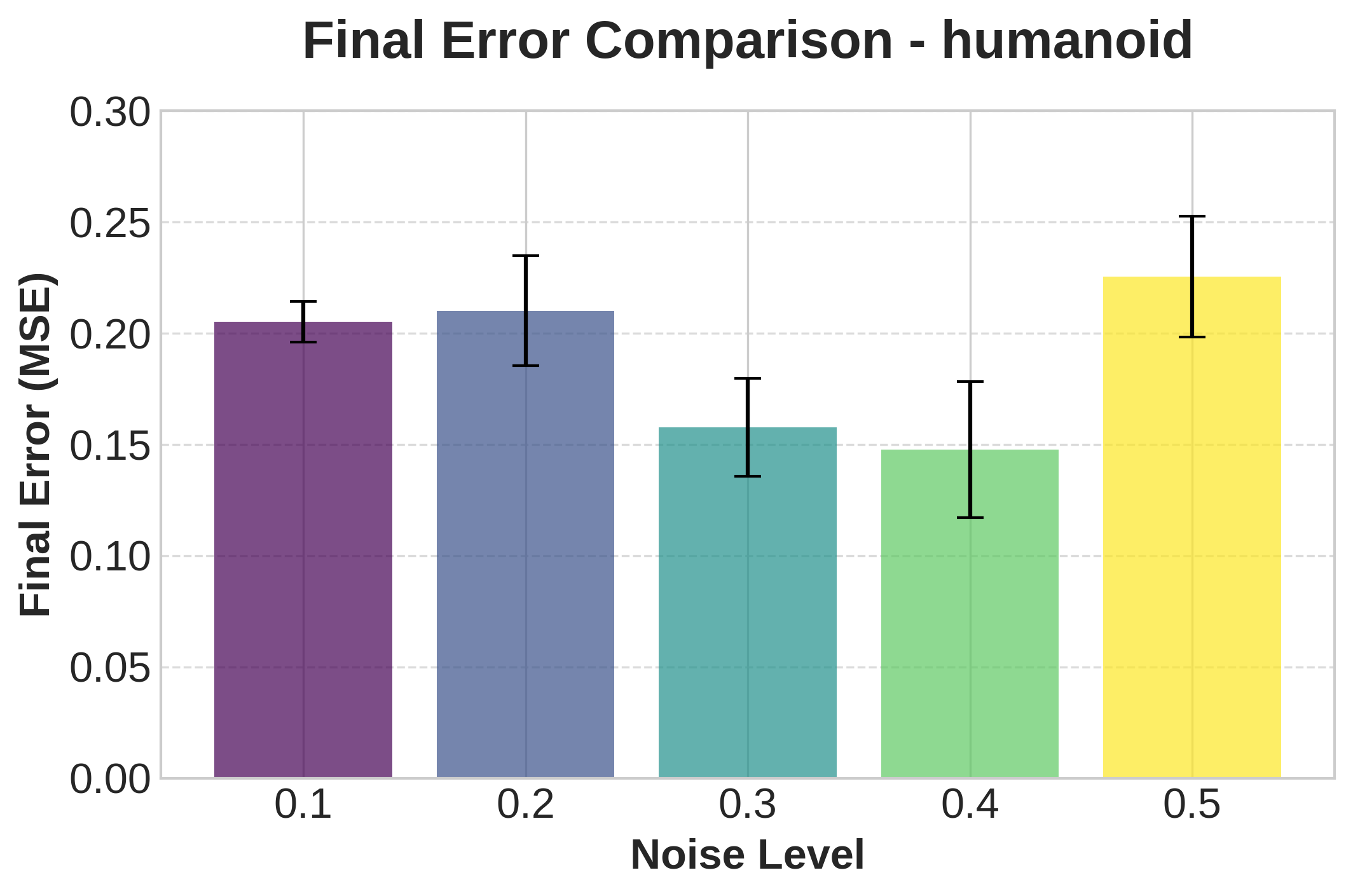}
    \includegraphics[width=0.4\textwidth]{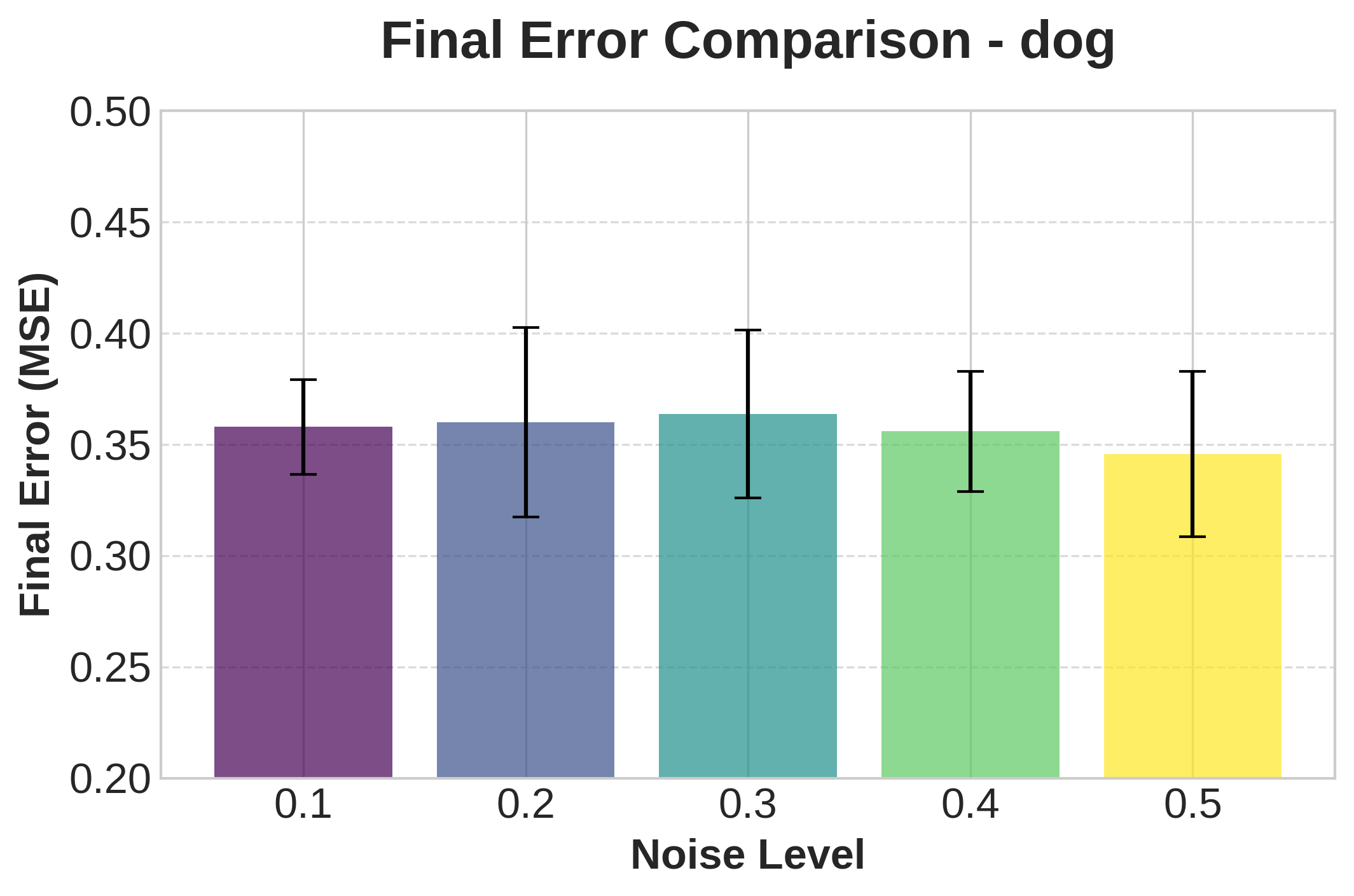}
    \caption{Comparison of the (estimated) action-prediction error of the latent policy on the validation dataset for the \texttt{humanoid-walk} and \texttt{dog-walk} environments with motor noise $\sigma=0.2$, as the noise level for the training environment (i.e. the environment in which the latent expert was trained) varies over $\{0.1,0.2,0.3,0.4,0.5\}$. The action-prediction error was estimated using MSE as a proxy (normalized by dimension of the action space, as detailed in \cref{sec:prelim}). We observe that imitating the latent expert that is trained on the same noise level does not yield the smallest prediction error. Moreover, policies with lower action-prediction error also broadly have higher performance (\cref{fig:smooth}). \loose}
    \label{fig:error_smooth}
\end{figure*}

\newpage

\subsection{Experiment Details}\label{app:hyperparameters}
\paragraph{Hyperparameters for state prediction models} The hyperparameters used and considered for the belief/state prediction models (both deterministic and Gaussian parametrized), corresponding to the experiments in \pref{sec:perfect-decodable,sec:exp-stochastic}, in \pref{tab:belief}. For the neural network architecture, we use the same cnn block prescribed in \citet{fujimoto2025towards}, followed by a three layer neural network with ReLU activation. The architecture remains the same for the policies, and the hyperparameter hidden size refers to the hidden size of the feedforward part of the neural network.
\begin{table}[h]
    \centering
        \caption{Hyperparameters for belief prediction models.}
      \begin{tabular}{c|c|c}
        \toprule
        & Final value & Considered Values \\
        \midrule
      Minibatch size & 256 & 128, 256 \\ 
      Learning rate & 1e-4 & 1e-3, 2e-4, 1e-4\\ 
      Optimizer & Adam & Adam \\ 
      Number of epochs & 100 & 25, 50, 100 \\
      Hidden layer size & 512 & 128, 256, 512 \\
      \bottomrule
      \end{tabular}\\
        \label{tab:belief}
\end{table}

\paragraph{Hyperparameters for expert distillation} The hyperparameters of BC and $\Dagger$ are provided in \pref{tab:bc} and \pref{tab:dagger} respectively. Note that the only exception is that $\Dagger$ is run for 6500 episodes in the motor noise 0.1 and 0.3 experiment because it converges slower than the rest of the experiments. 

\begin{table}[h]
    \centering
        \caption{Hyperparameters for BC.}
      \begin{tabular}{c|c|c}
        \toprule
        & Final value & Considered Values \\
        \midrule
      Minibatch size & 256 & 128, 256 \\ 
      Learning rate & 1e-4 & 1e-3, 2e-4, 1e-4\\ 
      Optimizer & Adam & Adam \\ 
      Number of episodes & 2000 & 1000, 2000, 5000 \\
      Number of epochs & 1000 & 100, 500, 1000 \\
      Hidden layer size & 256 & 128, 256, 512 \\
      \bottomrule
      \end{tabular}\\
        \label{tab:bc}
\end{table}

\begin{table}[h]
    \centering
        \caption{Hyperparameters for $\Dagger$.}
      \begin{tabular}{c|c|c}
        \toprule
        & Final value & Considered Values \\
        \midrule
      Minibatch size & 256 & 128, 256 \\ 
      Learning rate & 1e-4 & 1e-3, 2e-4, 1e-4\\ 
      Optimizer & Adam & Adam \\ 
      Number of episodes & 5 & 2, 5, 10, 20 \\
      Number of iterations & 5000 & 1000, 2000, 5000, 100000 \\
      Number of gradient step per iteration & 50 & 20, 50, 100 \\
      Hidden layer size & 256 & 128, 256, 512 \\
      \bottomrule
      \end{tabular}\\
        \label{tab:dagger}
\end{table}

\paragraph{Hyperparameters for RL} The hyperparameters for RL follows the original  hyperparameters prescribed in \citep{fujimoto2025towards}, and we train for 50000 episodes.

\paragraph{Computation details} All of our experiments are run with 1 L40S GPU with 8 threads of CPU.

\end{document}